\documentclass[
	twoside,openright,titlepage,numbers=noenddot,headinclude,
        footinclude=true,cleardoublepage=empty,
	dottedtoc,
	BCOR=5mm,paper=a4,fontsize=11pt,
	american,
]{scrreprt}

\PassOptionsToPackage{utf8}{inputenc}
  \usepackage{inputenc}

\PassOptionsToPackage{T1}{fontenc}
  \usepackage{fontenc}

\PassOptionsToPackage{
  drafting=false,    
  tocaligned=false, 
  dottedtoc=true,  
  eulerchapternumbers=true, 
  linedheaders=false,       
  floatperchapter=true,     
  eulermath=false,  
  beramono=true,    
  palatino=false,    
  style=arsclassica 
}{classicthesis}

\newcommand{\myTitle}{Reasoning Algorithmically in Graph Neural Networks\xspace}
\newcommand{\mySubtitle}{Doctoral Dissertation\xspace}

\newcommand{\myName}{Danilo Numeroso\xspace}
\newcommand{\mySupervisor}{Davide Bacciu\xspace}
\newcommand{\myDepartment}{Department of Computer Science\xspace}
\newcommand{\myUni}{Università di Pisa\xspace}

\newcommand{\myTime}{February 16, 2024\xspace}

\providecommand{\mLyX}{L\kern-.1667em\lower.25em\hbox{Y}\kern-.125emX\@}


\PassOptionsToPackage{american}{babel}
\usepackage{babel}

\usepackage{csquotes}
\usepackage{natbib}

\PassOptionsToPackage{fleqn}{amsmath}
\usepackage{amsmath,amsfonts,amsthm,amsbsy,amssymb}
\usepackage{mathtools}
\usepackage{esvect}
\usepackage{dsfont}
\usepackage{xfrac}

\usepackage{graphicx}
\usepackage{scrhack} 
\usepackage{xspace} 
\usepackage{hyperref}

\PassOptionsToPackage{printonlyused,smaller}{acronym}
  \usepackage{acronym} 


\usepackage[inline]{enumitem}
\usepackage{subcaption}
\usepackage{hyphenat}

\usepackage[dvipsnames]{xcolor}
\usepackage{tikz}

\usetikzlibrary{matrix, backgrounds, arrows, calc, cd, decorations.pathmorphing, fit, shapes.multipart, positioning, shapes, patterns, snakes}

\usepackage{tabularx} 
  \setlength{\extrarowheight}{3pt} 


\usepackage{listings}
\lstset{language=[LaTeX]Tex,
  morekeywords={PassOptionsToPackage,selectlanguage},
  keywordstyle=\color{RoyalBlue},
  basicstyle=\small\ttfamily,
  commentstyle=\color{Green}\ttfamily,
  stringstyle=\rmfamily,
  numbers=none,
  numberstyle=\scriptsize,
  stepnumber=5,
  numbersep=8pt,
  showstringspaces=false,
  breaklines=true,
  belowcaptionskip=.75\baselineskip
}

\usepackage{algorithm}
\usepackage[noend]{algpseudocode}

\algblock{ParFor}{EndParFor}
\algnewcommand\algorithmicparfor{\textbf{for}}
\algnewcommand\algorithmicpardo{\textbf{do in parallel}}
\algnewcommand\algorithmicendparfor{\extbf{end for}}
\algrenewtext{ParFor}[1]{\algorithmicparfor\ #1\ \algorithmicpardo}
\algrenewtext{EndParFor}{\algorithmicendparfor}

\makeatletter
\ifthenelse{\equal{\ALG@noend}{t}}%
  {\algtext*{EndParFor}}
  {}%
\makeatother

\usepackage{classicthesis}

\usepackage[osf,ScaleRM=1.05,ScaleSF=1.05]{libertinus}
\usepackage[euler-digits,OT1]{eulervm} 
\usepackage{beramono}

\usepackage{multirow}

\usepackage{pgfplots}
\usepgfplotslibrary{fillbetween}

\PassOptionsToPackage{fleqn}{amsmath}
\usepackage{amsmath,amsfonts,amsthm,amsbsy,amssymb}
\usepackage{mathtools}
\usepackage{bm}
\usepackage{environ}
\usepackage{tabularx}
\usepackage{lipsum}

\DeclarePairedDelimiter{\tuple}{\langle}{\rangle}

\DeclarePairedDelimiter{\norm}{\lVert}{\rVert}
\DeclarePairedDelimiter{\set}{\{}{\}}
\newtheorem{definition}{Definition}
\newtheorem{problem}{Problem}
\newtheorem{theorem}{Theorem}
\newtheorem{proposition}{Proposition}

\newcolumntype{L}{>{$}l<{$}}
\newcolumntype{C}{>{$}c<{$}}
\newcolumntype{R}{>{$}r<{$}}

\def\1{\bm{1}}
\newcommand{\train}{\mathcal{D}}

\newcommand{\inp}{\texttt{input}}
\newcommand{\hints}{\texttt{hints}}
\newcommand{\out}{\texttt{output}}








\def\va{{\bm{a}}}
\def\vb{{\bm{b}}}
\def\vc{{\bm{c}}}
\def\vd{{\bm{d}}}

\def\vf{{\bm{f}}}
\def\vg{{\bm{g}}}
\def\vh{{\bm{h}}}

\def\vo{{\bm{o}}}
\def\vp{{\bm{p}}}

\def\vr{{\bm{r}}}

\def\vx{{\bm{x}}}
\def\vy{{\bm{y}}}
\def\vz{{\bm{z}}}


\def\evs{{s}}

\def\mA{{\bm{A}}}
\def\mB{{\bm{B}}}
\def\mC{{\bm{C}}}

\def\mF{{\bm{F}}}

\def\mH{{\bm{H}}}
\def\mI{{\bm{I}}}

\def\mP{{\bm{P}}}

\def\mW{{\bm{W}}}

\DeclareMathAlphabet{\mathsfit}{\encodingdefault}{\sfdefault}{m}{sl}
\SetMathAlphabet{\mathsfit}{bold}{\encodingdefault}{\sfdefault}{bx}{n}

\def\gA{{\mathcal{A}}}
\def\gB{{\mathcal{B}}}
\def\gC{{\mathcal{C}}}

\def\gE{{\mathcal{E}}}
\def\gF{{\mathcal{F}}}

\def\gI{{\mathcal{I}}}

\def\gL{{\mathcal{L}}}

\def\gN{{\mathcal{N}}}

\def\gP{{\mathcal{P}}}

\def\gS{{\mathcal{S}}}
\def\gT{{\mathcal{T}}}
\def\gU{{\mathcal{U}}}
\def\gV{{\mathcal{V}}}
\def\gW{{\mathcal{W}}}
\def\gX{{\mathcal{X}}}
\def\gY{{\mathcal{Y}}}

\def\sA{{\mathbb{A}}}
\def\sB{{\mathbb{B}}}


\def\sL{{\mathbb{L}}}

\def\sN{{\mathbb{N}}}

\def\sR{{\mathbb{R}}}

\def\sT{{\mathbb{T}}}

\def\sZ{{\mathbb{Z}}}









\newcommand{\normltwo}{L^2}

\newcommand*{\eqqst}{\stackrel{\text{?}}{=}}

\newcommand{\relu}{\ensuremath\operatorname{ReLU}}
\newcommand{\gin}{\textsc{gin}}
\newcommand{\mlp}{\textsc{mlp}}


\DeclareMathOperator{\net}{net}
\DeclareMathOperator*{\argmax}{arg\,max}
\DeclareMathOperator*{\argmin}{arg\,min}
\DeclareMathOperator*{\mean}{mean}
\DeclareMathOperator*{\sumt}{sum}

\DeclareMathOperator{\goal}{\texttt{goal}}

\newcommand{\mathsc}[1]{{\normalfont\textsc{#1}}}

\begin{document}

    \frenchspacing
    \raggedbottom
    \pagenumbering{roman}
    \pagestyle{plain}

    \begin{titlepage}
    \begin{addmargin}[-1cm]{-3cm}
    \begin{center}
        \large

        \begingroup
            \includegraphics[width=3cm]{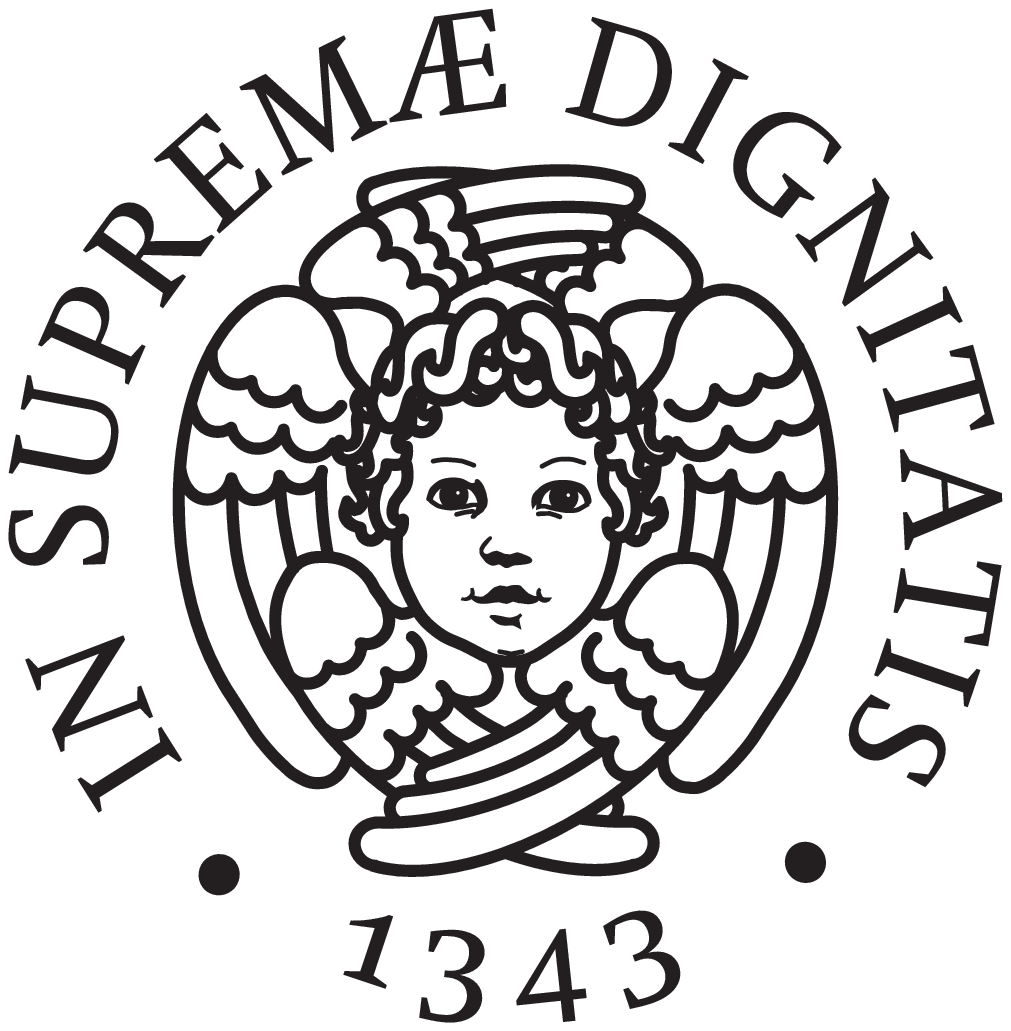}\\\medskip
            \Large\textsc{\myUni}\\\smallskip
            \large{\itshape\myDepartment}
        \endgroup

        \hfill
        \vfill

        \begingroup
        \Large
            \color{CTtitle}\spacedallcaps{\myTitle} \bigskip
        \endgroup

        {\Large\spacedlowsmallcaps{\mySubtitle}}

        \vfill

        \spacedlowsmallcaps{Candidate}\\\smallskip
        \myName\\
        ~\\
        \spacedlowsmallcaps{Supervisor}\\\smallskip
        \mySupervisor

        \vfill

        \myTime

        \vfill

    \end{center}
  \end{addmargin}
\end{titlepage}

    \thispagestyle{empty}

\hfill

\vfill

\noindent\myName: \textit{\myTitle,} \mySubtitle
\newline\textcopyright\ \myTime

    \pdfbookmark[1]{Abstract}{Abstract}

\begingroup
\let\clearpage\relax
\let\cleardoublepage\relax
\let\cleardoublepage\relax

\chapter*{Abstract}
The development of artificial intelligence systems with advanced
reasoning capabilities represents a persistent and long-standing
research question. Traditionally, the primary strategy to address this
challenge involved the adoption of \textit{symbolic} approaches, where
knowledge was explicitly represented by means of symbols and
explicitly programmed rules. However, with the advent of machine
learning, there has been a paradigm shift towards systems that can
autonomously learn from data, requiring minimal human guidance. In
light of this shift, in latest years, there has been increasing
interest and efforts at endowing neural networks with the ability to
reason, bridging the gap between data-driven learning and logical
reasoning.  Within this context, Neural Algorithmic Reasoning (NAR)
stands out as a promising research field, aiming to integrate the
structured and rule-based reasoning of algorithms with the adaptive
learning capabilities of neural networks, typically by tasking neural
models to mimic classical algorithms. In this dissertation, we provide
theoretical and practical contributions to this area of research. We
begin with a review of foundational principles necessary for the
understanding of the notions presented in this thesis, followed by a
comprehensive overview of the most important NAR
principles. Proceeding forward, we explore the connections between
neural networks and tropical algebra, deriving powerful architectures
that are \emph{aligned} with algorithm execution. These architectures
are thus proven to be able to approximate some \emph{min}-aggregated
dynamic programming algorithms up to arbitrary precision. Furthermore,
we discuss and show the ability of such neural \emph{reasoners} to
learn and manipulate complex algorithmic and combinatorial
optimization concepts, such as the principle of strong duality. Here,
we rigorously evaluate this capacity through extensive quantitative
and qualitative studies.  Finally, in our empirical efforts, we
validate the real-world utility of NAR networks across different
practical scenarios.  This includes tasks as diverse as planning
problems, large-scale edge classification tasks and the learning of
polynomial-time approximate algorithms for NP-hard combinatorial
optimisation problems.  Through this comprehensive exploration, we aim
to showcase the potential and versatility of integrating algorithmic
reasoning in machine learning models.

\endgroup
\vfill

    \pdfbookmark[1]{Acknowledgements}{acknowledgements}

\begingroup
\let\clearpage\relax
\let\cleardoublepage\relax
\let\cleardoublepage\relax
\chapter*{Acknowledgements}

I \textit{owe} the inception of these acknowledgements to a truly exceptional human being, \textbf{Davide Bacciu}. Since the time of my master's thesis, I have consistently felt an unwavering trust from him -- perhaps more than I ever had in myself. Throughout these three years, you have been a supervisor, a psychologist, and, most importantly, a friend. Your continuous support has been invaluable, and I can never express my gratitude enough.
\bigskip
\\
I wish to thank the international reviewers of this thesis, \textbf{Christopher Morris} and 
\textbf{Nils Morten Kriege}, for giving me the opportunity to revise some of the concepts
I (thought I) grasped, through their comments and constructive criticisms. I also wish to
thank my PhD coordinator: \textbf{Antonio Brogi}, and my internal committee: \textbf{Alessio Conte}; \textbf{Alina Sirbu}; and \textbf{Antonio Frangioni}, for guiding and providing me with valuable feedback during these years.
\bigskip
\\
Many thanks to \textbf{Alessio Gravina} and \textbf{Valerio De Caro}, with whom I
had the pleasure to share the office throughout the entire duration of this PhD. Thanks for all 
the discussions, the occasional battles for GPU resources, and the laughs we had. 
This journey would have been only half as enjoyable without your presence.
\bigskip
\\
I would also like to thank each and every one of the UniPi people I had the opportunity to 
know during these years, especially \textbf{Francesco Landolfi} who always had the time to listen
to my ramblings (and vice versa!).
\bigskip
\\
Sincere thanks are due to \textbf{Petar Veli\v{c}kovi\'{c}}. Since our first meeting, he always
took my ideas seriously, sharing with me his honest and valuable thoughts and giving me the
opportunity to grow professionally. 
\bigskip
\\
Immense gratitude belongs to \textbf{Yingqian Zhang}, for warmly welcoming me at the \textit{Technische Universiteit Eindhoven} and for giving me the opportunity to collaborate
with amazing people. In this regard, I sincerely thank \textbf{Wouter Kool}, with whom I had
the pleasure (and the duty!) to organise parts of the \textit{EURO Meets NeurIPS 2022}
competition.

Special thanks are due to \textbf{Luca Begnardi}, \textbf{Riccardo Lo Bianco}, \textbf{Fabio Mercurio} and \textbf{Ya Song}. The memories of the time we spent together, during coffee breaks/tea breaks is something I will never forget about.

Heartfelt thanks go to two of the most genuinely good-hearted people I have ever known, 
``\textbf{Pit}'' and \textbf{Marcin}. You'll probably never read these words but... \textit{dzi\k{e}ki}. 

I couldn't conclude this paragraph without first thanking \textbf{Edwin van Weert}. 
You truly made me feel home and really added that extra to the overall dutch experience.
Say \textit{hi} to \textit{Coco}. I nearly forgot... thank you \textbf{Ziva}! As 
strange as it may sound, thanks to you, I don't think I will ever forget how to count 
in dutch: \textit{één, twee, drie, vier, vijf, zes, zeven, acht, negen, tien}! This is
some memory I will jealously guard with me. \textit{Dank je wel}!
\bigskip
\\
I had the luck to visit the 
\textit{University of Cambridge}, where I had \textit{the} most intense and fun period
of my entire PhD. A huge thank you goes to \textbf{Pietro Li\`{o}}, to whom I will 
always be grateful for giving me such opportunity and for his kindness. Throughout my stay
in Cambridge, I always had the impression that he genuinely cared I was, first of all,
enjoying myself. I would also like to thank \textbf{Dobrik Georgiev}, for all our inspiring
discussions and exchange of ideas, and of course for his relentless dedication and commitment. 
This paragraph could never be complete without a word of acknowledgements
to the amazing folks I met there: \textbf{Donato Crisostomi}; \textbf{Federico Siciliano}; 
\textbf{Francesco Ceccarelli}; \textbf{Francesco Prinzi}; \textbf{Gianluca Carlini}; \textbf{Lorenzo Giusti}; \textbf{Maria Sofia Bucarelli}; and \textbf{Pietro Barbiero} (thanks for the bike!). I will always remember the \textit{intense} night before NeurIPS submission(s), as well as our barbecues at the Jesus Green.
\bigskip
\\
I suspect the cardinality of the set of people I met during these years and influenced me positively
is uncountable, so a general thanks goes to all of the folks I didn't have the opportunity to mention here.
\bigskip
\\
\paragraph{Personal acknowledgements} -- \textit{in italian}\\
Un grazie speciale va a \textbf{Anna Orfan\`{o}}, per avermi aiutato a superare alcuni dei momenti più difficili che ho affrontato nella mia vita. Porterò sempre con me le nostre chiacchierate.
\bigskip
\\
Il pi\`{u} sentito dei grazie va alla mia famiglia:

- \textbf{Mamma e Babbo}, a nulla sarebbe valso tutto il mio impegno se voi non foste stati sempre al mio fianco, a sorreggermi. Grazie.

- \textbf{Matti}, avere la possibilit\`{a} di essere tuo fratello e di poterti aiutare nelle tue scelte future \`{e} uno dei miei pi\`{u} grandi orgogli. Averti nella mia vita \`{e} una fortuna.

- \textbf{Mara}, perch\`{e} ormai sei famiglia. Affrontare questo cammino con te al mio fianco \`{e} stato, ed \`{e}, un privilegio. Qualsiasi cosa affronter\`{o} in futuro, bella o brutta, so che la affronter\`{o} insieme a te. Grazie per esserci sempre.
\bigskip
\\
Grazie a tutta la \textit{Varano crew}: \textbf{Ciro Leonardi}; \textbf{Francesco Bondi}; \textbf{Gaia Volpi}; \textbf{Lorenzo Cassi}; \textbf{Marcello Ceresini} e
\textbf{Riccardo Verbeni}. Grazie per aver aggiunto spensieratezza e leggerezza in questi tre anni.
\bigskip
\\
Un ultimo saluto, e un doveroso grazie, va ad una persona che queste parole non potr\`{a} mai leggerle.
Credo tu non le possa neanche ascoltare, ma spero di sbagliarmi. \textit{Cia', 'a no'!}
\endgroup

    \pagestyle{scrheadings}

\pdfbookmark[1]{\contentsname}{tableofcontents}
\setcounter{tocdepth}{3}
\setcounter{secnumdepth}{3}
\manualmark
\markboth{\spacedlowsmallcaps{\contentsname}}{\spacedlowsmallcaps{\contentsname}}
\tableofcontents
\automark[section]{chapter}
\renewcommand{\chaptermark}[1]{\markboth{\spacedlowsmallcaps{#1}}{\spacedlowsmallcaps{#1}}}
\renewcommand{\sectionmark}[1]{\markright{\textsc{\thesection}\enspace\spacedlowsmallcaps{#1}}}

\clearpage

\begingroup
    \let\clearpage\relax
    \let\cleardoublepage\relax

\endgroup

    \pagenumbering{arabic}

    \chapter{Introduction}\label{ch:introduction}
From the earliest forms of calculation conducted with the Sumerian
abacus to the intricate mechanisms of the Antikythera mechanism, the
essence of computation has always lingered in the shadows of human
advancement. Throughout history, our innate desire to understand,
quantify, and predict the world around us has driven the development
of instruments and methods that aid our cognitive capabilities. With
the dawn of the 20th century, computer science emerged, not merely as
an academic discipline, but as the backbone of the modern society.
Today, computer science stands at the intersection of mathematics,
logic and engineering, driving innovation, reshaping industries, and
revolutionising how we devise solutions to everyday challenges.

In this regard, in the vast landscape of computer science, two
distinct paradigms of problem-solving have arisen over the years --
algorithms and machine learning.

Algorithms, in particular, have been at the heart of computer science
since its inception and represent the keystone upon which every
computer scientist builds their knowledge and expertise. Algorithms
are sequences of instructions that are meticulously juxtaposed to
devise solutions to computational problems, often bolstered by
rigorous theoretical guarantees -- typically in the form of pre- and
post-conditions. When executed with inputs that adhere to their
specified preconditions, algorithms can solve complex problems with
precisions, delivering results that consistently meet the expected
postconditions. However, their deterministic nature is both their
strength and limitation. Most algorithmic inputs originate from
problems observed in the vast complexity of the real world. These
problems, rich in detail and context, are then translated and
abstracted to accommodate the strict preconditions that algorithms
demand. This translation, unfortunately, comes at the price of
possible loss of information, making it hard to find exact
solutions with classical algorithms.

In such a noisy and ever-evolving world, machine learning has emerged
as a flexible paradigm, adept at navigating the uncertainties and
intricacies inherent in \emph{natural}\footnote{As opposed to
  ``abstract'' data typical of algorithms, we use the term ``natural''
  to refer to the unprocessed, raw data that can be observed in the
  real world.} data. Machine learning, especially its most notable
subset, Neural Networks, represents the modern computational approach
to solve tasks and problems. At its core, machine learning is about
enabling computers to learn and discover patterns from data, using
this knowledge to make decisions or predictions. Unlike algorithms,
then, neural networks are not explicitly programmed to perform a task
in a certain way, but they rather learn to adapt themselves as they
are exposed to more and more examples. Their strength, therefore, lies
in their adaptability. Neural networks can process and generalise
across diverse inputs, making them particularly well-suited for
natural data. Often, neural networks that are trained on a particular
data distribution can be used as-is or as a starting point to tackle
computational tasks that are different from the original one
\citep{reyes2015fine}. Algorithms, in comparison, do not show this
degree of adaptability. They are designed to tackle a specific problem
and typically need manual intervetions to adapt to a new one. Machine
learning flexibility, however, also comes at a cost. Indeed, neural
networks, \emph{deep} neural models, often lack the robust theoretical
guarantees that accompany classical algorithms. As a general rule of
thumb, the more complex the neural architecture is, the less we can
say about the learnt function, even though great efforts are being
made in order to make deep learning \textit{interpretable}
\citep{burkart2021survey, numeroso2021meg,
  bacciu2022explaining}. Furthermore, neural networks usually struggle
to generalise their knowledge in
out-of-distribution\footnote{Out-of-distribution scenarios refer to
  situations where neural networks face data that differs noticeably
  from their training distribution.} scenarios
\citep{neyshabur2017exploring}. Algorithms, on the other hand, have
the inherent ability to generalise to any input meeting its
pre-conditions.

At first glance, thus, the algorithmic and neural paradigms may be
perceived as diametrically opposed. However, by careful assessment of
their properties, they can actually be envisioned as complementary
approaches, each addressing the shortcomings of the other. Where
algorithms fall short in the face of task flexiblity and data
adaptability, neural networks excel; and where neural networks operate
in absence of theoretical guarantees, algorithms excel. This synergy
suggests a potential convergence of the two paradigms -- a unified
approach that harnesses the deterministic strength of algorithms and
the adaptive flexibility of neural networks.

Stimulated by this fascinating perspective, researchers began
exploring towards incorporating the principled and structured
knowledge of algorithms into learning models, laying the foundation
for what is now recognised as Neural Algorithmic Reasoning (NAR)
\citep{velickovic2021neural}. Broadly speaking, neural algorithmic
reasoning strives to enable neural networks to demonstrate reasoning
abilities, specifically mirroring the reasoning inherent in classical
algorithms (i.e., the ability of slowly processing input data in a
principled manner in order to return a desired outcome). In
particular, the integration of \emph{algorithmic knowledge} in machine
learning models is typically performed by instructing neural networks
to \emph{execute} classical algorithms.

The key motivations for such integration are multiple. First, enabling
reasoning in neural networks could mitigate the limitations of
traditional neural networks, particularly in their susceptibility to
overfitting, not to mention that it is a long-standing research goal
\citep{khardon1997learning}. Moreover, \emph{neural algorithmic
  reasoners}\footnote{Throughout this dissertation, we will be using
  this term to refer to neural networks that can execute classical
  algorithms.} are, by their very nature, knowledge of classical
algorithms encoded in their parameters. This unlocks various
possibilities to leverage such algorithmic expertise as \emph{prior
  knowledge} and utilise it for problems that might benefit from the
application of said algorithmic knowledge (i.e., effectively using
algorithms as \emph{inductive bias}).

\section{Objectives}
Inspired by the above premises, this dissertation aims to explore the
potential of neural algorithmic reasoners, particularly regarding
their abilities to \emph{learn to execute classical algorithms} and
and \emph{validate} the effectiveness of employing trained algorithmic
reasoners as inductive priors for related downstream tasks.

The main contributions of this thesis are aimed at addressing these
two research questions, particularly in the context of graphs, given
that many classical algorithms of interest are developed and designed
for structured data \citep{cormen2009introduction}. Additionally, we
will be seeking to provide evidence for the previously mentioned
questions from both theoretical and empirical viewpoints.

To address the concern regarding \emph{learnability} of classical
algorithms, we propose a theoretical framework drawing connections
between graphs, neural networks and tropical algebra
\citep{landolfi2023tropical}. In this setting, equivalence between
algorithms, particularly dynamic programming algorithms, and neural
networks will be established. We will also demonstrate how to derive a
powerful neural network architecture suitable for learning algorithms
based on such connections.

Moving outside the context of dynamic programming algorithms, we
propose to learn algorithms through duality \citep{numeroso2023dual},
effectively showing how we can borrow concepts from various areas
related to algorithms, such as combinatorial optimisation, to enhance
the extent to which algorithmic reasoning can be encoded into
neural networks. This contribution also serves as a first practical
example of how algorithms, used as inductive priors, can aid to solve
standard machine learning tasks more accurately.

On this line, we present two more contributions: an algorithmic
reasoner that learns \emph{consistent} heuristic functions for
planning problems \citep{numeroso2022learning}; and an extensive study
on the effectiveness of \emph{transferring} algorithmic knowledge to
NP-hard combinatorial optimisation problems \citep{georgiev2023neural}.

Moreover, as a side objective, this thesis also endeavors to serve as
an introductory guide to the world of Neural Algorithmic Reasoning,
particularly through its \autoref{ch:nar}, tailored to be
comprehensible for those unfamiliar with NAR.

\section{Thesis outline}
In the following, we provide a brief outline regarding the content of
this thesis:

In \autoref{ch:background}, we review foundational and theoretical
concepts pivotal to this research. This chapter introduces the
fundamental principles of graph theory, combinatorial optimisation,
algorithms, and finally, an exploration into the machine learning
paradigm that utilizes graph structures, known as Graph Networks.

In \autoref{ch:nar}, we present an introduction to Neural Algorithmic
Reasoning (NAR). This chapter will explore and discuss practical
scenarios in which the straightforward application of algorithms fail,
and actionable interventions to mitigate said issues.

In \autoref{ch:contribution-dp}, we focus on the primary contributions
of this thesis related to the learning of \emph{min}-aggregated
dynamic programming algorithms. The chapter delves into both the
theoretical and practical facets of these algorithms, drawing
connections with tropical algebra and introducing strategies to
enhance path planning in dynamic programming contexts using learnt
heuristics.

In \autoref{ch:contribution-conar}, our exploration revolves
around the learning of algorithms specifically aimed at combinatorial
optimisation problems. This chapter presents empirical validations on
the utility of various algorithmic biases, including leveraging strong
duality the application of algorithmic priors for solving
combinatorial problems.

In \autoref{ch:conclusions}, we summarise the content of this
dissertation, as well as adding some concluding remarks and discuss
future research avenues.

Finally, the appendix contains the full list of publications and talks
given during my PhD.

    \part{Background}
\chapter{Preliminaries}\label{ch:background}
In this chapter, we delve into the foundational and theoretical
concepts essential to the understanding and development of the
research presented in this PhD thesis. This chapter is structured as
follows.

The first section, \textit{Graphs}, serves as a basis for
comprehending the subsequent topics. Here, we explore the fundamental
principles and definitions of graph theory, a branch of mathematics
and computer science that deals with the study of graphs and their
properties.

The second section, \textit{Combinatorial Optimisation}, delves into
the area of combinatorial problems. We define what combinatorial
optimisation problems are and discuss their connections to decision
problems, as well as introduce a categorisation into different classes
of complexity.

Moving forward, the third section, \textit{Algorithms}, delves into
the area of computational problem-solving techniques. It discusses the
deep interconnection between algorithms and combinatorial optimisation
problems, and how the former have historically been applied for
solving the latter. We also examine algorithms as discrete functions,
discussing the consequential challenges of learning them through
gradient-based methods.

The final section, \textit{Graph Networks}, introduces a machine
learning paradigm that leverages and processes graph structures.  This
section provides the necessary background for understanding how graph
networks work and their similarities and applications for learning and
behaving like algorithms.

\section{Graphs}
Graphs are mathematical structures that offer a flexible and powerful
framework to model a vast array of real-world systems and
phenomena. Their inherent strength lies in the capability to represent
pair-wise\footnote{For the sake of completeness, in the context of
  hypergraphs, relationships can extend beyond pair-wise connections
  to involve multiple entities simultaneously. However, within the
  context of this thesis we will always consider standard graphs and
  pair-wise relationships among objects.} relationships and
dependencies between entities in a simple and visually comprehensible
manner. This characteristic positions graphs as the primary choice for
modelling complex notions across numerous scientific fields.

In the realm of social sciences, for instance, graphs are instrumental
in modeling social networks where nodes represent individuals and
edges depict social connections \citep{netletton2013data}. This
modeling enables deep insights into social dynamics and community
structures \citep{liben2003link, kumar2006structure,
kumar2022influence}.  Similarly, in biological sciences, graphs have
proven useful for enhancing our understanding of genetic networks and
protein-to-protein interaction networks, thanks to their inherent
ability to represent molecular structures where nodes represent atoms
of elements and edges symbolise chemical bonds
\citep{rietman2011review, winterbach2013topology}.

The domain of combinatorial optimisation, a branch of mathematics and
computer science that seeks optimal solutions in discrete and
combinatorial structures, is another key area where graphs are of
invaluable importance.  A more detailed exploration of combinatorial
optimisation problems will be carried out in \autoref{sec:co}. For the
purpose of this section, it is sufficient to mention that numerous
combinatorial problems, such as the \textit{Maximum Flow Problem}
\citep{fulkerson1955computation} and the \textit{Travelling Salesman
  Problem} \citep{robinson1949hamiltonian} can be efficiently encoded
within the domain of graphs. Not only does this simplify the
visualisation of these problems, but also aids in finding solutions,
through the use of (graph) algorithms.  For instance, transportation
is a prime example of the practical usage of graphs in the field of
combinatorial optimisation.  Here, nodes can represent waypoints like
airports or train stations, and edges embody the connections between
these points, enabling optimization of route planning
\citep{dantzig1959truck}, congestion management
\citep{kumar2005congestion}, and infrastructure development
\citep{hakimi1964optimum}.

In essence, graphs are of fundamental importance in the field of
computer science, and the continuing evolution of graph-based
methodologies (including in the sphere of deep learning) motivates the
necessity for a comprehensive discussion of graphs in the subsequent
sections.

\subsection{Graph Theory}\label{sec:graph-theory}
In the field of graph theory, a graph $g$ is defined as a tuple $g$ =
$\tuple{\gV, \gE, \gX_\gV, \gX_\gE}$. In particular, $\gV$ represents
a collection of entities known as \textit{nodes}, and
$\gE$ denotes a set of \textit{edges} (or \textit{links}), expressing
relations between elements of $\gV$.  $\gX_{\gV}$ and
$\gX_{\gE}$ indicate the potential set of node and edge features,
respectively, providing the possibility to attach useful information
to them. Such pieces of information are highly valuable when
conducting operations on graphs, such as computing statistics,
executing algorithms and extracting potentially useful patterns from
them.

Typically, graphs can be classified into different types, mainly
depending on the structure of the edge set $\gE$. In the following, we
furnish formal definitions of general graph structures and their
properties.  Unless specified otherwise, most of the definitions
hereafter refer to graphs devoid of node and edge features, denoted as
$g = \tuple{\gV, \gE}$.
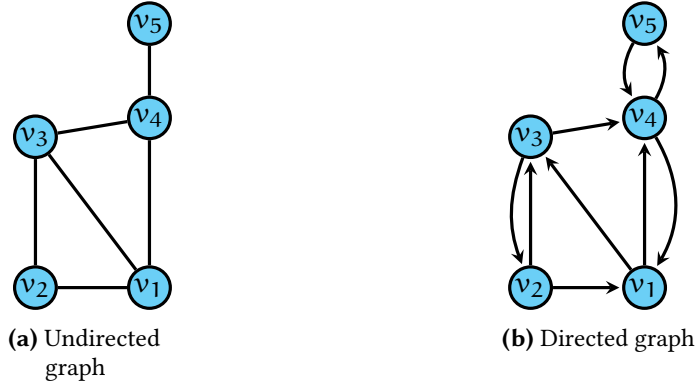
\begin{figure}
    \centering
    \subcaptionbox{Undirected graph\label{subfig:undirected-graph}}{%
        \begin{tikzpicture}[-,>=stealth,auto, very thick]

  \tikzstyle{vertex}=[circle,fill=cyan!50!,draw,inner sep=1pt, minimum size=15pt]

  \node[vertex] (v1) at (0,-1.5) {$v_1$};
  \node[vertex] (v2) at (-1.5,-1.5) {$v_2$};
  \node[vertex] (v3) at (-1.5,0.5) {$v_3$};
  \node[vertex] (v4) at (0,0.75) {$v_4$};
  \node[vertex] (v5) at (0,2) {$v_5$};

  \draw (v1) edge (v3);
  \draw (v1) edge (v4);
  \draw (v2) edge (v1);
  \draw (v2) edge (v3);
  \draw (v3) edge (v4);
  \draw (v4) edge (v5);

\end{tikzpicture}
      }
    \hspace{4cm}
    \subcaptionbox{Directed graph\label{subfig:directed-graph}}{%
        \begin{tikzpicture}[->,>=stealth,shorten >=1pt,auto, very thick]

  \tikzstyle{vertex}=[circle,fill=cyan!50!,draw,inner sep=1pt, minimum size=15pt]

  \node[vertex] (v1) at (0,-1.5) {$v_1$};
  \node[vertex] (v2) at (-1.5,-1.5) {$v_2$};
  \node[vertex] (v3) at (-1.5,0.5) {$v_3$};
  \node[vertex] (v4) at (0,0.75) {$v_4$};
  \node[vertex] (v5) at (0,2) {$v_5$};

  \draw (v1) edge (v3);
  \draw (v1) edge (v4);
  \draw (v2) edge (v1);
  \draw (v2) edge (v3);
  \draw (v3) edge (v4);

  \draw (v3) edge[bend right=20] (v2);
  \draw (v4) edge[bend left=30] (v1);
  \draw (v5) edge[bend right=30] (v4);
  \draw (v4) edge[bend right=30] (v5);

\end{tikzpicture}
      }
      \caption{Visual representation of undirected and directed
        graphs. In \textbf{(b)}, node adjacencies
        $\set{\set{v_1, v_4}, \set{v_2, v_3}, \set{v_4, v_5}}$ are
        bi-directional connections and may be expressed as undirected
        edges. Equivalently, all edges in \textbf{(a)} may be replaced
        by two directed edges.}
    \label{fig:graphs}
\end{figure}

\begin{definition}[Undirected Graph]
    Let $g$ be a graph. Then, $g$ is an undirected
    graph if and only if the elements in $\gE$ are unordered pair
    of vertices, hence $\gE \subseteq \set{\set{u, v} \mid u, v \in \gV}$.
\end{definition}
\begin{definition}[Directed Graph]
    Let $g$ be a graph. Then, $g$ is a directed
    graph if and only if the elements in $\gE$ are ordered pair
    of vertices, hence $\gE \subseteq \set{(u, v) \mid u, v \in \gV}$.
    In such case, $u$ is called head (or source) and
    $v$ is called tail (or target).
\end{definition}
The two above definitions outline the difference between undirected
and directed graphs (see illustration in \autoref{fig:graphs}). In
other words, a directed edge $(u, v)$ expresses a unidirectional
relation from $u$ to $v$. It is important to point out that an
undirected edge can be thought of as the union of two contrarily
oriented edges, i.e. $\{u, v\} = \{(u, v), (v, u)\}$. Therefore,
w.l.o.g, we consider undirected graphs as their directed versions, by
substituting each non-oriented edge $\{u, v\}$ for two oriented edges
$(u, v)$ and $(v, u)$. For the rest of this thesis, whenever needed,
we always refer to undirected edges with curly brackets notation,
i.e. $\{u, v\}$ and to directed edges with round brackets, i.e.
$(u, v)$.
\begin{definition}[Outgoing/Ingoing Edges]
    Let $g$ be a directed graph. The set of
    outgoing edges of an arbitrary node $v \in \gV$ is defined as
    $\gE^{out}_v = \{(v,x) \mid \forall x \in \gV : (v, x) \in \gE\}$.
    The set of ingoing edges of $v$ is then defined as $\gE^{in}_v =
    \{(x,v) \mid \forall x \in \gV : (x, v) \in \gE\}$.
\end{definition}
\begin{definition}[Adjacency]
    Let $g$ be an undirected graph and $u, v
    \in \gV$.  Then, $u$ and $v$ are adjacent if $\{u, v\} \in \gE$.
\end{definition}
\begin{definition}[Adjacency Matrix]
    Let $g$ be a graph. The adjacency matrix $\mA
    = [a_{uv}] \in \{0, 1\}^{|\gV| \times |\gV|}$ is a square matrix
    where $a_{uv} = (u, v) \in \gE$.
\end{definition}
The adjacency property of two nodes can be defined for directed graphs
as well. The only difference is that the adjacency between two nodes
$u$ and $v$ depends on the direction of the edge joining the
nodes. More precisely, in a directed graph if $(u,v) \in \gE^{out}_u$,
$u$ can be said adjacent to $v$, but this does not necessarily hold
from $v$ to $u$, unless $(v,u) \in \gE^{in}_u$ too (see nodes $u$ and
$w$ in \autoref{subfig:directed-graph}).

This entails a subtle difference for an adjacency matrix of an
undirected graph and that of a directed one. In particular, in case of
undirected graphs, if there is an edge from $u$ to $v$, there is also
an edge for $v$ to $u$. That means that the adjacency matrix $\mA$ is
symmetric, i.e. $\mA = \mA^T$, whereas for a directed graph it may occur
that $\mA \neq \mA^T$.
\begin{definition}[Neighbourhood]
  Let $g$ be an undirected graph. Let $v \in \gV$ be a node of
$g$. Thus, the neighbourhood of $v$ is defined as:
\begin{equation*}
  \gN(v) = \{x \in \gV \mid \{x, v\} \in \gE\}.
\end{equation*}
\end{definition}
\begin{definition}[Node degree]
    Let $g$ be an undirected graph.  Let $v \in
\gV$ be a node of $g$. The node degree $d(v): \gV \rightarrow \sN$ is
defined as the cardinality of its neighbourhood, i.e. $d(v) =
|\gN(v)|$.
\end{definition}
Also for neighbourhood and node degree, there is distinction for nodes
in undirected graphs and directed graphs. As relations in undirected
graphs are symmetric, if $u \in \gN(v)$, then $v \in \gN_u$ as
well. However, in directed graphs there is clear distinction between
the set of in-neighbours $\gN^{in}_v = \{x \in \gV \mid (x, v) \in
\gE^{in}_v\}$ and out-neighbours $\gN^{out}_v = \{x \in \gV \mid (v,
x) \in \gE^{out}_v\}$, similarly to the concepts of in-going and
out-going edges. As a consequence, we derive in-degree and out-degree
from in and out neighbours.
\begin{definition}[Finite Path] \label{def:path}
  Let $g$ be a graph. A finite path in $g$ is a sequence of nodes
  $\pi = v_1 v_2 \dots v_n$ for which it holds that
  $\forall v_i,v_{i+1} \in \pi : (v_i, v_{i+1}) \in \gE$.
\end{definition}
\begin{definition}[Finite Cycle] \label{def:cycle}
  Let $g$ be a graph. $\pi$ is a finite cycle if
  and only if $\pi$ is a finite path and $\pi=v_1 v_2 \dots v_n v_1$.
\end{definition}
\begin{definition}[Reachability]\label{def:reachability}
  Let $g$ be a graph and $u, v \in \gV$ two
  arbitrary nodes.  We say that $u$ is reachable from $v$ if there exist
  a path $\pi$ from $u$ to $v$, i.e. $\pi = u x_1 x_2 \dots x_n v$,
  where $x_i \in \gV$.
\end{definition}
\begin{definition}[Connectivity]
  Let $g$ be a graph. $g$ is said to be connected
if and only if $\forall u,v \in \gV$, v is reachable from u. A graph
which is not connected is called disconnected.
\end{definition}
These seemingly simple concepts are of crucial importance for many
problems in the real-world. For example, via connectivity and
reachability one can verify that a computer network is connected,
hence all end-points can communicate with one another. Furthermore,
concepts of cycles and paths are at the basis of the class of vehicle
routing problems, which are widely used to model real problems such as
route-delivery optimisation.
\begin{definition}[Hamiltonian Path]\label{def:hamiltonian-path}
  Let $g$ be a graph. $\pi$ is a hamiltonian path if and only if $\pi$
  is a path such that $|\pi| = |\gV|$ and $\forall 1 \leq i < j \leq
  |\pi| \;:\; \pi_i \neq \pi_j$.
\end{definition}
A hamiltonian path can be thought of as a path that passes through
every single node exactly once. This concept is of high importance as
it is closely related to the more familiar Travelling Salesman Problem
(TSP). Consequently, hamiltonian paths are relevant for all families
of route opimisation problems as well as task scheduling
\citep{simonin2011isomorphic}.
\begin{definition}[Isomorphism]
  Let $g$ and $g'$ be graphs.
If there is a bijective function $f: \gV_g \rightarrow \gV_{g'}$
such that $(u, v) \in \gE_g \iff (f(u), f(v)) \in \gE_{g'}$, then $g$
and $g'$ are called isomorphic.
\end{definition}
In simpler terms, two graphs are isomorphic if there is a bijection
between their nodes that preserves adjacency. This means that the two
graphs are essentially the same graph with a different labelling of
nodes. From a machine learning perspective, graph isomorphism are
deeply interwined with the expressive capacity of learning models for
graphs. As we will see in \autoref{sec:gn}, graph networks, the
central deep learning architectures for learning on graphs, are
essentially \textit{structural transductions} and deeply connected to
the notion of graph isomorphism. Transductions can be intended as
mathematical functions $\gT: \sA \rightarrow \sB$. In the context of
deep learning for graphs, any element in $\sA$ and $\sB$ is assumed to
be a graph $g = \tuple{\gV, \gE, \gX_\gV, \gX_\gE}$. Now, we conclude
this section by discussing IO-isomorphic transductions.
\begin{definition}[Skeleton]
  Let $g=\tuple{\gV, \gE, \gX_\gV, \gX_\gE}$ be a graph. The skeleton
  of $g$, denoted by $\text{skel}(g)$, is the graph obtained by
  ignoring the set of node and edge features (i.e.,
  $\text{skel}(g) = \tuple{\gV, \gE}$.
\end{definition}
\begin{definition}[IO-isomorphism]\label{def:io-iso}
  Let $g=\tuple{\gV, \gE, \gX_\gV, \gX_\gE}$ be a graph and let $\gT:
\gX \rightarrow \gY$ be a structural transduction. Then if
$\forall g \in \gX \;:\; \text{skel}(g) = \text{skel}(\gT(g))$, then
$\gT$ is said to be an IO-isomorphism.
\end{definition}
Fundamentally, an IO-isomorphism is a structural transduction that
preserves the topology of $g$, while possibly altering its set of node
and edge features. Hence, such transductions can be framed as
\textbf{graph relabelling} methods, which refer to the process of
assigning new labels to the set of nodes (and possibly edges),
preserving the underlying structure. Some popular applications of
graph relabelling methods involve \textit{graph isomorphism} tests
as well as graph processing with deep learning architectures.
Both applications will be discussed more thoroughly in the next
sections.

\subsection{Instances of graphs}
\label{sec:graph-types}
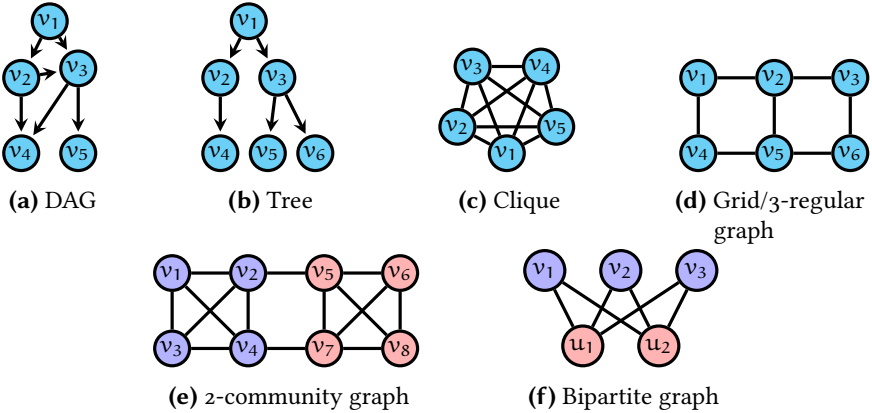
\begin{figure}
    \centering
    \subcaptionbox{DAG\label{subfig:dag}}{%
      \footnotesize
      \begin{tikzpicture}[->,>=stealth, shorten >=1pt, auto, very thick, scale=.5]

  \tikzstyle{vertex}=[circle,fill=cyan!50!,draw,inner sep=1pt, minimum size=7pt]

  \node[vertex] (v1) at (0,-1.5) {$v_5$};
  \node[vertex] (v2) at (-1.5,-1.5) {$v_4$};
  \node[vertex] (v3) at (-1.5,0.5) {$v_2$};
  \node[vertex] (v4) at (0,0.75) {$v_3$};
  \node[vertex] (v5) at (-0.75,2) {$v_1$};

  \draw (v5) edge (v4);
  \draw (v5) edge (v3);
  \draw (v4) edge (v1);
  \draw (v4) edge (v2);
  \draw (v3) edge (v4);
  \draw (v3) edge (v2);
\end{tikzpicture}
    }
    \hspace{1cm}
    \subcaptionbox{Tree\label{subfig:tree}}{%
      \footnotesize
        \begin{tikzpicture}[->,>=stealth, shorten >=1pt, auto, very thick, scale=.5]

  \tikzstyle{vertex}=[circle,fill=cyan!50!,draw,inner sep=1pt, minimum size=7pt]

  \node[vertex] (v1) at (1,-1.5) {$v_6$};
  \node[vertex] (v2) at (-1.5,-1.5) {$v_4$};
  \node[vertex] (v3) at (-1.5,0.5) {$v_2$};
  \node[vertex] (v4) at (0,0.5) {$v_3$};
  \node[vertex] (v5) at (-0.75,2) {$v_1$};
  \node[vertex] (v6) at (-0.25,-1.5) {$v_5$};

  \draw (v5) edge (v4);
  \draw (v5) edge (v3);
  \draw (v4) edge (v1);
  \draw (v4) edge (v6);
  \draw (v3) edge (v2);
\end{tikzpicture}
      }
    \hspace{1cm}
    \subcaptionbox{Clique\label{subfig:clique}}{%
      \footnotesize
      \begin{tikzpicture}[-,>=stealth, auto, very thick, scale=.5]

  \tikzstyle{vertex}=[circle,fill=cyan!50!,draw,inner sep=1pt, minimum size=7pt]

  \node[vertex] (v1) at (0,-1.2) {$v_1$};
  \node[vertex] (v2) at (-1.3,-0.5) {$v_2$};
  \node[vertex] (v3) at (-0.9,1.1) {$v_3$};
  \node[vertex] (v4) at (0.9,1.1) {$v_4$};
  \node[vertex] (v5) at (1.3,-0.5) {$v_5$};

  \draw (v1) -- (v2);
  \draw (v1) -- (v3);
  \draw (v1) -- (v4);
  \draw (v1) -- (v5);
  \draw (v2) -- (v3);
  \draw (v2) -- (v4);
  \draw (v2) -- (v5);
  \draw (v3) -- (v4);
  \draw (v3) -- (v5);
  \draw (v4) -- (v5);
\end{tikzpicture}
      }
    \hspace{1cm}
    \subcaptionbox{Grid/3-regular graph\label{subfig:grid}}{%
      \footnotesize
      \begin{tikzpicture}[-,>=stealth, auto, very thick, scale=.5]

  \tikzstyle{vertex}=[circle,fill=cyan!50!,draw,inner sep=1pt, minimum size=7pt]

  \node[vertex] (v1) at (0,0) {$v_1$};
  \node[vertex] (v2) at (2,0) {$v_2$};
  \node[vertex] (v3) at (4,0) {$v_3$};
  \node[vertex] (v4) at (0,-2) {$v_4$};
  \node[vertex] (v5) at (2,-2) {$v_5$};
  \node[vertex] (v6) at (4,-2) {$v_6$};

  \draw (v1) edge (v2);
  \draw (v1) edge (v4);
  \draw (v2) edge (v3);
  \draw (v2) edge (v5);
  \draw (v3) edge (v6);

  \draw (v4) edge (v5);
  \draw (v5) edge (v6);

\end{tikzpicture}
      }
    \hspace{1cm}
    \subcaptionbox{2-community graph\label{subfig:community}}{%
       \footnotesize
       \begin{tikzpicture}[-,>=stealth, auto, very thick, scale=.5]

  \tikzstyle{vertex}=[circle,fill=cyan!50!,draw,inner sep=1pt, minimum size=7pt]
  \tikzstyle{community1}=[fill=blue!30!white]
  \tikzstyle{community2}=[fill=red!30!white]

  \node[vertex,community1] (v1) at (0,2) {$v_1$};
  \node[vertex,community1] (v2) at (2,2) {$v_2$};
  \node[vertex,community1] (v3) at (0,0) {$v_3$};
  \node[vertex,community1] (v4) at (2,0) {$v_4$};

  \node[vertex,community2] (v5) at (4,2) {$v_5$};
  \node[vertex,community2] (v6) at (6,2) {$v_6$};
  \node[vertex,community2] (v7) at (4,0) {$v_7$};
  \node[vertex,community2] (v8) at (6,0) {$v_8$};

  \draw (v1) -- (v2);
  \draw (v1) -- (v3);
  \draw (v1) -- (v4);
  \draw (v2) -- (v3);
  \draw (v2) -- (v4);
  \draw (v3) -- (v4);

  \draw (v5) -- (v6);
  \draw (v5) -- (v7);
  \draw (v5) -- (v8);
  \draw (v6) -- (v7);
  \draw (v6) -- (v8);
  \draw (v7) -- (v8);

  \draw (v2) -- (v5);
  \draw (v4) -- (v7);

\end{tikzpicture}
      }
    \hspace{1cm}
    \subcaptionbox{Bipartite graph\label{subfig:bipartite}}{%
       \footnotesize
       \begin{tikzpicture}[-,>=stealth, auto, very thick]

  \tikzstyle{vertex}=[circle,fill=cyan!50!,draw,inner sep=1pt, minimum size=15pt]
  \tikzstyle{layer1}=[fill=blue!30!white]
  \tikzstyle{layer2}=[fill=red!30!white]

  \node[vertex,layer1] (v1) at (0,1) {$v_1$};
  \node[vertex,layer1] (v2) at (1,1) {$v_2$};
  \node[vertex,layer1] (v3) at (2,1) {$v_3$};

  \node[vertex,layer2] (v4) at (0.5,0) {$u_1$};
  \node[vertex,layer2] (v5) at (1.5,0) {$u_2$};

  \foreach \i in {1,2,3}
    \foreach \j in {4,5}
      \draw (v\i) -- (v\j);

\end{tikzpicture}
      }
      \caption{Illustration of various graph types.  For better
        clarity, self-loops are not shown. In the grid illustration,
        $v_1, v_3, v_4$ and $v_6$ possess self-loops, ensuring the graph's
        3-regularity.}
      \label{fig:graph-types}
\end{figure}

Above, we introduced the general concept of graph. We can, however,
categorise graphs into different families that exhibit a very specific
(and similar within the corresponding family) topology. In the
following, we discuss some of the most well-known families of graphs
that will ease understanding of the research contributions presented
in this thesis. \autoref{fig:graph-types} illustrates a visual example
for each type of graphs defined herein.
\begin{definition}[Directed Acyclic Graph (DAG)]
  A directed graph $g$ is a Directed Acyclic Graph (DAG) if there are
  no directed cycles. Formally, if $g = (\gV, \gE)$ where
  $\gE \subset \gV \times \gV$, then $g$ is a DAG if and only if
  $\nexists v_1, v_2, \ldots, v_k \in \gV$ such that
  $(v_i, v_{i+1}) \in \gE$ for $1 \leq i < k$ and
  $(v_k, v_1) \in \gE$.
\end{definition}
DAGs are used in many fields including computer science and
physics. They are notably used to represent dependencies among tasks,
as in a project management tool or in a build system, where each
vertex is a task and there is an edge from $v$ to $w$ if task $v$ must
be completed before task $w$ can start.
\begin{definition}[Tree]
  Let $g$ be a DAG. Thus, $g$ is a tree if and only if $\exists! r 
  \in \gV \;.\; |\gE^{in}_v| = 0$ and $\forall v \in \gV \setminus 
  \set{r} \;:\; (v, v) \notin \gE \;\land\; |\gE^{in}_v| = 1$.
\end{definition}
Intuitively, each node in a tree can be connected to multiple 
``successors'' (but not itself), and is required to only have one 
parent (i.e.,  in-degree must be 1). The only exception is 
represented by the root node \textit{r}, which has in-degree equal to 0.
A tree can then be categorised as a special case of a DAG. 
Trees are frequently used as a data structure in computer
science to represent hierarchical relations. In fact, as opposed to
DAGs, we can always partition the set of nodes in different ``sets'',
called levels. To easily understand this concept, consider the
function $\text{succ}(v) = \set{u \mid u \in \gN^{out}(v)}$ that,
given a node $v$, returns the set of ``out neighbours'', called
\textit{successors}. Intuitively, we can define the root node $r$ as
being at level 0 and its direct successors (i.e., $\text{succ}(r)$) as
being at level 1. Consequently, $\text{succ}(\text{succ}(r))$ are at
level 2, and so on. This tree-structure underpins, for instance,
filesystems in modern operative system \citep{mathur2007new} as well
as programming languages \citep{slonneger1995formal}.
\begin{definition}[Clique]
  Let $g$ be a graph. $g$ is a clique if and only if
  $\forall u \neq v \in \gV \;:\; (u,v) \in \gE$.
\end{definition}
In other words, each node is connected to every other node. Cliques
represent completely interconnected groups of entity. This topology
can often be used to represent problems of computational geometry in
the form of graphs, such as the convex hull or the aforementioned TSP.
\begin{definition}[k-regular graph]
  Let $g$ be a graph. $g$ is $k$-regular if and only if
  $\forall v \in \gV \;:\; |\gN(v)| = k$.
\end{definition}
\begin{definition}[2D grid]
  Let $g$ be a graph. $g$ is a 2D grid its nodes and edges can be
  placed on a regular lattice in two dimensions, such that adjacent
  nodes are only connected (possibly) to themselves and horizontally
  and vertically with each other.
\end{definition}
For a visual representation of 2D grids, refer to
\autoref{fig:graph-types}. Grid graphs are often used to represent
spatial relationships. For instance, in computer vision or image
processing, pixels in an image may be treated as nodes of a grid
graph, with edges connecting each pixel to its neighbours.
\begin{definition}[k-community graph]
  Let $g$ be a graph. $g$ is a $k$-community graph if
  $\exists C_1, C_2, \dots, C_k \subset \gV \;:\; \gV = C_1 \cup C_2
\cup \dots \cup C_k$ and $\forall i \neq j C_i \cap C_j = \emptyset$,
and such that $\forall u, v \in C_i, w \in C_j \;:\; i \neq j$, the
probability that there is an edge $(u, v)$ is higher than the
probability that there is an edge $(v, w)$.
\end{definition}
Hence, a $k$-community graph is partitioned into $k$ communities $C$,
where a community $C \subset \gV$ is a subset of nodes such that the
nodes within the community are more densely connected to each other
than to the nodes outside the community.
\begin{definition}[Bipartite graph]
    Let $g = \tuple{\gV, \gE, \dots}$ be a graph.  $g$ is a bipartite
graph if $\exists \gV_1, \gV_2 \subset \gV : \gV_1 \cap \gV_2 =
\emptyset \land \gV = \gV_1 \cup \gV_2$ such that $\forall (u, v) \in
\gE : u \in \gV_1 \land v \in \gV_2$.
\end{definition}
Bipartite graphs can be thought of as two sets of nodes representing
different entities which depend on each other. For example, bipartite
graphs are exploited to represent the state of mixed integer programs
(MIPs), where we aim to find the minimising assignment to a set of
variables ($\gV_1$) subjected to a set of constraints ($\gV_2$).

\subsection{Computational problems on graphs}
\label{sec:problems-on-graphs}
In this section, we explore some popular computational
problems associated with graphs. As we will analyse in
\autoref{sec:algorithms}, these problems are often targeted by
algorithms for their resolution and their inherent difficulty
can vary widely.
\begin{problem}[Reachability]\label{prob:reachability}
  Let $g$ be a graph and $u, v \in \gV$. Then, the reachability
  problem asks to determine whether there is a path $\pi$ from node
  $u$ to node $v$ in the graph.
\end{problem}
Similarly to \autoref{def:reachability}, reachability is the problem
of determining whether a certain destination can be reached from a
given starting point. This problem forms the basis of many other graph
problems, typically in the form of a ``subproblem'' (i.e., it must be
solved prior to resolution of the main problem).
\begin{problem}[Shortest path]\label{prob:shortest-path}
  Let $g = \tuple{\gV, \gE, \gW}$ be a graph and $u, v \in \gV$, where
  $w_{uv} \in \gW$ is a ``weight'' assigned to each edge $(u, v)$ in
  the graph. The shortest path problem asks to find a path
  $\pi=u \dots v$ such that $c(\pi) = \sum_{i} w_{\pi_i, \pi_{i+1}}$ is
  mininum.
\end{problem}
The shortest path problem is common in applications like navigation,
where the nodes represent locations and the edge weights represent the
cost of traversing that edge (typically in terms of time or distance).
\begin{problem}[Minimum Spanning Tree (MST)]
  \label{prob:mst}
  Let $g = \tuple{\gV, \gE, \gW}$ be an undirected, connected graph.
  A minimum spanning tree $t \subseteq g$ is a subgraph of $g$
  such that $t$ is a tree spanning all the nodes in $g$ and the total
  weight of the selected edges in $t$ is minimised.
\end{problem}
The MST problem is related to shortest path, and constitutes an
important subproblem for a rich collections of more complex graph
problems. Indeed, it is used as a subgoal in the popular
\textit{Christofides} heuristic \citep{christofides1976worst}, an
algorithm that is used to retrieve suboptimal solutions for the TSP.
\begin{problem}[Maximum Flow (Max-Flow)]
  \label{prob:maxflow}
  Let $g = \tuple{\gV, \gE, \gC}$ be a graph, with $c_{uv} \in \gC$
  being a ``capacity'' assigned to each edge $(u, v)$ in the
  graph. Let $s \in \gV$ be the ``source'' node and $t \in \gV$ be the
  ``target'' node.  The maximum flow problem is concerned with finding
  a flow $f: \gE \rightarrow \sR$ such that the total amount of flow
  from $s$ to $t$ is maximised, and flow assignments $f_{uv}$ do not
  exceed capacities $c_{uv}$.
\end{problem}
The Maximum Flow problem models scenarios such as traffic flow or data
transfer in a network, where the goal is to maximise the throughput
from source to sink. Differently from the presented problems so far,
MaxFlow poses additional challenges. In fact, an \textit{optimal}
solution for this problem has to satisfy two major constraints. The
first one is presented in the above problem definition (i.e., edge
capacity constraint), while the second is called \textit{flow
  conservations}. Respectively, the edge capacity constraint can
be described as follows:
\begin{align}\label{eq:capacity-constraint}
  \forall (u, v) \in \gV \;.\; f_{uv} \leq c_{uv},
\end{align}
whereas flow conservation is formally stated as:
\begin{align} \label{eq:flow-conservations}
  \begin{split}
    \forall u \in \gV \setminus \{s, t\}
    : &\left(\sum_{(u,v) \in E} f_{uv} + \sum_{(v,u) \in E} f_{vu}\right) = 0\\
    \land &\left(\sum_{(s, v) \in \gE} f_{sv} = -\sum_{(v, t) \in \gE} f_{vt}\right).
  \end{split}
\end{align}
Equation \eqref{eq:flow-conservations} states precisely that all the
flow coming out from the source must reach the target node (i.e.,
their algebraic sum is 0), or, equivalently, that no intermediate node
is allowed to retain flow. Typically, MF seeks the maximum solution to
\eqref{eq:capacity-constraint} and \eqref{eq:flow-conservations}.
The entire MF problem can then be mathematically described as:
\begin{equation}\label{eq:mf}
  \max \sum_{(s, v) \in \gE} f_{sj} \quad
  \text{subject to \eqref{eq:capacity-constraint}} \, \land \,
  \text{\eqref{eq:flow-conservations}}.
\end{equation}

\begin{problem}[Minimum $(s, t)$ Cut (Min-Cut)]
  \label{prob:mincut}
  Let $g = \tuple{\gV, \gE, \gW}$ be a graph, with $w_{ij} \in \gW$
  being a ``weight'' assigned to each edge $(i, j)$ in the graph. Let
  $s, t \in \gV$. The minimum $(s, t)$ cut problem seeks to find a
  partition of nodes $(\gV_s, \gV_t)$ such that $s \in \gV_s$ and $t \in
  \gV_t$ and the sum of the weights $w_{ij}$ associated with edges $(i,
  j)$ crossing the partition (i.e., $i \in \gV_s$ and $j \in \gV_t$) is
  minimised.
\end{problem}
Intuitively, the minimum $(s, t)$ cut problem seeks to identify the
set of \textit{bottleneck} edges, such that the sum of their weights
is minimal, whose removal \textit{disconnects} $s$ from $t$.  This
problem is closely related to maximum flow. Actually, the two problems
are linked by the Max-Flow Min-Cut theorem \citep{dantzig2003max},
which states that the value of the maximum flow is equal to the value
of the minimum cut in a flow network. Furthermore, this theorem
establishes a \textit{dual} relation between min-cut and
max-flow. Duality is a property of, especially, linear programs, which
will be discussed more thoroughly in \autoref{sec:co}. Both min-cut
and max-flow fit in the linear programming framework.
\begin{problem}[Travelling Salesman Problem]
  \label{prob:tsp}
  Let $g = \tuple{\gV, \gE, \gW}$ be a clique. Given a starting node
  $v$, The Travelling Salesman Problem (TSP) asks for the shortest
  possible hamiltomian path $\pi$ that visits each node exactly once
  and returns to $v$.
\end{problem}
\begin{problem}[Vehicle Routing Problem]
  \label{prob:vrp}
  Let $g = \tuple{\gV, \gE, \gW}$ be a clique. The Vehicle Routing
  Problem (VRP) is a generalisation of the TSP which asks for $n$
  paths $\pi_1, \dots, \pi_n$ such that
  $\pi_1 \cap \dots \cap \pi_n = \emptyset$ and
  $\pi_1 \cup \dots \cup \pi_n = \gV$. Furthermore, each $\pi_i$
  is a hamiltonian path on the subgraph formed by the nodes traversed
  by $\pi_i$.
\end{problem}
TSP and VRP are classic problems in computer science and operations
research. They have several practical applications, such as route
optimisation for delivery services.
\begin{problem}[Vertex K Center Problem]
  \label{prob:vkc}
  Let $g$ be a graph. Given $k \in \sN$, the Vertex K
  Center Problem asks for a set of $k$ nodes $C \subseteq \gV$
  such that the maximum distance from any node to the nearest center
  in $C$ is minimised.
\end{problem}
The Vertex K Center Problem \citep{hakimi1964optimum} is extremely
important to optimise emergency facility locations. For example, in a
network representing a city, the vertices could represent potential
locations for hospitals, and the goal is to place $k$ hospitals in a
way that minimises the maximum distance anyone has to travel to reach
the nearest hospital.

\section{Combinatorial Optimisation}\label{sec:co}
Combinatorial Optimisation (CO) is a branch of computer science which
is concerned with finding the best configuration of elements within a
discrete and finite set of objects, subject to a (possibly empty) set
of constraints. Issues related to CO problems hold substantial
practical relevance, especially concerning modern industrial
applications. Actually, the Travelling Salesman Problem, described in
\autoref{prob:tsp}, can be categorised as a combinatorial optimisation
problem, with wide-ranging applications spanning from genome
arrangement \citep{sankoff1998multiple} and system biology
\citep{johnson2006protein} to route optimisation \citep{xu2019brief}
and logistics \citep{baniasadi2020transformation}. Furthermore, the
Vertex K Center Problem (\autoref{prob:vkc}) may also be considered
under the framework of combinatorial optimisation

Formally, a combinatorial optimisation problem is a tuple
$\gP=\tuple{\gI, \gF, m}$, where $\gI$ is a set of possible instances
for $\gP$, given a instance $p \in \gI$, $\gF$ is the set of
\textit{feasible} solutions (i.e., the ones satisfying all the
constraints) and $m$ is the \textit{objective function} which given a
feasibile solution $y \in \gF$, assigns a measure of \textit{quality}
to it. Solving $p$ requires finding the optimal solution $y^*$, which
is typically defined as:
\begin{equation}
  y^* = \argmax \set{m(y) \mid y \in \gF}.
\end{equation}
W.l.o.g, any combinatorial optimisation problem can be expressed
either as a minimisation or a maximisation problem.  As in
combinatorial problems the optimisation is conducted on (usually
large) discrete spaces, graphs serve as a natural framework for
encoding such problems, given their inherent ability to represent
discrete objects and relations that exist between them. Indeed,
all the problems presented in \autoref{sec:problems-on-graphs} are
combinatorial optimisation problems.

Furthermore, CO problems can vary in complexity, in the sense that
given two combinatorial optimisation problems (e.g., TSP and Shortest
Path), one may be more difficult than the other. In particular, for
each combinatorial problem $\gP$, we can derive its corresponding
\textit{decision} problem. The decision version seeks to determine
whether there exists a feasible solution $y$, satisfying a particular
measure $m_0$. Hence, the main difference between combinatorial and
decision problems lies in the output space of solutions. In
particular, the latter asks a question that requires a binary solution
(i.e., \textit{yes} or \textit{no}), whereas a solution to the former
is an assignment of the problem variables. Notwithstanding this
distinction, the close relation between combinatorial and decision
problems enables to (informally) ``categorise'' combinatorial problems
in the well-known classes of complexity which are originally defined
for their decision versions: P, NP, NP-Complete and NP-Hard. A more
comprehensive exploration of the differences between these classes
will be provided in \autoref{sec:complexity-classes}.

\subsection{The P vs NP dilemma}
\label{sec:complexity-classes}
Leveraging their connection to decision problems, combinatorial
optimisation problems can be classified into a well-defined
stratification of complexity classes. Although there exist a variety
of different complexity classes and different categorisations, here we
focus on a small fraction (but arguably the most crucial): P and NP.

P is defined as the class of problems that can be solved efficiently
in \textit{polynomial time}. In other words, the time required to find
an optimal solution to these problems is proportional to a polynomial
function of the problem's size. Precisely, given a problem $\gP$,
there exists an algorithm $A$ that can solve optimally all of its
instances $p \sim \gP$ in at most $O(n^k)$ operations, where $n$ is
$p$'s size and $k$ is some constant.

NP, instead, is the class of problems that can be \textit{verified} in
polynomial time. This means that if we had a \textit{certificate} of a
solution, we could use it to verify that the given solution is correct
in polynomial time.

A straightforward consequence of these definitions is that
$\text{P} \subseteq \text{NP}$, namely any problem \textit{solvable}
in polynomial time is also trivially \textit{verifiable} in polynomial
time. The converse implication (i.e., $\text{NP} \subseteq \text{P}$),
which asks whether a problem that is verifiable in polynomial time can
also be solved efficiently is, however, still unverified. This
constitutes the popular P \textit{vs} NP question \citep{cook2000the}
and is arguably the most fundamental question within the field of
computer science.

Furthermore, broader categories such as NP-\textit{completeness}
(NP-C) and NP-\textit{hardness} (NP-H) are usually defined to account
for the most challenging computational problems within NP and
beyond. Informally, NP-C consists of problems that are both in NP and
are ``at least as hard'' as any other problem in NP. NP-H, instead,
includes problems for which verifiability is not even possible in
polynomial time (thus, they may not even be in NP itself), making
them the hardest problems to deal with.

The consequence of the P vs NP dilemma is that, at the time of
writing, for some challenging NP-C and NP-H problems there exists 
no polynomial-time exact algorithm for solving them. This
implies that for many practical problems of interests, we have to rely
on polynomial-time approximation algorithms, also referred to as
\textit{heuristics}. When employing such strategies, the focus becomes
to find ``good'' solutions in a reasonable amount of time, even though
they are not optimal. Certainly, \textit{exact} algorithms still exist
but they require exponential time (i.e., $O(k^n)$), often making their
use infeasible for practical large-scale problems.

Historically, development of specialised heuristics for NP-C/NP-H
combinatorial problems has been a central topic in combinatorial
optimisation and computer science for a very long time
\citep{robinson1949hamiltonian, fulkerson1955computation,
  dantzig1959truck, hakimi1964optimum}. Such heuristics typically
require significant expertise and effort in their development and
engineering.

\subsection{Linear Programming}
\label{sec:lp}
Linear Programming (or \textit{linear programs}) (LP), is a
mathematical framework used to model a variety of combinatorial
optimisation problems where dependencies between the problem variables
and between the problem constraints can be expressed using linear
relationships.  Mathematically, a LP can be phrased as:
\begin{equation}\label{eq:lp}
  \min \left\{\vc^{\top} \vx \mid \vx \in
    \sR^n \wedge \mA \vx \leq \vb \wedge \vx \geq 0\right\}.
\end{equation}
Here, $\vc$ is the vector of coefficients for the objective function,
$\vx$ is the vector of variables, $\mA$ is the matrix of coefficients
for the constraints, and $\vb$ is the vector of constants in the
constraints.

In many practical situations, however, the variables of a problem must
take on integer values. This leads to the concept of Integer Linear
Programming (ILP), where the constraints are linear, as in LP, but
some constraints may enforce integrality of some problem
variables:
\begin{equation}
  \min \left\{\vc^{\top} \begin{bmatrix}
                   \vx \\
                   \vy \\
  \end{bmatrix} \mid \vx \in
    \sR^{n-k} \wedge \vy \in \sZ^k \wedge
    \mA \vx \leq \vb \wedge \vx \geq 0\right\}.
\end{equation}
Despite their apparent simplicity, LP and ILP accounts for a multitude
of problems of practical relevance. By extension, ILP broadens the
applicability of LP to problems requiring discrete solutions. For
instance, equation \eqref{eq:mf} is actually a LP. More in general,
all problems presented in \autoref{sec:problems-on-graphs} find their
representations within either the LP (e.g., max-flow/min-cut) or the
ILP framework (e.g., TSP and Shortest Path).

\subsection{Duality}
\label{sec:duality}
The concept of duality is of fundamental importance in the theory of
linear programming. Specifically, duality establishes a relationship
between a given LP problem, known as the \textit{primal problem}, and
another LP problem called the \textit{dual problem}. In the duality
theory, these two problems are tightly connected and interesting
properties arise from their relation. Notably, the dual problem is
derived from the primal problem and possesses the characteristic that
the optimal value of the dual problem provides a bound (either upper
or lower) on the optimal value of the primal problem. In order to see
such upper/lower bound relationship, first consider the primal LP
problem in explicit form as follows:
\begin{equation}\label{eq:lp-primal}
(P) \quad \begin{aligned}
& \underset{\vx}{\text{maximise}}
& & \vc^{\top} \vx \\
& \text{subject to}
& & \mA \vx \leq \vb \\
& & & \vx \geq 0,
\end{aligned}
\end{equation}
which is equivalent to \eqref{eq:lp}. From (P), the dual problem (D)
is constructed by introducing a vector of dual variables $\vy$
corresponding to the constraints of the primal problem. The dual
problem can thus be formulated as follows:
\begin{equation}\label{eq:lp-dual}
(D) \quad \begin{aligned}
& \underset{\vy}{\text{minimise}}
& & \vb^{\top} \vy \\
& \text{subject to}
& & \mA^{\top} \vy \geq \vc \\
& & & \vy \geq 0.
\end{aligned}
\end{equation}
By analising \eqref{eq:lp-primal} and \eqref{eq:lp-dual}, we note that
the dual problem is a maximisation problem if the primal problem is a
minimisation problem (and viceversa). The \textit{Weak Duality
  Theorem} formally establishes the bound relationship between (P) and
(D). W.l.o.g, assuming a maximisation (P) and a minimisation (D), then
the Weak Duality Theorem is stated as follows:
\begin{theorem}[Weak Duality Theorem]
  Let $\vx_0$ be a feasible solution of the primal problem (P). Let
  $\vy_0$ be a feasible solution of the dual problem (D). Then:
  \[
    \vb^{\top} \vy_0 \geq \vc^{\top} \vx_0.
  \]
\end{theorem}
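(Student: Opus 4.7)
The plan is to establish the desired inequality $\vb^{\top}\vy_0 \geq \vc^{\top}\vx_0$ through a short chain of inequalities that exploits the feasibility of $\vx_0$ for (P) and of $\vy_0$ for (D). The core observation is that the dual constraint $\mA^{\top}\vy_0 \geq \vc$ lets us upper-bound $\vc^{\top}\vx_0$ by something involving $\mA$, and then the primal constraint $\mA\vx_0 \leq \vb$ lets us collapse that intermediate quantity into $\vb^{\top}\vy_0$. The nonnegativity constraints $\vx_0 \geq 0$ and $\vy_0 \geq 0$ are what allow us to preserve the inequality signs when multiplying termwise.

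Concretely, I would proceed as follows. First, I would start from dual feasibility, $\mA^{\top}\vy_0 \geq \vc$, and take the inner product with the primal feasible $\vx_0 \geq 0$ on both sides; since $\vx_0$ is componentwise nonnegative, the inequality is preserved, giving $\vc^{\top}\vx_0 \leq (\mA^{\top}\vy_0)^{\top}\vx_0$. Second, I would rewrite the right-hand side using the transpose identity $(\mA^{\top}\vy_0)^{\top}\vx_0 = \vy_0^{\top}\mA\vx_0$, which is just a manipulation of scalars. Third, I would apply primal feasibility $\mA\vx_0 \leq \vb$ and take the inner product with $\vy_0 \geq 0$; nonnegativity of $\vy_0$ again preserves the inequality, yielding $\vy_0^{\top}\mA\vx_0 \leq \vy_0^{\top}\vb = \vb^{\top}\vy_0$. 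Chaining these three steps produces
\[
\vc^{\top}\vx_0 \;\leq\; (\mA^{\top}\vy_0)^{\top}\vx_0 \;=\; \vy_0^{\top}\mA\vx_0 \;\leq\; \vb^{\top}\vy_0,
\]
which is precisely the claim.

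There is, frankly, no serious obstacle here: the result is essentially a one-line computation. The only subtle point worth being explicit about is the role of the nonnegativity constraints $\vx_0 \geq 0$ and $\vy_0 \geq 0$, since without them the first and third inequalities could flip sign componentwise. I would therefore take care, when stating each of the two key inequalities, to note that we are taking an inner product with a nonnegative vector and so the direction of the inequality is preserved. Beyond that, the proof is purely algebraic and requires no appeal to convex analysis, Farkas' lemma, or any deeper machinery, which is exactly why the result is called the \emph{weak} duality theorem.
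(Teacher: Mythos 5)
Your proof is correct. The paper states the Weak Duality Theorem as a classical background result and does not supply a proof of its own, so there is nothing to compare against; your argument --- sandwiching $\vy_0^{\top}\mA\vx_0$ between $\vc^{\top}\vx_0$ and $\vb^{\top}\vy_0$ via the two feasibility conditions, with the nonnegativity of $\vx_0$ and $\vy_0$ preserving the inequality directions --- is the standard one-line proof and is complete as written.
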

In essence, this theorem guarantees that the objective value of the
dual solution will always be greater than or equal to the objective
value of the primal solution. Conversely, if (P) is a minimization
problem and (D) is a maximization problem, the theorem's conclusion
would be $\vc^{\top} \vx_0 \geq \vb^{\top} \vy_0$.

Furthermore, relationship between a (P) and a (D) can be even
stronger.  Specifically, solving one of the two problems might yield
an objective function value that exactly matches that of its
counterpart. This concept is formally captured in the \textit{Strong
  Duality Theorem}:
\begin{theorem}[Strong Duality Theorem]
  \label{th:strong-duality-theorem}
  If the primal problem (P) has an optimal solution $\vx^*$, then the
  dual problem (D) also has an optimal solution $\vy^*$, and the
  optimal values of the primal and dual problems are equal, that is,
  \[
    \vb^{\top} \vy^* = \vc^{\top} \vx^*.
  \]
\end{theorem}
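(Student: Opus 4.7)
The plan is to derive strong duality as a consequence of weak duality combined with a theorem of the alternative (Farkas' lemma or, equivalently, a separating hyperplane argument on a suitable polyhedron). Weak duality, which is already in hand, tells us that $\vc^\top \vx \leq \vb^\top \vy$ for every primal-feasible $\vx$ and dual-feasible $\vy$. Hence it suffices to exhibit a dual-feasible $\vy^*$ with $\vb^\top \vy^* \leq \vc^\top \vx^*$, and equality (along with optimality of $\vy^*$ for (D)) then follows automatically.

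First I would set $z^* := \vc^\top \vx^*$ and consider the system
\begin{equation*}
\mA \vx \leq \vb, \qquad \vx \geq 0, \qquad \vc^\top \vx > z^*.
\end{equation*}
By optimality of $\vx^*$ this system has no solution. The next step is to invoke Farkas' lemma (in the ``inhomogeneous'' form usually phrased via Motzkin's transposition theorem) to the enlarged inequality system $\bigl[\begin{smallmatrix} \mA \\ -\vc^\top \end{smallmatrix}\bigr]\vx \leq \bigl[\begin{smallmatrix} \vb \\ -z^* \end{smallmatrix}\bigr]$ with $\vx \geq 0$ and the strict inequality $-\vc^\top \vx < -z^*$. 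The alternative supplies multipliers $\vy \geq 0$ and $\lambda \geq 0$, not both zero, satisfying
\begin{equation*}
\mA^\top \vy - \lambda \vc \geq 0, \qquad \vb^\top \vy - \lambda z^* \leq 0.
\end{equation*}

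The main obstacle, and the place where one has to argue carefully, is ruling out the degenerate case $\lambda = 0$. If $\lambda = 0$ then $\vy$ would be a nonzero vector with $\mA^\top \vy \geq 0$ and $\vb^\top \vy \leq 0$, while $(\mA^\top \vy)^\top \vx^* = \vy^\top \mA \vx^* \leq \vy^\top \vb \leq 0$ combined with $\vx^* \geq 0$ forces $\vy^\top \mA \vx^* = 0$; one then has to show this contradicts either the existence of the optimum (by producing an improving direction, or a recession direction along which $\vc^\top \vx$ is unbounded above on the primal feasible set) or the implicit regularity of the problem. Once $\lambda > 0$ is established, rescaling $\vy^* := \vy/\lambda$ gives a dual-feasible point with $\vb^\top \vy^* \leq z^*$, and weak duality pins down equality $\vb^\top \vy^* = \vc^\top \vx^*$ and simultaneously certifies that $\vy^*$ is optimal for (D).

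An alternative route I would keep in reserve, in case the Farkas argument becomes cumbersome, is the constructive one through the simplex method: assuming the primal is feasible and bounded, simplex (with an anti-cycling rule such as Bland's) terminates at a basic optimal $\vx^*$, and the vector of simplex multipliers associated with the final basis is automatically dual feasible with matching objective value. This has the advantage of being algorithmic but costs more setup (basic feasible solutions, reduced costs, pivoting), so I would prefer the Farkas-based proof and only fall back to simplex if the handling of the $\lambda = 0$ case proves too delicate within the scope of this chapter.
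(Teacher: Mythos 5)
The thesis states this theorem as classical background and supplies no proof of its own, so there is no in-paper argument to compare yours against; I can only assess the proposal on its merits. Your overall strategy is the standard one and your reduction is correct: by weak duality it suffices to exhibit a dual-feasible $\vy^*$ with $\vb^{\top}\vy^* \le \vc^{\top}\vx^*$, after which equality and the optimality of $\vy^*$ for (D) are automatic.

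The genuine gap is in the form of the alternative you write down, and it is exactly why your $\lambda=0$ case does not close. With the non-strict conclusion $\vb^{\top}\vy - \lambda z^* \le 0$, the case $\lambda=0$ cannot be excluded: any $\vy \ge 0$ with $\mA^{\top}\vy \ge 0$ and $\vb^{\top}\vy = 0$ (e.g.\ a multiplier on a redundant constraint $0^{\top}\vx \le 0$) satisfies your system vacuously, and your computation only yields $\vy^{\top}\mA\vx^* = 0$, which contradicts nothing. The missing ingredient is a \emph{strict} inequality on the $\vb$-side. The standard repair: apply Farkas' lemma to the infeasible system $\{\mA\vx \le \vb,\ -\vc^{\top}\vx \le -(z^*+\epsilon),\ \vx \ge 0\}$ for $\epsilon>0$, whose alternative gives $\vy\ge 0$, $\lambda\ge 0$ with $\mA^{\top}\vy \ge \lambda\vc$ and $\vb^{\top}\vy < \lambda(z^*+\epsilon)$ strictly; then $\lambda=0$ forces $\vb^{\top}\vy<0$ alongside $\mA^{\top}\vy\ge 0$, $\vy\ge 0$, contradicting $0 \le (\mA^{\top}\vy)^{\top}\vx^* = \vy^{\top}\mA\vx^* \le \vy^{\top}\vb < 0$, which uses only primal feasibility of $\vx^*$. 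This produces dual-feasible points $\vy_\epsilon$ with $\vb^{\top}\vy_\epsilon < z^*+\epsilon$, so you must additionally argue that the dual attains its infimum (or, cleaner, homogenise from the start and apply cone duality to $\{(\vx,t)\ge 0 : \mA\vx - t\vb \le 0\}$, ruling out $t=0$ recession directions because they would make the primal objective unbounded above, contradicting the existence of $\vx^*$; this yields $\vb^{\top}\vy^* \le z^*$ directly with no limit to take). Your simplex fallback is a legitimate alternative that sidesteps all of this at the cost of the machinery you describe.
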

Both strong and weak duality have practical implications of
significant value. In particular, the aforementioned bound relations
underpin various algorithms and optimization techniques in
combinatorial optimisation. Notable examples in this regards are the
\textit{simplex method} \citep{dantzig1990origins} and the family of
\textit{branch-and-bound} methods, effectively showing practical
relevance of duality even in the context of ILP. Another notable
implication is that duality provides a framework for establishing
relationships between seemingly different optimisation
problems. Indeed, the dual of one problem may correspond to another
combinatorial problem of practical relevance. Max-flow and min-cut, as
discussed earlier in \autoref{sec:problems-on-graphs}, are a
well-known example of such a connection.

To conclude this section, duality is then a significant concept in
combinatorial optimisation. As we will see in the next chapters of
this thesis, duality is at the basis of one of the developed
approaches for learning algorithms during this thesis work.


\section{Algorithms}\label{sec:algorithms}
Algorithms lie at the heart of computer science and discrete
mathematics, encompassing principles, methodologies, and techniques
that form the building blocks of computational problem-solving. The
study and development of algorithms influence multiple areas, ranging
from mathematics to biology and (combinatorial)
optimisation. Recently, algorithms have made their way into artificial
intelligence models, aiming at empowering learning models by
leveraging the inherent \textit{reasoning} capabilities of these
paradigms. The task of teaching algorithms, however, presents
substantial challenges, particularly for gradient-based methods. In
this section, we establish the notation used to represent algorithms,
essential for comprehending the precise ways in which algorithms can
be processed and learnt (which will be analysed in the following
chapters). Additionally, we discuss the main difficulties algorithms
present to machine learning models and discuss several graph
algorithms within the context of the previously introduced notation.

\subsection{A simple notation for representing algorithms}
\label{sec:algo-notation}
An algorithm is typically defined as a computational procedure that,
taken an input, performs a \textit{sequence} of well-defined
instructions on that input to eventually produce an output
\citep{cormen2009introduction}. At a high level, algorithms operate
similarly to mathematical functions $f: \sA \rightarrow \sB$, in the
sense that they accept an input and produce an output (both possibly
multidimensional). The subtle difference lies in the fact that, while
functions simply establish a relation between an input and a output
(i.e., $f(\va) = \vb$), algorithms lay out the exact procedure needed
to convert input $\va$ to output $\vb$. This procedure can be
considered as a \textit{reasoning} process, where we apply the
sequence of algorithm steps, denoted as $A_t$, to slowly transform
$\va$ to $\vb$:
\begin{equation*}
  \va \overset{A_0}{\longrightarrow} \va^{(1)} \overset{A_1}{\longrightarrow} \dots
  \overset{A_{T-2}}{\longrightarrow} \va^{(T-1)} \overset{A_{T-1}}{\longrightarrow} \vb.
\end{equation*}
As outlined above, then, an algorithm $A$ is composed of a set of
\texttt{inputs}, a set of \textit{intermediate} transformations of the
inputs (i.e., \texttt{hints}), and the set of final \texttt{outputs}.
For the rest of this thesis we will always refer to and consider an
algorithm $A$ as the set of its \inp{}, \hints{} and \out{}. The whole
set of \hints{} plus the final \out{} is referred to as
\textit{algorithm trajectory}.  Furthermore, as we will explore in the
next sections, algorithms usually store and use (some of) the
intermediate \hints{} $\set{\va^{(1)},\dots,\va^{(T-1)}}$ for
subsequent steps. This is why these types of algorithms are often
referred to as \textit{iterative algorithms}, emphasising that the
solution is constructed through a series of iterations where the
output of an iteration may become (part of) the input for the
subsequent step. It is not unusual that at time $t$, $\va^{(t)}$
represents the complete \textit{state} of algorithm $A$, meaning that
no additional information beyond $\va^{(t)}$ is required to compute
the value at step $t+1$. In this scenario, the hint $\va^{(t)}$
becomes the new input for the computation of
$\va^{(t+1)} = A_t(\va^{(t)})$. As we will explore in
\autoref{sec:graph-algos}, this concept holds true for a considerable
number of important algorithms.

\subsection{Algorithmic guarantees}\label{sec:algo-guarantees}
In the preceding section, we explored the notion that algorithms are
fundamentally a series of \textit{extremely specific}
instructions. When executed in the correct sequence, these
instructions invariably produce a predictable outcome. Indeed, the
main criterion for labeling a sequence of instructions as an algorithm
is that they must be unambiguous, leaving no possibility for
misinterpretation. Specifically, any given instruction $I$, when
applied to a well-defined state $S_t$, will result in a well-defined
subsequent state $S_{t+1}$. In simpler terms, given our knowledge of
the characteristic of a state $S_t$ and the \textit{effect} that a
specific instruction will have when applied to it, we can invariably
\textit{predict} the subsequent state $S_{t+1}$, for any possible
instantiation of the state $S_t$.

This intuition is at the core of algorithms, and it is referred to as
\textit{correctness} of algorithms. In essence, an algorithm is
provably correct when, for any input that meets a set of specified
\textit{preconditions}, it invariably produces an output that meets a
set of specified \textit{postconditions} (i.e., ``correct'' output).
This implies that, when analysing the correctness of a specific
algorithm $A$, it is crucial to define precisely which states $S_t$
can be processed by the instructions of $A$. This reasoning applies
all the way back to $S_0$, effectively imposing constraints on the
\texttt{inputs} and thus establishing the acceptable inputs values
of $A$ (i.e, preconditions).

Given these two ``ingredients'' (i.e., well-definess of the states and
specificity of the instructions), an algorithm is ensured to operate
within a clearly defined space, and its dynamics can be analysed and
proven correct. This is, perhaps, the most important feature of
algorithmics in general, as it enables us to confirm with certainty
that an algorithm will consistently produce the correct output for a
given set of inputs.

\subsection{Algorithms as discrete functions}
\label{sec:algos-discrete-functions}
In the domain of deep learning and neural networks, the predominant
use case is approximating continuous functions. Indeed, the
\textit{Universal Approximation Theorem} \citep{hornik1989multilayer}
for neural networks essentially states that feedforward networks with
(at least) a single hidden layer containing a finite number of neurons
can approximate \textit{any} continuous functions on compact subsets
of $\sR^n$, under mild assumptions on the activation functions. This
theorem however, does not extend to the learnability of discrete
functions. Previously, we mentioned that algorithms can be thought of
as being similar to mathematical functions. If analysed under this
lens, however, the \textit{modus operandi} of algorithms makes them
resemble discrete functions rather than continuous functions, since
algorithms involve discrete operations and decisions.

Take, for instance, the algorithm of searching an element within a
list of integers. This algorithm can be represented as a function
$A: \sL \times \sN \rightarrow \{0, 1\}$ (with $\sL$ being the space
of all possible lists) such that:
\begin{equation}
A(L, x) =
\begin{cases} 1 & \text{if } x \in L \\ 0 & \text{otherwise}
\end{cases}.
\end{equation}
Similarly, consider the operation of pushing an element $e$ onto a
stack or queue $Q$:
\begin{equation}
  \text{push}(Q, e) = Q' = Q \cup \set{e}.
\end{equation}
It is evident from the aforementioned examples that these algorithms
can indeed be framed as discrete functions and, as such, do not admit
differentiability across their entire domain. This poses a challenge
when we wish to integrate such functions into neural network
architectures, which inherently rely on the notion of gradients and
differentiability to learn and adapt. This is a problem not only when
considering end-to-end learnability of algorithms, but also when we
plan to integrate neural networks with external memory modules, such
as queues. Push and pop operations to and from these types of memories
do not provide the necessary gradients to enable gradient-based
optimisation of neural networks.

\subsection{Graph algorithms}\label{sec:graph-algos}
Here, we present and discuss a selection of relevant \textit{graph
  algorithms} (i.e., algorithms designed to operate on graphs). We
explore in detail how these algorithms play an importat role in
solving common combinatorial problems, such as shortest path, max-flow
and the TSP. We anticipate that many of the algorithms discussed
herein form the backbone of many of the research investigations
undertaken and presented in this dissertation.

\paragraph{Bellman-Ford algorithm.}
The Bellman-Ford algorithm \citep{bellman1958routing} is a computational
procedure designed to solve the shortest path problem (see
\autoref{prob:shortest-path}) in weighted graphs
$g=\tuple{\gV, \gE, \mW}$. A pseudocode for Bellman-Ford is provided
in \autoref{alg:bf}.

\begin{algorithm}
\caption{Bellman-Ford algorithm}
\label{alg:bf}
\begin{algorithmic}[1]
\Procedure{BellmanFord}{$g$, $s$}
    \ForAll{node $v$ in $\gV_g$}
        \State $d[v] \gets \infty$
        \State $pred[v] \gets \text{NULL}$
    \EndFor
    \State $d[s] \gets 0$
    \For{$i = 1$ to $|\gV_g|-1$}
        \ForAll{edge $(u, v)$ in $\gE_g$}
            \If{$d[v] > d[u] + w_{uv}$}
                \State $d[v] \gets d[u] + w_{uv}$
                \State $pred[v] \gets u$
            \EndIf
        \EndFor
    \EndFor
    \State \textbf{return} $\tuple{d, pred}$
\EndProcedure
\end{algorithmic}
\end{algorithm}

If analysed under the notation presented in
\autoref{sec:algo-notation}, Bellman-Ford can be decomposed in \inp{},
\hints{} and \out{}.  Precisely, $\inp{} = \set{\mA, \mW, s,
\vd^{(0)}}$, where $\mA$ and $\mW$ are, respectively, the adjacency
matrix and weight matrix of the graph, $s$ is the source node and
$\vd^{(0)}$ is the initial estimates of the shortest paths from $s$ to
all other nodes in the graph. The algorithm begins by initialising
distances to the source node as 0 and to all other nodes as $+\infty$.
It then performs a sequence of intermediate transformations of the
inputs, called \textit{relaxation} steps, in order to transform the
initial distance vector $\vd^{(0)}$ to the final distance vector
$\vd$:
\begin{equation*}
   \vd^{(0)} \overset{A_0}{\longrightarrow} \vd^{(1)}
   \overset{A_1}{\longrightarrow} \dots
   \overset{A_{|\gV|-2}}{\longrightarrow} \vd^{(|\gV|-1)}
   \overset{A_{|\gV|-1}}{\longrightarrow} \vd.
\end{equation*}
Here, the set of intermediate distance vectors $\vd^{(t)}$ represents
the set of \hints{} of the algorithm (i.e., $\hints{} =
\set{\vd^{(1)},\dots,\vd^{(|\gV|-1)}}$). Furthermore, at each step
$t$, the algorithm iteratively relaxes all the edges, comparing the
current distance estimate $d[u]^{(t)}$ with the sum of the distance
estimate of a neighbouring node $d[v]^{(t)}$ and the weight of the
edge connecting them, $w_{uv}$, and updating it if a shorter path is
found. Mathematically, the relaxation step for node $u$ can be
formulated as:
\begin{equation}\label{eq:bf-relaxation}
  d^{(t+1)}[u] = \min \set{d^{(t)}[u],
    \min \set{d^{(t)}[v] + w_{uv} \mid v \in \gN(u)}}.
\end{equation}
The final \out{} is thus the final vector of distances $\vd^{|\gV|} =
\vd$, alongside the vector \texttt{pred}, where $\text{pred[u]}$
accounts for the \textit{predecessor} of node $u$ in the final
shortest path $s \leadsto u$.

\paragraph{Dijkstra's algorithm.}
The Dijkstra's algorithm \citep{dijkstra1959note} is perhaps the most
popular shortest path algorithm. Essentially, it solves the exact same
problem as Bellman-Ford. As such, the sets of \inp{}, \hints{} and
\out{} are unchanged compared to those of Bellman-Ford. Pseudo-code
is provided in \autoref{alg:dijkstra}.

\begin{algorithm}
\caption{Dijkstra's algorithm}
\label{alg:dijkstra}
\begin{algorithmic}[1]
\Procedure{Dijkstra}{$g$, $s$}
    \ForAll{vertex $v$ in $\gV_g$}
        \State $d[v] \gets \infty$
        \State $pred[v] \gets \text{NULL}$
    \EndFor
    \State $d[s] \gets 0$
    \State $Q \gets \gV_g$
    \While{$Q \neq \emptyset$}
        \State $u \gets \text{extract-min}(Q, d[u])$ \label{dij:line:queue}
        \State remove $u$ from $Q$
        \ForAll{neighbor $v$ of $u$ in $Q$}
            \If{$d[v] > d[u] + w_{uv}$}
                \State $d[v] \gets d[u] + w_{uv}$
                \State $pred[v] \gets u$
            \EndIf
        \EndFor
    \EndWhile
    \State \textbf{return} $\tuple{d, pred}$
\EndProcedure
\end{algorithmic}
\end{algorithm}

Notably, Dijkstra improves on Bellman-Ford by utilising a priority
queue (line~\ref{dij:line:queue}) for selecting the next node to
evaluate during shortest path computation.

\paragraph{Breadth First Search.}
The Breadth First Search (BFS) \citep{cormen2009introduction} is an
algorithm for solving the reachability problem
(\autoref{prob:reachability}) in graphs $g=\tuple{\gV, \gE, \dots}$.
We can break down BFS in \inp{}, \hints{} and \out{}, similarly to
Bellman-Ford. In this case, $\inp{} = \set{\mA, s}$, where $\mA$ and
$s$ are analogous to Bellman-Ford (i.e., adjacency matrix and source
node). The $\hints{}=\set{\vr^{(1)},\dots,\vr^{(|\gV|-1)}}$ represent
all intemerdiate steps, where $\vr_v^{(t)} \in \set{0, 1}$ indicates
whether node $v$ is reachable from $s$ using at most $t$ edges.
Final output can be considered to be the vector $\vr^{|(\gV|)}$.
A pseudocode for the BFS procedure is provided in \autoref{alg:bfs}.

\begin{algorithm}
\caption{Breadth-First Search algorithm}
\label{alg:bfs}
\begin{algorithmic}[1]
\Procedure{BFS}{$g$, $s$}
    \ForAll{node $v$ in $\gV_g$}
        \State $r[v] \gets 0$
    \EndFor
    \State $r[s] \gets 1$
    \State initialise an empty queue $Q$
    \State push($Q$, $s$)
    \While{$Q$ is not empty}
        \State $v \gets$ pop($Q$)
        \ForAll{$u$ in $\gN(v)$}
            \If{$r[u] = 0$}
                \State $r[u] \gets 1$
                \State push($Q$, $u$)
            \EndIf
        \EndFor
    \EndWhile
    \State \textbf{return} $r$
\EndProcedure
\end{algorithmic}
\end{algorithm}

\paragraph{Prim's algorithm.}
The Prim's algorithm \citep{cormen2009introduction} computes the
MST (see \autoref{prob:mst}) of weighted graphs. Its pseudocode
is reported in \autoref{alg:prim}.

\begin{algorithm}
\caption{Prim's algorithm}
\label{alg:prim}
\begin{algorithmic}[1]
\Procedure{Prim}{$g$}
    \ForAll{node $v$ in $\gV_g$}
        \State key[v] $\gets \infty$
        \State pred[v] $\gets$ NULL
    \EndFor
    \State Pick any node $s$
    \State key[s] $\gets 0$
    \State $Q \gets \gV_g$
    \While{$Q \neq \emptyset$}
        \State $u \gets \text{extract-min}(Q, key[s])$
        \ForAll{node $v$ adjacent to $u$}
            \If{$v \in Q$ and $w_{uv} < key[v]$}
                \State pred[v] $\gets u$
                \State key[v] $\gets w_{uv}$
            \EndIf
        \EndFor
    \EndWhile
    \State \textbf{return} $\tuple{key, pred}$
\EndProcedure
\end{algorithmic}
\end{algorithm}

As can be seen by comparison with \autoref{alg:dijkstra}, the Prim's
algorithm and Dijkstra bear a noticeable resemblance, as they both
employ the use a greedy strategy for updating the paths and a priority
queue. Decomposition to \inp{}, \hints{}, \out{} is then derived
similarly to Dijkstra. Specifically, $\inp{}=\set{\mA}$,
$\hints{}=\set{\text{key}^{(1)}, \text{pred}^{(1)}, \dots,
  \text{key}^{(n)}, \text{pred}^{(n)}}$ and
$\out{}=\set{\text{key}, \text{pred}}$.

\paragraph{Ford-Fulkerson.}
The Ford-Fulkerson algorithm \citep{ford1956maximal} is used to
compute the maximum flow in a capacitated graph (see
\autoref{prob:maxflow}). We provide the pseudocode for Ford-Fulkerson
in \autoref{alg:ff}.

\begin{algorithm}
\caption{Ford-Fulkerson algorithm}
\label{alg:ff}
\begin{algorithmic}[1]
\Procedure{FordFulkerson}{g, s, t}
    \ForAll{edge $(u, v)$ in $\gE_g$}
        \State $f_{uv} \gets 0$
    \EndFor
    \While{$\exists$ augmenting path $\pi=s,\dots,t$ in $g_f$}
        \State $df = \min\{c_{uv} : (u,v) \in \pi\}$
        \ForAll{edge $(u,v)$ in $\pi$}
            \State $f_{uv} \gets f_{uv} + df$
            \State $f_{vu} \gets f_{vu} - df$
        \EndFor
    \EndWhile
    \State \textbf{return} $f$
\EndProcedure
\end{algorithmic}
\end{algorithm}

Differently from the previous algorithms, Ford-Fulkerson is composed
of two distinct sub-routines: (i) finding an \textit{augmenting path}
from $s$ to $t$ (i.e., a path within a residual graph\footnote{A
  residual graph is a representation of the remaining capacity of
  edges in a flow network after some flow has been pushed through.}
that allows for increased flow through the network); (ii) update the
flow assignment $\vf^{(t)}$. In \autoref{alg:ff}, sub-routine (i) is
encoded at line 4, whereas (ii) spans line 5--8.  Furthermore,
sub-routine (i) can be implemented as the Bellman-Ford algorithm.

At high level, the algorithm looks for an augmenting path from the
source $s$ to the target $t$ in the residual graph $g_f$, through,
say, the Bellman-Ford algorithm. The flow is then augmented along this
path by the bottleneck capacity, the minimum remaining capacity of the
edges in the path.  The iterative process continues until no
augmenting paths can be found in the residual graph. The final
$\vf$ represents the maximum flow from source to target.

By following our notation, thus,
$\inp{} = \set{\mA, \mC, s, t, \vf^{(0)}}$ plus all the inputs that
are inherited from the Bellman-Ford procedure. Similarly, \hints{}
also contains all the previously defined Bellman-Ford hints, with the
addition of the sequence of flow transformations
$\set{\vf^{(1)},\dots,\vf^{(T-1)}}$.
Finally, $\out{} = \set{\vf}$.

\paragraph{Christofides' algorithm.}
The Christofides algorithm, whose pseudo-code is presented in
\autoref{alg:christofides}, is the first example of a polynomial-time
algorithm for finding approximate solution to a NP-H problem, namely
the TSP (\autoref{prob:tsp}).

\begin{algorithm}
\caption{Christofides' algorithm}
\label{alg:christofides}
\begin{algorithmic}[1]
\Procedure{Christofides}{$g$}
    \State $T \gets$ \Call{ComputeMST}{$g$}
    \State $O \gets$ \Call{GetOddDegreeNodes}{$T$}
    \State $M \gets$ \Call{ComputeMinWeightPerfectMatching}{$O$}
    \State $H \gets T \cup M$
    \State $E' \gets$ \Call{ComputeEulerianPath}{$H$}
    \State $C \gets \text{Hamiltonian cycle from } E'$
    \State \textbf{return} $C$
\EndProcedure
\end{algorithmic}
\end{algorithm}

It operates by employing a series of algorithmic primitives, combining
them in a unique way to construct a feasible tour:

\begin{enumerate}

\item \textbf{Minimum Spanning Tree}: first, the algorithm finds a
  minimum spanning tree of the given graph.

\item \textbf{Minimum Weight Perfect Matching}:
  ensures that all nodes in the MST have an even degree by executing
  a minimum weight perfect matching on all nodes with an odd degree.
  This is a crucial step for forming Eulerian cycles.

\item \textbf{Eulerian Circuit}: forms an Eulerian cycle by
  combining the previous two steps.

\item \textbf{Hamiltonian Circuit}: an Hamiltonian cycle is derived
  from the Eulerian path by skipping visited nodes.
\end{enumerate}
An interesting property of the Christofides algorithm lies in its
guarantee -- it assures a solution within a $\frac{3}{2}$ factor from
the optimal TSP tour. Notably, the Christofides algorithm is not the
only example of approximation heuristics that leverage strong
algorithmic primitives. In fact, the Clarke-Wright savings
algorithm \citep{doyuran2011a} computes solution for VRP instances
by leveraging lists and sorting algorithms.

\paragraph{Gon algorithm.}
Similarly to the Christofides' algorithm, the Gon algorithm
\citep{gonzalez1985clustering, dyer1985simple} is a 2-approximation
strategy for the VKC problem (\autoref{prob:vkc}).  The algorithm
proceeds to find 2-approximate solutions by arbitrarily choosing a
``center'' node $v_1$ to add to the center set $C$, and then
iteratively selecting the additional centers $v_i$. The new $v_i$
are chosen such that $v_i$ is the farthest away from the already
selected centers in $C$. For consistency with other algorithms,
we report Gon's pseudocode in \autoref{alg:gon-algorithm}.

\begin{algorithm}
  \caption{Gon algorithm}
  \label{alg:gon-algorithm}
  \begin{algorithmic}[1]
    \Procedure{Gon}{$G, k$}
    \State Choose an arbitrary node $v_1$ from $G$ and add it to center set $C$
    \For{$i = 2$ to $k$}
    \State Choose a vertex $v_i$ that is farthest from the centers already in $C$
    \State Add $v_i$ to $C$
    \EndFor
    \State \textbf{return} $C$
    \EndProcedure
  \end{algorithmic}
\end{algorithm}

\paragraph{Graph isomorphism.}
In \autoref{sec:graph-theory}, we introduced the notion of
\textit{graph isomorphism}. When observed in the context of decision
problems, this concept holds immense theoretical significance. Indeed,
determining whether two graphs are isomorphic is recognized as an NP
problem, but its classification as either a P problem or an
NP-complete problem remains an open question
\citep{garey1979computers}. Interestingly, it is also a potential
candidate for the class of NP-intermediate problems (NPI), a class
that would only exist if $\text{P} \neq \text{NP}$. Consequently,
finding a solution to the graph isomorphism problem would be a crucial
step in answering the long-debated $\text{P} \eqqst \text{NP}$
question.

\begin{algorithm}
\caption{Weisfeiler-Lehman graph isomorphism test}
\label{alg:wl-test}
\begin{algorithmic}[1]
\Procedure{WLTest}{$g_1, g_2$}
\State $c^0_1 \gets$ \Call{InitialColoring}{$g_1$}
\State $c^0_2 \gets$ \Call{InitialColoring}{$g_2$}
\While{colors have not converged}
    \State $c^{t+1}_1 \gets$ \Call{UpdateColoring}{$g_1, c^t_1$}
    \State $c^{t+1}_2 \gets$ \Call{UpdateColoring}{$g_2, c^t_2$}
\EndWhile
\State \Return \Call{HaveSameDistribution}{$C_1, C_2$}
\EndProcedure
\Procedure{UpdateColoring}{$g, c$}
\State Update coloring $c$ based on equation \eqref{eq:wl}
\EndProcedure
\Procedure{HaveSameDistribution}{$c_1, c_2$}
\State Check if the color distributions in $c_1$ and $c_2$ are the same
\EndProcedure
\end{algorithmic}
\end{algorithm}

Since, as discussed in \autoref{sec:complexity-classes}, the debate is
still unresolved, we have to rely to approximation strategies for the
graph isomorphism problem.  In this regard, the most popular algorithm
is the \textit{Weisfeiler-Lehman test} \citep{lehman1968reduction}. A
pseudo-code for the WL test algorithm is given in
Algorithm~\ref{alg:wl-test}. In essence, the WL test is a graph
coloring algorithm, which assigns an initial color $c$ to each node of
the graph and iteratively updates this coloring based on the
distribution of colors in the neighbouring nodes, for each node. The
iterative update can be formulated as:
\begin{equation}\label{eq:wl}
  c^{t+1}(v)=h\left(c^t(v),
    \oplus\left(\left\{c^t(u) \mid u \in \gN(v)\right\}\right)\right),
\end{equation}
where, $\oplus$ is a function that transforms a multiset of colors
into a sequence of colors and $h$ is a hash function that generates a
new color based on the current color of a vertex and the colors of its
neighbouring nodes. The choice of $h$ and $\oplus$ depends on
practical implementation. Usually, $\oplus$ is a function that sorts
the colors in a multiset in a consistent way (typically by imposing a
total order on the colors), and $h$ is chosen to meet the typical
requirements of hash functions (i.e., avoiding hash collisions). Such
node-coloring scheme, however, has some limitations. In fact,
the Weisfeiler-Lehman test may erroneously classify two non-isomorphic 
graphs as isomorphic.
However, the converse is accurate. In other words, if the WL test
determines that two graphs are not isomorphic, then it is certain that
the two graphs are not isomorphic.

In order to distinguish non-isomorphic graphs more effectively, we can
create a hierarchy of higher-order WL tests, which are commonly
referred to as $k$-WL, with $k$ being the dimensionality of the
coloring node vector. As equation \eqref{eq:wl} only uses a single
color for each node, it is also known as $1$-WL. Finally, we
anticipate that the WL-test is tightly connected to machine learning
models and neural networks that operate on graphs.  We will explore
this connection more deeply in \autoref{sec:gn}.

\subsection{Dynamic programming}
\label{sec:dp}
\textit{Dynamic Programming} (DP) is a powerful computational paradigm
\citep{bellman1966dynamic} that targets complex problems by breaking
them down into simpler, overlapping subproblems, solving each
subproblem just once, and building up the solution to the original
problem from these solutions. DP finds applications in many areas of
computer science, such as operations research
\citep{bellman2015applied}, bioinformatics
\citep{giegerich2000systematic}, and artificial intelligence
\citep{bertsekas1995neuro, lewis2009reinforcement}. Formally, a DP
algorithm can be framed as:
\begin{equation}\label{eq:dp}
  S_i^{(k+1)} = U(\{ S_j^{(k)} \mid j = 1, \dots, n \}),
\end{equation}
where $S_i^{(k)}$ represents the solution of the $i$-th subproblem at
iteration $k$, and $U$ updates the solution at step $k+1$ based on all
solutions at the previous step $k$. Usually, practical applications
assume the existence of some metric $m$ that is set to be either
minimised or maximised, in which case $U$ is chosen to be either
$\min$ or $\max$.

In general, DP assumes that the optimal solution to a problem can be
constructed from optimal solutions of its subproblems. This assumption
is usually referred to as \textit{principle of optimality}
\citep{henig1985principle}. A popular example of a DP algorithm is the
Bellman-Ford algorithm (presented in the previous section) in the
sense that it solves the shortest path problem by following the
principle of optimality. Precisely, the shortest path problem
$u \leadsto v$ through exactly $k$ edges depends on the shortest path
problem $u \leadsto v$ through exactly $k-1$ edges. In simpler terms,
an optimal solution to $d^{(k)}[v]$ (i.e., vector of distances in
Bellman-Ford) requires an optimal solution to
$d^{(k-1)}[v]$. Additionally, this property can be derived from
equation \eqref{eq:bf-relaxation}. Indeed, one can note that
\eqref{eq:bf-relaxation} is a special case of \eqref{eq:dp}, where we
replace $U$ with $\min$ and the solutions $S_j^{(k)}$ with
$d^{(k)}[j] + w_{ij}$ (i.e., $w_{ii}$ is assumed zero).

\paragraph{Tropical Algebra.}
In the context of dynamic programming, it is often noteworthy to
discuss a closely related field, which is that of Tropical Algebra
\citep{brugalle2000brief}. Tropical algebra provides a mathematical
framework that performs arithmetic operations on alternative semirings
(usually referred to as \textit{tropical semiring}) rather than the
``conventional'' one defined on the real numbers (i.e.,
$\tuple{\sR, +, \cdot, 0, 1}$). Formally, a semiring is an algebraic
structure of the form
$\tuple{\sA, \oplus, \odot, id_\oplus, id_\odot}$ where:
\begin{enumerate}
\item $\sA$ is a set, such as the set of real numbers $\sR$ or natural
  numbers $\sN$.
\item $\oplus$ is a commutative \textit{binary} operation, called
  \textit{addition}.
\item $\odot$ is \textit{binary} operation, called
  \textit{multiplication}.
\item $id_\oplus$ is the \textit{identity} element of addition, such
  that
  $\forall a \in \sA \,:\, (a \oplus id_\oplus) = (id_\oplus \oplus a)
  = a$.  Additionally, $id_\oplus$ acts as an \textit{absorbing}
  element with respect to multiplication (i.e.,
  $\forall a \in \sA \,:\, (a \odot id_\oplus) = id_\oplus$.
\item $id_\odot$ is the \textit{identity} element of multiplication, such that
  $\forall a \in \sA \,:\, (a \odot id_\odot) = a$.
\end{enumerate}
Thus, tropical semirings are actually a \textit{collections} of
semirings (see \citep{gunawardena1998correspondence} for a
comprehensive overview), where we ``swap'' the usual arithmetic
operations of addition $\oplus=+$ and multiplication $\odot=\cdot$
with different operators, typically chosen to reflect operations
common to DP algorithms. Specifically, one instance of a tropical
semiring is the \textit{min-plus} semiring, defined as
$\tuple{\sT = \sR \cup \set{+\infty}, \oplus, \odot, +\infty, 0}$,
where $\oplus$ and $\odot$ are defined as follows:
\begin{align*}
  a \oplus b &= \min(a, b)\\
  a \odot b &= a + b,
\end{align*}
for each $a$ and $b$ in $\sT$, where $+\infty$ and $0$ are the
identity elements of $\oplus=\min$ and $\odot=+$, respectively. Such
formalism provides a compact way to express many DP algorithms.  To
conclude the section, let us provide practical examples of the above
claim. Consider two popular algorithms: (i) the Breadth First Search
(BFS) algorithm; (ii) and the Bellman-Ford algorithm. These algorithms
can be easily formulated within the min-plus semiring. First, note that
the generalised matrix-matrix multiplication in the min-plus semiring
can be expressed as:
\begin{equation}\label{eq:tropical-mm-mul}
  \left[{\mA} \odot \mB\right]_{ij} = \bigoplus_{k}
  \mA_{ik} \odot \mB_{kj} = \min_k \mA_{ik} + \mB_{kj}.
\end{equation}
Similarly, we can also define the power of matrix as:
\begin{equation}\label{eq:power-mm-mul}
  \mA^{\odot k} = \overbrace{\mW \odot \mW \odot \cdots \odot \mW}^{k
  \text{ times}}.
\end{equation}
Consequently, to approximate BFS, assume we are given a graph
$g=\tuple{\gV, \gE}$. Now, consider the matrix $\mA_{\sT}$ as its
adjacency matrix, where all zeros are replaced with $+\infty$ and all
ones are replaced with $0$. Consider also the vector $\chi_s$
having $0$ on the $s$-th coordinate and $+\infty$ elsewhere. Thus, BFS
can be seamlessly represented in the tropical semiring as:
\begin{equation}\label{eq:tropical-bfs}
  \vr^{(|\gV|)} = \chi_s \odot \mA_\sT^{\odot |\gV|}.
\end{equation}
Here, $\vr^{(|\gV|)}$ is a vector such that $\vr^{(|\gV|)} = 0$ if $v$
is reachable from $s$ and $\vr^{(|\gV|)} = +\infty$
otherwise. Furthermore, earlier steps $\vr^{(t)}$ with $t < |\gV|$
will correctly represent intermediate steps of the BFS algorithm.

Similarly, we can also compute shortest paths from a given source $s$
to any other nodes in the graph. Suppose we are given a weighted graph
$g = \tuple{\gV, \gE, \gW}$, we can define the matrix $\mW_\sT$ to be
the weighted ``tropical'' adjacency matrix of $g$, where all zeros in
$\gW$ are replaced with $+\infty$, similarly to what we have done for
BFS. Precisely, $\mW$ is described as follows:
\begin{equation}\label{eq:tropical-distance-matrix}
  \mW_{uv} = \begin{cases}
               {d}_{uv} \in \sT & (u, v) \in \gE, \\
               0 & u = v, \\
               \infty & \text{otherwise}.
             \end{cases}
\end{equation}
Equivalently to \eqref{eq:tropical-bfs}, then, the following
equation will compute the shortest paths starting from a node $s$:
\begin{equation}\label{eq:tropical-bf}
  \vd^{(|\gV|)} = \chi_s \odot \mW_\sT^{\odot |\gV|}.
\end{equation}
It is easy to verify that \eqref{eq:tropical-mm-mul} applied to
the tropical weight matrix is equivalent to the relaxation step
as in \eqref{eq:bf-relaxation}.

\section{Graph Networks}\label{sec:gn}
Graph Networks (GNs) \citep{bacciu2020gentle} are a particular class
of deep learning models that are specialised in handling, processing
and learning from structured data in the form of graphs. Unlike
traditional neural networks that operate on $n$-dimensional euclidean
spaces, i.e. $\sR^n$, or on structured data like images and sequences,
GNs can be intended to work in geometrical spaces where each data
point is a graph $g=\tuple{\gV,\gE,\dots}$, defined as in
\autoref{sec:graph-theory}. The central idea behind GNs is to
effectively build vectorial representations of each node by
aggregating information from neighbours, allowing the model to capture
global context and dependencies in the graph through multiple
iterations of this mechanism, i.e. \textit{context propagation}. These
node representations need to contain sufficient information to infer
desired properties on the input graphs and generalise to unseen
instances. Analogous to neural networks operating on euclidean data,
then, the objective is to learn \textit{representations} that are
meaningful for regression and classification tasks. Moreover, as we
will explore in the following sections, node representations can be
combined to obtain a single vector that serves as a representation of
the entire graph. The way we combine such pieces of information is
extremely important in order to achieve specific desirata of GNs, such
as the ability to process \textit{any} graph, regardless of its size
(i.e., number of nodes) and topology.  In this regard, these
attributes have played a pivotal role in the dissemination of GNs
across numerous graph-based learning tasks, supported by a significant
body of literature attesting to the efficacy of these models. Their
applications span diverse domains, including computational chemistry
\citep{bacciu2023deep}, social networks \citep{min2021stgsn}, and more
recently, algorithms and combinatorial optimisation
\citep{bello2017neural, velickovic2019neural, kool2019attention}.

Below, we discuss the foundational concepts underpinning the
development and popularisation of graph networks, highlighting key
factors that helped surpass early stage limitations, and providing
initial hints into how their computation is effectively suited for
learning algorithms.

\subsection{A brief historical tour of GNs} \label{sec:gn-history}
Despite becoming popular only in the last decade, the origins of graph
networks, and more generally of learning for graphs, can be traced
back to the 1990s. The first efforts attempted to learn
representations mainly for acyclic structures, such as trees and DAGs.
Notably, \cite{sperduti1997supervised}, briefly followed by
\cite{frasconi1998a} introduced the very first learnable IO-isomorph
transductions $\tilde{\gT}$ (see \autoref{def:io-iso}). These learnt
transductions were heavily inspired by recursive neurons, essentially
generalising on them to incorporate recursion through structure and
trained via back-propagation through time
\citep{werbos1990backpropagation}.  Although these primal versions of
``graph networks'' look different to their contemporary counterparts,
they demonstrated a groundbreaking capability of neural networks to
process and learn to infer properties in structured data for the first
time. Since then, these classes of models have been continuously
studied and are still relevant nowadays
\citep{bianchini2001processing,bianchini2004recursive,bacciu2013an}.

However, the aforementioned works typically presented a few, yet
strong, limitations. Firstly, the architectural decisions made in the
design of such models directly influence on the graph topology
accepted by these types of networks. In essence, the functions
executed by these models rely on \textit{a priori} knowledge, such as
the maximum number of ``children'' that any node in the graph may have
or the existence of a \textit{supersource} node, from which all the
other nodes can be reached. Secondly, and more importantly, these
models had difficulties in handling possibly cyclic graphs, due to
their \textit{recursive} nature. To elaborate, when dealing with an
arbitrary \textit{cyclic} graph $g$ and a specific node $v$, computing
the representation of $v$ recursively becomes unfeasible due to its
dependence on itself within the cycle.

However, a promising alternative lies in tracking the evolution of
$v$'s representation, enabling the computation of its updated
representation based on its past representation at a specific point in
time. This is the breakthrough that was presented, concurrently, by
\cite{scarselli2009the} and \cite{micheli2009neural} in the late
2000s. This idea essentially ``breaks'' cycles in the graph structure
via a simple iterative \textbf{message-passing} scheme, enabling the
computation of node representations within cycles and consequently the
processing of cyclic graphs.  Although the two algorithms share the
same view, the authors implemented it in two different
``flavours''. \cite{scarselli2009the} relied on a recurrent paradigm,
constructing the graph network as a contraction mapping and
iteratively computing node representations until a fixed point was
attained. \cite{micheli2009neural}, instead, proposed a feedforward
architecture which implements the message-passing scheme via multiple
layers, akin to the standard modern GNs.

Building upon these pioneering works, numerous subsequent studies
emerged, progressively integrating a wealth of concepts from both
graph theory and neural networks into the realm of graph networks.
A few noteworthy citations will be discussed more in depth in
\autoref{sec:it-vs-ff-mp}. For a comprehensive overview of graph
networks, refer to \citep{velickovic2023everything}.

\subsection{Desiderata of Graph Networks}\label{sec:gn-desiderata}
\begin{figure}
  \centering
  \begin{tikzpicture}[->,>=stealth,auto, thick,scale=0.90]
  \definecolor{dgreen}{HTML}{00871B}

  \tikzstyle{message}=[->, decorate,
  decoration={snake,
    amplitude=1,
    segment length=6,
    post length=7
  },
  line width=2.5pt]
  \tikzstyle{vertex}=[circle,fill=white,draw,inner sep=1pt, minimum size=15pt, very thick]

  \node[vertex] (1) at (-5,-1.5) {\tiny$\vh_z^{(0)}$};
  \node[vertex] (2) at (-7,-1.5) {\tiny$\vh_y^{(0)}$};
  \node[vertex] (3) at (-7,0.5) {\tiny$\vh_x^{(0)}$};
  \node[vertex, fill=cyan!50] (4) at (-5,0.75) {\tiny$\vh_v^{(0)}$};
  \node[vertex] (5) at (-5,2.5) {\tiny$\vh_u^{(0)}$};

  \node (label) at (-6, -2.5) {$\ell=0$};

  \draw (1) edge (4);
  \draw (1) edge[bend left=30] (2);
  \draw (2) edge (1);
  \draw (2) edge (3);
  \draw (3) edge (1);
  \draw (3) edge (4);
  \draw (4) edge (5);

  \draw (3) edge[message, cyan, opacity=0.7] node[midway, sloped] {\tiny$x \rightarrow_\psi v$} (4);
  \draw (1) edge[message, cyan, opacity=0.7, ] node[midway, sloped] {\tiny$z \rightarrow_\psi v$} (4);

  \node[vertex] (s1) at (0,-1.5) {\tiny$\vh_z^{(1)}$};
  \node[vertex] (s2) at (-2,-1.5) {\tiny$\vh_y^{(1)}$};
  \node[vertex] (s3) at (-2,0.5) {\tiny$\vh_x^{(1)}$};
  \node[vertex] (s4) at (0,0.75) {\tiny$\vh_v^{(1)}$};
  \node[vertex, fill=cyan!50] (s5) at (0,2.5) {\tiny$\vh_u^{(1)}$};

  \node (label) at (-1, -2.5) {$\ell=1$};

  \draw (s1) edge (s4);
  \draw (s1) edge[bend left=30] (s2);
  \draw (s2) edge (s1);
  \draw (s2) edge (s3);
  \draw (s3) edge (s1);
  \draw (s3) edge (s4);
  \draw (s4) edge (s5);

  \draw (s4) edge[message, cyan, opacity=0.7] node[midway, sloped] {\tiny$v \rightarrow_\psi u$} (s5);

  \draw (4) edge[message, orange, bend left=15, thick] node[midway, sloped] {\tiny$\vh^{(1)}_v = \phi(\vh^{(0)}_v, \oplus(\set{m_x, m_z}))$} (s4);

  \node[vertex] (t1) at (5,-1.5) {\tiny$\vh_z^{(2)}$};
  \node[vertex] (t2) at (3,-1.5) {\tiny$\vh_y^{(2)}$};
  \node[vertex] (t3) at (3,0.5) {\tiny$\vh_x^{(2)}$};
  \node[vertex] (t4) at (5,0.75) {\tiny$\vh_v^{(2)}$};
  \node[vertex] (t5) at (5,2.5) {\tiny$\vh_u^{(2)}$};

  \node (label) at (4, -2.5) {$\ell=2$};

  \draw (t1) edge (t4);
  \draw (t1) edge[bend left=30] (t2);
  \draw (t2) edge (t1);
  \draw (t2) edge (t3);
  \draw (t3) edge (t1);
  \draw (t3) edge (t4);
  \draw (t4) edge (t5);

  \draw (s5) edge[message, orange, bend left=15, thick] node[midway, sloped] {\tiny$\vh^{(2)}_u = \phi(\vh^{(1)}_u, \oplus(\set{m_v}))$} (t5);

\end{tikzpicture}
  \caption{This graphic showcases how messages are computed and passed
    in graph networks. At step $\ell = 0$, we focus on node $v$. To
    update its representation $\vh^{(0)}_v$, we compute messages
    $m_x = \psi(\vh^{(0)}_{v},\vh^{(0)}_{x}, \vh^{(0)}_{xv})$ and
    $m_z = \psi(\vh^{(0)}_{v},\vh^{(0)}_{z}, \vh^{(0)}_{zv})$ from its
    neighbours $x$ and $z$. $\vh^{1}_v$ is then obtained through
    application of \eqref{eq:gn-mp}. At step $\ell=1$, $u$ is updated
    based on its only neighbour $v$, whose has already gathered
    information from $x$ and $z$ at $\ell=0$.  As a result, at a
    subsequent step $\ell=2$, $\vh^{(2)}$ will be conditioned not only
    on $\vh^{(1)}_v$ but also on $\vh^{(0)}_x$ and $\vh^{(0)}_z$,
    essentially expanding the range of information $u$ has access to.}
  \label{fig:gn-mp}
\end{figure}
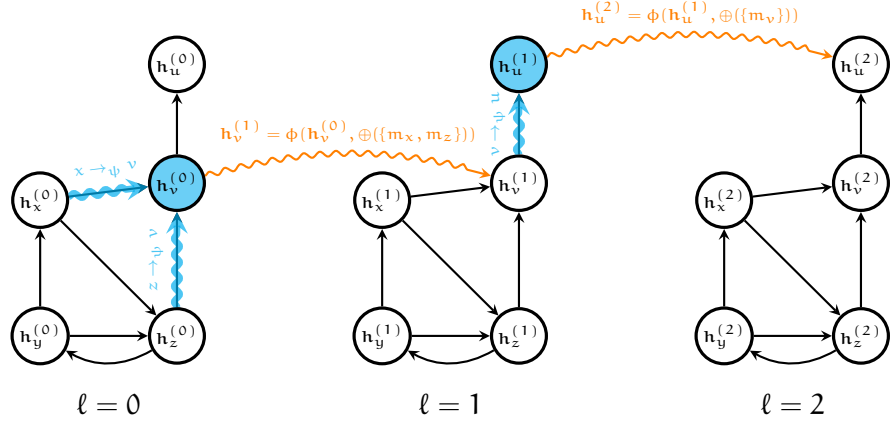
Above, we hinted at the fact that graph networks should provide a
versatile framework for processing graph-structured information. In
this section, we articulate the key desiderata of graph networks,
outlining the precise equations and algorithms they adopt to fulfill
these requirements:
\begin{itemize}

\item \textbf{Topological invariance}: graph networks need to process
  graphs $g = \tuple{\gV, \gE, \gX_\gV, \gX_\gE}$ without imposing
  constraints on: (i) the cardinality of $\gV$ and $\gE$; (ii) the
  graph cyclic or acyclic nature.

\item \textbf{Stationarity}: the representation of a node $v$ is
  independent on $v$'s specific identity or its position within the
  graph. In simpler terms, the learnt representation of a node should
  depend solely on its input features $\vx_v \in \gX_\gV$ and its
  local neighbourhood $\gN(v)$.

\item \textbf{Effective discrimination of graphs}: graph networks
  should exhibit enough expressiveness to differentiate between
  distinct graphs and to identify isomorphic graphs as effectively as
  possible.

\end{itemize}
In order to embody these properties, graph networks apply a ``learnt''
transformation to the input features of the nodes within the graph,
augmented with a message-passing algorithm to leverage the topological
structure of the given graph effectively. Formally, in their most
general configuration, the equation encapsulating these concepts can
be distilled as (see also \autoref{fig:gn-mp}):
\begin{equation}\label{eq:gn-mp}
  \vh^{(\ell+1)}_v = \phi \left(
    \vh^{(\ell)}_v,
    \oplus \left(\left\{
      \psi(\vh^{(\ell)}_v,
      \vh^{(\ell)}_u,
      \vh^{(\ell)}_{uv}) \mid u \in \gN(v) \right\}\right)
\right).
\end{equation}
Specifically, the above equation describes the iterative evolution of
a node representation, denoted by $\vh_v$, over ``timesteps'' denoted
by $\ell$ (with $\vh^0_v$ representing the initial vector of features
$\vx_v \in \gX_\gV$). Note that by $\vh_{uv}$ we denote the
representation of the edge $(u, v)$. This representation may be either
a specific value (i.e., edge attribute), in which case it 
remains fixed for all ``steps'' (i.e., 
$\vh^{(\ell)}_{uv} = \vx_{uv} \in \gX_\gE$, for all $\ell$), or updated
in a similar fashion to \eqref{eq:gn-mp}. To simplify the 
upcoming explanation, we assume the former for the reminder of this section.
Typically, the above equation is
characterised as a \textbf{message-passing} algorithm, composed of two
distinct operations:
\begin{enumerate}
  \item \textbf{Message computation}: a message is
computed for each node in the graph and is disseminated to all
neighbouring nodes. In \eqref{eq:gn-mp}, $\psi(\cdot)$ represents a
learnable transformation and its output is the message $u
\rightarrow_\psi v$ for every pair of neighbouring nodes $(u, v)$.

  \item \textbf{State update}: incoming messages are gathered via
an \textit{aggregation function}, symbolised by $\oplus$.
Consequently, the aggregated output is leveraged to calculate the
\textit{new} representation for node $v$ at the subsequent ``time''
$\ell+1$ using a second learnable transformation, denoted by $\phi$.
\end{enumerate}
The whole procedure is run in parallel for all nodes in the graph.
We now discuss how the above equation satisfies all the desiderata
for graph networks.

\paragraph{Local processing of node information.} Looking at
\eqref{eq:gn-mp} carefully, we note that the learning of a node
representation is defined from the perspective of a \textbf{single}
node $v$. This is commonly referred to as \textbf{local processing} of
a graph, meaning that the computation is centered on information that
are directly connected to each node, rather than considering all nodes
and edges simultaneously (essentially attempting to extract features of
the graph as-a-whole, e.g. spectral processing).  Local processing
guarantees flexibility, as equations for graph networks do not assume
knowledge of the number of nodes and edges of the graph, thereby enabling
processing of graphs with any number of nodes and edges (i.e.,
\textbf{size invariance}). It is important to note that, through
multiple iterations of the message-passing scheme presented in
\eqref{eq:gn-mp}, this ``local scope'' can gradually extend and gather
information from more distant parts of the graph, effectively capturing
more \textit{global} information while mantaining the primary focus on
each node's local context (refer to \autoref{fig:gn-mp}).

\paragraph{Processing cyclic graphs.}
Above, we successfully showed how \eqref{eq:gn-mp} makes a graph
network size-invariant. However, to fully adhere to the topological
invariance prerequisite, it is necessary to discuss how the said
equation adeptly processes cyclic graphs. \autoref{fig:gn-mp} shows
that \eqref{eq:gn-mp} is in fact an iterative procedure. By unrolling
the computation of \eqref{eq:gn-mp}, we observe that each update of
the node representations at step $\ell$ is determined in terms of the
node representations at step $\ell-1$. As a result, mutual
dependencies among node representations involved in a cycle can not
exist at the same step $\ell$. In other words, we ``break'' the cycles
with an iterative approximation scheme. We anticipate that there are
different ways of practically implement the ``step'' $\ell$, which
will be examined more thoroughly in \autoref{sec:it-vs-ff-mp}.
However, the iterative message passing described herein effectively
enables the processing of cyclic graphs, consequently accomplishing
the \textbf{topological invariance} property.

\paragraph{Stationarity of node encodings.}
The second desiderata of graph networks precisely states that node
representations must not be dependent on the nodes' specific
identities. This is critical for capturing the underlying structural
relation of nodes within the graph, regardless of their specific
identities. The \textbf{stationarity} property of GNs is derived
primarily from two aspects. First, the aggregation function
$\oplus$ used in \eqref{eq:gn-mp} need to be independent on the
considered ordering of neighbouring nodes. If this is not the case
(i.e., the output of $\oplus$ changes upon reordering of its
inputs), stationarity would be violated. In fact, nodes in a graph do
not usually come with a known order as in \textit{sequences} or
\textit{trees}.  Hence, if different permutations of neighbours yield
different evolutions of a node representation $\vh_v$, the computation
of $\vh_v$ is non-stationary (i.e., the representation depends on the
specific labeling and ordering of the nodes). To mitigate this issue,
$\oplus$ is usually selected to be a \textbf{permutation-invariant}
functions, where the output remains the same for any possible
permutations of the inputs. Common permutation-invariant functions
employed in GNs include \textbf{sum}, \textbf{mean} and element-wise
\textbf{maximum} of neighbouring nodes' representations.
Second, the use of shared parameters across all nodes in GNs
contributes to stationarity. In \eqref{eq:gn-mp}, the same
transformation (i.e., the same weights in a neural network layer) is
applied equally to all nodes. This process ensures that the same
learnt encoding mechanism on every node within the graph.
Permutation-invariant aggregators and weight sharing together ensure
that the representation of a node is dependent solely on its features
and its local neighbourhood, regardless of its location within the graph
and its ordering and labeling, thus achieving the stationarity property.

\paragraph{Relation to the 1-WL test.} Here, the attentive reader may
have noticed the similarities between equation \eqref{eq:wl} and
\eqref{eq:gn-mp}. In fact, the node updating funtion in GNs is
analogous to the color updating function in the 1-WL test. Both
\eqref{eq:wl} and \eqref{eq:gn-mp} perform node-wise updates based on
local neighbourhood information, although there are important
divergences on the \textit{nature} of functions being used. While the
$1$-WL test relies on hash functions and a \textit{discrete} coloring
algorithm (i.e., the set of possible ``node features'' is countable),
GNs extend the procedure to utilise \textit{continuous} functions.
Despite these differences, the analogy between the $1$-WL test and GNs
has proven extremely useful in the design of graph networks. It is
formally proven that, indeed, graph networks can only distinguish
non-isomorphic graphs that can also be distinguished by the $1$-WL
test \citep{xu2019powerful}. This theorical result is, however,
conditioned on the type of permutation-invariant aggregator used in
\eqref{eq:gn-mp}. \cite{xu2019powerful} show that it is possible to
achieve the same expressivity as the $1$-WL test if we choose $\oplus$
to be a sum operation, whereas we achieve \textit{strictly} less
expressivity with mean and maximum. Hence, in general, the expressive
capacity of graph networks is bounded by the expressivity of the
$1$-WL test. From a machine learning perspectives, this means that two
non-isomorphic, but indistiguishable for the $1$-WL test, graphs can
not have two different ``learnt'' representations. Hence, the neural
model can not generate different predictions.  That is why classical
GNs are often referred to as $1$-WL graph networks. These limitations
have inspired the research of increasingly complex $k$-WL graph
networks, inspired by the hierarchy of $k$-WL tests
\citep{morris2019weisfeiler}. These models, however, are rarely used
in practice, as the increased capacity does not always reflect an
increase in model's performance.

\subsection{Iterative vs Feedforward Message Passing}
\label{sec:it-vs-ff-mp}
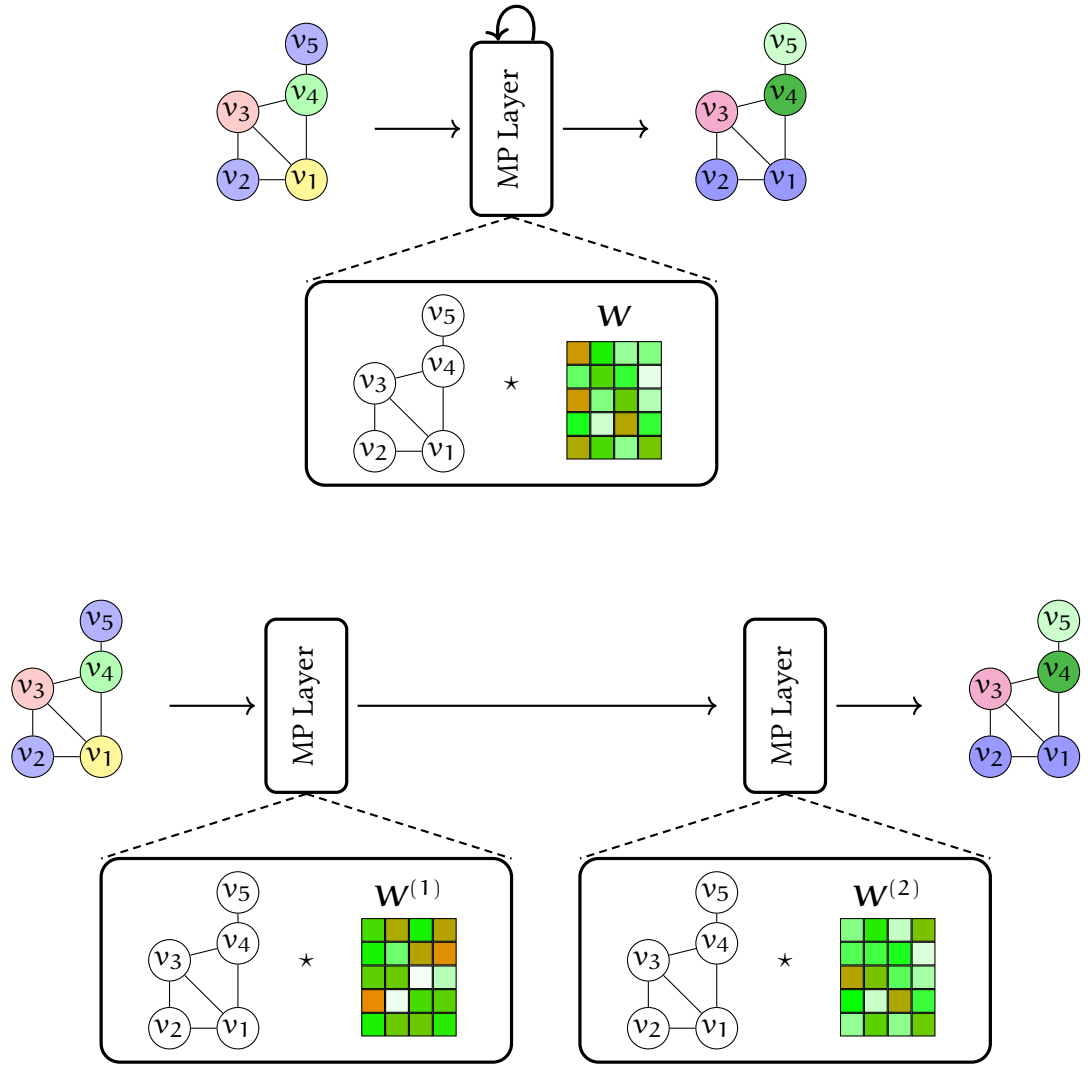
\begin{figure}
  \centering
  \begin{tikzpicture}[scale=0.9]

  \tikzstyle{vertex}=[circle,draw,inner sep=1pt, minimum size=15pt]

  \node[vertex, fill=yellow!50] (v1) at (-3,-0.5) {$v_1$};
  \node[vertex, fill=blue!30!] (v2) at (-4,-0.5) {$v_2$};
  \node[vertex, fill=red!20!] (v3) at (-4,0.5) {$v_3$};
  \node[vertex, fill=green!30!] (v4) at (-3,0.75) {$v_4$};
  \node[vertex, fill=blue!30!] (v5) at (-3,1.5) {$v_5$};

  \draw (v1) edge (v3);
  \draw (v1) edge (v4);
  \draw (v2) edge (v1);
  \draw (v2) edge (v3);
  \draw (v3) edge (v4);
  \draw (v4) edge (v5);

  \draw (-2, 0.25) edge[->, thick] (-0.75, 0.25);

  \node[draw, rounded corners, inner sep=10pt, rotate=90, very thick] (r) at (0, 0.25) {MP Layer};


  \draw[->, very thick] (r) to [out=55, in=80, looseness=15] (r);
  \draw (0.75, 0.25) edge[->, thick] (2, 0.25);

  \draw[rounded corners=7pt, very thick] (-3, -5) rectangle (3,-2);

  \draw[-, densely dashed, thick] (r.west) to (-3, -2);
  \draw[-, densely dashed, thick] (r.west) to (3, -2);

  \node[vertex] (1) at (-1,-4.5) {$v_1$};
  \node[vertex] (2) at (-2,-4.5) {$v_2$};
  \node[vertex] (3) at (-2,-3.5) {$v_3$};
  \node[vertex] (4) at (-1,-3.25) {$v_4$};
  \node[vertex] (5) at (-1,-2.5) {$v_5$};

  \draw (1) edge (3);
  \draw (1) edge (4);
  \draw (2) edge (1);
  \draw (2) edge (3);
  \draw (3) edge (4);
  \draw (4) edge (5);

  \node at (0, -3.5) {$\star$};

  \matrix (m) at (1.5, -3.75) [matrix of nodes,
    nodes={minimum size=3mm, draw, inner sep=0pt},
  ]
  {
    |[fill=green!20!orange]| & |[fill=green!90!orange]| & |[fill=green!40]| & |[fill=green!50]| \\
    |[fill=green!60]| & |[fill=green!70!orange]| & |[fill=green!80]| & |[fill=green!10]| \\
    |[fill=green!20!orange]| & |[fill=green!50]| & |[fill=green!60!orange]| & |[fill=green!30]| \\
    |[fill=green!90]| & |[fill=green!20]| & |[fill=green!30!orange]| & |[fill=green!80]| \\
    |[fill=green!33!orange]| & |[fill=green!71!orange]| & |[fill=green!44]| & |[fill=green!55!orange]| \\
  };

  \node at (1.5, -2.5) {$\mW$};


  \node[vertex, fill=blue!40] (x1) at (4,-0.5) {$v_1$};
  \node[vertex, fill=blue!40] (x2) at (3,-0.5) {$v_2$};
  \node[vertex, fill=magenta!40] (x3) at (3,0.5) {$v_3$};
  \node[vertex, fill=yellow!30!green] (x4) at (4,0.75) {$v_4$};
  \node[vertex, fill=green!20] (x5) at (4,1.5) {$v_5$};

  \draw (x1) edge (x3);
  \draw (x1) edge (x4);
  \draw (x2) edge (x1);
  \draw (x2) edge (x3);
  \draw (x3) edge (x4);
  \draw (x4) edge (x5);


  \def\xoffset{0}

  \node[vertex, fill=yellow!50] (ff1v1) at (-6-\xoffset,-9) {$v_1$};
  \node[vertex, fill=blue!30!] (ff1v2) at (-7-\xoffset,-9) {$v_2$};
  \node[vertex, fill=red!20!] (ff1v3) at (-7-\xoffset,-8) {$v_3$};
  \node[vertex, fill=green!30!] (ff1v4) at (-6-\xoffset,-7.75) {$v_4$};
  \node[vertex, fill=blue!30!] (ff1v5) at (-6-\xoffset,-7) {$v_5$};

  \draw (ff1v1) edge (ff1v3);
  \draw (ff1v1) edge (ff1v4);
  \draw (ff1v2) edge (ff1v1);
  \draw (ff1v2) edge (ff1v3);
  \draw (ff1v3) edge (ff1v4);
  \draw (ff1v4) edge (ff1v5);

  \draw (-5-\xoffset, -8.25) edge[->, thick] (-3.75-\xoffset, -8.25);

  \node[draw, rounded corners, inner sep=10pt, rotate=90, very thick] (ff1r) at (-3-\xoffset, -8.25) {MP Layer};
  \draw (-2.25-\xoffset, -8.25) edge[->, thick] (3-\xoffset, -8.25);

  \draw[rounded corners=7pt, very thick] (-6-\xoffset, -13.5) rectangle (0-\xoffset,-10.5);

  \draw[-, densely dashed, thick] (ff1r.west) to (-6-\xoffset, -10.5);
  \draw[-, densely dashed, thick] (ff1r.west) to (0-\xoffset, -10.5);

  \node[vertex] (ff2v1) at (-4-\xoffset,-13) {$v_1$};
  \node[vertex] (ff2v2) at (-5-\xoffset,-13) {$v_2$};
  \node[vertex] (ff2v3) at (-5-\xoffset,-12) {$v_3$};
  \node[vertex] (ff2v4) at (-4-\xoffset,-11.75) {$v_4$};
  \node[vertex] (ff2v5) at (-4-\xoffset,-11) {$v_5$};

  \draw (ff2v1) edge (ff2v3);
  \draw (ff2v1) edge (ff2v4);
  \draw (ff2v2) edge (ff2v1);
  \draw (ff2v2) edge (ff2v3);
  \draw (ff2v3) edge (ff2v4);
  \draw (ff2v4) edge (ff2v5);

  \node at (-3-\xoffset, -12) {$\star$};

  \matrix (m) at (-1.5-\xoffset, -12.25) [matrix of nodes,
    nodes={minimum size=3mm, draw, inner sep=0pt},
  ]
  {
    |[fill=green!73!orange]| & |[fill=green!33!orange]| & |[fill=green!91!orange]| & |[fill=green!29!orange]| \\
    |[fill=green!92!orange]| & |[fill=green!57]| & |[fill=green!29!orange]| & |[fill=green!13!orange]| \\
    |[fill=green!63!orange]| & |[fill=green!58!orange]| & |[fill=green!4]| & |[fill=green!28]| \\
    |[fill=green!9!orange]| & |[fill=green!7]| & |[fill=green!72!orange]| & |[fill=green!64!orange]| \\
    |[fill=green!90!orange]| & |[fill=green!58!orange]| & |[fill=green!58!orange]| & |[fill=green!84!orange]| \\
  };

  \node at (-1.5-\xoffset, -11) {$\mW^{(1)}$};


  \def\xoffset{-7}

  \node[draw, rounded corners, inner sep=10pt, rotate=90, very thick] (ff1r) at (-3-\xoffset, -8.25) {MP Layer};
  \draw (-2.25-\xoffset, -8.25) edge[->, thick] (-1-\xoffset, -8.25);

  \draw[rounded corners=7pt, very thick] (-6-\xoffset, -13.5) rectangle (0-\xoffset,-10.5);

  \draw[-, densely dashed, thick] (ff1r.west) to (-6-\xoffset, -10.5);
  \draw[-, densely dashed, thick] (ff1r.west) to (0-\xoffset, -10.5);

  \node[vertex] (ff2v1) at (-4-\xoffset,-13) {$v_1$};
  \node[vertex] (ff2v2) at (-5-\xoffset,-13) {$v_2$};
  \node[vertex] (ff2v3) at (-5-\xoffset,-12) {$v_3$};
  \node[vertex] (ff2v4) at (-4-\xoffset,-11.75) {$v_4$};
  \node[vertex] (ff2v5) at (-4-\xoffset,-11) {$v_5$};

  \draw (ff2v1) edge (ff2v3);
  \draw (ff2v1) edge (ff2v4);
  \draw (ff2v2) edge (ff2v1);
  \draw (ff2v2) edge (ff2v3);
  \draw (ff2v3) edge (ff2v4);
  \draw (ff2v4) edge (ff2v5);

  \node at (-3-\xoffset, -12) {$\star$};

  \matrix (m) at (-1.5-\xoffset, -12.25) [matrix of nodes,
    nodes={minimum size=3mm, draw, inner sep=0pt},
  ]
  {
    |[fill=green!47]| & |[fill=green!86!orange]| & |[fill=green!23]| & |[fill=green!52!orange]| \\
    |[fill=green!67]| & |[fill=green!74]| & |[fill=green!89]| & |[fill=green!14]| \\
    |[fill=green!29!orange]| & |[fill=green!53!orange]| & |[fill=green!65]| & |[fill=green!35]| \\
    |[fill=green!98]| & |[fill=green!21]| & |[fill=green!32!orange]| & |[fill=green!77]| \\
    |[fill=green!42]| & |[fill=green!64!orange]| & |[fill=green!45]| & |[fill=green!59!orange]| \\
  };

  \node at (-1.5-\xoffset, -11) {$\mW^{(2)}$};


  \node[vertex, fill=blue!40] (x1) at (8,-9) {$v_1$};
  \node[vertex, fill=blue!40] (x2) at (7,-9) {$v_2$};
  \node[vertex, fill=magenta!40] (x3) at (7,-8) {$v_3$};
  \node[vertex, fill=yellow!30!green] (x4) at (8,-7.75) {$v_4$};
  \node[vertex, fill=green!20] (x5) at (8,-7) {$v_5$};

  \draw (x1) edge (x3);
  \draw (x1) edge (x4);
  \draw (x2) edge (x1);
  \draw (x2) edge (x3);
  \draw (x3) edge (x4);
  \draw (x4) edge (x5);
\end{tikzpicture}
  \caption{An \textbf{iterative} graph network (above) is compared to
    a two-layer \textbf{feedforward} graph network (below). Above,
    node representations are refined via iterative application of the
    same $\mW$, as symbolised by the recurrent connection in the
    message-passing layer (MP layer). Below, instead, node
    representations are updated first by application of $\mW^{(1)}$ in
    the first MP block and then updated again by the second MP block
    symbolised by $\mW^{(2)}$. In the picture, $\star$ represents
    application of \eqref{eq:gn-mp}.}
  \label{fig:it-vs-ff-mp}
\end{figure}
In the context of graph networks and deep learning for graphs, there
is no one single way of how information is spread across the
graph. Earlier, we identified at least two different ways of how GNs
exchange messages and perform aggregations among neighbours:
\textit{iterative} and \textit{feedforward}. Although different in
style, both paradigms can be studied under the unifying notation of
equation \eqref{eq:gn-mp}. For completeness, \cite{bacciu2020gentle}
also identify a third different category, which is that of
\textit{constructive} architectures. Although much work fall into this
category \citep{micheli2009neural, bacciu2018contextual}, these models
are essentially feedforward architectures where training is performed
layer-wise. Notwithstanding their importance in, for instance,
effectively countering common vanishing/exploding gradient issues, we
will focus on the difference between iterative and feedforward
architectures for the reminder of this section.

The major differences between \textit{iterative} and
\textit{feedforward} architectures lies in how equation
\eqref{eq:gn-mp} is istantiated and evaluated to obtain the evolutions
of the node representations. As showcased in
\autoref{fig:it-vs-ff-mp}, within the iterative paradigm, there is,
usually, one single parametrised layer of message-passing which is
called iteratively ``as many times as needed'' to spread and acquire
information across an increasingly larger portion of the graph.
Contrastingly, \textit{feedforward architectures} address graph
processing by means of several feedforward layers. Precisely,
\eqref{eq:gn-mp} formalises the learnt transformation employed by one
single layer of a feedforward architecture, which may contain
multiple. To understand precisely how message-passing is implemented
differently in iterative and feedforward architectures, consider a
simpler formulation of \eqref{eq:gn-mp}, described in terms of
matrix multiplications:
\begin{equation}
  \label{eq:gn-mp-matrix}
  \mH^{(\ell + 1)} = \sigma((\mA + \mI) \mH^{(\ell)}\mW^{(\ell)}).
\end{equation}
Here, we isolate the parameters $\mW$ from the previous equation
\eqref{eq:gn-mp}, which were implicitly included in $\psi$.  Hence,
$\mW^{(\ell)}$ is the matrix of parameters at GN layer/step $\ell$.
Similarly, $\mH^{(\ell)}$ contains all node representations for each
node $v$ within the graph at $\ell$, organised in matrix form as
follows:
\begin{equation*}
  \mH^{(\ell)} = \begin{bmatrix}
                   \vdots \\
                   \vh^{(\ell)}_v \\
                   \vdots\\
  \end{bmatrix}.
\end{equation*}
Additionally, $\mA$ is the adjacency matrix to which we add
self-loops through the identity matrix $\mI$, and $\sigma$ is
an activation function. From \eqref{eq:gn-mp}, we set
$\oplus = \sum$, which enables us to rewrite the equation
exploiting matrix products.

Now, it is easy to show the main difference between iterative and
feedforward architectures. Precisely, within the iterative paradigm,
the parameters $\mW^{(\ell)}$ are the same for any step $\ell$ of
\eqref{eq:gn-mp-matrix} (i.e.,
$\mW^{(0)} = \mW^{(1)} = \dots = \mW^{(\ell)} = \dots = \mW$).  This
is why message-passing herein is called ``iterative'', since we do not
have access to ``layers'' in the proper term, but rather we perform
iterations of the same equation.

Contrastingly, feedforward architectures typically have different
parametrisation for different $\ell$ (i.e.,
$\mW^{(\ell)} \neq \mW^{(\ell+1)}$), and we only perform a single
``iteration'' of \eqref{eq:gn-mp-matrix} for a given
$\mW^{(\ell)}$. In practice, within the feedforward paradigm we choose
the number of different parametrisations, or \textit{layers}, in
advance. \autoref{fig:it-vs-ff-mp}, indeed, shows a 2-layer GN.

Such differences have two consequences. First, iterative architectures
typically require \textit{less} trainable parameters compared to their
feedforward counterpart. The second fundamental difference is the
\textbf{context radius}. The context radius accounts for the largest
portion of the graph that influences a node representation $\vh_v$.
In fact, $k$ iterations of message-passing entails that each
representation $\vh_v$ will be influenced by all and only the nodes at
most distance $k$ from $v$, termed $k$-hop neighbours (see
\autoref{fig:gn-mp}).

For iterative GNs, this radius is linked to the number of iterative
message-passing ``calls'', which is typically unbounded. Indeed, we
can, in principle, perform an unlimited number of iterations. From a
practical standpoint, though, the context radius is limited by
vanishing and exploding gradient issues, particularly for long
sequence of iterations, as well as by well-known oversmoothing issues
\citep{chen2020measuring}. To counteract these issues, one can
typically perform a \textit{fixed} number of iterations or iterate
until a certain stopping condition is met \citep{tang2020towards}.  In
this respect, \textit{stability} of the node representations has
historically played an important role to determine the ``right time''
to stop iterations. Under this lens, GNs are often seen as dynamical
systems on which contractivity constraints are imposed
\citep{scarselli2009the, gallicchio2010graph}, in order to ensure
``convergence'' of the set of node representations.  Such convergence
is usually ensured under the hypothesis that $\forall u, v \in \gV$:
\begin{equation*}
  \norm{\vh^{(\ell+1)}_v - \vh^{(\ell+1)}_u} \leq
  c \cdot \norm{\vh^{(\ell)}_v - \vh^{(\ell)}_u}
\end{equation*}
with $ 0 \leq c < 1$, that is \eqref{eq:gn-mp-matrix} is a contractive
mapping.

For feedforward architecture, instead, the context radius is usually
equal to the depth of the architecture (i.e., number of layers), since
we only perform one step for each layer. For instance, in
\autoref{fig:gn-mp} every node in the feed-forward architecture is
conditioned at most on they 2-hop neighbours.

Such architetural differences make the two approaches suitable for
different kinds of tasks. Intuitevively, if we aim to learn functions
that can be expressed as a composition of different functions, we
should tend towards using feedforward architectures, as a different
parametrisation at different layers allows for simpler modelling of
the composition (e.g., different layers can indeed learn different
functions). However, iterative approaches seem to fit more effectivily
scenarios where the target graph function consist of repeteadly
performing the same operation on all nodes within the graph. Indeed,
iterative GNs are used in physical simulations
\citep{hamrick2018relational, pfaff2020learning} and neural execution
of algorithms \citep{velickovic2019neural}. While we postpone a more
thorough discussion of the latter to \autoref{ch:nar}, we conclude
this chapter by highlighting and discussing similarities shared by
traditional graph algorithms and graph networks, especially
considering their iterative versions. In this respect, many
traditional algorithms are designed to execute the same exact function
across all sections of their input. Consider, for instance, sorting
algorithms that continuously swap elements in a list until it is
sorted, or shortest path algorithms like Bellman-Ford that
consistently apply the same relaxation step (i.e., equation
\eqref{eq:bf-relaxation}) to every node in the graph. As we
anticipated, iterative GNs better fits modelling of such functions.
The reason is that repeteadly applying the same operation intuitevely
correspond to having a single parametrised weight map $\mW$ that is
called iteratively.  Indeed, consider a scenario in which Bellman-Ford
is the target learning function, even if we had different
parametrisations $\mW^{(\ell)}$, they would still have to learn the
same \eqref{eq:bf-relaxation} function for each $\ell$.  Hence, in
such cases, a single $\mW$ gives us more guarantees that the applied
learnt function is consistent across all steps.  Another issue to
evaluate is that the number of the iterations in algorithms is usually
dependent on the size of the input. Precisely, Bellman-Ford performs
equation \eqref{eq:bf-relaxation} $O(|\gV|)$ times. This makes the
feedforward paradigm, given their limited \textit{context radius},
inapplicable in such scenarios. In fact, given a feedforward
architecture with $k$ layers, such network is inherently unable to
generalise the Bellman-Ford dynamics for graphs with diameter
larger than $k$. Contrastingly, with the iterative paradigms we can
increase the number of iterations to match that of Bellman-Ford.

We conclude this chapter by stating that whenever we refer to GNs
in the reminder of this thesis, we always refer to GNs based on the
iterative paradigm, as they represent the most suitable architecture
for learning to execute classical algorithms.

    \chapter{Neural Algorithmic Reasoning} \label{ch:nar}
Neural Algorithmic Reasoning (NAR) is a burgeoning field at the
intersection of computer science, machine learning, and algorithms. It
is dedicated to the studying and understanding of the degree to which
neural architectures can learn to execute algorithms, thereby
connecting traditional algorithmic concepts with modern neural network
models. This chapter will serve as an introduction to the foundational
concepts of this field of research.

The first section, \textit{Why learn algorithms?}, we discuss
potential challenges that arise from the application of classical
algorithms to real-world data, underlining the rationale behind
transitioning classical algorithms to a neural framework.

Thus, \textit{Principles of Algorithmic Reasoning}, elucidates the
crucial aspects of learning to execute algorithms. We define neural
architectures arising from \textit{algorithmic alignment} requirements
and discuss different training pipelines in comparison to conventional
machine learning approaches.

Concluding the chapter, \textit{Learnt algorithms as inductive priors}
showcases different modalities through which we may leverage
algorithmically-informed neural networks to tackle learning of new
tasks.

\section{Why learn algorithms?}
\label{sec:nar-motivation}
As explained in \autoref{sec:algo-guarantees}, the field of computer
science is rooted in algorithms. Algorithms are essentially
problem-solving methods that provide us with many guarantees, such
as their theoretical correctness, contrarily to deep learning, where
we usually can not say much about the
generalisation capabilities of the learnt function. This brings up
a very obvious debated question: ``\textit{Why should we learn
  algorithms if algorithms are provably correct?}''. The answer lies
in the gap between the theoretical guarantees provided by algorithms
and the practical realities of applying these algorithms to real-world
data.

\begin{figure}
  \includegraphics[width=1\linewidth]{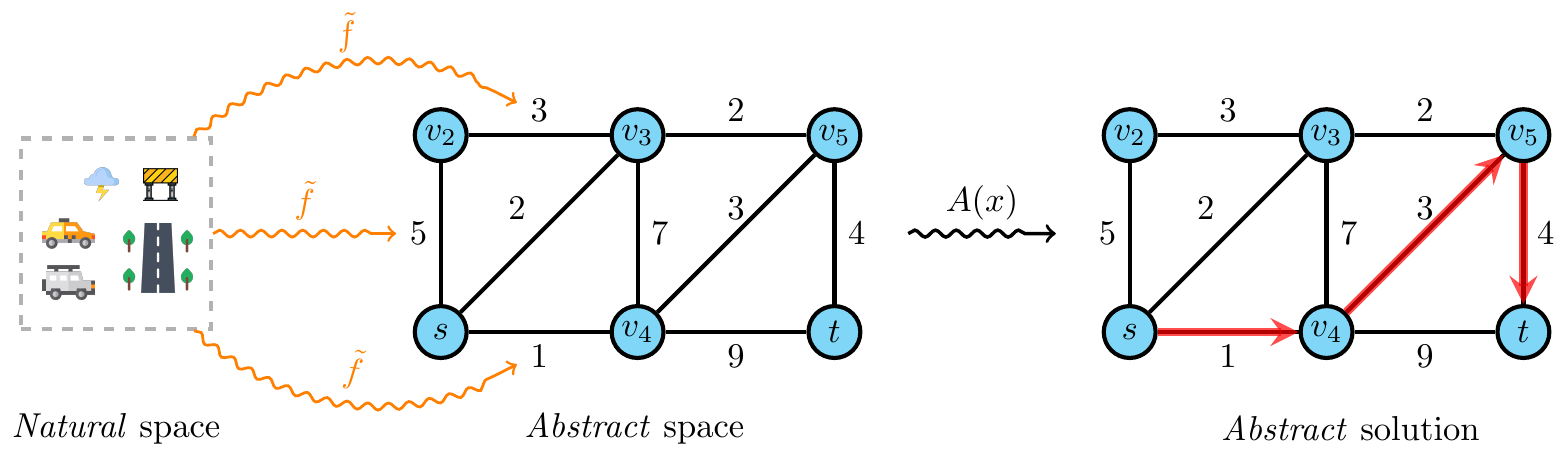}
  \caption{Illustration of the algorithmic bottleneck phenomena. Here,
    we model a shortest path problem in a real-world environment
    (\emph{Natural} space) as a graph in the \emph{abstract} space of
    algorithms. However, the encoding of multi-dimensional, noisy
    natural data (i.e., $\tilde{f}$) to a single dimension (i.e.,
    scalars of algorithms) is often performed manually and leads to
    loss of information \citep{harris1955fundamentals}. Then, we get a
    provably correct, but likely \textbf{suboptimal}, solution in the
    abstract space through applications of a shortest path algorithm
    (e.g., Dijkstra).}
  \label{fig:algo-flaws-example}
\end{figure}

\paragraph{Algorithms flaws.}
Algorithms are often perceived as flawless executors that invariably
produce correct solutions to specific problems, mainly because of
their theoretical guarantees regarding correctness and invariances
throughout their execution. However, the truth is that they can only
meet these expectations when the abstract spaces in which they operate
align precisely with the real-world contexts where their results are
intended to be applied. To clarify this point, consider the following
example, also depicted in \autoref{fig:algo-flaws-example}.

Consider a scenario where we aim to solve a routing problem within a
large road and traffic network. Now, consider an agent navigating this
environment whose goal is to identify the \textit{best} possible path
from point A to point B. Theoretically, we can model the large network
with a weighted graph $g=\tuple{\gV, \gE, \gW}$, where edges represent
roads and nodes symbolise locations (e.g., intersection points, point
of interests, etc.). Each edge $(u, v)$ is assigned a weight $w_{uv}$
denoting the \textit{cost} of traversing that road. By employing this
formalism, we can use our favourite shortest path algorithm to
obtain a solution which is provably the optimal one.

However, there are significant challenges in translating this
theoretical model into the real-world scenario. In particular,
\textit{how can we be certain that the edge weights are estimated
  correctly?} Most of classical graph algorithms often commit to use a
scalar value to represent the cost of traversing an edge. However,
quantifying accurately these scalars is extremely difficult in
general. Factors such as fluctuating traffic conditions, unpredictable
weather, and unforeseen incidents all need to be quantified jointly
and compressed into a single dimension to enable the use of graph
algorithms. If this estimation is imprecise, our theoretically optimal
solution might not be truly optimal in the context of our real-world
scenario.

This issue has been known for a long time. Indeed, in 1955,
\cite{harris1955fundamentals} analysed and discussed methods for
evaluating railways networks capacities, effectively modelling the
problem as a max-flow problem. In the very first paragraphs of their
investigation, the authors acknowledged that there were no known
tested mathematical models for robustly estimate the scalar capacities
of the edges in the rail network. This lack of formal methods for
estimating such scalars persists even to these days.

In light of these observations, algorithms' outputs, although provably
``optimal'', can still be considered approximations in case their
abstract space can not model our problem of interest exactly. In
simpler terms, theoretical correctness does not matter if we execute
the algorithm on the \textit{wrong} inputs.

As a last note, we highlight that classical algorithms may also come
across other problematics, such as dealing with partially observable
data and missing data. In such cases, algorithms quickly become
inapplicable as their preconditions are often violated by partially
missing specifications.

\paragraph{How does neural algorithmic reasoning help?}
One possible approach to address the problems mentioned earlier is to
employ neural networks to estimate the edge scalars, effectively
replacing the manual feature engineering, and then run an algorithm on
the predicted configuration. However, this strategy presents several
limitations. First, we would need to backpropagate through the
operations of the algorithm. However, algorithms are inherently
discrete functions (\autoref{sec:algos-discrete-functions}), and as
such, do not provide useful gradients for learning, although research
has started exploring ways to overcome such issue
\citep{poganvcic2019differentiation}. Second, the accuracy of the
estimated scalars might be compromised when the available training
data is limited. Additionally, attempting to compress the complexities
of the real world (e.g., traffic and weather conditions in the
previous example) into a single scalar value might lead to a drastic
loss of information. Such problem is often referred to as
\textit{algorithmic bottleneck} \citep{velickovic2021neural}.
Consequently, once the algorithm commits to these scalars, the result
could be suboptimal and, potentially, incorrect.

One alternative solution is to consider performing the algorithm
directly in a neural network's latent space. This is the objective of
Neural Algorithmic Reasoning, where we aim at building neural networks
that behave as close as possible to a chosen algorithm. Such an
approach, assuming that a neural network can indeed behave
algorithmically, actually solves both problems. First, the process
becomes fully differentiable, making learning far more
feasible. Second, neural networks operate in a high-dimensional space,
allowing for a more nuanced representation of the real-world
complexities that are difficult to capture in a single scalar. By
performing the algorithmic computations directly within the neural
network's high-dimensional space, we bypass the limitations associated
with the algorithmic bottleneck. Consequently, this allows for a more
faithful and adaptable representations of real-world scenarios. On a
broad scale, we fundamentally trade off algorithmic guarantees with
more accurate representations.

\paragraph{Is that all?}
One might argue if the above observations encompass the full objective
of neural algorithmic reasoning. The answer, however, is that there
remains much more to explore and understand within this domain.

Indeed, from a machine learning perspective, the learning of
algorithms is an intruiguing and fascinating challenge in its own
right. One field which might greatly benefit from NAR is the field of
Neural Combinatorial Optimisation (NCO). Precisely, NCO tackles
solving combinatorial optimisation problems through the use of neural
networks, either in conjuction with established CO methods
\citep{gasse2019exact, gupta2020hybrid} or as end-to-end neural
solvers \citep{bello2017neural, kool2019attention}. In this respect,
the objective is to solve CO problems to a higher precision by
crafting algorithmically-inspired neural networks that incorporate
algorithms as prior knowledge, and apply them to a related CO problem.
As highlighted in \autoref{sec:graph-algos}, there exist various
approximation algorithms that are reliant on \textit{algorithmic
  primitives} (e.g., MST, shortest path and sorting algorithms, etc.).
This supports the idea that incorporating algorithmic bias into neural
networks is useful. In other words, algorithms constitute an important
\textit{prior} for solving combinatorial problems. Injecting such
priors into neural networks, thus, could lead to a better exploration
of the solution space and a better understanding of the underlying
structure of these problems. One practical example is, for instance, a
neural network targeting the resolution of a TSP. In this scenario,
prior knowledge of how to compute a minimum spanning tree is of
fundamental importance. Such knowledge can be integrated by learning
an algorithm for computing the MST (e.g., Prim or Kruskal
\citep{cormen2009introduction}).

One of the profound aspirations in applying NAR to combinatorial
optimisation, thus, lies in the possibility of neural networks
discerning, via algorithmic priors, heuristics for NP-H problems that
surpass the advancements made over years in standard approximation
algorithms \citep{michalewicz2013solve, hromkovivc2013algorithmics,
  burke2014search, mart2018handbook}. Notably, the current NAR
literature already comprehends examples of how algorithmic biases have
proven a useful prior for neural networks \citep{davies2021advancing,
  blundell2022towards, beurer2022learning}.

As a last note, NAR's objectives also aim at building neural networks
that extrapolate beyond the support of training data -- commonly
termed out-of-distribution (OOD) generalisation. Indeed, overfitting
is a well-known problem in the machine learning community. Neural
models often excel at fitting the training data but struggle to
generalise when the test data distribution \textit{shifts} from the
training distribution \citep{neyshabur2017exploring}. Algorithms, on
the other hand, have the inherent ability to generalise on any
(acceptable) input, thanks to their strong theoretical guarantees.
Hence, in order to build neural networks that effectively behave like
algorithms, much effort is put into preserving this property inside
neural approaches. In the following sections, we discuss how this
objective is pursued in practice.

\section{Principles of Algorithmic Reasoning}
In this section, we present and clarify how the task of performing
algorithmic reasoning through neural networks is structured. We
explore how algorithms are learnt in an end-to-end fashion by
analysing each one of the three fundamental principles of neural
algorithmic reasoning: \textit{algorithmic alignment},
the \textit{encode-process-decode} architecture and \textit{step-wise}
supervision.

\subsection{Algorithmic alignment}
\label{sec:algorithmic-alignment}
In the context of NAR, the term algorithmic alignment
\citep{xu2020can} refers to the capacity of specific neural
architectures to learn and extrapolate on reasoning tasks, such as
executing classical algorithm. Here, \textit{alignment} implies that
some neural architectures inherently lean towards learning certain
families of functions more easily than others. To better understand this
concept, consider two popular neural network architectures: MLPs and
GNs. Notably, the latter includes the former in its equations (see \eqref{eq:gn-mp}), 
building upon MLPs and sharing similar approximation capabilities
\citep{scarselli2008computational}. However, even though
both MLPs and GNs have theoretically similar expressive power, the
different architectural bias may make the finding of a good
approximation to $f$ more or less ``difficult''. Now, consider a
graph algorithm $A$ and a neural network $\hat{f}$, a more formal
definition of algorithmic alignment is given as follows:
\begin{definition}
  Let $\hat{f}_1, \ldots, \hat{f}_{n}$ be the modules of $\hat{f}$ and
  let $A_1, \ldots, A_n$ be the modules of the algorithm. Thus, $\hat{f}$
  and $A$ are said to be $M$-algorithmically aligned if: (i)
  if we replace $\hat{f}_i$ with $A_i$ for each $i$, then the network
  $\hat{f}$ emulates $A$; (ii) the learning problem $\hat{f}_i \approx A_i$
  exhibits sample complexity $\gC(\hat{f}_i, A_i)$ such that
  $\sum_{i=1}^n \gC(\hat{f}_i, A_i) \leq M$.
\end{definition}
Informally, the sample complexity $\gC(\hat{f}_i, A_i)$
\citep{hanneke2016optimal} can be phrased as the minimum number of
training samples required to get a ``good'' approximation of $A_i$ via
$\hat{f}_i$. Hence, the concept of algorithmic alignment fundamentally
states that a good alignment implies that each of the algorithm's
submodules are ``easy'' to learn for the neural architecture (i.e.,
``small'' $M$).  That is the reason why considering the architectural
bias is crucial when dealing with learning of classical
algorithms. Actually, a first hint at the suitability of graph
networks to learn graph algorithms was given in
\autoref{sec:gn-desiderata}, where we discussed how GNs process
information equivalently to the $1$-WL test algorithm. Since, the
$1$-WL test algorithm can be phrased under the message-passing
framework, one may argue that GNs are a good fit for any
message-passing algorithms. Indeed, in this direction,
\cite{dudzik2022graph} show that GNs actually aligns with dynamic
programming algorithms. These intuitions suggest that graph networks
are the preferable choice over other kinds of architectures (e.g.,
MLPs) when the target function is an algorithm. To show this
precisely, consider the learning task of executing the Bellman-Ford
algorithm. Interestingly, Bellman-Ford relaxation steps, outlined in
\eqref{eq:bf-relaxation}, can be framed in terms of message-passing
through node and edge features. In fact, \eqref{eq:bf-relaxation} is
essentially a special case of \eqref{eq:gn-mp} where $\oplus$ and
$\phi$ are replaced with minimum (or maximum\footnote{Note that
  $\min(a,b) = |\max(-a, -b)|$ thus $\min$ and $\max$ can be used
  interchangeably}) operations. This seamlessly aligns the GN with
Bellman-Ford, as detailed by the following equations:
\begin{align}
    &\text{(MP)}& &\min \left(
                    \vh_v,
                    \min \left(\left\{
                    \psi(\vh_v,
                    \vh_u,
                    \vh_{uv}) \mid u \in \gN(v) \right\}\right)
                    \right) \label{eq:gn-mp-alignment} \\
    &\text{(BF)}& &\min \left(
                    d_v,
                    \min \left(\left\{
                    d_u + e_{uv}
                    \mid u \in \gN(v) \right\}\right)
                    \right). \label{eq:bf-mp-alignment}
\end{align}
From the presented equations, it is clear that the parametrised
network $\psi$ only needs to learn a linear function (i.e.,
$\psi(\vh_v, \vh_u, \vh_{uv}) \approx d_u + e_{uv}$), whereas other
aggregations and operations are inherently addressed through
architectural bias. Contrastingly, learning such relaxation step
through a different neural network architecture, such as an MLP, would
require to learn aggregating the representations and recognising
neighbourhoods, which is far beyond the task of ``simply'' learning a
linear function. Another interesting point of discussion is the
consequences that this ``linear'' algorithmic alignment have on OOD
generalisation. \cite{xu2021neural} provide empirical evidence
showing that neural networks with ReLU activations typically
generalise in a linear fashion along any dimension outside their
training distribution. While this might seem a limitation in standard
machine learning tasks, it is actually a desirable property when
dealing with most of classical algorithms. By analysing
\eqref{eq:gn-mp-alignment} and \eqref{eq:bf-mp-alignment}, we deduce
that $\psi$ must extrapolate linearly to align with the behaviour of
the relaxation step. Consequently, given its inclination towards
linear generalisation, ReLU stands out as the ideal activation
function for use within $\psi$.

In the latest years, more alignment properties between GNs and DP
algorithms have been demonstrated in the literature, whether
empirically or more formally, testifying that this is an important
property to achieve. \cite{fereydounian2022exact} formally state that
when the target function is \textit{permutation-compatible} (i.e., the
function is not influenced by re-labeling of the graph), then GNs can
\textit{generate} it. This has direct implications on the alignment
between GNs and algorithms, since permutation compatibility can be
proved to hold for some algorithmic functions (e.g., minimum cut).
\cite{bevilacqua2023neural} argue that most of classical algorithms
exhibit a \textit{causal model} \citep{sloman2005causal} underneath,
and aligning to it is indeed beneficial for OOD generalisation. Much
work have been done also to show that GNs align to \textit{parallel}
algorithms \citep{engelmayer2023parallel} and to augment GNs with
external memory to align more closely to recursion
\citep{jayalath2023recursive, jain2023neural}.

As it is evident from the above discussion, algorithmic alignment does
not restrict to the architectural bias alone. In fact,
\cite{cappart2023combinatorial} actually present a list of
``prescriptions'' that spans architectures, learning setups and loss
functions. In the following sections, we discuss the most important
principles, namely the encode-process-decode architecture and
step-wise supervision, and we refer the interested reader to
\citep{cappart2023combinatorial} for a more comprehensive list.

\subsection{The Encode-Process-Decode architecture}
\label{sec:epd}
\begin{figure}
  \begin{tikzpicture}[very thick]

  \tikzset{line/.style={draw,line width=1.5pt}}
  \tikzset{arrow/.style={->,>=stealth}}
  \tikzset{snake/.style={arrow,line width=1.3pt,decorate,decoration={snake,amplitude=1,segment length=4,post length=5}}}

  \tikzstyle{box}=[dash pattern=on 5pt off 2pt,inner sep=5pt,rounded corners=3pt]
  \tikzstyle{vertex}=[circle,fill=cyan!50!,draw,minimum size=15pt, inner sep=1pt]

  \tikzstyle{message}=[arrow, decorate,
  decoration={snake,
    amplitude=1,
    segment length=6,
    post length=7
  },
  line width=2.5pt]

  \node[rectangle, draw, dashed, very thick, black!30, minimum height=85pt, minimum width=90pt] (Xbar) at (-5.5, 0) {};

  \node[rectangle, draw, dashed, very thick, black!30, minimum height=85pt, minimum width=90pt] (Ybar) at (5.5, 0) {};

  \node[rectangle, draw, dashed, very thick, black!30, minimum height=85pt, minimum width=90pt, fill=blue, fill opacity=0.05] (P) at (0, 0) {};

  \node[rectangle, draw, dashed, very thick, black!30, minimum height=90pt, minimum width=90pt, below=3.4em of Xbar] (X) {};

  \node[rectangle, draw, dashed, very thick, black!30, minimum height=90pt, minimum width=90pt, below=3.4em of Ybar] (Y) {};

  \node[below=0em of Xbar] (xbl) {\emph{Abstract} inputs, $\bar{x}$};

  \node[below=0em of X] (xl) {\emph{Natural} inputs, $x$};

  \node[below=0em of Ybar] (ybl) {\emph{Abstract} outputs, $\bar{y}\approx A(\bar{x})$};

  \node[below=0em of Y] (yl) {\emph{Natural} outputs, $y$};

  \draw[line width=3.5pt, -stealth, blue, opacity=0.7] (Xbar) -- node[above] {\LARGE $f$} (P);
  \draw[line width=3.5pt, -stealth, blue, opacity=0.7] (P) -- node[above] {\LARGE $g$} (Ybar);

  \draw[blue] (P) edge[loop above=1, line width=3.5pt, looseness=5, stealth-, opacity=0.7] node [above, opacity=1.0] {\LARGE $p$} (P);

  \draw[purple] (X) edge[bend right, line width=3.5pt, -stealth, opacity=0.9] node[below,xshift=0.1em] {\LARGE $\tilde{f}$} (P);

  \draw[purple] (P) edge[bend right, line width=3.5pt, -stealth, opacity=0.9] node[below,xshift=-0.1em] {\LARGE $\tilde{g}$} (Y);

  \foreach \pos/\name/\lab in {{(0,0)/m1/$v_1$}, {(-1.11,-0.13)/m2/$v_2$}, {(-0.26,1)/m3/$v_3$}, {(0.9,0.47)/m4/$v_4$}, {(0.31,-1)/m5/$v_5$}}
  \node[vertex] (\name) at \pos {\lab};

  \foreach \pos/\name/\nb in {{(-5, -1)/a/0},{(-6, -1)/b/$\infty$},{(-6, 0)/c/$\infty$},{(-5, 0.25)/d/$\infty$},{(-5, 1)/e/$\infty$}}
  \node[vertex] (\name1) at \pos {\nb};

  \foreach \pos/\name/\nb in {{(6 , -1)/a/0},{(5, -1)/b/1},{(5, 0)/c/1},{(6, 0.25)/d/1},{(6, 1)/e/2}}
    \node[vertex] (\name2) at \pos {\nb};


  \foreach \pos/\name in {{(-6, -4.5)/a},{(-6.12, -5.5)/b},{(-4.44, -4.2)/c},{(-5.28, -3.98)/d},{(-5.21, -5.41)/e}, {(-4.3, -4.8)/f}, {(-6.33, -3.75)/g}, {(-5.1, -3.25)/h}}
    \node[vertex] (\name3) at \pos {\scriptsize$\vec{x}_\name$};

  \foreach \name in {a,b,c,d,e,f,g,h}
    \node[vertex, right = 25.5em of \name3] (\name4) {\scriptsize$\vec{y}_\name$};

  \draw (m2) edge (m1);
  \draw (m3) edge (m1);
  \draw (m4) edge (m1);
  \draw (m5) edge (m1);
  \draw (m2) edge[message, cyan, opacity=0.7] (m1);
  \draw (m3) edge[message, cyan, opacity=0.7] (m1);
  \draw (m4) edge[message, cyan, opacity=0.7] (m1);
  \draw (m5) edge[message, cyan, opacity=0.7] (m1);

  \draw (a1) edge (b1);
  \draw (a1) edge (c1);
  \draw (a1) edge (d1);
  \draw (b1) edge (c1);
  \draw (c1) edge (d1);
  \draw (d1) edge (e1);

  \draw (a2) edge (b2);
  \draw (a2) edge (c2);
  \draw (a2) edge (d2);
  \draw (b2) edge (c2);
  \draw (c2) edge (d2);
  \draw (d2) edge (e2);


  \draw (a3) edge (b3);
  \draw (a3) edge (c3);
  \draw (a3) edge (d3);
  \draw (b3) edge (c3);
  \draw (b3) edge (e3);
  \draw (c3) edge (d3);
  \draw (c3) edge (f3);
  \draw (d3) edge (g3);
  \draw (f3) edge (e3);
  \draw (g3) edge (h3);
  \draw (h3) edge (c3);

  \draw (a4) edge (b4);
  \draw (a4) edge (c4);
  \draw (a4) edge (d4);
  \draw (b4) edge (c4);
  \draw (b4) edge (e4);
  \draw (c4) edge (d4);
  \draw (c4) edge (f4);
  \draw (d4) edge (g4);
  \draw (f4) edge (e4);
  \draw (g4) edge (h4);
  \draw (h4) edge (c4);

\end{tikzpicture}
  \caption{The Encode-Process-Decode architecture. The processor is
    shared across multiple tasks through the application of several
    encoders/decoders that map from inputs to $p$'s latent space and
    to outputs. In this pipeline, the processor $p$ is trained to
    align to a shortest path algorithm (\textcolor{blue}{blue}
    path). Then, we use any of the transfer learning methodologies
    discussed in \autoref{sec:transfer} to reason on \emph{natural}
    inputs (\textcolor{red}{red} path) while maintaining alignment to
    shortest path.  }
  \label{fig:epd}
\end{figure}
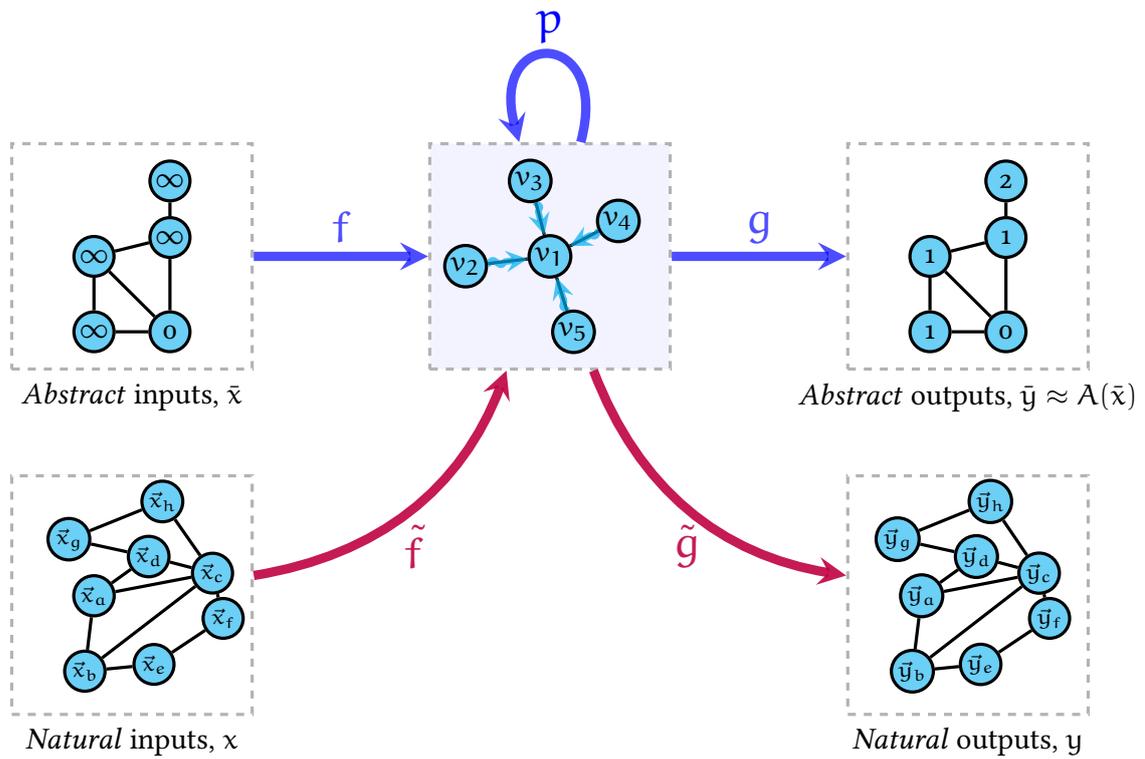
The Encode-Process-Decode (EPD) architecture
\citep{hamrick2018relational}, depicted in \autoref{fig:epd}, is a
specific neural network architecture used in a variety of machine
learning fields, particularly in the field of neural algorithmic
reasoning \citep{velickovic2021neural, cappart2023combinatorial}. The
EPD architecture consists of three learnable components: the encoder
$f$, the processor $p$, and the decoder $g$. The whole architecture
can be described as a composition of these three elements
$\net=g \circ p \circ f$.  Notably, the whole architecture is called
iteratetively $k$ times (i.e., $net^k$), symbolysing the $k$ steps of an
algorithm. This essentially formalises the need to using an
\textit{iterative} neural network architecture to obey the constraints
of OOD generalisation discussed in \autoref{sec:it-vs-ff-mp}. We now
describe the general forward pass (at each step $k$) by examining
independently each component of the EPD architecture. In the
following, we consider the notation introduced in
\autoref{sec:algo-notation} with $\vx$ and $\vy$ denoting
\texttt{inputs} and \texttt{outputs}.

\paragraph{Encoder.}
The encoder $f: \gX \rightarrow \sR^d$ primary objective is to
transform raw input data into a suitable format for the subsequent
processing stage. This is the first step to ``break'' the scalar
bottleneck. Typically, in an EPD architecture we would like our processor
to be the only component to be ``algorithmically-aligned'' to a set
of chosen algorithms. Thus, encoders (and decoders) are generally chosen
among the simplest classes of learning architectures, namely linear models.
Hence, suppose $\vx_v \in \gX_\gV$ and $\vx_{uv} \in \gX_\gE$ are
respectively arbitrary node and edge \texttt{inputs}, the encoder outputs
their encoded representations:
\begin{align*}
  &\vz^{(t)}_v = f_\theta(\vx^{(t)}_v)\\
  &\vz^{(t)}_{uv} = f_\theta(\vx^{(t)}_{uv}),
\end{align*}
where $\theta$ symbolises the learnable parameters.
\paragraph{Processor.}
The processor $p: \sR^d \rightarrow \sR^d$ is the core component of
the EPD pipeline that operates on the encoded representations $\vz$
generated by the encoder. This component is usually given an
architectural bias towards the computation dynamics of the target
algorithms. Since we mostly deal with graph algorithms, the processor
is assumed to be a graph network. Thus, the processor network performs
iterative step-wise updates of the node representations:
\begin{align*}
  \vh^{(t)}_v = p_\theta\Big(&\vz^{(t)}_v, \big\{\vz_{uv}^{(t)}\big\}_{(u, v) \in \gE},\vh_{v}^{(t-1)},\\
  &\big\{\vh_{u}^{(t-1)}\big\}_{u \in \gN(v)}, \big\{\vh_{uv}^{(t)}\big\}_{(u, v) \in \gE}\Big),
\end{align*}
as well as edge representations:
\begin{equation*}
  \vh^{(t)}_{uv} = p_\theta\left(\vz^{(t)}_v, \vz_{u}^{(t)}, \vz_{uv}^{(t)},
    \vh_{uv}^{(t-1)}, \vh_{u}^{(t-1)}, \vh_{v}^{(t-1)}\right).
\end{equation*}
Here, $\vh^{(0)}$ is initialised as the null vector both for node and
edge representations.

In practice, the selection of the message-passing function in the
processor network is guided by both algorithmic alignment properties
and empirical performance. Here, we generally seek simple and general
architectures, avoiding overly complicated graph convolution
operations that stray from algorithmic behaviour. For instance,
notorious message-passing methods based on Chebyshev polynomials
\citep{defferrard2016convolutional}, Laplacian matrices
\citep{kipf2017semi} and graph isomorphism \citep{xu2019powerful} are
not typically used as NAR processors. The rationale behind this is
that spectral processing, normalisation techniques and aggregators
commonly used in these operations do not conform to the alignment
principles discussed in \autoref{sec:algorithmic-alignment}.

In the following, we list some of the most used processor networks:
\begin{itemize}
\item \textbf{Message-Passing Neural Networks}
  \citep{gilmer2017neural}: MPNNs corresponds exactly to equation
  \eqref{eq:gn-mp}, where typically $\oplus = \max$ to better align
  with DP algorithms. Furthermore, graphs are always processed as
  cliques, regardless of their true connectivity. Herein, there exist
  also variants such as MPNNs based on \textit{triplet reasoning}
  (Tri-MPNNs) \citep{ibarz2022generalist} that consider
  \textit{edge-based} message passing, in order to align more closely
  to a broader set of algorithms (e.g., Floyd-Warshall).

\item \textbf{Pointer Graph Networks} \citep{velickovic2020pointer}:
  PGNs are special cases of MPNNs, where messages are computed and
  exchanged within the true neighbourhoods $\gN(v)$ in the graph,
  hence not considering graphs as fully-connected.

\item \textbf{Graph Attention Networks} \citep{velickovic2018graph}:
  GATs use attention \citep{vaswani2017attention} to
  weigh contributions of neighbouring nodes. These models have
  empirically shown to be more effective to tackle computational
  geometry algorithms \citep{velickovic2022clrs}.

\end{itemize}
Indeed, the aforementioned networks are the highest-performing
baselines in the CLRS algorithmic benchmark
\citep{velickovic2022clrs}.

Finally, one significant aspect to consider is the termination
condition of the processor. Many algorithms, referred to as
\textit{iterative algorithms}, terminate after a given loop condition
is met. To align with these algorithms, it is essential to introduce
termination strategies in our processors. Usually, termination
requires the use of a a \textit{termination network} $T$ that uses a
sigmoid activation to determine when to halt iterations. This network
often takes the average of the most recent $n$ node or edge
representations as its input, as follows:
\begin{equation*}
  \evs^{(t)} = \sigma\left(T(\{\bar{\vh}_v \mid v \in \gV\})\right).
\end{equation*}
Here, $\sigma$ is the logistic function and $\bar{\vh}$ denotes the
averaged representation of the few iterations of node $v$. As a
result, $\evs^{(t)} \in [0, 1]$ indicating whether to terminate the
processor at step $t$. Thus, the termination network essentially
learns a stopping criterion. This strategy is reminiscent of the
IterGNN \citep{tang2020towards} approach.

Additionally, it is also feasible to apply the inherent termination
conditions of the target algorithm.  For example, Bellman-Ford
(similarly to other shortest path algorithm) terminate once the
solution ``converges'' (i.e., no ``new'' shortest paths are found
within two successive iterations). In such cases, we can
straightforwardly apply the same stopping criterion to our processor.
We conclude by noting that, whenever we have access to an
\textit{upper bound} on the maximum number of iterations (e.g.,
diameter of the graph), we can include such stopping condition in
our processor.

\paragraph{Decoder.}
Finally, the decoder $g: \sR^d \rightarrow \gY$ maps the embeddings
produced by the processor to the actual space of solutions of the
algorithm:
\begin{align*}
  &\vy^{(t)}_v = g_\theta(\vz^{(t)}_v, \vh^{(t)}_v, \vh^{(t-1)}_v)\\
  \\
  &\vy^{(t)}_{uv} = g_\theta(\vz^{(t)}_{uv}, \vh^{(t)}_{uv}, \vh^{(t-1)}_{uv}).
\end{align*}
Notably, the decoder is applied to the embeddings produced at every
step $t$, in order to obtain intermediate outputs (i.e.,
\texttt{hints}) that might be fed back to the encoder in the
subsequent step. This means that parts of the inputs $\vx^{(t)}$ may
include step-wise outputs $\vy^{(t-1)}$.

\subsection{Step-wise supervision}
In standard machine learning setups, the one objective is learning an
approximation to a hidden function $f: \sR^n \rightarrow \sR^m$. The
most typical way in which we pursue this objective is by learning a
stack of non-linear transformations (e.g., MLPs), imposing little or
no constraints on the intermediate transformations of the hidden
layers. This results in an optimisation problem described as:
\begin{equation*}
  \theta^* = \argmin_\theta \frac{1}{|\train|}\sum_{x \in |\train|}\gL(f(x), \net(x \mid \theta)),
\end{equation*}
where $\net$ represents the neural network and $\train$ represents
the training data.

Furthermore, the above learning problem has another implication: we
assume that we only want to supervise on
\texttt{input}-\texttt{output} pairs. Indeed, when we think of machine
learning and deep learning we tacitly imply that optimisation is only
performed in this manner, and understandably so. In fact, in the most
general case we have no indications of \textit{how} the target
function generated its \texttt{output} (i.e., we do not have access to
\texttt{hints} as for algorithms), that is, in fact, the learning
objective. This is the primary reason why we often imply that neural
networks learn from pairs of input-output data.

However, within the context of \textit{neural execution} of
algorithms, supervising a model under such conditions has historically
exhibited poor performance. Consider, for instance, the Neural Turing
Machines (NTM) \citep{graves2014neural} -- an early effort to teach
classical algorithms to neural networks. This model is equipped with
an auxiliary differentiable memory and are supervised solely on
input-output pairs of the target algorithmic functions. The neural
network is then tasked to learn the dynamics of \textit{reading} and
\textit{writing} to this memory, without explicit guidance on its
management as per the target algorithms. This usually leads to high
variance in the results, as we have few guarantees that the neural
network will learn to use the memory effectively.

A straightforward approach to tackle this problem is to show the
network the step-by-step process of how target algorithms interact
with data structures and input data. This is the idea behind
\textit{step-wise supervision}, illustrated for the first time in
\citep{velickovic2019neural}. In essence, having access to these
manipulations enables us to supervise on each of them, effectively
including an auxiliary supervision signal for every step of the
algorithm's trajectory. Following the notation of
\autoref{sec:algo-notation}, we refer to these intermediate
manipulations as \texttt{hints}. A mathematical formulation is
provided as follows:
\begin{align} \label{eq:stepwise-supervision-loss}
  \begin{split}
    \theta^* = \argmin_\theta
    &\frac{1}{|\train|} \sum_{x \in |\train|}
      \Big(\gL(A(x), \net(x \mid \theta)) +\\
    &\sum_{h \in \texttt{hints}} \sum^{T-2}_{t=0}
      \gL(A_t(h^{(t)}), \net_t(h^{(t)} \mid \theta))\Big),
  \end{split}
\end{align}
where $A$ is the target algorithm and $h^{(t)}$ represents a possible
\texttt{hint} at step $t$ of the algorithm, with $A_t$ and $\net_t$
indicating the $t$-th step of the algorithm and the algorithmic
reasoner. Including these additional terms to the loss function, thus,
effectively constrains the network to behave more closely to the
target algorithm at each intermediate step $t$. This also lets the
network disambiguate across similar algorithms. In fact, consider
the set of sorting algorithms that, given an input list, returns
a sorted version of the list according to some metric. If we ignore
the intermediate \texttt{hints}, all sorting algorithms become
indistiguishable to one another. Hence, including supervision on
\texttt{hints} is sometimes crucial to align correctly to the dynamics
of a target algorithm.

\section{Learnt algorithms as inductive priors}
\label{sec:transfer}
As introduced in the previous section, an intriguing reason to employ
neural algorithmic reasoning is to adapt algorithms for handling
real-world, noisy inputs (i.e., \textit{natural} inputs). Broadly
speaking, our goal is to learn a mapping from one natural space
$\tilde{\sA}$ to another $\tilde{\sB}$, where this mapping is of
algorithmic nature. However, algorithms operate in an abstracted
version of these natural spaces (i.e., \textit{abstract} space). This
abstract space is what we typically use when teaching algorithms to
machines. Now, denote the learning task on natural spaces as the
\textit{target} task $\gT$, and the task of learning on abstract
spaces as the \textit{base} task $\gB$. In practice, $\gB$ may be any
algorithmic tasks, such as finding the shortest path from point A to
point B or find the maximum flow within a flow network. The target
task $\gT$, in contrast, might be the exact same shortest path problem
but with natural data, or any problem correlated to algorithms (e.g.,
NCO), as per our discussion in \autoref{sec:nar-motivation}. Thus, the
assumption is that $\gT$ and $\gB$ share mutual information and hence
knowledge of $\gB$ may be a valuable inductive prior when solving
$\gT$.

Now, in order to infuse $\gT$ with knowledge of $\gB$ we generally
employ a two-step learning scheme:
\begin{enumerate}
\item Given synthetically generated inputs and ground truths in the
  abstract space, learn the parameters of an EPD architecture to
  execute an algorithm (i.e., solve $\gB$) by optimising
  \eqref{eq:stepwise-supervision-loss}, where
  $net = g \circ p \circ f$. Here, we put emphasis on the fact that
  the EPD model effectively \textit{decouples} the processor network
  $p$ from raw inputs and outputs, with the encoder and decoder
  networks handling the necessary \textit{pre}- and
  \textit{post}-processing (i.e., mapping to and from $p$'s latent
  space). Furthermore, as detailed in \autoref{sec:epd}, $f$ and $g$
  contribute little to the learning of the algorithmic function, being
  implemented as simple linear projections. This ensures that, at the
  end of the optimisation, we end up with all knowledge of $\gB$
  encoded inside the processor's parameters. This allows for easily
  extraction of the processor and effective transfer learning during
  step 2.

\item Discard $f$ and $g$ and apply any suitable transfer learning
  \citep{torrey2010transfer} method to $p$ to learn the target task
  $\gT$.
\end{enumerate}
Through this simple learning scheme, we are able to re-use the
processor $p$ for solving the new task $\gT$. We now define how
we can effectively implement step 2. Consider
$\tilde{\vx}, \tilde{\vy} \sim \gT$ to be input and ground-truth
samples from $\gT$, \cite{xhonneux2021transfer} discuss different
transfer learning settings to practically implement step 2, which we
report in the following along others:
\begin{itemize}
\item \textbf{Pre-train and freeze (PF)}: freeze the parameters of
  $p$. Thus, instantiate new encoders and decoders $\tilde{f}$,
  $\tilde{g}$ to process $\tilde{\vx}$ and $\tilde{\vy}$, and learn their
  parameters by minimising an appropriate loss function $\gL_\gT$:
  \begin{equation*}
    \argmin_\phi \gL_\gT\left(\tilde{\vy},
      (\tilde{g} \circ p \circ \tilde{f})(\vx \mid \phi)\right),
  \end{equation*}
  where $\phi$ are the learnable parameters of the new encoders and
  decoders. This settings is used, for instance, in
  \citep{deac2021neural, numeroso2023dual}.

\item \textbf{Pre-train and fine-tune (PFT)}: unlike the above
  approach, here we let $p$'s parameters change during step 2 (i.e.,
  $\phi$ include parameters of the processor in the above equation).

\item \textbf{2-processor transfer (2PROC)}: here, the two
  aforementioned methods are combined. Precisely, we instantiate a new
  processor $\tilde{p}$ whose parameters are free to change, whereas
  parameters of the pre-trained processor $p$ are kept frozen.
  Activations of the two processors are then combined together through
  any aggregator function prior application of the decoder $\tilde{g}$.
  Suppose $\vh_v$, $\tilde{\vh}_v$ to be the activations of the frozen
  processor $p$ and the new processor $\tilde{p}$, respectively.
  Inferred values of $\tilde{\vy}_v$ are then obtained as:
  \begin{equation*}
    \tilde{\vy}_v = \tilde{g}(\tilde{\oplus}(\vh_v, \tilde{\vh}_v)),
  \end{equation*}
  where $\tilde{\oplus}$ can be a learnable MLP or any simple functions
  such as summation.

\item \textbf{Multi-task learning (MTL)}: here, the two tasks $\gB$
  and $\gT$ are solved together by means of one pair of encoder and
  decoder networks for the task $\gB$ (i.e., $f_\gB$,$g_\gB$) and one
  for $\gT$ (i.e., $f_\gT$, $g_\gT$). These encoders and decoders
  share the same processor network $p$, which is trained to execute
  the two tasks simultaneuosly. As a result, the transfer is implicit
  during step 1 of the above learning scheme.
\end{itemize}
Through the above transfer learning methods, we have everything we
need to successfully apply a learnt $\gB$ reasoner on a new task
$\gT$.  In other words, we illustrated how we can reason
algorithmically on natural inputs, by carefully pre-train a processor
$p$ to imitate an algorithm. Intuitively, freezing (or not) of the
processor regulates how strong of an algorithmic bias we want to
infuse $\gT$ with. Furthermore, $\tilde{f}$ and $\tilde{g}$
effectively replace the manual feature engineering surpassing the
algorithmic bottleneck issue discussed in the previous sections, as
both $\tilde{f}$ and $\tilde{g}$ learn to map to and from a
multi-dimensional space (i.e., the processor $p$'s latent space) and
are forced to re-use the algorithmic knowledge lying inside $p$.  Note
that, while we usually require encoders and decoders to be simple
linear transformations when learning $\gB$, such constraint is not
only not needed when transferring to $\gT$, but can also limit the
performance in case a \textit{non}-linear reasoning is required on the
algorithmic steps. Indeed, especially when applying \textit{pre-train
  and freeze} we can usually implement $\tilde{f}$ and $\tilde{g}$ as
more powerful MLPs or any suitable neural networks.
\cite{velickovic2022reasoning} exemplify this by pre-training an
algorithmic reasoner to perform physical simulations on tabular data,
and then deploy encoders and decoders as deep convolutional neural
networks \citep{lecun1995convolutional} when applying these learnt
simulators to images.

As a last note, let us put our focus on the \textit{multi-task
  learning} setting. There, we discussed how we can use the same
processor for learning two tasks at the same time. This concept can be
generalised to tackle an arbitrary number of tasks. This means that,
for instance, we can actually learn multiple algorithms at the same
time by having several independent pairs $\tuple{f_{A_i}, g_{A_i}}$
implementing the necessary mapping from the abstract spaces of the
different algorithms $\gA = \set{A_1, \ldots, A_n}$. Introduced in
\citep{xhonneux2021transfer}, this learning modality shows to be
beneficial for OOD generalisation as diverse algorithms may share
common sub-routines that the neural algorithmic reasoner can
successfully identify and exploit during
learning. \cite{ibarz2022generalist} scale this idea to map all 30
algorithms of the CLRS benchmark \citep{velickovic2022clrs} to the
same processor's parameter space, with performance comparable to a
network trained exclusively on one single algorithm. Within this
thesis, we utilise this technique when learning dual algorithms in
\autoref{sec:dar}.

    \part{Contribution}
\chapter{Learning Dynamic Programming}\label{ch:contribution-dp}
This chapter contains the first core contributions of this thesis.
Herein, we target learning of \emph{min}-aggregated dynamic
programming (Min-DP) algorithms, from both theoretical and practical
standpoints.

In the first section, \textit{Approximating min-DP algorithms}, we
draw connections between the field of \textit{tropical algebra}, where
DP algorithms are commonly thought to ``live'' in
\citep{maclagan2009introduction}, and classical euclidean spaces
(i.e., where neural networks operate).  Such connection will provide
evidence that GNs can approximate Min-DP algorithms up to a arbitrary
precision.

In the second section, \textit{Planning via learnt heuristics},
we show how we can learn a heuristic function to be used in a path
planning problem, which can be framed under dynamic programming. Such
heuristic will allow for more effective path discovery, especially
when deployed in partially observable environments.

\section{Approximate min-DP algorithms}
\label{sec:tropical-gnn}
By now, it should be clear that GNs and algorithms computation
dynamics share similarities. On this line, \cite{dudzik2022graph}
attempt at formalising such similarities through category theory
\citep{awodey2010category}, particularly by employing \textit{integral
  transforms} -- a popular construct in category theory. This integral
transform provides a unified description of both dynamic programming
algorithms and computation of graph networks, effectively drawing an
equivalence between the two at a macro-level.

In this contribution, we take a different route and attempt at drawing
connections between DP algorithms and GNs by leveraging
\textit{tropical algebra} (see \autoref{sec:dp}). In essence, tropical
algebra is a ``degeneration'' of Euclidean algebra which is better
``aligned'' to the dynamic programming paradigm. Precisely, we draw
connections between two \textbf{semirings} (see \autoref{sec:dp}),
that of \textit{real numbers} (i.e., ``Euclidean'' semiring) and the
tropical \textbf{min-plus} semiring. Both the real and min-plus
semirings are particular semirings, which can be represented
mathematically as $\tuple{\sR, +, \cdot, 0, 1}$ and
$\tuple{\sT = \sR \cup \set{+\infty}, \min, +, +\infty, 0}$. The
former semiring encompasses classical addition and multiplication
operations on the set of real numbers $\sR$. Neural networks, in
general, operate within this structure.  In the following, we will
always refer to the tropical min-plus semiring as simply the
``tropical semiring''. Within this algebraic structure, many dynamic
programming algorithms find a precise description
\citep{maclagan2009introduction}, as we also showed in
\autoref{sec:dp}.

In order to show that GNs can effectively approximate DP algorithms up
to arbitrary precision, we proceed by ``reconciling'' these two
semirings. We provide practical proof examples for reachability and
shortest path algorithms, in particular. Precisely, we consider inputs
and ouputs coming from tropical sets and formulate a suitable EPD
architecture with a GN processor that can seamlessly process such
data, effectively casting GNs under the framework of tropical
algebra. In particular, we implement encoders and decoders as
\textit{Maslov} quantisation maps \citep{litvinov2007maslov} and use a
\emph{sum}-aggregated graph network (i.e., Graph Isomorphism Network
\citep{xu2019powerful}) as the processor of such architecture. We
will show that, by employing this architecture, GNs can effectively
approximate \emph{min}-aggregated DP algorithms up to an arbitrary
precision \citep{landolfi2023tropical}.

\subsection{Aligment of encoders and decoders}
\label{sec:tropical-enc-dec-alignment}
Suppose we receive inputs in the form of ``tropical'' objects $\sT$.
Standard EPD pipelines (with linear encoding and decoding functions)
are unable to directly process such inputs, due to the presence of
possible infinities in the input (recall that we consider
$\sT = \sR \cup \set{+\infty}$).

To overcome this limitation, the EPD architecture considered in
this section comprises specialised encoders and decoders, known
as Maslov quantisation and dequantisation maps
\citep{gunawardena1998correspondence, brugalle2000brief}
$q_h: \sT \to \sR_+$ and $d_h: \sR_+ \to \sT$, with $\sR_+$ being
the set of non-negative real numbers. A precise mathematical
formulation is as follows:
\begin{align}\label{eq:quantization}
  q_h(x) &=
  \begin{cases}
    0 & \text{if } x = \infty \\
    e^{-x/h} & \text{otherwise};
  \end{cases}\\
  d_h(x) &=
  \begin{cases}
    \infty & \text{if } x=0 \\
    -h \ln x & \text{otherwise}.
  \end{cases},
\end{align}
where $h > 0$ is a pre-defined constant that regulates the smoothness
of the quantisation and dequantisation operators. Now, we have a
natural map from tropical objects to real numbers and
viceversa. However, such mappings entail a stronger connection between
the min-plus semiring and the semiring of real numbers. In fact, $q_h$
and $d_h$ can be actually shown to be semiring \textit{homomorphisms}
between $\sT$ and $\sR_+$.
\begin{proposition}
  $q_h: \sT \rightarrow \sR_+$ is a semiring homomorphism from
  $\tuple{\sT, \min, +, +\infty, 0}$ to $\tuple{\sR_+, +, \cdot, 0, 1}$
  for $h \rightarrow 0^+$.  As such, $q_h$ satisfies:
  \begin{enumerate}
    \item $q_h(+\infty) = id_{\tuple{\sR_+, +}}$.
    \item $q_h(0) = id_{\tuple{\sR_+, \cdot}}$.
    \item
      $\forall a, b \in \sT \,:\, q_h(a \odot_\sT b) = q_h(a + b) =
      q_h(a) \cdot q_h(b)$.
    \item
      $\forall a, b \in \sT \,:\, q_h(a \oplus_\sT b) = q_h(\min(a,b))
      = q_h(a) + q_h(b)$.
  \end{enumerate}
\end{proposition}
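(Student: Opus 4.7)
The plan is to verify the four conditions directly from the definition of $q_h$ in \eqref{eq:quantization}, treating items (1)--(3) as essentially algebraic identities and reserving the bulk of the argument for (4), which is the only condition that genuinely requires the limit $h \to 0^+$.

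First, conditions (1) and (2) follow immediately by plugging into the definition: $q_h(+\infty)=0$ by the first branch, and $q_h(0)=e^{-0/h}=1$. These match the additive and multiplicative identities of $\tuple{\sR_+,+,\cdot,0,1}$, so no limit is needed. Next, for condition (3) I would split into cases on whether either argument equals $+\infty$. If both $a,b\in\sR$, then $a\odot_\sT b=a+b\in\sR$ and $q_h(a+b)=e^{-(a+b)/h}=e^{-a/h}\cdot e^{-b/h}=q_h(a)\cdot q_h(b)$. If either argument is $+\infty$, then $a\odot_\sT b=+\infty$ and the left side is $0$, which equals the right side since $q_h(+\infty)=0$ is absorbing for multiplication in $\sR_+$. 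Again, these hold for every $h>0$.

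The interesting step is (4), where exact equality fails at finite $h$ and the homomorphism property is only recovered asymptotically. Assuming $a\le b$ with $a,b\in\sR$, one has
\[
q_h(a)+q_h(b)=e^{-a/h}+e^{-b/h}=e^{-a/h}\bigl(1+e^{-(b-a)/h}\bigr).
\]
I would then pass through the dequantisation $d_h$ to recover the tropical side:
\[
d_h\!\bigl(q_h(a)+q_h(b)\bigr)=a-h\ln\bigl(1+e^{-(b-a)/h}\bigr).
\]
Since $b-a\ge 0$, the correction term $h\ln(1+e^{-(b-a)/h})$ is bounded by $h\ln 2$ and tends to $0$ as $h\to 0^+$, so $d_h(q_h(a)+q_h(b))\to \min(a,b)$, which is the precise sense in which (4) holds. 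The case where either argument equals $+\infty$ reduces to (1), since then one of the exponentials vanishes identically.

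The only real obstacle is packaging (4) cleanly: a literal reading of ``homomorphism'' would demand an identity for each fixed $h$, whereas what actually holds is a limiting statement. I would therefore be careful in the exposition to state (4) as $\lim_{h\to 0^+}d_h(q_h(a)+q_h(b))=\min(a,b)$, or equivalently $q_h(\min(a,b))=q_h(a)+q_h(b)+o(q_h(a)+q_h(b))$ as $h\to 0^+$, to make the statement rigorous without overclaiming exact equality. With that caveat in place, the proof is essentially a one-line asymptotic estimate on top of elementary manipulations of exponentials.
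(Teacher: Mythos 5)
Your proposal is correct, and items (1)--(3) coincide with the paper's own proof (you are slightly more careful in (3), explicitly handling the case where an argument is $+\infty$ via absorption, which the paper leaves implicit). Where you genuinely diverge is condition (4). The paper's proof of (4) merely observes that $\lim_{h \to 0^+} q_h(\min(a,b)) = 0 = \lim_{h \to 0^+}\left[q_h(a) + q_h(b)\right]$, i.e.\ both sides share the same (degenerate) limit of zero -- an argument that is rather weak as a justification of additivity, and which moreover presupposes $a, b > 0$ (for negative arguments both sides diverge). You instead pass through the dequantisation and show $d_h\bigl(q_h(a)+q_h(b)\bigr) = \min(a,b) - h\ln\bigl(1+e^{-\lvert b-a\rvert/h}\bigr)$ with the correction bounded by $h\ln 2$, which is the log-sum-exp estimate the paper itself only derives later, in equation \eqref{eq:h-plus}, when defining $\oplus_h$. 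Your route is the more honest reading of ``homomorphism for $h \to 0^+$'': it yields a quantitative, sign-independent statement and makes explicit why exact equality fails at finite $h$, at the cost of invoking $d_h$ (and implicitly its injectivity on $(0,\infty)$) inside a proposition that is nominally only about $q_h$. The paper's version stays entirely within $\sR_+$ but buys that economy by proving something strictly less informative.
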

\begin{proof}
  We will prove that $q_h$ satisfies each of the above properties:
  \begin{enumerate}
  \item By definition of $q_h$, we have that
    $q_h(+\infty) = 0 = id_{\tuple{\sR_+, +}}$.
    \item By definition of $q_h$, we have that
      $q_h(0) = e^{0/-h} = 1 = id_{\tuple{\sR_+, \cdot}}$.
    \item Given $a, b \in \sT$, we have that
      $q_h(a \odot_\sT b) = e^{\frac{-(a+b)}{h}} = e^{\frac{-a}{h} +
        \frac{-b}{h}} = e^{\frac{-a}{h}} \cdot e^{\frac{-b}{h}} =
      q_h(a) \cdot q_h(b)$
    \item Given $a, b \in \sT$, consider
      $\lim_{h \to 0^+} q_h(a \oplus_\sT b) = \lim_{h \to 0^+}
      q_h(\min(a,b)) = 0 = \lim_{h \to 0^+} \left[ q_h(a) + q_h(b)
      \right]$.
  \end{enumerate}
\end{proof}
\begin{proposition}
  $d_h: \sR_+ \rightarrow \sT$ is a semiring homomorphism from
  $\tuple{\sR_+, +, \cdot, 0, 1}$ to $\tuple{\sT, \min, +, +\infty, 0}$
  for $h \rightarrow 0^+$. As such, $d_h$ satisfies:
  \begin{enumerate}
    \item $d_h(0) = +\infty = id_{\tuple{\sT, \min}}$.
    \item $d_h(1) = 0 = id_{\tuple{\sT, +}}$.
    \item $\forall a, b \in \sR_+ \,:\, d_h(a \odot_{\sR_+} b) =  d_h(a \cdot b) = d_h(a) + d_h(b)$.
    \item $\forall a, b \in \sR_+ \,:\, d_h(a \oplus_{\sR_+} b) = d_h(a + b) = \min(d_h(a), d_h(b))$.
 \end{enumerate}
\end{proposition}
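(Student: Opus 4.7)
The plan is to verify each of the four properties in turn, mirroring the strategy used for the companion proposition on $q_h$. Properties 1 and 2 are immediate consequences of the definition of $d_h$: by construction $d_h(0)=\infty$, and since $\ln 1 = 0$ we obtain $d_h(1) = -h \cdot 0 = 0$. These equalities hold for every $h>0$, so no limiting argument is required.

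Property 3 should fall out cleanly from the logarithm identity $\ln(ab) = \ln a + \ln b$. For $a,b > 0$, I would simply write $d_h(a\cdot b) = -h\ln(ab) = -h\ln a + (-h\ln b) = d_h(a) + d_h(b)$. The only care needed is the boundary case where $a=0$ or $b=0$: here both $a\cdot b = 0$, so $d_h(a\cdot b) = \infty$, and on the right-hand side the term $d_h(0) = \infty$ absorbs the other summand in $\sT$ (which is consistent with the identity law for $\min$ proved in property 1 of the previous proposition). So property 3 is exact, again requiring no limit.

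Property 4 is where the limit $h \to 0^+$ genuinely enters, and this is the one nontrivial step. The goal is to show $\lim_{h\to 0^+}\bigl[d_h(a+b) - \min(d_h(a), d_h(b))\bigr] = 0$ for arbitrary $a,b\in\sR_+$. I would first dispose of the degenerate cases ($a=0$ or $b=0$, where both sides coincide exactly for every $h$). For the main case $a,b>0$, assume without loss of generality $a \geq b$; then $\min(d_h(a), d_h(b)) = -h\ln a$ because $-h$ reverses the order of $\ln$. Factoring $a+b = a(1 + b/a)$ and using $\ln(ab) = \ln a + \ln b$ gives
\[
d_h(a+b) - \min(d_h(a), d_h(b)) = -h\ln(a+b) + h\ln a = -h\ln\!\left(1 + \tfrac{b}{a}\right).
\]
Since $b/a \in (0,1]$ is a fixed finite quantity independent of $h$, the factor $\ln(1+b/a)$ is bounded and the whole expression vanishes as $h\to 0^+$, establishing property 4 in the appropriate limiting sense.

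The main obstacle I anticipate is purely notational: property 4 only holds \emph{in the limit}, whereas properties 1--3 hold exactly for every $h>0$. This asymmetry has to be stated carefully so that the reader sees precisely in what sense $d_h$ is a semiring homomorphism, paralleling the convention already adopted in the proof for $q_h$. Beyond this, no deeper analytic machinery is needed: the argument reduces entirely to elementary manipulations of the logarithm plus a single bounded-factor limit.
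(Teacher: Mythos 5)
Your proof is correct and follows essentially the same route as the paper's: properties 1--3 are exact consequences of the definition of $d_h$ and the identity $\ln(ab)=\ln a+\ln b$, while property 4 holds only in the limit $h\to 0^+$. Your treatment of property 4 is in fact slightly sharper than the paper's --- the paper merely observes that $\lim_{h\to 0^+} d_h(a+b) = 0 = \lim_{h\to 0^+}\min(d_h(a),d_h(b))$, whereas you exhibit the difference explicitly as $-h\ln(1+b/a)$ and you also cover the degenerate cases $a=0$ or $b=0$, which the paper omits; these are refinements rather than a different approach.
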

\begin{proof}
  The proof for $d_h$ is analogous to the previous one:
  \begin{enumerate}
  \item By definition of $d_h$:
    $d_h(0) = +\infty = id_{\tuple{\sT, \min}}$.
    \item $d_h(1) = -h \ln 1 = 0 = id_{\tuple{\sT, +}}$.
    \item Given $a, b \in \sR_+$, we have that
      $d_h(a \cdot b) = -h \ln(ab) = -h \ln a + (-h \ln b) = d_h(a) +
      d_h(b)$
    \item Given $a, b \in \sR_+$, consider
      $\lim_{h \to 0^+} d_h(a + b) = \lim_{h \to 0^+} \left[-h
        \ln(a+b)\right] = 0 = \lim_{h \to 0^+} \min(d_h(a), d_h(b))$.
  \end{enumerate}
\end{proof}
We have established that both $q_h$ and $d_h$ function as
homomorphisms between the two semirings for arbitrarily ``small''
values of $h$. Furthermore, it can be shown that $d_h = q^{-1}_h$ by
observing that $\ln = \exp^{-1}$ and viceversa. Since $q_h$ admits an
inverse, $q_h$ and $d_h$ are, in reality, \textbf{isomorphisms}.
Consequently, any algebraic structure in $\sR_+$ and any algebraic
structure in $\sT$ are isomorphic.  Of course, homomorphisms and
isomorphisms are only valid as $h$ approaches $0$. For ``sufficiently
high'' values of $h$, in fact, addition (condition 4 of homomorphism)
may not be preserved.  Consequently, $q_h$ and $d_h$ effectively map
to and from a family of semirings, where the specific semiring to
which they map is contingent upon the chosen value of $h$. The closer
$h$ is to $0$, the closer $q_h$ maps to the $\sR_+$ semiring, and
viceversa.  Despite this observation, there is still tight connection
between the operations performed in the $h$-semiring and its
``tropical'' counterpart. In fact, consider the following generalised
addition and multiplication:
\begin{align*}
  a \oplus_h b &=  d_h(q_h(a) + q_h(b)) \quad\text{and}\\
  a \odot_h b &= d_h(q_h(a) \cdot q_h(b)).
\end{align*}
From the above formulations, it follows that $\forall a, b \in \sT$:
\begin{align}
  \begin{split}
    a \odot_h b &= d_h(q_h(a) \cdot q_h(b)) \\
                &= -h \ln (e^{-a/h}e^{-b/h}) \\
                &= -h \ln e^{-(a+b)/h} \\
                &= a + b = a \odot_\sT b,
  \end{split}
\end{align}
and
\begin{align}
  \label{eq:h-plus}
  \begin{split}
    a \oplus_h b &= \lim_{h \to 0^+}  d_h(q_h(a) + q_h(b))\\
                 &= \lim_{h \to 0^+}-h \ln (e^{-a/h} + e^{-b/h})\\
                 &=\min(a, b) = a \oplus b.
  \end{split}
\end{align}
Analogously, matrix operations also approximate the generalized dot
product in the tropical semi-ring:
\begin{equation*}
  \lim_{h \to 0^+} d_h[q_h[\mW]^k] = \mW^{\odot_\sT k}
\end{equation*}
where $\mW$ is a matrix and $\mW^k$ symbolyses $\mW$ to the power of
$k$ in the $h$-semiring, $\mW^{\odot_\sT k}$ is its tropical
counterpart and $f[\mW]$ denotes element-wise application of $f$ on
the matrix $\mW$.  ``Tropical'' matrix multiplications are the key
points of our discussion. In fact, through matrix multiplications in
the tropical space we can actually represent entire
\textit{algorithms}. Examples of this property are provided in
\autoref{sec:dp} (i.e., BFS and Bellman-Ford).  Furthermore, the above
formulations explicitate a very important observation: operations
performed in the $h$-semiring ``approximate'' those executed in the
tropical semiring. Thus, we have forged a substantive connection
between the $h$-semiring and the tropical one. Such connection
degenerates into a bijection between the $\sR_+$ semiring and $\sT$
whenever $h$ is sufficiently small. This implies that operations
executed within the $\sR_+$ (or $\sR$) semiring must align with their
counterparts in the ``tropical'' semiring. Furthermore, considering
that tropical operations can represent ``algorithmic steps'',
operations within $\sR_+$ equivalently correspond to these algorithms.

\autoref{fig:tropical-transform} positions all of this within the
context of the EPD architecture and elucidates the various
transformations between the two semirings. In particular, the figure
shows that we have successfully implemented an encoder
$f: \sT \rightarrow \sR_+$ (i.e., $q_h$) which maps from the $\sT$
semiring -- where we supposedly receive DP inputs -- to the $\sR$
semiring, where our processor network $p$ naturally operates, and a
decoder $g = d_h$ that maps back to $\sT$ for computing DP outputs.
Our processor, then, is taught to approximate tropical steps in the
$h$-semiring. Potentially, this enables our processor to approximate
steps of algorithms as well. As a consequence, by showing that a
processor network $p$ can indeed approximate $\sT$ operations up to
arbitrary precision, we will successfully show that $p$ can also
approximate related algorithmic steps arbitrarily well. Note that in
the diagram $p$ should leverage conventional addition and
multiplication. Consequently, supposing $p$ is a graph network, we
should choose a proper architecture reflecting this property. We
explore a suitable choice for our processor in the next section.

\begin{figure}
  \centering
  \begin{tikzpicture}[->, >=stealth]
    \node (A) {$[x,y]_{\sR}$};
    \node[right=2.1cm of A] (B) {$[z]_{\sR}$};
    \node[below=1.5cm of A] (C) {$[a,b]_{\sT}$};
    \node[below=1.5cm of B] (D) {$[c]_{\sT}$};

    \draw (C) to node[midway,left] {\footnotesize$f = q_h$} (A);
    \draw (A) to node[midway,above] {\footnotesize$p=+$} (B);
    \draw (B) to node[midway,right] {\footnotesize$g=d_h$} (D);
    \draw (C) to node[midway,below] {\footnotesize$\min$} (D);

    \node[right=5cm of A] (A1) {$[x,y]_{\sR}$};;
    \node[right=2.1cm of A1] (B1) {$[z]_{\sR}$};
    \node[right=5cm of C] (C1)  {$[a,b]_{\sT}$};
    \node[below=1.5cm of B1] (D1) {$[c]_{\sT}$};

    \draw (C1) to node[midway,left] {\footnotesize$f = q_h$} (A1);
    \draw (A1) to node[midway,above] {\footnotesize$p=\cdot$} (B1);
    \draw (B1) to node[midway,right] {\footnotesize$g=d_h$} (D1);
    \draw (C1) to node[midway,below] {\footnotesize$+$} (D1);

\end{tikzpicture}
  \caption{Category-theory inspired diagrams showcasing the
    equivalences $\min_\sT \equiv +_\sR$ (left-hand diagram) and
    $+_\sT \equiv \cdot_\sR$ (right-hand diagram). Specifically, in
    the left-hand diagram, starting from inputs $[a, b]$ in the
    $\sT$-semiring, we can derive $c=\min(a,b)$ by following two
    paths. First, apply the $\min$-morphism to obtain $c$ directly.
    Second, map $[a,b]$ to $\sR$ through Maslov quantisation $q_h$ and
    perform an update step through the +-morphism to obtain
    $z$. Ultimately, $c$ can be decoded by application of $d_h$ to
    $z$. The same also holds for the right-hand diagram (i.e.,
    equivalence between tropical addition and real multiplication).
    The two diagrams together forms the bijection
    $\tuple{\sT, \min, +, +\infty, 0} \xleftrightarrow{} \tuple{\sR,
      +, \cdot, 0, 1}$. Finally, $f, p$ and $g$ refer to the EPD
    architecture and elucidate how the different EPD component are
    instantiated to enable mapping between the two
    semirings.}\label{fig:tropical-transform}
\end{figure}
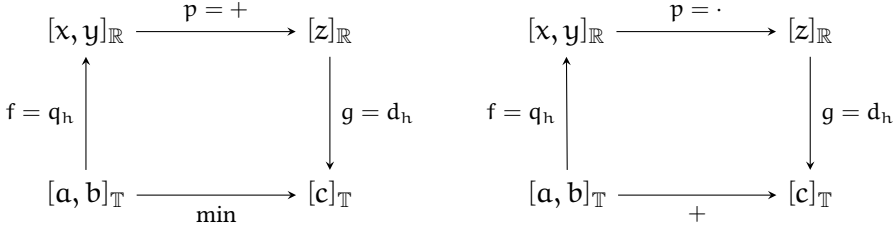

\subsection{Processor alignment and DP approximation}
As thoroughly discussed in \autoref{sec:algorithmic-alignment}, in
the context of NAR, we typically choose graph network processors
with $\max$ aggregators. Here, however, we should carefully consider
that we are executing message-passing steps in an $h$-semiring, where
inputs have appropriately been parsed through Maslov quantisation
and dequantisation maps. As evident from equation \eqref{eq:h-plus},
a $\min$ (or $\max$) operation corresponds to a sum operation in the
$h$-semiring. Hence, we should consequently choose a GN architecture
with a sum aggregator, rather than the typically recommended choice
of $\max$.
For such reasons, the Graph Isomorphism Network (GIN) is the natural
choice:
\begin{equation}\label{eq:gin-mp}
  \vh^{(\ell+1)}_{v} = \mlp\left(\left(1+\epsilon\right)
    \vh_u^{\ell}+\sum_{u \in \gN(v)} \vh_u^{\ell}\right),
\end{equation}
or, equivalently in matrix form:
\begin{equation}\label{eq:gin-mp-matrix}
  \gin(\mA, \mH^{(\ell)}) = \mH^{(\ell + 1)} = \mlp\left((1+\epsilon)\mH^{(\ell)}
    + \mA\mH^{(\ell)}\right).
\end{equation}
Here, $\epsilon$ is a parameter that regulates the importance of self
information in the message passing scheme. Note that we subtly assume
that the original $\mA$ contains no self loops (self loops will be
added directly by the GIN layer). Now, consider a simple EPD
architecture with a GIN processor:
\begin{equation*}
  \vy^{(k)} = (g \circ \gin_\epsilon^k \circ f)(\cdot).
\end{equation*}
The following propositions will show how this simple architecture
can model and learn exemplar DP algorithms up to arbitrary precision.

\paragraph{Reachability approximation (\autoref{prob:reachability}).}
Consider an unweighted graph $\tuple{\gV, \gE}$ and consider its
adjacency matrix $\mA$ and a vector $\chi_v \in \{0, 1\}^{|\gV|}$ such
that $\chi_v = 1$ for a given $v \in \gV$ and $\chi_{u \neq v} = 0$.
Additionally, let the encoder $f$ be the identity function $f(x) = x$
and let $g = \mlp \circ d_1$, with $d_1$ defined as in
\eqref{eq:quantization}.  Consider $\gin_0$ an instation of GIN with
$\epsilon = 0$. Thus, $\gin_0$ can effectively learn to perform BFS
starting from node $v$ (that is, identifying the nodes that are
reachable in the $k$-hop neighbourhood of $v$, $\gN^k(v)$).
\begin{proposition}\label{th:reachability}
  Consider $\mA$ and $\chi_v$, a positive constant $c > 0$, and
  $\vd = \mlp(d_1[\gin_0^k(\mA, \chi_v)])$, with $\gin_0$ and $\mlp$
  having any number of layers with $\relu$ activations. Thus, there
  exists a configuration of their parameters such that
  $|\vd_u - \delta_u(\gN^k(v))| \leq c$ for any $u \in \gV$, where
  $\delta_u$ is the Dirac delta.
\end{proposition}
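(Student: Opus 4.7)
The plan is to construct explicit parameters for both the inner $\mlp$ of $\gin_0$ and the outer $\mlp$, showing the composition computes (up to accuracy $c$) the indicator of $\gN^k(v)$. The first step would be to force $\gin_0$ to act as repeated multiplication by $\mI + \mA$. With $\epsilon=0$, equation \eqref{eq:gin-mp-matrix} collapses to $\mH^{(\ell+1)} = \mlp((\mI+\mA)\mH^{(\ell)})$. Choosing the inner $\mlp$ to have identity weight matrices and zero biases (a single layer suffices) and observing that $\chi_v \geq 0$ together with $\mI+\mA \geq 0$ entrywise makes every intermediate iterate non-negative, so the $\relu$ activations act as the identity. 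By induction on the depth, this yields $\gin_0^k(\mA,\chi_v) = (\mI+\mA)^k \chi_v$.

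Next I would argue, entry-wise, that $((\mI+\mA)^k \chi_v)_u = \sum_{j=0}^k \binom{k}{j}(\mA^j)_{uv}$, which is a positive integer whenever there exists a walk of length at most $k$ from $u$ to $v$ and exactly $0$ otherwise. Thus this vector takes values in a finite set $\{0, 1, 2, \dots, N\}$, where $N$ is a (graph-dependent) upper bound on the number of such walks, and its support is precisely $\gN^k(v)$. Applying $d_1 = -\ln(\cdot)$ element-wise then sends reachable coordinates to values lying in the bounded interval $[-\ln N, 0]$ and unreachable coordinates to $+\infty$.

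With this separation in hand, the outer $\mlp$ need only approximate a step-down gate that is close to $1$ on $(-\infty, 0]$ and close to $0$ for very large inputs. A two-layer construction such as $\mlp(x) = \relu(1 - \relu(x/T))$ works: it is identically $1$ for $x \leq 0$, linearly interpolates on $[0, T]$, and is identically $0$ for $x \geq T$. Hence for reachable $u$ the output is exactly $1$, and for unreachable $u$ the output tends to $0$ as $T \to \infty$. Given $c > 0$, choosing $T$ large enough guarantees $|\vd_u - \delta_u(\gN^k(v))| \leq c$ uniformly in $u$.

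The main obstacle is the appearance of $+\infty$ in the intermediate representation for unreachable nodes, which a finite ReLU network cannot literally ingest. I would handle this by formally replacing $d_1$ with its truncation to some large value $M \gg \ln N$ (equivalent to clamping $(\mI+\mA)^k \chi_v$ away from $0$ by $e^{-M}$), then choosing $T$ in the outer $\mlp$ appropriately so that the gate output on the clamped unreachable value is within $c$ of $0$. Alternatively, the bound can be stated as a limit in $M$ (or equivalently in $h \to 0^+$ of $d_h$), which is the route most consistent with the semiring isomorphism developed in the preceding section.
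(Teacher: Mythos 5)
Your proposal is correct and follows essentially the same route as the paper's proof: both fix identity weights and zero biases so that $\relu$ acts trivially on the non-negative iterates and $\gin_0^k(\mA,\chi_v)=(\mI+\mA)^k\chi_v$, then use $d_1$ to separate the support of this vector (values in $[-\ln N,0]$) from its complement ($+\infty$), and finish with a ReLU readout that thresholds the result. The only differences are cosmetic: the paper's readout is the single affine layer $\relu(1-h\cdot d_1[\cdot])$ with $h\to 0^+$ (consistent with the Maslov-map machinery), while you use the two-layer gate $\relu(1-\relu(x/T))$ together with an explicit truncation of the infinities at level $M$, which is arguably a cleaner treatment than the paper's ``$[\vd_\circ]_u\gg 1$''. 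One small slip to fix: for the clamped unreachable value $M$ the gate vanishes only when $T\leq M$, so the middle paragraph's ``choosing $T$ large enough'' has the inequality backwards; your closing paragraph's ``choosing $T$ appropriately'' is the correct statement.
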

\begin{proof}
  Fix $w_\theta = 1$ and $b_\theta = 0$ for every layer of
  $\gin$/$\mlp$, if not stated otherwise. Since $\relu$ acts as an
  identity function for non-negative elements, we have that
  $\gin_0^k(\mA, \chi_v) = (\mA + \mI)^k \chi_v$. Let
  $\mW \in \sT^{|\gV|\times |\gV|}$ a matrix defined as in
  \eqref{eq:tropical-distance-matrix}, with $\vd_{uv} = 0$ for all
  $(u, v) \in \gE$.  For any $h > 0$ we have that
  \begin{align*}
    Q_h[\mW] &= \mA + \mI\\
    Q_h[\chi_v] &= \chi_v.
  \end{align*}
  If we set $w_{\theta_1} = h$ in the first layer of $\mlp$, we have
  that
  \begin{align*}
    h\cdot d_1[q_h[\mW]^k \cdot q_h[\chi_v]] &= d_h[q_h[\mW]^k \cdot q_h[\chi_v]]\\ &\xrightarrow[h \to 0^+]{} \mW^{\odot k} \odot \chi_v = \vd_\circ,
  \end{align*}
  where $[\vd_\circ]_u = 0$ if $u \in \gN^k(v)$ and
  $[\vd_\circ]_u = \infty$ otherwise.  By setting, instead,
  $w_{\theta_1} = -h$ and $b_{\theta_1} = 1$ in the first layer of
  $\mlp$, we obtain
  \begin{align*}
    \lim_{h \to 0} \vd &= \lim_{h\to 0} \sigma[1 - h\cdot D_1[(\mA+\mI)^k \chi_v]]\\
    &= \max[0, 1 - \vd_\circ]\\
    &= \delta(\gN^k_+(v)).
  \end{align*}
  For a $h > 0$ small enough we have that
  \begin{itemize}
  \item for $u \in \gN^k_+(v)$, $\lvert \vd_u - \delta_u(\gN^k_+(v))\rvert < c$ by definition of limit, and
  \item for $u \not\in \gN^k_+(v)$, $[\vd_\circ]_u \gg 1$ and hence $\vd_u = \delta_u(\gN^k_+(v)) =0$,
  \end{itemize}
  thus reaching the conclusion.
\end{proof}

\paragraph{Unweighted shortest path (\autoref{prob:shortest-path}).}
Consider the same set of inputs as for the reachability problem in the
previous paragraph. The following proposition shows that
$\gin_\epsilon$ can compute the shortest paths in an unweighted graph
(or, equivalently, a equally weighted graph). Note that here we are
actually approximating the Bellman-Ford algorithm.
\begin{proposition}
  Consider $\mA$ and $\chi_v$, a positive constant $c > 0$, and
  $\vd = \mlp(d_1[\gin_\epsilon^k(\mA, \chi_v)])$, with
  $\gin_\epsilon$ and $\mlp$ having any number of layers with $\relu$
  activation function, there exist a configuration of their parameters
  such that $|\vd_u - d(u, v)| \leq c$ for any
  $u \in \gN^k(v)$.
\end{proposition}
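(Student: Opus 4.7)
The plan is to mirror the structure of the reachability proof (\autoref{th:reachability}) but to exploit the $\epsilon$ parameter of the \gin{} layer to encode the tropical weight matrix of the unweighted-distance problem. Concretely, for the unweighted shortest-path task the relevant tropical matrix is $\mW \in \sT^{|\gV|\times|\gV|}$ with $\mW_{uu}=0$, $\mW_{uv}=1$ for $(u,v)\in\gE$, and $\mW_{uv}=\infty$ otherwise, so that $\mW^{\odot k}\odot\chi_v$ equals the vector $\vd_\circ$ with $[\vd_\circ]_u=d(u,v)$ whenever $u\in\gN^k(v)$ (and $\infty$ elsewhere). Quantising gives $q_h[\mW]=\mI+e^{-1/h}\mA$, and I will match this against the \gin{} update by picking $\epsilon$ to depend on $h$.

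First, I would fix all \gin{}/\mlp{} linear weights to be identity-like and biases to zero, exactly as in \autoref{th:reachability}, so that the $\relu$ activations act as identities on the non-negative iterates. With this setting, $\gin_\epsilon^k(\mA,\chi_v) = \bigl((1+\epsilon)\mI+\mA\bigr)^k\chi_v$. I then choose $\epsilon := e^{1/h}-1$, so that
\begin{equation*}
  \gin_\epsilon^k(\mA,\chi_v) = \bigl(e^{1/h}\mI+\mA\bigr)^k\chi_v
  = e^{k/h}\bigl(\mI+e^{-1/h}\mA\bigr)^k\chi_v
  = e^{k/h}\,q_h[\mW]^k\,\chi_v.
\end{equation*}
This is the bridge between the Euclidean computation of the \gin{} and the tropical matrix power that realises Bellman-Ford in the min-plus semiring.

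Next, I would transport this identity through the dequantisation. Since $d_h[x] = -h\ln x = h\cdot d_1[x]$ on positive arguments, and $d_h[e^{-k/h}\,y] = k + d_h[y]$, the decoded iterate satisfies
\begin{equation*}
  k + h\cdot d_1\!\bigl[\gin_\epsilon^k(\mA,\chi_v)\bigr]
  = d_h\!\bigl[q_h[\mW]^k\chi_v\bigr].
\end{equation*}
Invoking the limit already established in \autoref{sec:tropical-enc-dec-alignment}, the right-hand side converges entrywise to $\mW^{\odot k}\odot\chi_v = \vd_\circ$ as $h\to 0^+$. I would therefore configure the \mlp{} that post-processes $d_1[\gin_\epsilon^k(\mA,\chi_v)]$ to implement the affine map $x\mapsto h\,x + k$ (weight $h$ on the first layer, bias $k$, and pass-through $\relu$ layers thereafter, which is safe because the argument is non-negative). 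The resulting vector is $\vd = k\mathbf{1} + h\cdot d_1[\gin_\epsilon^k(\mA,\chi_v)]$, and for every $u\in\gN^k(v)$ we have $\vd_u\to d(u,v)$ as $h\to 0^+$; thus for any prescribed $c>0$ a sufficiently small $h$ (and the corresponding $\epsilon$) yields $|\vd_u - d(u,v)|\leq c$ on $\gN^k(v)$.

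The main obstacles I anticipate are two. First, the coupling $\epsilon=e^{1/h}-1$ means that the parameter configuration depends on the target precision $c$; I need to be explicit that the ``there exist parameters'' clause is quantified after $c$ is fixed. Second, I must verify that $\relu$ truly acts as identity throughout: this is immediate because $(1+\epsilon)>0$, $\mA\geq 0$ and $\chi_v\geq 0$, so all \gin{} iterates remain entrywise non-negative, while the input to the final \mlp{}, $d_1$ applied to a positive vector, is real-valued and can be shifted/scaled without sign issues (the $\relu$ stages of the \mlp{} can be made identity on the relevant range via a large bias that is then subtracted, or simply by using linear layers where the proposition permits). Everything else is a direct calculation plus the limit already proved in the preceding subsection.
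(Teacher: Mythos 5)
Your proof is correct and follows essentially the same route as the paper's: the same choice $\epsilon = e^{1/h}-1$, the same tropical matrix $\mW$ with unit edge weights, and the same limit $d_h[q_h[\mW]^k\chi_v]\to\mW^{\odot k}\odot\chi_v$. The only (cosmetic) difference is bookkeeping: the paper sets the \gin{}'s first-layer weight to $e^{-1/h}$ so each layer realises $q_h[\mW]$ exactly, whereas you let the factor accumulate to $e^{k/h}$ and cancel it with the additive bias $k$ in the decoder after the logarithm.
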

\begin{proof}
  Fix $w_\theta = 1$ and $b_\theta = 0$ for every layer of
  $\gin$/$\mlp$, if not stated otherwise.  Let
  $\mW \in \sT^{n\times n}$ a matrix defined as in
  \eqref{eq:tropical-distance-matrix}, with $\vd_{uv} = 1$ for all
  $(u, v) \in \gE$.
For any $h > 0$ we have that
\begin{align*}
  Q_h[\mW] &= e^{-1/h} \cdot \mA + \mI,\\
  Q_h[\chi_v] &= \chi_v.
\end{align*}
By fixing $\epsilon = e^{1/h} - 1$, and $w_{\theta_1} = e^{-1/h}$ in
the first layer of $\gin$, we have that
\begin{align*}
  \gin_\epsilon^k(\mA, \chi_v) = e^{-1/h} \cdot ((1 + \epsilon) \cdot \mI + \mA)^k \chi_v = (\mI + e^{-1/h}\cdot \mA)^k \chi_v,
\end{align*}
and, by setting also $w_{\theta_1} = h$ in the first layer of $\mlp$,
we have that
\begin{align*}
  \vd &= h\cdot D_1[Q_h[\mW]^k \cdot Q_h[\chi_v]] \\
      &= D_h[Q_h[\mW]^k \cdot Q_h[\chi_v]] \\
      &\xrightarrow[h \to 0^+]{} \mW^{\odot k} \odot \chi_v = \vd_\circ,
\end{align*}
where
\begin{align}\label{eq:dist-vector}
    [\vd_\circ]_u = \begin{cases}
    d(u,v) & \text{if } u \in \gN^k(v) \\
    \infty & \text{otherwise}.
    \end{cases}
\end{align}
By definition of limit (for a $h > 0$ small enough) we reach the
conclusion.
\end{proof}

\paragraph{Weighted shortest path (\autoref{prob:shortest-path}).}
Contrarily to the previous examples, consider a weighted graph
$\tuple{\gV, \gE, \gW}$ with its tropical distance matrix
$\mW \in \sT^{n\times n}$, defined as in
\eqref{eq:tropical-distance-matrix}.  Additionally, w.l.o.g., consider
its diagonal entries set to $+\infty$.  Then
$g \circ \gin_\epsilon \circ f$ can approximate a weighted shortest
path assuming that $f = q_1 \circ \mlp$, with $q_1$ defined as in
\eqref{eq:quantization}, and $g$ is the same as the previous examples.
\begin{proposition}
  Consider $\mW$ and $\chi_v$, a positive constant $c > 0$,
  and
  $\vd =
  \mlp_{\operatorname{d}}[d_1[\gin_\epsilon^k(q_1[\mlp_{\operatorname{E}}[\mW]],
  \chi_v)]]$, with $\gin_\epsilon$,
  $\mlp_{\operatorname{E}}$, and $\mlp_{\operatorname{d}}$ having any
  number of layers with $\relu$ activation function, there exist a
  configuration of their parameters such that
  $|\vd_u - d(u, v)| \leq c$ for any $u \in \gN^k(v)$.
\end{proposition}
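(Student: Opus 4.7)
The plan is to follow the template of the two preceding propositions: use $\mlp_E$ and $\mlp_d$ to absorb the $h$-dependence of the Maslov maps, and configure $\gin_\epsilon$ so that $k$ iterations realise the $k$-th power of the quantised weight matrix in the $h$-semiring. As $h \to 0^+$, this power degenerates to the tropical matrix power $\mW^{\odot k}$, from which the shortest-path distances are read off via $d_h$.

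Concretely, I would fix all biases to zero and (unless otherwise stated) all weights to $1$, and take $\epsilon = 0$. Set the first-layer weight of $\mlp_E$ to $1/h$, so that element-wise $q_1[\mlp_E[w]] = e^{-w/h} = q_h(w)$ for every finite $w \geq 0$, while the $+\infty$ entries of $\mW$ (in particular the diagonal) are sent, through $\relu$ and $q_1$, to exact zeros. Denoting by $\mW_0$ the matrix obtained from $\mW$ by replacing its diagonal with $0$, the encoder output $\mW' = q_1[\mlp_E[\mW]]$ satisfies $\mI + \mW' = Q_h[\mW_0]$. Next, set the first-layer weight of $\mlp_d$ to $h$, so that $\mlp_d \circ d_1 = d_h$. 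Since $\relu$ is the identity on $\sR_+$, the composite collapses to
\[
\vd \;=\; d_h\bigl[(\mI + \mW')^k \chi_v\bigr] \;=\; d_h\bigl[Q_h[\mW_0]^k\, \chi_v\bigr] \;\xrightarrow[h \to 0^+]{}\; \mW_0^{\odot k} \odot \chi_v,
\]
which is precisely the distance vector $\vd_\circ$ of \eqref{eq:dist-vector}, satisfying $[\vd_\circ]_u = d(u,v)$ for $u \in \gN^k(v)$. A standard $\varepsilon$-argument using the definition of limit then lets us pick $h > 0$ small enough so that $|\vd_u - d(u,v)| \leq c$ uniformly over the finite set $\gN^k(v)$.

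The main technical obstacle is the careful handling of $+\infty$ entries through $\relu$ and $q_1$: these must propagate as \emph{exact} zeros in $\mW'$, lest they contribute spurious short-path terms in $(\mI + \mW')^k$. Once this ``infinity hygiene'' is in place, the argument is a direct generalisation of the unweighted case: the single factor $e^{-1/h}$ is replaced by the entry-wise factors $e^{-w_{uv}/h}$, and the $\epsilon = e^{1/h} - 1$ trick of the previous proposition is no longer needed, because the encoder already places a zero on the diagonal and $\gin_\epsilon$ with $\epsilon = 0$ supplies the requisite identity term $(1+\epsilon)\mI = \mI$.
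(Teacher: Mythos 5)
Your proposal is correct and follows essentially the same route as the paper's own proof: set the first-layer weight of $\mlp_{\operatorname{E}}$ to $1/h$ so that $q_1 \circ \mlp_{\operatorname{E}} = q_h$, use $\gin_0$ so that $k$ iterations compute $(\mI + q_h[\mW])^k\chi_v$, set the first-layer weight of $\mlp_{\operatorname{d}}$ to $h$ so that $\mlp_{\operatorname{d}} \circ d_1 = d_h$, and conclude by the tropical limit $h \to 0^+$ together with a finite-set $\varepsilon$-argument. Your explicit introduction of $\mW_0$ and the remark on ``infinity hygiene'' only make precise a step the paper leaves implicit (it writes $\mW^{\odot k}\odot\chi_v$ even though the proposition sets the diagonal of $\mW$ to $+\infty$), so this is a presentational refinement rather than a different argument.
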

\begin{proof}
  Fix $w_\theta = 1$ and $b_\theta = 0$ for every layer of
  $\gin$/$\mlp$, apart in the first layer of
  $\mlp_{\operatorname{E}}$, where we set $w_{\theta_1} = 1/h$, and in
  the first layer of $\mlp_{\operatorname{D}}$, where we set instead
  $w_{\theta_1} = h$.  As in the proof of \eqref{th:reachability}, we
  have that $\gin_0^k(\mA, \chi_v) = (\mA + \mI)^k \chi_v$.  Noticing
  that $q_1(x/h) = q_h(x)$ and $h\cdot d_1(x) = d_h(x)$, we obtain
  that
\begin{align*}
  \vd &= h\cdot d_1[q_1[\tfrac{1}{h}\mW]^k \cdot \chi_v]\\
  &=
  d_h[q_h[\mW]^k \cdot Q_h[\chi_v]]\\
  &\xrightarrow[h \to 0^+]{}
  \mW^{\odot k} \odot \chi_v = \vd_\circ,
\end{align*}
where $\vd_\circ$ is defined as in \eqref{eq:dist-vector}.
By definition of limit (for a $h > 0$ small enough) we reach the
conclusion.
\end{proof}

To conclude, we have now established a direct link between tropical
algebra and graph networks. Particularly, we showed that an EPD
architecture with $g=(\mlp \circ d_1)$, $p=\gin_\epsilon$ and
$f=q_1 \circ \mlp$ can approximate reachability and shortest path
algorithms on graphs, by learning the $h$ parameter. Furthermore,
oftentimes DP can be understood as generalised path-finding algorithms
\citep{dudzik2022graph} similar to shortest path. Hence, propositions
herein show potential to encompass a wider range of DP problems that
can be framed as path-finding (e.g., task scheduling
\citep{shirazi1990analysis}).

\subsection{Effects on network design}
\label{sec:tropical-effects}
\begin{figure}
  \centering
  \subcaptionbox{1-unit ReLU}{%
    \begin{tikzpicture}[scale=.66]
\definecolor{dark green}{HTML}{00611B}
\begin{axis}[
    xlabel={$x$},
    ylabel={$y$},
    grid=major,
    legend entries={$e^x$, 1 unit},
    legend pos=north west,
    grid style = {black!30, densely dotted},
    legend style={draw=black!30, fill=white},
    axis line style={black!30},
]
\addplot[black, dotted, ultra thick] table[col sep=comma, x index=0, y index=1]{gfx/contrib/tropical/data/gt.csv};
\addplot[blue, ultra thick] table[col sep=comma, x index=0, y index=1]{gfx/contrib/tropical/data/pred1.csv};
\end{axis}
\end{tikzpicture}
  }
  \subcaptionbox{4-unit ReLU}{%
    \begin{tikzpicture}[scale=0.66]
\definecolor{dark green}{HTML}{00611B}
\begin{axis}[
    xlabel={$x$},
    ylabel={$y$},
    grid=major,
    legend entries={$e^x$, 4 units},
    legend pos=north west,
    grid style = {black!30, densely dotted},
    legend style={draw=black!30, fill=white},
    axis line style={black!30},
]
\addplot[black, dotted, ultra thick] table[col sep=comma, x index=0, y index=1]{gfx/contrib/tropical/data/gt.csv};
\addplot[blue, ultra thick] table[col sep=comma, x index=0, y index=1]{gfx/contrib/tropical/data/pred4.csv};
\end{axis}
\end{tikzpicture}
  }
  \subcaptionbox{8-unit ReLU}{%
    \begin{tikzpicture}[scale=0.66]
\definecolor{dark green}{HTML}{00611B}
\begin{axis}[
    xlabel={$x$},
    ylabel={$y$},
    grid=major,
    legend entries={$e^x$, 8 units},
    legend pos=north west,
    grid style = {black!30, densely dotted},
    legend style={draw=black!30, fill=white},
    axis line style={black!30},
]
\addplot[black, dotted, ultra thick] table[col sep=comma, x index=0, y index=1]{gfx/contrib/tropical/data/gt.csv};
\addplot[blue, ultra thick] table[col sep=comma, x index=0, y index=1]{gfx/contrib/tropical/data/pred8.csv};
\end{axis}
\end{tikzpicture}
  }
  \subcaptionbox{32-unit ReLU}{%
    \begin{tikzpicture}[scale=0.66]
\definecolor{dark green}{HTML}{00611B}
\begin{axis}[
    xlabel={$x$},
    ylabel={$y$},
    grid=major,
    legend entries={$e^x$, 32 units},
    legend pos=north west,
    grid style = {black!30, densely dotted},
    legend style={draw=black!30, fill=white},
    axis line style={black!30},
]
\addplot[black, dotted, ultra thick] table[col sep=comma, x index=0, y index=1]{gfx/contrib/tropical/data/gt.csv};
\addplot[blue, solid, ultra thick] table[col sep=comma, x index=0, y index=1]{gfx/contrib/tropical/data/pred32.csv};
\end{axis}
\end{tikzpicture}
  }
  \caption{Illustration of approximating an exponential function with
    piece-wise linear functions (ReLU). Plots have been obtained by
    approximating $e^{x}$ with ReLU-MLPs with 1, 4, 8 and 32 neurons.
  }
  \label{fig:relu-vs-exp}
\end{figure}
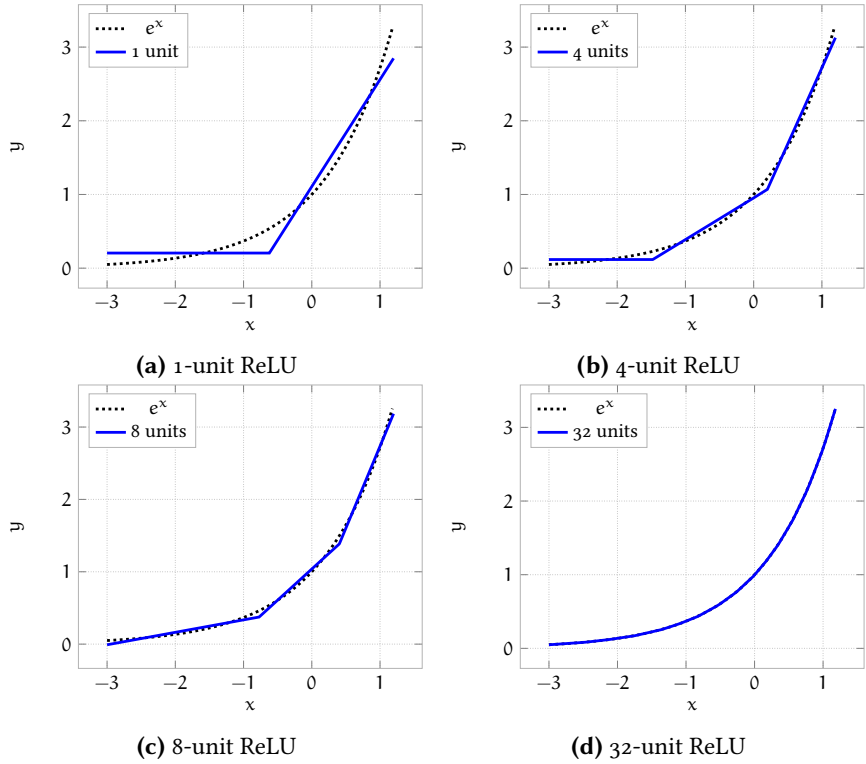
The propositions presented in the previous section also serve as
theoretical ground to support prior NAR findings. An example of this
lies in the commonly suggested choice of using $\max$ as aggregator in
the GN processor. In the literature, this suggestion is often based on
empirical evidence \citep{velickovic2019neural, velickovic2022clrs,
  ibarz2022generalist} and the answer to \textit{``why choose a max
  aggregator?''} is addressed in terms of algorithmic alignment
\citep{xu2020can, xu2021neural, dudzik2022graph}.  However, here we
seek to answer to this question in a more mathematical sense.  First,
we note that our architectural choices do reflect the use of a
component-wise maximum aggregator, even though we plug $\gin$ which
uses addition to aggregate node representations. The reason is that,
in our regime, $\gin$ operates on an $h$-semiring where its inputs and
outputs are adequatly processed by Maslov maps. As shown in
\autoref{eq:h-plus}, thus, in this setting the sum operation in the
$h$-semiring is equivalent to a $\min$ (or $\max$) operation in the DP
``tropical'' space. As a result, our sum-aggregated GN \textit{aligns}
with our target \emph{min}-aggregated DP algorithm.

However, in common NAR settings we usually deploy simple linear
encoders and decoders (\autoref{sec:epd}) and we do not directly
process data belonging to a tropical semiring. Therefore, assuming we
target the same \emph{min}-aggregated class of DP algorithms,
\emph{how would a different choice of the aggregator impact the
  learning problem?}. Consider the target function to be Bellman-Ford
as in \autoref{sec:algorithmic-alignment} and let us put our focus on
the aggregation step only. Furthermore, assume we choose a
\emph{sum}-aggregated processor similar to $\gin$. We report both
equations in the following:
\begin{align}
    &\text{($\Sigma$-MP)}& &\sumt \left(\left\{
                    \psi(\vh_v,
                    \vh_u,
                    \vh_{uv}) \mid u \in \gN(v) \right\}\right)
                    \label{eq:gin-alignment} \\
    &\text{(BF)}& &\min \left(\left\{
                    d_u + e_{uv}
                    \mid u \in \gN(v) \right\}\right).
                    \label{eq:bf-truncated-alignment}
\end{align}
Hence, in order for \eqref{eq:gin-alignment} to learn a $\min$
operation, there are two possibilities:
\begin{enumerate}
\item Encoders and decoders $f$ and $g$ learn exponential and
  logarithmic maps similar to \eqref{eq:quantization}.
\item The processor $p$ learns a \emph{softplus} function on
  neighbourhoods:
  \begin{equation*}
    \max(\{u \in \gN(v)\}) \approx h\log(\sum_{u \in \gN(v)} e^{\frac{\psi(\vh_v, \vh_u, \vh_{uv})}{h}}).
  \end{equation*}
  Here, $h$ represents the \emph{temperature} of the approximation and
  is the equivalent of the $h$ term in equation
  \eqref{eq:quantization}.  Note that this is in line with prior
  findings in linear algorithmic alignment (see,
  \autoref{sec:algorithmic-alignment}) as the processor here is
  required to learn a non-linear transformation.
\end{enumerate}
Consequently, we can effectively measure and quantify the drawbacks
that a misaligned aggregator would impose on the learning model. In
fact, this either forces usage of non-linear encoders and decoders, as
learning exponential and logarithmic functions through linear layers
is impractical, or it places an additional learning load on the
processor. Indeed, in the latter case the processor would be required
to both learn the target DP algorithm \textit{and} learn the
aforementioned softplus aggregation with ``low'' values of temperature
$h$ (recall from \autoref{sec:tropical-enc-dec-alignment} that
$h \to 0$).  Furthermore, NAR modules are usually deployed with ReLU
activations to comply with (linear) algorithmic alignment
requirements. Hence, the processor (or non-linear encoders and
decoders) are tasked to approximate exponential (and logarithmic)
mappings through piece-wise linear
functions. \autoref{fig:relu-vs-exp} clearly shows that the
``goodness'' of such approximation strongly depends on the number of
points of discontinuity (i.e., number of neurons in MLPs) we utilise.
This means that a sub-optimal aggregator choice directly impacts the
latent space complexity of NAR modules, effectively forcing to use a
higher number of neurons to get an equally good approximation
(compared to a GN with aligned aggregators). Finally, in light of the
previously discussed linear extrapolation tendency of ReLU networks
(\autoref{sec:algorithmic-alignment} and \citep{xu2021neural}), we
will likely \textit{never} align to the exponential mappings outside
the support of the training distribution, unless using exponential
activations in the neural network at the cost, however, of possible
numerical instabilities.

\section{Planning via learnt heuristics}
\label{sec:neural-planning}
In the previous section, we showed the GNs are a good learning model
when learning \emph{max}- and/or \emph{min}-aggregated DP functions.
This section capitalizes on this property to develop heuristic
functions designed to enhance the efficiency of \textit{planning}
algorithms \citep{numeroso2022learning}, with particular emphasis on
the A* search algorithm \citep{hart1968formal}. A* is a path-planning
algorithm that directly improves on the Dijkstra's algorithm
\citep{dijkstra1959note} and is particularly suited for solving
planning tasks.  Planning problems, on the other hand, are standard
instances of DP problems that involve executing \textit{actions}
within an \textit{environment} to achieve a specific
\textit{goal}. They can be formally defined as tuples
$\tuple{\gS, s_0, \gA, \goal}$, where:
\begin{itemize}
\item $\gS$ is a set of possible \textit{states} representing the
  possible configurations of the environment.
\item $s_0$ is the \textit{initial state}.
\item $\gA$ is the set of possible actions. An action $a \in \gA$ is a
  function $a: \gS \rightarrow \gS$ that describes a transition from
  one state to another.
\item $\goal: \gS \rightarrow \set{0, 1}$ is a function that
  determines whether a given state is a goal state, denoted as $s^*$.
\end{itemize}
Consequently, a solution to a planning problem is an action sequence
$a_0, a_1, \dots, a_n$ that transitions from the initial state to a
goal state. This is akin to the notion of graph paths (see
\autoref{def:path}). Indeed, we typically apply algorithms that find a
path in the \textit{state-action} graph\footnote{a graph where states
  are nodes $\gV$ and actions are connections (or edges $\gE$) among
  states} -- which represents the planning problem -- from $s_0$ to
$s_*$. A visual representation of a state-action graph is depicted in
\autoref{fig:sa-graph}.

\begin{figure}
  \centering
  \input{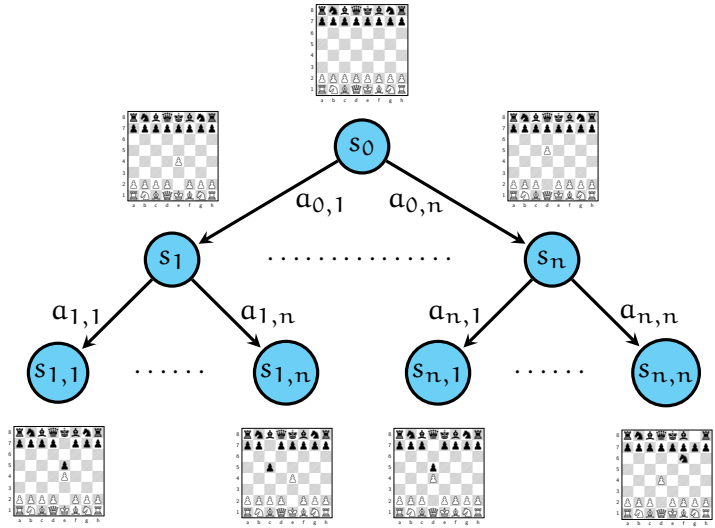}
  \caption{An example of a popular state-action graph representing a
    game of chess. Here, \textit{states} $\gS$ represent possible
    board positions, whereas actions $\gA$ are moves that change the
    configuration of the chessboard (represented as edges). Note that
    actions taken in state $s_0$ directly affect the range of
    potential actions available in the subsequente state $s_i$.}
  \label{fig:sa-graph}
\end{figure}

Problems of this nature, and their corresponding solution algorithms
find applications across numerous fields, including Robotics
\citep{mac2016heuristic}, Computer Science \citep{gebser2013domain,
  chen2020retro} and Chemistry \citep{yeh2012pathway}. When seeking
for the goal state $s^*$, the most efficient efficient search
algorithms usually exploit heuristic functions to navigate through the
state-action graph. Informally, these heuristics are functions
$h: \gS \rightarrow \sR$ that, given a state $s$, return a real number
indicating the quality of that state. In such context, $h(s)$ refers
to how ``fast'' we can reach the goal state through visiting $s$.
Such heuristics, thus, enable faster convergence towards the goal
state. In the context of A*, for instance, the heuristic is used for
selecting the next node to evaluate during path planning computation.
Precisely, A* can be seen as a generalisation of the Dijkstra's
algorithm where, the priority key in line~\ref{dij:line:queue} of
\autoref{alg:dijkstra} is changed from $d[s]$ to $d[s] + h[s]$.
Hence, A* fundamentally prioritises the nodes which are believed to be
closer to the goal state. As a result, the effectiveness of the whole
algorithm relies on a careful design of such heuristic, which is often
specialised for the problem at hand \citep{khalidi2020heuristic}. Such
hand-crafted heuristics necessitate domain knowledge and are often
subject to extensive trial-and-error phases during their development.
To circumvent this manual design process, an appealing alternative is
to automate the discovery of such heuristics, by learning them from
data. In the literature, there exist solutions based on supervised
learning \citep{us2013learning, wilt2015building, kim2020learning} and
imitation learning/reinforcement learning \citep{pandy2022learning}.
In contrast, this work formalises an unsupervised learning problem to
learn a proficient heuristic function for graph-based path-planning
problems. Such heuristic is then plugged into the A* algorithm. We
delve into technicalities of the approach in the following sections.

\subsection{Learning consistent heuristics}
In this work, we aim to learn an \textit{admissible} and
\textit{consistent} heuristic function for A* on weighted graphs
$G = \tuple{\gV, \gE, \gW}$.
\begin{definition}[Admissible Heuristic]
  A heuristic $h: \gV \rightarrow \sR$ is said to be
  \textit{admissible} for a graph $g = \tuple{\gV, \gE, \gW}$ and
  a target node $s^*$ if, for any node $v \in \gV$, it holds that
  \[ h(v) \leq h^*(v) \] where $h^*(v)$ is the true cost of the
  ``shortest path'' from node $v$ to the target node $s^*$.
\end{definition}
In other words, $h(v)$ never overestimates the real cost of reaching the
target node $s^*$ from $v$.
\begin{definition}[Consistent Heuristic]
  A heuristic $h: \gV \rightarrow \sR$ is said to be
  \textit{consistent} (or \textit{monotonic}) for a graph
  $g = \tuple{\gV, \gE, \gW}$ if, for any $(u, v) \in \gE$ with cost
  $w_{uv}$, it satisfies the following condition:
  \[ h(u) \leq w_{uv} + h(v) \]
\end{definition}
That is, for every node $u$ and ``successor'' $v$, the triangle
inequality is satisifed. Note that admissibility directly follows from
the definition of consistency, hence any consistent heuristic is also
admissibile.  In the context of A*, an admissible $h$ will ensure that
the algorithm will return the optimal path from $s_0$ to $s^*$ (or, in
planning terms, the best action sequence), where as consistency
guarantees that each intermediate node (or state) is evaluated no more
than once during optimal path computation (i.e., better efficiency).

\paragraph{Learning objective.}
To learn a consistent heuristic, we frame the learning problem for
our neural network as a constrained minimisation problem:
\begin{align} \label{eq:heur-objective}
  \min \, & h(s^*) - h(s_0) \\
  \text{subject to} \,\, & h(v) - h(u) \leq w_{uv} \,\, \forall (u, v)
                           \in E.
\end{align}
Note that this formulation guides the learnt heuristic values to low
values in the proximity of $s^*$ and progressively higher values to
nodes far from the goal. Additionally, the constraints serves to
ensure the heuristic consistency property.

Within this study, we also aim to make our learnt heuristic function
invariant to graph sizes (i.e., we want the heuristic to be applicable
regardless of the number of states in the planning problem). Given
that this is one of the key prospects of neural algorithmic reasoning,
we exploit it to attain this particular objective. In doing so, we
apply the MTL technique presented in \autoref{sec:transfer} to learn
jointly the Dijkstra's algorithm \citep{dijkstra1959note} and an
optimal assignment to \eqref{eq:heur-objective}. This way, we
constrain our learnt heuristic function to be computed in an
``algorithmic'' fashion, effectively biasing it towards size
invariance.  Parameters of the model are then optimised through a
combination of a cross-entropy error signal computed and supervised on
the ground truth Dijkstra's outputs at each step $t$ (i.e., step-wise
supervision) and the unsupervised objective described in
\eqref{eq:heur-objective}. Specifically, we optimise
\eqref{eq:heur-objective} by relaxing its constraints and introducing
penalties corresponding to the magnitude of each constraint
violation. Furthermore, as the main objective $\{h(t) - h(s)\}$ is
unbounded, we add an additional penalty term proportional to the
squared $\normltwo$-norm of the predicted heuristics (i.e.,
$||\vy^{(t)}||^2$).

The resulting objective is formulated as follows:
\begin{align*}
  \gL_i(\theta ; \lambda) =
  &\frac{1}{T} \sum_t^T \gL_{CE}(\vo^{(t)}, \vp^{(t)}) + \\
  &\frac{1}{T} \sum_t^T \Big[
    (y^{(t)}_{s_0} - y^{(t)}_{s^*}) +\\ &\qquad\sum_{(u,v) \in E} (y_v - y_u) \cdot \mathbf{1}_{[y_v - y_u > w_uv]} + \lambda ||\vy^{(t)}||_2^2 \Big],
\end{align*}
where $\gL_{CE}$ is the cross-entropy function,
$\mathbf{1}_{[y_v - y_u > w_{uv}]}$ is an indicator function checking
for satisfaction of the specific constraint, and $\lambda$ is a
weighing hyper-parameter.

\paragraph{Model configuration.}
The neural algorithmic reasoner architecture is a specification of the
encode-process-decode and is depicted in
\autoref{fig:planning-net}. The neural forward pass is the same as
described in \autoref{sec:epd}. In the following we provide a
specification of the different components.

The encoder layer comprises encoders for Dijkstra's inputs $f_D$ and
for positional features for the computation of heuristic values (i.e.,
identity of the starting node $s_0$ and the target node $s^*$), which
are processed through a second encoder $f_h$. Both $f_D$ and $f_h$ are
linear transformations. The processor $p$ is a MPNN
\citep{gilmer2017neural} with max aggregator. Decoder specifications
include a linear projector $g_h$ that maps to heuristic values
$\tilde{h}^{(t)}(v) = y_v = g_h(\cdot)$ and $g_D$ that predicts Dijkstra's
outputs $\vo^{(t)} \approx \vp^{(t)}$.
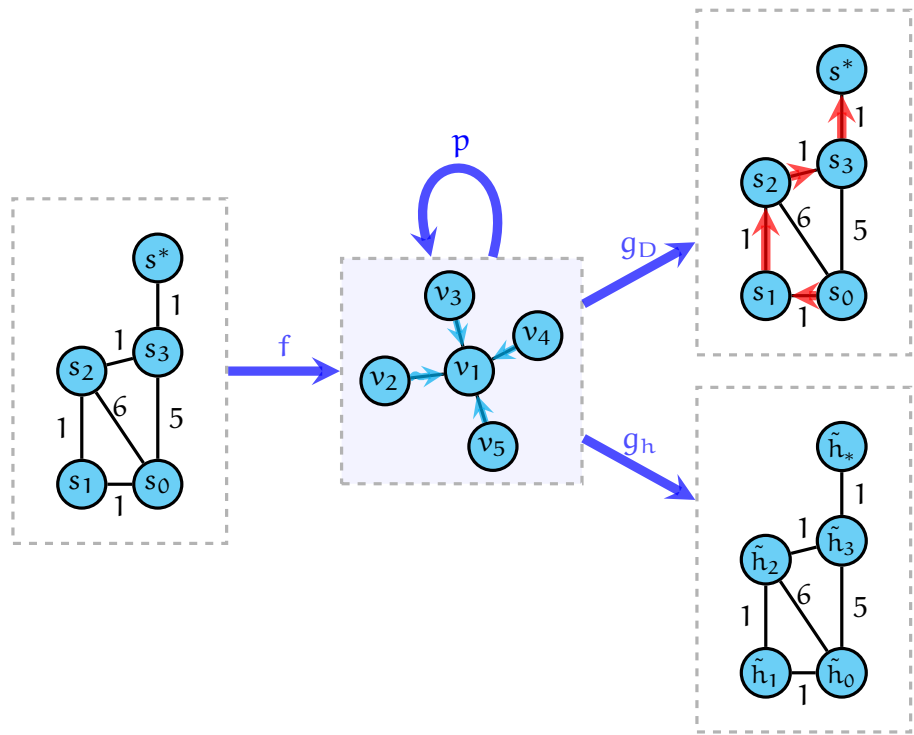
\begin{figure}
  \centering
  \begin{tikzpicture}[very thick,scale=1]

  \tikzset{line/.style={draw,line width=1.5pt}}
  \tikzset{arrow/.style={->,>=stealth}}
  \tikzset{snake/.style={arrow,line width=1.3pt,decorate,decoration={snake,amplitude=1,segment length=4,post length=5}}}

  \tikzstyle{box}=[dash pattern=on 5pt off 2pt,inner sep=5pt,rounded corners=3pt]
  \tikzstyle{vertex}=[circle,fill=cyan!50!,draw,minimum size=18pt, inner sep=1pt]

  \tikzstyle{message}=[arrow, decorate,
  decoration={snake,
    amplitude=1,
    segment length=6,
    post length=7
  },
  line width=2.5pt]

  \node[rectangle, draw, dashed, very thick, black!30, minimum height=130pt, minimum width=80pt] (Xbar) at (-4.5, 0) {};

  \node[rectangle, draw, dashed, very thick, black!30, minimum height=130pt, minimum width=80pt] (Ybar) at (4.5, 2.5) {};

  \node[rectangle, draw, dashed, very thick, black!30, minimum height=130pt, minimum width=80pt] (Hbar) at (4.5, -2.5) {};

  \node[rectangle, draw, dashed, very thick, black!30, minimum height=85pt, minimum width=90pt, fill=blue, fill opacity=0.05] (P) at (0, 0) {};

  \draw[line width=3.5pt, -stealth, blue, opacity=0.7] (Xbar) -- node[above] {\large $f$} (P);
  \draw[line width=3.5pt, -stealth, blue, opacity=0.7] (P) -- node[above] {\large $g_D$} (Ybar);
  \draw[line width=3.5pt, -stealth, blue, opacity=0.7] (P) -- node[above] {\large $g_h$} (Hbar);

  \draw[blue] (P) edge[loop above=1, line width=3.5pt, looseness=5, stealth-, opacity=0.7] node [above, opacity=1.0] {\large $p$} (P);

  \foreach \pos/\name/\lab in {{(0.1,0)/m1/$v_1$}, {(-1.01,-0.13)/m2/$v_2$}, {(-0.16,1)/m3/$v_3$}, {(1.0,0.47)/m4/$v_4$}, {(0.41,-1)/m5/$v_5$}}
  \node[vertex] (\name) at \pos {\lab};

  \foreach \pos/\name/\nb in {{(-4, -1.5)/a/$s_0$},{(-5, -1.5)/b/$s_1$},{(-5, 0)/c/$s_2$},{(-4, 0.25)/d/$s_3$},{(-4, 1.5)/e/$s^*$}}
  \node[vertex] (\name1) at \pos {\nb};

  \foreach \pos/\name/\nb in {{(5 , 1)/a/$s_0$},{(4, 1)/b/$s_1$},{(4, 2.5)/c/$s_2$},{(5, 2.75)/d/$s_3$},{(5, 4)/e/$s^*$}}
  \node[vertex] (\name2) at \pos {\nb};

  \foreach \pos/\name/\nb in {{(5 , -4)/a/$\tilde{h}_{0}$},{(4, -4)/b/$\tilde{h}_{1}$},{(4, -2.5)/c/$\tilde{h}_{2}$},{(5, -2.25)/d/$\tilde{h}_{3}$},{(5, -1)/e/$\tilde{h}_{*}$}}
  \node[vertex] (\name3) at \pos {\small\nb};


  \draw (m2) edge (m1);
  \draw (m3) edge (m1);
  \draw (m4) edge (m1);
  \draw (m5) edge (m1);
  \draw (m2) edge[message, cyan, opacity=0.7] (m1);
  \draw (m3) edge[message, cyan, opacity=0.7] (m1);
  \draw (m4) edge[message, cyan, opacity=0.7] (m1);
  \draw (m5) edge[message, cyan, opacity=0.7] (m1);

  \draw (a1) edge node[midway, below] {$1$} (b1);
  \draw (a1) edge node[midway, above] {$6$} (c1);
  \draw (a1) edge node[midway, right] {$5$} (d1);
  \draw (b1) edge node[midway, left] {$1$} (c1);
  \draw (c1) edge node[midway, above] {$1$} (d1);
  \draw (d1) edge node[midway, right] {$1$} (e1);

  \draw (a2) edge node[midway, below] {$1$} (b2);
  \draw (a2) edge node[midway, above] {$6$} (c2);
  \draw (a2) edge node[midway, right] {$5$} (d2);
  \draw (b2) edge node[midway, left] {$1$} (c2);
  \draw (c2) edge node[midway, above] {$1$} (d2);
  \draw (d2) edge node[midway, right] {$1$} (e2);

  \draw (a3) edge node[midway, below] {$1$} (b3);
  \draw (a3) edge node[midway, above] {$6$} (c3);
  \draw (a3) edge node[midway, right] {$5$} (d3);
  \draw (b3) edge node[midway, left] {$1$} (c3);
  \draw (c3) edge node[midway, above] {$1$} (d3);
  \draw (d3) edge node[midway, right] {$1$} (e3);

  \draw[red, line width=3.5pt, opacity=0.7, -stealth] (a2) edge (b2);
  \draw[red, line width=3.5pt, opacity=0.7, -stealth] (b2) edge (c2);
  \draw[red, line width=3.5pt, opacity=0.7, -stealth] (c2) edge (d2);
  \draw[red, line width=3.5pt, opacity=0.7, -stealth] (d2) edge (e2);

\end{tikzpicture}
  \caption{Illustration of model architecture used to learn A*
    heuristics. We employ MTL on the Dijkstra's algorithm and the
    learning of the heuristic function $\tilde{h}$ (i.e.,
    $\tilde{h}(s_i) = \tilde{h}_i$ in the figure). }
  \label{fig:planning-net}
\end{figure}

\paragraph{Test inference.}
At test time, we run A* where the priority key for the queue
at line~\ref{dij:line:queue} of \autoref{alg:dijkstra} is substituted
for:
\begin{equation*}
  \texttt{key}(v) = d(v) + \tilde{h}(v),
\end{equation*}
where $d(v)$ represents the distance from node $v$ to the source node
$s_0$ and $\tilde{h}$ is our learnt heuristic function. Compared to
vanilla Dijkstra, we introduce an additional overhead due to the
computation of heuristic values $\tilde{h}(v)$ for each node $v$
within the graph. To keep this overhead limited, we compute
$\tilde{h}(v)$ for any $v$ in the graph by only executing one
\textbf{single pass} of the processor $p$. This has two implications.
First, the introduced overhead can be exactly quantified as
$O(|\gV| \bar{n})$ where $\bar{n}$ is the average node degree.
Second, $\tilde{h}(v)$ is only computed based on its immediate
neighbours, and thus a node $v$ might not directly ``see'' the goal
state $s^*$. This means that we are essentially navigating in a
\textbf{partially observable environment}, where our heuristic
function has to give good indications without actually knowing where
the goal state is at early stages of the algorithm. This poses an
additional challenge to our learner, as in common A* settings the
heuristic function is assumed to be able to access $s^*$'s location.

\subsection{Experimental evaluation}
Here, we detail the experimental setting, including model selection
and results.

\paragraph{Model selection.}
\begin{table}
  \centering
  \begin{tabular}{l | c C c}
    \toprule
    Level & Hyperparameter & [a,b] & Distribution \\
    \midrule
    \multirow{4}{4em}{Level 1} & Learning rate & [1 \cdot 10^{-4}, 1 \cdot 10^{-2}] & log-uniform\\
          & Weight decay & [1 \cdot 10^{-5}, 1 \cdot 10^{-1}] & log-uniform\\
          & Hidden dimension & [16, 512] & uniform\\
          & $\lambda$ & [1 \cdot 10^{-3}, 1 \cdot 10^{-1}] & uniform\\
    \midrule
    \multirow{4}{4em}{Level 2} & Learning rate & [9 \cdot 10^{-4}, 4 \cdot 10^{-3}] & log-uniform\\
          & Weight decay & [9 \cdot 10^{-4},  4 \cdot 10^{-3}] & log-uniform\\
          & Hidden dimension & [80, 100] & uniform\\
          & $\lambda$ & [5 \cdot 10^{-2}, 1 \cdot 10^{-2}] & uniform\\
    \bottomrule
  \end{tabular}
  \caption{First-level and second-level random search configurations
    used to optimise learning of $\tilde{h}$.}
  \label{tab:rs-bilevel-heur}
\end{table}
To select the best model hyper-parameters, we utilise a bi-level
random search strategy, where the best sampled hyper-parameters in the
initial stage are furher refined through a second, more granular,
random search.  Details of the two-stage random search are presented
in \autoref{tab:rs-bilevel-heur}. We sampled $n=200$ configurations
for each level of the random search.

\paragraph{Data generation.}
We generate three different graph sets, labelled as \textsc{very dense},
\textsc{dense}, and \textsc{sparse}. Each of the three sets includes
Erdős--Rényi \citep{erdos1960evolution} graphs with varying edge
sampling probabilities $p=0.5$, $p=0.35$, and
$p=\frac{\log |\gV|}{|\gV|}$, respectively. To rigorously evaluate our
initial claim of size invariancy of the learnt heuristic function, we
produce training and validation sets involving small graphs -- 16
nodes -- and nine separate test sets, having increasing number of
nodes: 16, 32, 64, 96, 128, 160, 192, 224, and 256.
We utilise 1000 instances for training and 128 for validation and
testing. Notably, our model is solely optimised on the \textsc{dense}
distribution, whereas the other two distributions are only used for
testing purposes.

\paragraph{Metrics.}
We keep track and report four different metrics:
\begin{enumerate}
\item $\gC$: which refers to the percentage of nodes for which the triangle
  inequality in \eqref{eq:heur-objective} is satisfied. Fundamentally,
  $\gC$ evaluates the consistency of the heuristic function.

\item \textsc{gap}: which is the optimality gap. This is computed by
  considering the cost of the path $\pi_{\tilde{h}} = s_0 \dots s^*$
  found by executing A* with the learnt heuristic function, and
  comparing it to the cost of the optimal path $\pi_*$ found by
  executing Dijkstra. The optimality gap is calculated as:
  \[
    \mathsc{gap} = \frac{c(\pi_{\tilde{h}})}{\pi_*} - 1
    \qquad \text{where} \quad c(\pi) = \sum_{i} w_{\pi_i, \pi_{i+1}}.
  \]

\item $\gI$: which refers to the number of iterations needed to reach
  the goal state (or target node) $s^*$. This metric is included to
  measure the efficiency of the learnt heuristic function.

\item \textsc{speedup}: which measures the actual speed-up runtime of
  running A* equipped with the learnt heuristic compared to running
  vanilla Dijkstra. The formula to compute the speedup is:
  \[\mathsc{speedup} = \frac{\texttt{Dijkstra runtime}}{\texttt{A* runtime}}.\]
\end{enumerate}

\paragraph{Results analysis.}
We evaluate our model by executing the A* algorithm, employing learnt
heuristics, across all nine test sets of the three datasets, and
average the outcomes over five trials. The results from our
experimental analysis are presented in \autoref{tab:heur-res}.
Additionally, we also include a graphic illustration of results for
\textit{dense} graphs in \autoref{fig:heur-res}.

Our learnt heuristic function is compared against a ``zero'' heuristic
function (i.e., $h(v) = 0$ for all $v$), which is equivalent to the
Dijkstra's algorithm, and a random baseline that samples values
uniformly in $[0, 1]$ (i.e., $h(v) \sim \gU[0,1]$). The first
conclusion that can be drawn from empirical results of
\autoref{tab:heur-res} is that the heuristic function predicted by the
model is very close to be fully consistent, as the number of violated
triangular inequalities is very low (i.e., $\leq 0.9 \%$ on average)
on each of the three different test distributions. Furthermore,
heuristics seem to be mostly \textit{size invariant}, as performance
appear not to degrade as we test it on 2x, 6x, 12x and 16x times
larger graphs compared to training data. The optimality gap also
reflects a good tendency, particularly for \textsc{sparse} data, of
the model with respect to finding the optimal paths while evaluating
fewer nodes compared to Dijkstra (i.e., $\gI_{A*}$ vs. $\gI_D$). For
\textsc{dense} and \textsc{very dense} graphs, however, the optimality
gap does degrade for larger graphs, as the number of paths gets
combinatorially larger with high connectivity. We recall, however,
that heuristics are computed only based on 1-hop neighbours, which
might limit the performance in case of high node degree and high
number of paths, as it is the case for the two \textsc{dense} and
\textsc{very dense} distributions.

\begin{table}
    \footnotesize
    \centering
    \begin{tabular}{lllll}
      \toprule
      \textbf{Metrics} & \emph{32 nodes} & \emph{96 nodes} & \emph{192 nodes} & \emph{256 nodes}\\
      \midrule
                       & \multicolumn{3}{c}{\textsc{sparse}} \\
      \midrule
      $\gC$ $(\uparrow)$& $99.4_{\pm 0.06}\%$ & $99.7_{\pm 0.04}\%$ & $99.8_{\pm 0.03}\%$ & $99.8_{\pm 0.03}\%$ \\
      \textsc{gap} $(\downarrow)$ & $0.03\%_{0.01}\%$ & $0.18_{\pm 0.05}\%$ & $0.19_{\pm 0.09}\%$ & $0.09_{\pm 0.01}\%$ \\
      $\gI_{A*}$ $(\downarrow)$& $17_{\pm 1}$ & $43_{\pm 1}\%$ & $74_{\pm 2}$ & $99_{\pm 3}$ \\
      $\gI_D$ $(\downarrow)$& $19$ & $53$ & $97$ & $132$ \\
      \textsc{speedup} $(\uparrow)$& $0.94$ & $1.08$ & $1.19$ & $1.23$\\
      \midrule
      \midrule
                       & \multicolumn{3}{c}{\textsc{dense}}\\
      \midrule
      $\gC$ $(\uparrow)$& $99.2_{\pm 0.08}\%$ & $99.4_{\pm 0.09}\%$ & $99.5_{\pm 0.09}\%$ & $99.5_{\pm 0.09}\%$ \\
      \textsc{gap} $(\downarrow)$& $0.98_{\pm 0.16}\%$ & $10.75_{\pm 0.77}\%$ & $33.5_{\pm 1.79}\%$ & $52.8_{\pm 4.55}\%$ \\
      $\gI_{A*}$ $(\downarrow)$& $12_{\pm 1}$ & $22_{\pm 2}$ & $22_{\pm 3}$ & $21_{\pm 5}$ \\
      $\gI_{D}$ $(\downarrow)$& $17$ & $51$ & $97$ & $132$ \\
      \textsc{speedup} $(\uparrow)$& $0.93$ & $2.29$ & $4.30$ & $4.83$\\
      \midrule
      \midrule
                       & \multicolumn{3}{c}{\textsc{very dense}} \\
      \midrule
      $\gC$ $(\uparrow)$& $99.1_{\pm 0.08}\%$ & $99.4_{\pm 0.11}\%$ & $99.5_{\pm 0.11}\%$ & $99.5_{\pm 0.11}\%$ \\
      \textsc{gap} $(\downarrow)$& $0.7_{\pm 0.15}\%$ & $19.8_{\pm 1.53}\%$ & $59.4_{\pm 5.00}\%$ & $73.5_{\pm 6.75}\%$ \\
      $\gI_{A*}$ $(\downarrow)$& $11_{\pm 1}$ & $16_{\pm 2}\%$ & $16_{\pm 3}$ & $16_{\pm 4}$ \\
      $\gI_D$ $(\downarrow)$& $17$ & $50$ & $90$ & $130$ \\
      \textsc{speedup} $(\uparrow)$& $1.22$ & $2.60$ & $5.06$ & $5.24$\\

    \bottomrule
    \end{tabular}
    \caption{Performance of A* with learnt heuristic function on 4 out
      of the 9 test sets, i.e. 32, 96, 192, 256 nodes.
      $\gC$ represents percentage of nodes for which the triangle inequality
      is satisfied, \textsc{gap} and \textsc{speedup} refer to the optimality
      gap and the runtime speedup, respectively, and $\gI_{A*}$ and $\gI_{D}$
      indicate the iterations performed by A* with learnt heuristic and Dijkstra,
      respectively. $(\uparrow)$ means ``the higher the better'', $(\downarrow)$
      means ``the lower the better''.}
    \label{tab:heur-res}
\end{table}
\begin{figure}
  \centering
  \subcaptionbox{Optimality gap.}{
    \begin{tikzpicture}
\begin{axis}[
    xlabel={Graph size},
    ylabel={\textsc{gap}},
    grid=major,
    legend entries={Learnt $\tilde{h}$, Random $\tilde{h}$},
    legend pos=north west,
    grid style={black!30, densely dotted},
    legend style={draw=black!20, fill=white, font=\footnotesize},
    axis line style={black!20},
    xtick={16,32,64,96,128,160,192,224,256},
    xticklabel style={font=\footnotesize},
    yticklabel style={font=\footnotesize}
]
\addplot[blue, ultra thick] table[col sep=comma, x index=0, y index=1]{gfx/contrib/a_star/data/gap_nar.csv};
\addplot[orange, ultra thick] table[col sep=comma, x index=0, y index=1]{gfx/contrib/a_star/data/gap_random.csv};

\addplot[blue!30, name path=plus_sd_nar, forget plot] table[col sep=comma, x index=0, y expr=\thisrowno{1}+\thisrowno{2}]{gfx/contrib/a_star/data/gap_nar.csv};
\addplot[blue!30, name path=minus_sd_nar, forget plot] table[col sep=comma, x index=0, y expr=\thisrowno{1}-\thisrowno{2}]{gfx/contrib/a_star/data/gap_nar.csv};
\addplot[blue!30] fill between[of=plus_sd_nar and minus_sd_nar];

\addplot[orange!30, name path=plus_sd_rand, forget plot] table[col sep=comma, x index=0, y expr=\thisrowno{1}+\thisrowno{2}]{gfx/contrib/a_star/data/gap_random.csv};
\addplot[orange!30, name path=minus_sd_rand, forget plot] table[col sep=comma, x index=0, y expr=\thisrowno{1}-\thisrowno{2}]{gfx/contrib/a_star/data/gap_random.csv};
\addplot[orange!30] fill between[of=plus_sd_rand and minus_sd_rand];
\end{axis}
\end{tikzpicture}
  }
  \subcaptionbox{Number of iterations.}{%
    \begin{tikzpicture}
\begin{axis}[
    xlabel={Graph size},
    ylabel={$\gI$},
    grid=major,
    legend entries={Learnt $\tilde{h}$, Random $\tilde{h}$, Zero $\tilde{h}$},
    legend pos=north west,
    grid style={black!30, densely dotted},
    legend style={draw=black!20, fill=white, font=\footnotesize},
    axis line style={black!20},
    xtick={16,32,64,96,128,160,192,224,256},
    ymin=0,
    ymax=150,
    xticklabel style={font=\footnotesize},
    yticklabel style={font=\footnotesize},
    ytick={20, 40, 60, 80, 100, 120, 140}
]
\addplot[blue, ultra thick] table[col sep=comma, x index=0, y index=1]{gfx/contrib/a_star/data/iter_nar.csv};
\addplot[orange, ultra thick] table[col sep=comma, x index=0, y index=1]{gfx/contrib/a_star/data/iter_random.csv};
\addplot[red, ultra thick] table[col sep=comma, x index=0, y index=1]{gfx/contrib/a_star/data/iter_dijkstra.csv};

\addplot[blue!30, name path=plus_sd_nar, forget plot] table[col sep=comma, x index=0, y expr=\thisrowno{1}+\thisrowno{2}]{gfx/contrib/a_star/data/iter_nar.csv};
\addplot[blue!30, name path=minus_sd_nar, forget plot] table[col sep=comma, x index=0, y expr=\thisrowno{1}-\thisrowno{2}]{gfx/contrib/a_star/data/iter_nar.csv};
\addplot[blue!30] fill between[of=plus_sd_nar and minus_sd_nar];

\addplot[orange!30, name path=plus_sd_rand, forget plot] table[col sep=comma, x index=0, y expr=\thisrowno{1}+\thisrowno{2}]{gfx/contrib/a_star/data/iter_random.csv};
\addplot[orange!30, name path=minus_sd_rand, forget plot] table[col sep=comma, x index=0, y expr=\thisrowno{1}-\thisrowno{2}]{gfx/contrib/a_star/data/iter_random.csv};
\addplot[orange!30] fill between[of=plus_sd_rand and minus_sd_rand];
\end{axis}
\end{tikzpicture}
  }
  \caption{\textbf{(a).} Comparison of the optimality gap achieved by
    the learnt heuristic and a random baseline. \textbf{(b).}  Number
    of iterations performed by A* with different heuristic
    function. Our learnt heuristic performs the best among all,
    keeping the number of iterations constant across different graph
    sizes. Note that the zero heuristic corresponds to running the
    Dijkstra's algorithm.}
  \label{fig:heur-res}
\end{figure}

Performance-wise, A* equipped with the learnt heuristic function is
more efficient, by far, than Dijkstra (see results on \textsc{very
  dense} 256-node graphs in \autoref{tab:heur-res}, where
$\gI_{A*} \approx 16$ and $\gI_{D} = 130$). This results is confirmed
also by the \textsc{speedup} metric, which measures the exact
computational runtime of computing the heuristics plus running the A*
search, indicating that the overall time complexity of the learnt
search does not exceed that of running classic Dijkstra.  This result
can also be visualised in \autoref{fig:heur-res}, for dense graphs.

Finally, the consistent results achieved on each of the three
different graph distributions suggest that the model can generalise
across different Erdős--Rényi distributions (recall that the model is
only trained on \textsc{dense} graphs).

    \chapter{Learning Combinatorial Algorithms}
\label{ch:contribution-conar}
In this chapter, we delve into the second batch of contributions
presented in this PhD thesis, which are mainly about learning
algorithms that target combinatorial optimisation problems (see
\autoref{sec:co}).  Herein, we provide empirical evidence of the
usefulness of different \textit{algorithmic biases} to target,
respectively: algorithmic learning problems; predictive graph learning
tasks; and combinatorial optimisation problems.

In this respect, the section \textit{``Leveraging Strong Duality''}
shows how we can leverage strong duality (see
\autoref{th:strong-duality-theorem}) in neural algorithmic reasoning
in order to: learn algorithms to a better precision, with particular
emphasis on the Ford-Fulkerson algorithm (\autoref{alg:ff}); and
utilise duality as a strong inductive prior for solving edge
classification tasks.

Finally, in \textit{``Learning approximation via algorithmic
  primitives''}, we demonstrate the usefulness of algorithmic priors
for solving combinatorial optimisation problems through quantitative
experiments on NP-hard CO problems. To the best of our knowledge, no
prior work in the field of neural combinatorial optimisation ever
explored algorithmic-based approaches.

\section{Leveraging Strong Duality}
\label{sec:dar}
In \autoref{sec:transfer}, we learnt about multiple ways of teaching
and utilising algorithmic knowledge with neural networks. In
particular, we discussed the MTL paradigm, which can be summarised as
a technique that enables neural networks to solve multiple tasks at
once. As introduced in the aforementioned section,
\cite{xhonneux2021transfer} verify the importance of this methodology
for OOD generalisation in the context of neural algorithmic reasoning,
by tasking neural networks to learn multiple related algorithms
simultaneously. The rationale here is that many classical algorithms
share sub-routines, such as Dijkstra and Bellman-Ford,
(\autoref{alg:dijkstra} and \autoref{alg:bf}), which help the network
learn more effectively and be able to transfer knowledge among the
target algorithms. Interestingly, MTL is possible also on sets of
diverse (possibly unrelated) algorithms, as shown in
\citep{ibarz2022generalist}.

\paragraph{Problem statement.}
However, learning some specific algorithms might require learning of
very specific properties of the input data, for which multi-task
learning may not help. In this work, we particularly target one such
algorithm, which is the Ford-Fulkerson algorithm, presented in
\autoref{sec:graph-algos}. To elaborate on why Ford-Fulkerson requires
stronger reasoning abilities, we have to consider the
\textit{algorithmic problem} that we are solving via the execution of
Ford-Fulkerson. This problem is max-flow (\autoref{prob:maxflow}).  As
discussed in \autoref{sec:problems-on-graphs}, in order to solve
max-flow, a learner is required to \emph{identify} and \emph{reason}
in terms of a set of \emph{bottleneck} edges. These bottleneck edges
are such that a decrease in their capacity causes a proportional
decrease in the maximum flow solution in the flow network, and are
equivalent to the notion of minimum cut of the flow network
(\autoref{prob:mincut}).

From an intuitive standpoint, this is inherently more difficult than
learning, say, the shortest path update rule of equation
\eqref{eq:bf-relaxation}. Furthermore, it is also unclear which other
algorithms possess a similar ``sub-routine'' (i.e., identifying
bottleneck edges) that can positively influence the learning of
Ford-Fulkerson in the MTL scheme.

Furthermore, unlike simpler algorithms, such as shortest path
algorithms, Ford-Fulkerson deals with the optimisation of a
\textit{constrained optimisation problem}, described in
\autoref{eq:mf}, where constraints are non-trivial (see equations
\eqref{eq:capacity-constraint} and \eqref{eq:flow-conservations}).

\paragraph{Duality as inductive bias.}
Driven by these specific requirements, we explore alternative learning
setups to enable better reasoning abilities of our algorithmic
reasoners. We find a promising candidate in the \emph{duality}
information, introduced and discussed in \autoref{sec:duality}. The
fundamental idea here is that many ``algorithmic problems'', such as
flow problems and shortest paths, can be naturally reduced to linear
programs (\autoref{sec:lp}), enabling us to effectively utilise
duality relationships.

In the context of neural algorithmic reasoning, there are several
reasons why considering primal-dual information is sensible:
\begin{enumerate}
\item By incorporating \textit{primal-dual} objectives, we allow a
  neural network to reason on the task from two different (and often
  complementing) perspectives. This can significantly ease the
  learning of algorithms, especially those that require recognising
  and reasoning about properties not directly present in the input
  data, such as Ford-Fulkerson. Specifically, the \emph{max-flow
    min-cut theorem} \citep{ford2015flows} states that \emph{strong
    duality} (see \autoref{th:strong-duality-theorem}) holds for this
  pair of problems. Consequently, accurately identifying the minimum
  cut is crucial for deriving an optimal max-flow solution. Hence,
  duality is naturally present when learning Ford-Fulkerson.

\item Producing a more accurate step-by-step approximation reduces
  error propagation throughout the neural execution of the algorithm.
  In the case of Ford-Fulkerson, choosing a \emph{wrong} edge (i.e.,
  an edge which is not in the min-cut) for updating the flow might
  negatively impact the outcomes of subsequent iterations.

\item Ultimately, we relax the ``standard'' MTL assumption of having
  to learn multiple algorithms simultaneously. However, as we will see,
  we still employ MTL on both primal and dual solutions.
\end{enumerate}

Hence, in this work, we study to what extent neural algorithmic
reasoner can utilise dual information to learn max-flow and
min-cut. Note that, graph networks are inherently able to learn
minimum cut, as the problem itself complies with
\emph{permutation-compatible} functions \citep{fereydounian2022exact}
(see also \autoref{sec:algorithmic-alignment}).  Therefore, resolving
the min-cut can serve as a valuable ``milestone'' for a network in the
process of learning to solve max-flow.

Therefore, we suggest integrating duality information directly into
the learning model, both as an additional signal to supervise on and
as an input feature, by allowing the network to reuse its dual
prediction in subsequent algorithmic steps. We name this approach Dual
Algorithmic Reasoning (DAR) \citep{numeroso2023dual}.

In the next sections, we delve into specifications of the model
architecture utilised for learning Ford-Fulkerson as well as
discussion on experimental settings and analysis.

\subsection{Model architecture}
In spirit of algorithmic alignment, we slightly modify the usual
EPD architecture to more closely match the dynamics of the
Ford-Fulkerson algorithm. A visual representation of the used
architecture is provided in \autoref{fig:dar-architecture}.

Precisely, we still rely on the neural algorithmic reasoning blueprint
\citep{velickovic2021neural} and the EPD architecture, but we utilise
two processors $p_{BF}$ and $p_F$, instead of one, to align with the
two sub-routines composing Ford-Fulkerson (see pseudocode of
\autoref{alg:ff}).  As detailed in \autoref{sec:graph-algos}, one
procedure finds the \emph{augmenting paths}, while the other performs
update of the flow within the graph (i.e., $f_{uv} = f_{uv} + df$ and
$f_{vu} = f_{vu} - df$).
\begin{figure}
  \centering
  \begin{tikzpicture}[very thick,scale=1, rotate=90]

  \tikzset{line/.style={draw,line width=1.5pt}}
  \tikzset{arrow/.style={->,>=stealth}}
  \tikzset{snake/.style={arrow,line width=1.3pt,decorate,decoration={snake,amplitude=1,segment length=4,post length=5}}}

  \tikzstyle{box}=[dash pattern=on 5pt off 2pt,inner sep=5pt,rounded corners=3pt]
  \tikzstyle{vertex}=[circle,fill=cyan!50!,draw,minimum size=18pt, inner sep=1pt]

  \tikzstyle{message}=[arrow, decorate,
  decoration={snake,
    amplitude=1,
    segment length=6,
    post length=7
  },
  line width=2.5pt]

  \node[rectangle, draw, dashed, very thick, black!30, minimum height=130pt, minimum width=90pt] (Xbar) at (-5.5, 0) {};

  \node[rectangle, draw, dashed, very thick, black!30, minimum height=130pt, minimum width=80pt] (Ybar) at (8.5, 2.5) {};

  \node[rectangle, draw, dashed, very thick, black!30, minimum height=130pt, minimum width=80pt] (Hbar) at (8.5, -2.5) {};

  \node[rectangle, draw, dashed, very thick, black!30, minimum height=85pt, minimum width=90pt, fill=blue, fill opacity=0.05] (P) at (-0.5, 0) {};

  \node[rectangle, draw, dashed, very thick, black!30, minimum height=85pt, minimum width=90pt, fill=blue, fill opacity=0.05] (P2) at (4, 0) {};

  \draw[line width=3.5pt, -stealth, blue, opacity=0.7] (Xbar) -- node[left] {\large $f$} (P);
  \draw[line width=3.5pt, -stealth, blue, opacity=0.7] (P) -- node[left] {\large $g_{BF}$} (P2);
  \draw[line width=3.5pt, -stealth, blue, opacity=0.7] (P2) -- node[left] {\large $g_P$} (Ybar);
  \draw[line width=3.5pt, -stealth, blue, opacity=0.7] (P2) -- node[right] {\large $g_D$} (Hbar);

  \draw[blue] (P) edge[loop above=1, line width=3.5pt, looseness=12, stealth-, opacity=0.7] (P);

  \node[blue] at (-0.5, 2) {\large $p_{BF}$};
  \node[blue] at (4, 2) {\large $p_{F}$};

  \foreach \pos/\name/\lab in {{(-0.4,0)/m1/$v_1$}, {(-1.51,-0.13)/m2/$v_2$}, {(-0.66,1)/m3/$v_3$}, {(0.5,0.47)/m4/$v_4$}, {(-0.09,-1)/m5/$v_5$}}
  \node[vertex] (\name) at \pos {\lab};

  \foreach \pos/\name/\lab in {{(4.1,0)/m12/$v_1$}, {(3.01,-0.13)/m22/$v_2$}, {(3.84,1)/m32/$v_3$}, {(5.0,0.47)/m42/$v_4$}, {(4.41,-1)/m52/$v_5$}}
  \node[vertex] (\name) at \pos {\lab};

  \foreach \pos/\name/\nb in {{(-7, -0.75)/s/$s$},{(-7, 0.75)/a/$v_1$},{(-6, -0.75)/b/$v_2$},{(-6, 0.75)/c/$v_3$},{(-5, 0.75)/d/$v_4$},{(-4, -0.75)/e/$v_5$},{(-5, -0.75)/f/$v_6$},{(-4, 0.75)/t/$t$}}
  \node[vertex] (\name1) at \pos {\nb};

  \foreach \pos/\name/\nb in {{(7, 1.75)/s/$s$},{(7, 3.25)/a/$v_1$},{(8, 1.75)/b/$v_2$},{(8, 3.25)/c/$v_3$},{(9, 3.25)/d/$v_4$},{(10, 1.75)/e/$v_5$},{(9, 1.75)/f/$v_6$},{(10, 3.25)/t/$t$}}
  \node[vertex] (\name2) at \pos {\nb};

  \foreach \pos/\name/\nb in {{(7, -3.25)/s/$s$},{(7, -1.75)/a/$v_1$},{(8, -3.25)/b/$v_2$},{(8, -1.75)/c/$v_3$},{(9, -1.75)/d/$v_4$},{(10, -3.25)/e/$v_5$},{(9, -3.25)/f/$v_6$},{(10, -1.75)/t/$t$}}
  \node[vertex] (\name3) at \pos {\nb};




  \draw (m2) edge (m1);
  \draw (m3) edge (m1);
  \draw (m4) edge (m1);
  \draw (m5) edge (m1);
  \draw (m2) edge[message, cyan, opacity=0.7] (m1);
  \draw (m3) edge[message, cyan, opacity=0.7] (m1);
  \draw (m4) edge[message, cyan, opacity=0.7] (m1);
  \draw (m5) edge[message, cyan, opacity=0.7] (m1);

  \draw (m22) edge (m12);
  \draw (m32) edge (m12);
  \draw (m42) edge (m12);
  \draw (m52) edge (m12);
  \draw (m22) edge[message, cyan, opacity=0.7] (m12);
  \draw (m32) edge[message, cyan, opacity=0.7] (m12);
  \draw (m42) edge[message, cyan, opacity=0.7] (m12);
  \draw (m52) edge[message, cyan, opacity=0.7] (m12);

  \draw (s1) edge (a1);
  \draw (s1) edge (b1);
  \draw (s1) edge (c1);
  \draw (a1) edge (b1);
  \draw (a1) edge (c1);
  \draw (b1) edge (c1);

  \draw (t1) edge (d1);
  \draw (t1) edge (e1);
  \draw (t1) edge (f1);
  \draw (d1) edge (e1);
  \draw (d1) edge (f1);
  \draw (e1) edge (f1);

  \draw (b1) edge (f1);
  \draw (c1) edge (d1);

  \draw (s2) edge (a2);
  \draw (s2) edge (b2);
  \draw (s2) edge (c2);
  \draw (a2) edge (b2);
  \draw (a2) edge (c2);
  \draw (b2) edge (c2);

  \draw (t2) edge (d2);
  \draw (t2) edge (e2);
  \draw (t2) edge (f2);
  \draw (d2) edge (e2);
  \draw (d2) edge (f2);
  \draw (e2) edge (f2);

  \draw (b2) edge (f2);
  \draw (c2) edge (d2);

  \draw (s3) edge (a3);
  \draw (s3) edge (b3);
  \draw (s3) edge (c3);
  \draw (a3) edge (b3);
  \draw (a3) edge (c3);
  \draw (b3) edge (c3);

  \draw (t3) edge (d3);
  \draw (t3) edge (e3);
  \draw (t3) edge (f3);
  \draw (d3) edge (e3);
  \draw (d3) edge (f3);
  \draw (e3) edge (f3);

  \draw (b3) edge[densely dotted] (f3);
  \draw (c3) edge[densely dotted] (d3);

  \draw[red, line width=3.5pt, opacity=0.7, -stealth] (s2) edge (b2);
  \draw[red, line width=3.5pt, opacity=0.7, -stealth] (s2) edge (c2);
  \draw[red, line width=3.5pt, opacity=0.7, -stealth] (b2) edge (f2);
  \draw[red, line width=3.5pt, opacity=0.7, -stealth] (c2) edge (d2);
  \draw[red, line width=3.5pt, opacity=0.7, -stealth] (d2) edge (t2);
  \draw[red, line width=3.5pt, opacity=0.7, -stealth] (f2) edge (t2);


\end{tikzpicture}
  \caption{High-level architecture of the Dual Algorithmic Reasoner
    (DAR) used for max-flow and min-cut. Note how $p_{BF}$ is iterated
    to retrieve augmenting paths, whereas $p_F$ is required to simultaneously
    solve max-flow ($g_P$) and min-cut ($g_D$). The edges have all unitary
    capacity. Minimum cut is represented as dotted edges.}
  \label{fig:dar-architecture}
\end{figure}
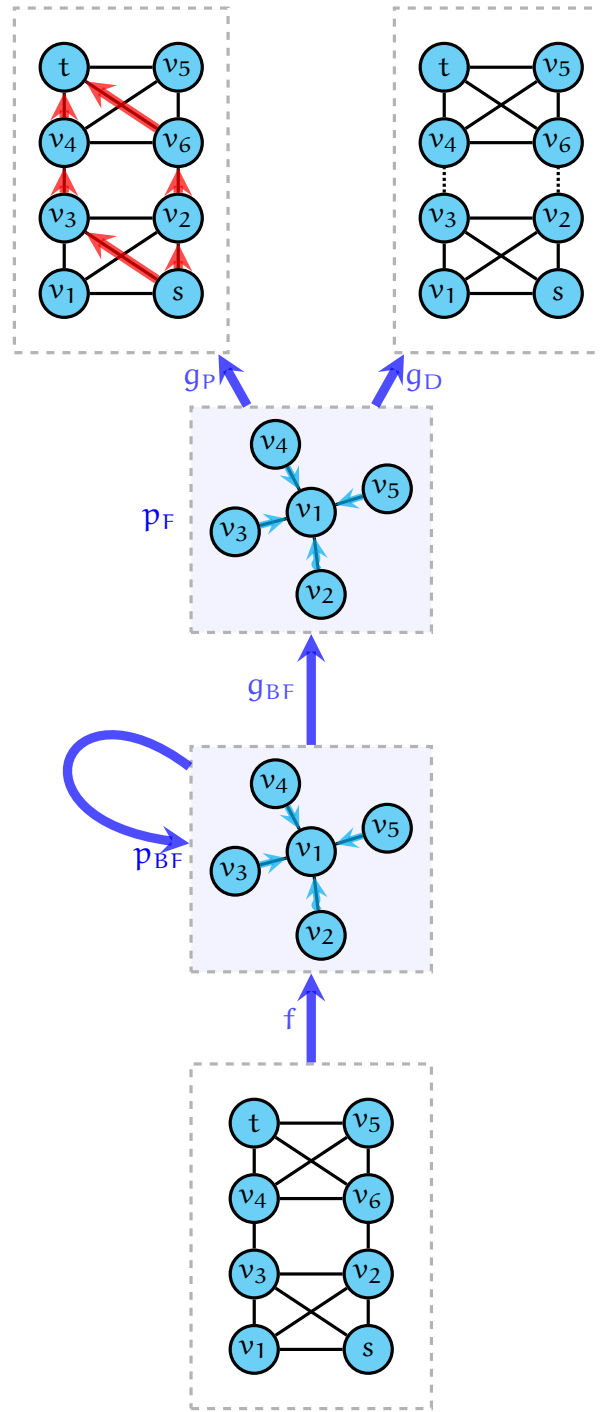

Consequently, $p_{BF}$ learns to retrieve augmenting paths, while
$p_F$ emits simultaneously the minimum cut $\vc$ and the flow
assignments in the form of a matrix
$\mF \in \sR^{|\gV| \times |\gV|}$, where:
\begin{align*}
  &\vc_v = \begin{cases}
            0 & \text{if $v$ is in the same cluster of $s$}\\
            1 & \text{if $v$ is in the same cluster of $t$}
          \end{cases},\\\\
  &\mF_{uv} \approx f_{uv} \quad \forall (u, v) \in \gE.
\end{align*}
Specifically, $p_{BF}$ is trained to find augmenting paths through
execution of the Bellman-Ford algorithm, similar to
\citep{georgiev2020neural}. The sets of \inp{}, \hints{}, and \out{}
used to train the architecture is the same as described in
\autoref{sec:graph-algos}.

The model ``forward pass'', similarly, is mostly unchanged compared to
\autoref{sec:epd}. Indeed, we still implement encoders and decoders as
linear layers, whilst $p_F$ is fed with the output of
$p_{BF}$, as depicted in \autoref{fig:dar-architecture}. Both
processors are instantiated as either MPNNs or PGNs (recall
\autoref{sec:epd}). Precisely, the augmenting paths from $s$ to $t$
$\vp^{(t)}_i$ are first decoded through a decoder $g_{BF}$:
\begin{equation}
  \vp_v^{(t)} = g_{BF}(\vz^{(t)}_{v}, \vh^{(t)}_v) \quad \forall v \in \gV,
\end{equation}
and then passed to $p_F$ as an input feature to get node
representations $\vh^{(t)}_v$. Finally, we decode the target
quantities of the algorithm as:
\begin{align*}
  &\tilde{\mF}^{(t)}_{uv} = g_P(\vh^{(t)}_u, \vh^{(t)}_v),\\
  &\vc^{(t)}_v= g_D(\vh^{(t)}_v).
\end{align*}
Here, $g_P$ and $g_D$ are respectively the primal and dual encoders.
\paragraph{Satisfying Max-Flow constraints.}
In order to comply with max-flow constraints discussed in
\autoref{prob:maxflow}, we ought to pay extra care on the $\mF^{(t)}$
predictions. Given a graph $g=\tuple{\gV, \gE, \gC}$, where
$c_{uv} \in \gC$ is the capacity of the edge $(u, v)$, we have to
ensure that \eqref{eq:capacity-constraint} and
\eqref{eq:flow-conservations} are satisfied. Notably, by looking at
the problem definition in \autoref{prob:maxflow} and
\eqref{eq:flow-conservations}, we derive that the matrix $\mF$ must be
antisymmetric (i.e., $\mF = -\mF^\top$). This can be seen trivially by
noting that if node $u$ sends a certain amount of flow quantity $q$ to
node $v$, then $v$ must receive $q$ flow from $u$ (i.e.,
$\mF_{uv} = -\mF_{vu}$). Hence, we have a ``0-th constraint'' to
satisfy, which is that of antisymmetry of $\mF$. To comply with this,
we apply the following transformation to $\mF$:
\begin{equation*}
  \mF_{\mathsc{a-sym}} = \tilde{\mF} - \tilde{\mF}^\top,
\end{equation*}
where $\tilde{\mF}$ is the raw prediction of the model and
$\mF_{\mathsc{a-sym}}$ is inherently ensured to preserve antisymmetry.
Moving forward, to ensure \eqref{eq:capacity-constraint}, we employ
the \textit{tanh trick}.  Precisely, at each step of the algorithm, we
scale each entry of $\mF^{(t)}$ according to the hyperbolic tangent
and the capacity matrix $\mC = [c_{uv}] \in \sR^{|\gV| \times |\gV|}$,
as follows:
\begin{equation} \label{eq:rescaled-flow}
  \mF^{(t)} = \tanh(\mF^{(t)}_{\mathsc{a-sym}}) \odot \mC,
\end{equation}
where $\tanh$ and $\odot$ are applied element-wise. Here, applying
$\tanh$ rescales the prediction to $[-1, 1]$. Hence, each entry of
$(u, v)$ of the resulting matrix can be thought of as a prediction of
how much capacity $c_{uv}$ we want to use (positively or negatively).

Finally, ensuring conservation of flows (equation
\eqref{eq:flow-conservations}) is more challenging. Indeed, there is
no clear way of how we can satisfy the constraint in an end-to-end
differentiable manner. For this reason, to address this constraint we
resort to a hand-crafted corrective procedure that is executed at the
end of the neural execution of Ford-Fulkerson, whose pseudocode is
reported in \autoref{alg:corrective-procedure}.

\begin{algorithm}
\caption{Flow conservation correction algorithm}
\label{alg:corrective-procedure}
\begin{algorithmic}[1]
  \Procedure{EnsureFlowConservation}{$G=\tuple{\gV, \gE, \gC}$, $\mF$, $\mC$, $s$, $t$}
  \State $\gV^-= \{v \mid v \in \gV \land \sum_{(i, v) \in \gE} \mF_{iv} +
  \sum_{(v, i) \in E} \mF_{vi} < 0\}$
  \State $\gV^+ = \{v \mid v \in \gV \land \sum_{(i, v) \in \gE} \mF_{iv} + \sum_{(v, i) \in E} \mF_{vi} > 0\}$
  \For {each $v \in \gV^-$} \State $\varepsilon = |\sum_{(i, v) \in \gE} \mF_{iv} + \sum_{(v, i) \in E} \mF_{vi}|$
  \State{find a path from $v$ to $s$ and send back
    $\varepsilon$ amount of flow to $s$}
  \EndFor
  \For {each $v \in V^+$} \State $\varepsilon = |\sum_{(i, v) \in \gE} \mF_{iv} + \sum_{(v, i) \in \gE} \mF_{vi}|$
  \State{find a path from $v$ to $t$ and reduce the out-flow
    by $\varepsilon$}
  \EndFor
  \State {$\varepsilon_s = |\sum_{(s, i) \in E} \mF_{si} -
    \sum_{(s, i) \in \gE} \mC_{si}|$} \While {$\varepsilon_s > 0$}
  \State{find a path from $s$ to $t$ and reduce the out-flow by enforcing capacity constraints}
  \State{recompute $\varepsilon_s$}
  \EndWhile
  \State \textbf{return} $\mF$
  \EndProcedure
\end{algorithmic}
\end{algorithm}

As detailed in the pseudo-code, the procedures discriminate between
negative nodes $\gV^-$ (i.e., nodes $v$ that \emph{keep} some of the
in-flow):
\begin{equation*}
  \gV^- = \set{v \mid v \in \gV \land \sum_{(i, v) \in \gE} \mF_{iv} +
    \sum_{(v, i) \in \gE} \mF_{vi} < 0},
\end{equation*}
and positive nodes $V^+$ (i.e., nodes $v$ that sends out \emph{more}
flow that they receive):
\begin{equation*}
  \gV^+ = \set{v \mid v \in \gV \land \sum_{(i, v) \in \gE} \mF_{iv} +
    \sum_{(v, i) \in \gE} \mF_{vi} > 0}
\end{equation*}

Thus, flow conservations is enforced through two steps:
\begin{enumerate}
\item Each $v \in \gV^-$ sends back to the source node $q$ flow, where
  $q$ is equal to the magnitude of the constraint
  violation. Similarly, all $v \in \gV^+$ take back from successors an
  amount of flow corresponding to the constraint violation.  Note that
  here, we might have re-introduced some edge capacity constraints
  violation in $\mF$.
\item We enforce back the edge capacity constraint by clamping
  $\mF_{uv}$ to $c_{uv}$, and progressively adjusting the flow sent
  from $s$ until no violations of \eqref{eq:capacity-constraint} and
  \eqref{eq:flow-conservations} occur.
\end{enumerate}

\subsection{Experimental evaluation}
We test DAR with two objectives in mind.

First, we aim to assess whether the augmented reasoning capabilities
of dual algorithmic reasoners really help in the task of learning to
execute Ford-Fulkerson. For this purpose, we train and test the DAR
arhictecture on synthetic graphs. We dive into details of
this setting in \autoref{sec:dar-synthetic-exp}.

Second, we deploy DAR to solve a challenging graph learning task, with
real world data, in order to prove the usefulness of algorithmic
priors discussed in \autoref{sec:nar-motivation} and
\autoref{sec:transfer}. Precisely, we test DAR against
state-of-the-art baselines on a biologically relevant edge
classification task on Brain-Vessel Graphs (BVG)
\citep{paetzold2021whole}. This benchmark comprehends large-scale
graphs and is discussed more thoroughly in \autoref{sec:dar-bvg-exp}.

\subsubsection{Executing Ford-Fulkerson}
\label{sec:dar-synthetic-exp}
In this section, we provide all the details for reproducing our experiments,
as well as a complete discussion on the obtained results.

\paragraph{Ablation studies.}
Ablation studies are conducted to rigorously evaluate the impact of
the dual information, by training the DAR architecture devoid of the
additional \emph{dual decoder} $g_D$ (i.e., outputting min-cut
predictions), thereby preventing joint supervision together with the
primal max-flow problem. We refer to this architecture as the
\emph{primal} architecture, marked as \emph{(P)} in the following
paragraphs. The full model (that of \autoref{fig:dar-architecture}),
instead, is referred to as the \emph{dual} model
\emph{(D)}. Additionally, we also test a neural architecture with an
extra processor designed to learn the minimum cut before employing the
Ford-Fulkerson algorithm. This is intended to check whether training
jointly on both primal and dual problems is really needed, or having
the minimum cut as an input feature would suffice. We refer to such
architectures as \emph{pipeline} models, marked as \emph{(PL)}.

Furthemore, two distinct processor types are used to instantiate both
$p_{BF}$ and $p_F$ with:
\begin{enumerate}
\item A fully-connected MPNN, executing \eqref{eq:gn-mp} on a full
  graph regardless of its connectivity. We note that this approach
  is commonly used in NAR.
\item A PGN, which imposes exchange of node messages on the graph true
  connectivity and its algorithmically defined neighbourhoods, especially
  via \hints{} (e.g., augmenting paths output by $p_{BF}$ are also used to
  restrict the exchanged messages).
\end{enumerate}

Furthermore, given the difficulties of linking the max-flow problem to
the dynamic programming paradigm, we experiment with other types of GN
aggregators other than $\max$ as well. Specifically,
$\oplus = \set{\max, \mean, \sumt}$. All models undergo training for
20,000 epochs using the SGD optimiser, with results averaged over five
runs. \emph{Teacher forcing} is employed with a decay factor of 0.999
on the \hints{}, offering the network ground-truth \hints{} during
initial training stages while allowing network predictions during the
majority of the training.

\paragraph{Model selection.}
Similarly to experiments performed in \autoref{sec:neural-planning}, a
bi-level random search optimisation scheme is employed here as well.
Details on the hyperparameter space are reported in
\autoref{tab:rs-bilevel-dar}. Specifically, we sample $n=50$
configurations for each level of the bi-level random search.  The best
hyperparameters are chosen based on the validation error on $\mF$
predictions, also shown in \autoref{fig:dar-val-synthetic}. In the
figure, \emph{(P)}, \emph{(D)}, and \emph{(PL)} follow the notation
introduced in the previous paragraph.

\begin{table}
  \centering
  \footnotesize
  \begin{tabular}{l | c C c}
    \toprule
    Level & Hyperparameter & [a,b] & Distribution \\
    \midrule
    \multirow{3}{4em}{Level 1} & Learning rate & [1 \cdot 10^{-5}, 1 \cdot 10^{-1}] & log-uniform\\
          & Weight decay & [1 \cdot 10^{-5}, 1 \cdot 10^{-1}] & log-uniform\\
          & Hidden dimension & [16, 512] & uniform\\
    \midrule
    \multirow{3}{4em}{Level 2} & Learning rate & [1 \cdot 10^{-3}, 1 \cdot 10^{-2}] & log-uniform\\
          & Weight decay & [1 \cdot 10^{-3}, 4 \cdot 10^{-3}] & log-uniform\\
          & Hidden dimension & [60, 100] & uniform\\
    \bottomrule
  \end{tabular}
  \caption{DAR first-level and second-level random search configurations.}
  \label{tab:rs-bilevel-dar}
\end{table}

\paragraph{Data generation.}
For this set of experiments, we generate two sets of graphs, following
two different distributions.  First, \emph{2-community} graphs (see
\autoref{sec:graph-types}), in which communities are Erdős--Rényi
graphs with $p=0.75$. Thus, nodes in different communities are
interconnected with $p=0.05$.  Second, \emph{bipartite} graphs (see
also \autoref{sec:graph-types}).  To generate algorithmic
trajectories, we still employ the CLRS benchmark and
training/validation/test setups are in CLRS-style, which is: 1000
training samples in the form of 2-community graphs having 16 nodes;
128 validation samples of 2-community graphs (same size as training
one); 256 test samples of size 64 nodes, of which 128 are 2-community
graphs and 128 are bipartite graphs.  Following this approach, we not
only enable testing OOD (on bigger 2-community graphs), but also
\emph{out-of-family} (OOF) (on larger bipartite graphs).

Lastly, edge capacities for 2-community graphs are sampled in $[0,10]$
and then rescaled via a min-max normalisation, whereas bipartite graphs
have discrete edge capacities in $\set{0, 1}$.

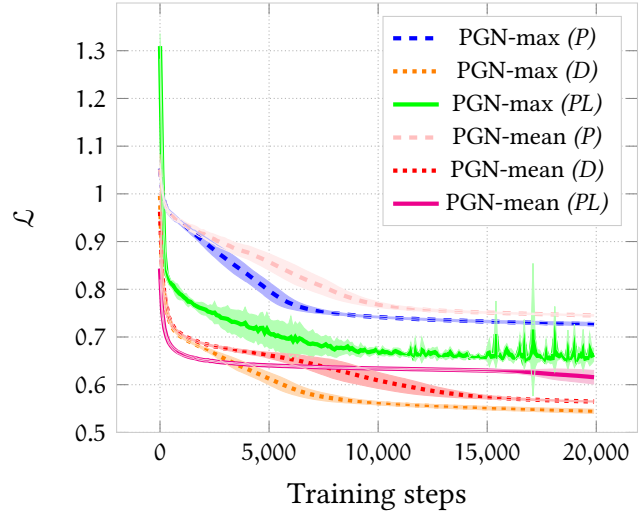
\begin{figure}
  \centering
  \begin{tikzpicture}
  \begin{axis}[
    scaled x ticks=false,
    xlabel={Training steps},
    ylabel={$\gL$},
    grid=major,
    legend entries={PGN-max \emph{(P)},
      PGN-max \emph{(D)},
      PGN-max \emph{(PL)},
      PGN-mean \emph{(P)},
      PGN-mean \emph{(D)},
      PGN-mean \emph{(PL)}},
    legend pos=north east,
    grid style={black!30, densely dotted},
    legend style={draw=black!20, fill=white, font=\footnotesize},
    axis line style={black!20},
    xtick={0, 5000, 10000, 15000, 20000},
    ymin=0.5,
    ymax=1.4,
    xticklabel style={font=\footnotesize},
    yticklabel style={font=\footnotesize},
    ytick={0.5, 0.6, 0.7, 0.8, 0.9, 1.0, 1.1, 1.2, 1.3},
]
\addplot[dashed, blue, ultra thick] table[col sep=comma, x index=0, y index=1]{gfx/contrib/dar/data/primal-max.csv};
\addplot[dotted, orange, ultra thick] table[col sep=comma, x index=0, y index=1]{gfx/contrib/dar/data/dual-max.csv};
\addplot[green, ultra thick] table[col sep=comma, x index=0, y index=1]{gfx/contrib/dar/data/pipe-max.csv};
\addplot[dashed, pink, ultra thick] table[col sep=comma, x index=0, y index=1]{gfx/contrib/dar/data/primal-mean.csv};
\addplot[dotted, red, ultra thick] table[col sep=comma, x index=0, y index=1]{gfx/contrib/dar/data/dual-mean.csv};
\addplot[magenta, ultra thick] table[col sep=comma, x index=0, y index=1]{gfx/contrib/dar/data/pipe-mean.csv};

\addplot[blue!30, name path=plus_sd, forget plot] table[col sep=comma, x index=0, y expr=\thisrowno{1}+\thisrowno{2}]{gfx/contrib/dar/data/primal-max.csv};
\addplot[blue!30, name path=minus_sd, forget plot] table[col sep=comma, x index=0, y expr=\thisrowno{1}-\thisrowno{2}]{gfx/contrib/dar/data/primal-max.csv};
\addplot[blue!30] fill between[of=plus_sd and minus_sd];

\addplot[orange!30, name path=plus_sd, forget plot] table[col sep=comma, x index=0, y expr=\thisrowno{1}+\thisrowno{2}]{gfx/contrib/dar/data/dual-max.csv};
\addplot[orange!30, name path=minus_sd, forget plot] table[col sep=comma, x index=0, y expr=\thisrowno{1}-\thisrowno{2}]{gfx/contrib/dar/data/dual-max.csv};
\addplot[orange!30] fill between[of=plus_sd and minus_sd];

\addplot[green!30, name path=plus_sd, forget plot] table[col sep=comma, x index=0, y expr=\thisrowno{1}+\thisrowno{2}]{gfx/contrib/dar/data/pipe-max.csv};
\addplot[green!30, name path=minus_sd, forget plot] table[col sep=comma, x index=0, y expr=\thisrowno{1}-\thisrowno{2}]{gfx/contrib/dar/data/pipe-max.csv};
\addplot[green!30] fill between[of=plus_sd and minus_sd];

\addplot[pink!30, name path=plus_sd, forget plot] table[col sep=comma, x index=0, y expr=\thisrowno{1}+\thisrowno{2}]{gfx/contrib/dar/data/primal-mean.csv};
\addplot[pink!30, name path=minus_sd, forget plot] table[col sep=comma, x index=0, y expr=\thisrowno{1}-\thisrowno{2}]{gfx/contrib/dar/data/primal-mean.csv};
\addplot[pink!30] fill between[of=plus_sd and minus_sd];

\addplot[red!30, name path=plus_sd, forget plot] table[col sep=comma, x index=0, y expr=\thisrowno{1}+\thisrowno{2}]{gfx/contrib/dar/data/dual-mean.csv};
\addplot[red!30, name path=minus_sd, forget plot] table[col sep=comma, x index=0, y expr=\thisrowno{1}-\thisrowno{2}]{gfx/contrib/dar/data/dual-mean.csv};
\addplot[red!30] fill between[of=plus_sd and minus_sd];

\addplot[magenta!30, name path=plus_sd, forget plot] table[col sep=comma, x index=0, y expr=\thisrowno{1}+\thisrowno{2}]{gfx/contrib/dar/data/pipe-mean.csv};
\addplot[magenta!30, name path=minus_sd, forget plot] table[col sep=comma, x index=0, y expr=\thisrowno{1}-\thisrowno{2}]{gfx/contrib/dar/data/pipe-mean.csv};
\addplot[magenta!30] fill between[of=plus_sd and minus_sd];

\end{axis}
\end{tikzpicture}
  \caption{Ford-Fulkerson validation loss on synthetic data
    (PGNs). Note how the family of primal models underperform also
    in-distribution.}
  \label{fig:dar-val-synthetic}
\end{figure}

\begin{figure}
  \centering
  \begin{tikzpicture}
  \begin{axis}[
    scaled x ticks=false,
    xlabel={Training steps},
    ylabel={$\gL$},
    grid=major,
    legend entries={PGN-max \emph{(P)}, PGN-max \emph{(D)}},
    legend pos=north east,
    grid style={black!30, densely dotted},
    legend style={draw=black!20, fill=white, font=\footnotesize},
    axis line style={black!20},
    xtick={0, 5000, 10000, 15000},
    xticklabel style={font=\footnotesize},
    yticklabel style={font=\footnotesize}
]
\addplot[blue, ultra thick] table[col sep=comma, x index=0, y index=1]{gfx/contrib/dar/data/reconstruction.csv};
\addplot[orange, ultra thick] table[col sep=comma, x index=0, y index=2]{gfx/contrib/dar/data/reconstruction.csv};

\end{axis}
\end{tikzpicture}
  \caption{Normalised loss curve of reconstructing Ford-Fulkerson with
    new encoders for $l_{ij}, d_{ij}, \rho_{ij}$. Contrarily to the
    primal model, PGN-max \emph{(D)} seems to be able to generalise
    the algorithm with the new data.}
  \label{fig:loss-reconstruction}
\end{figure}
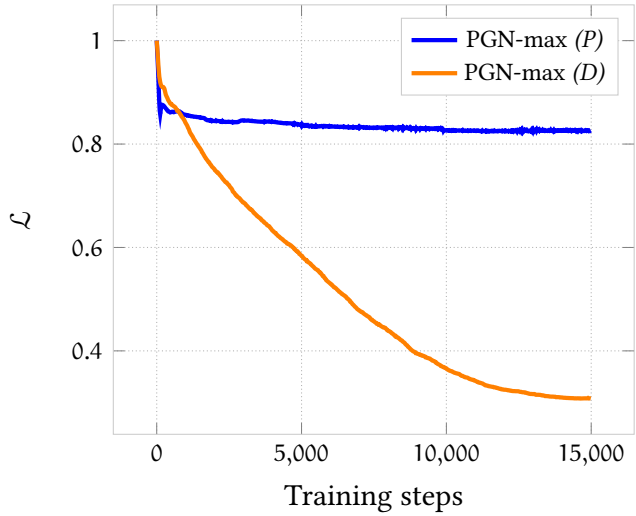

\paragraph{Quantitative analysis.}
Results regarding the neural approximation of Ford-Fulkerson are
documented in \autoref{tab:dar-synthetic-results}, employing Mean
Absolute Error (MAE) as a performance metric for evaluating final flow
assignment predictions $\mF$.  Concurrently, we measure how well the
algorithm is approximated across all steps of the algorithm (i.e.,
$\bar{\mF}^{(t)}$ column in \autoref{tab:dar-synthetic-results}). We
also report minimum cut accuracy with respect to ground truths. This
metric is only relevant for \emph{dual} and \emph{pipeline} models.
We also compare all networks to an algorithm-agnostic graph network
(marked as \emph{(NA)} in the table) which directly produces the
matrix $\mF$ without approximating Ford-Fulkerson. As clearly shown in
the table, such variant demonstrated inferior to its
algorithmically-informed counterparts, testifying the need for
algorithmic reasoning to output reliable max-flow predictions.

In our testing, the \textit{dual} PGN model utilising a max aggregator
emerged as the top-performing model, at least concerning
\textit{2-community} graphs, demonstrating the capability to also
flawlessly predict the minimum cuts of all graphs in the test set.
More in general, models that integrate dual problem prediction (i.e.,
\emph{dual} and \emph{pipeline} models) consistently surpass the
\emph{primal} models on the two test sets (\emph{OOD} and \emph{OOF}),
with respect to both the final prediction $\mF$ as well as throughout
the algorithm executions (i.e., less error on $\bar{\mF}^{(t)}$).
This implies that, indeed, dual min-cut information helps to output
more accurate max-flow predicitons. To further strenghten this claim,
note how performance on max-flow quickly degrades towards that of the
primal baseline whenever min-cut predictions are inaccurate (e.g., see
\{PGN, MPNN\}-max \emph{(PL)} on \emph{2-community} graphs). Notably,
note how \emph{(PL)} predictions are less stable on average. This
confirms prior findings of \citep{xhonneux2021transfer} that deploying
MTL when applicable yields, on average, better OOD results.

Testing OOF, however, proves to be still a challenging task even
for neural algorithmic reasoners. Even though the performance gap
enlarges for bipartite graphs (with both \textit{dual} and
\textit{pipeline} proving highly competitive), we achieve on average
higher mean error and standard deviations. For the sake of completeness,
we have to take into account that capacities here are sampled as either
0 or 1, thus amplying potential for prediction errors.

\begin{table}
  \centering
  \setlength{\tabcolsep}{2.1pt}
  \footnotesize
  \begin{tabular}{l C C C C C C}
    \toprule
    & \multicolumn{3}{c}{Community \textit{(OOD)}} & \multicolumn{3}{c}{Bipartite \textit{(OOF)}}\\
    \midrule
    \textbf{Model} & \mF (\downarrow) & \bar{\mF}^{(t)} (\downarrow) & \vc (\uparrow)& \mF (\downarrow) & \bar{\mF}^{(t)} (\downarrow)& \vc (\uparrow)\\
    PGN-max \emph{(P)} & 0.266_{\pm 0.001} & 0.294_{\pm 0.002} & -
                                      & 0.56_{\pm 0.23} & 0.82_{\pm 0.17} & - \\
    PGN-mean \emph{(P)} & 0.274_{\pm 0.001} & 0.311_{\pm 0.004} & -
                                      & 1.09_{\pm 0.47} & 1.13_{\pm 0.18} & - \\
    MPNN-max \emph{(P)}   & 0.263_{\pm 0.008} & 0.289_{\pm 0.004} & -
                                      & 0.75_{\pm 0.47} & 0.78_{\pm 0.11} & -\\
    MPNN-mean \emph{(P)} & 0.278_{\pm 0.008} & 0.313_{\pm 0.003} & -
                                      & 0.75_{\pm 0.47} & 0.92_{\pm 0.22} & -\\
    \midrule
    PGN-max \emph{(D)} & \mathbf{0.234_{\pm 0.002}} & \mathbf{0.269_{\pm 0.001}} & 100\%_{\pm 0.0}
                                      & 0.49_{\pm 0.22} & \mathbf{0.78_{\pm 0.29}} & 100\%_{\pm 0.0} \\
    PGN-mean \emph{(D)} & 0.240_{\pm 0.004} & 0.285_{\pm 0.004} & 100\%_{\pm 0.0}
                                      & 1.10_{\pm 0.30} & 1.05_{\pm 0.12} & 99\%_{\pm 0.7}  \\
    MPNN-max \emph{(D)} & 0.236_{\pm 0.002} & 0.288_{\pm 0.005} & 100\%_{\pm 0.0}
                                      & 0.71_{\pm 0.32} & 0.98_{\pm 0.22} & 100\%_{\pm 0.0} \\
    MPNN-mean \emph{(D)} & 0.258_{\pm 0.008} & \mathbf{0.268_{\pm 0.002}} & 100\%_{\pm 0.0}
                                      & 0.81_{\pm 0.09} & 1.06_{\pm 0.35} & 100\%_{\pm 0.0}\\
    \midrule
    PGN-max \emph{(PL)} & 0.256_{\pm 0.001} & 0.293_{\pm 0.003} & 61\%_{\pm 0.1}
                                      & \mathbf{0.45_{\pm 0.18}} & \mathbf{0.77_{\pm 0.26}} & 95\%_{\pm 0.1} \\
    PGN-mean \emph{(PL)} & 0.244_{\pm 0.001} & 0.304_{\pm 0.001} & 100\%_{\pm 0.0}
                                      & 0.98_{\pm 0.44} & 1.03_{\pm 0.32} & 99\%_{\pm 0.8} \\
    MPNN-max \emph{(PL)} & 0.261_{\pm 0.002} & 0.312_{\pm 0.005} & 61\%_{\pm 0.3}
                                      & \mathbf{0.47_{\pm 0.23}} & 0.95_{\pm 0.34} & 90\%_{\pm 1.1}\\
    MPNN-mean \emph{(PL)} & 0.255_{\pm 0.002} & 0.292_{\pm 0.002} & 100\%_{\pm 0.0}
                                      & 0.64_{\pm 0.35} & 0.92_{\pm 0.20} & 100\%_{\pm 0.0}\\
    \midrule
    Random & 0.740_{\pm 0.002} & - & 50\%_{\pm 0.0} & 1.00_{\pm 0.00} & - & 50\%_{\pm 0.0} \\
    PGN-max \emph{(NA)} & 0.314_{\pm 0.013} & - & - & 0.78_{\pm 0.02} & - & - \\
    \bottomrule
  \end{tabular}
  \caption{Here, we report different quantitative metrics to assess
    the performance of all different models. Columns marked with
    $(\downarrow)$ are evaluated through the Mean Absolute Error
    (MAE), whereas for columns marked with $(\uparrow)$ accuracy is
    used. $\mF$ refers to the final flow assignment,
    $\bar{\mF}^{(t)}$ refers to the intermediate flow matrices
    (error is averaged), and $\vc$ refers to minimum cut. We report
    all tested models as rows, where \textit{(P)} corresponds to
    training on max-flow only, \textit{(D)} corresponds to training
    with both primal and dual decoders and \textit{(PL)} corresponds to
    the pipeline model. Finally, \textit{(NA)} corresponds to a model
    utilising no algorithms, hence learning max-flow directly.}
  \label{tab:dar-synthetic-results}
\end{table}

Finally, we provide illustrations showcasing how application of
\autoref{alg:corrective-procedure} enforce constraint
\eqref{eq:flow-conservations}. In \autoref{fig:dar-flow-heatmaps}, we
represent negative nodes $v^-$ as blue and positive nodes $v^+$ as
red. Sources $s$ should always be ``positive'' nodes (flow is always
sent out) while sinks should be ``negative'' (flow is always
received). White nodes represent nodes that are ``neutral'' (i.e.,
algebraic sum of their in-flow and out-flow is 0), hence
\eqref{eq:flow-conservations} holds. The figure then shows nodes
violating \eqref{eq:flow-conservations} pre- and post-application
of \autoref{alg:corrective-procedure}.
\begin{figure}
    \centering
    \subcaptionbox{Flow pre-correction}{
      \includegraphics[width=.4\linewidth]{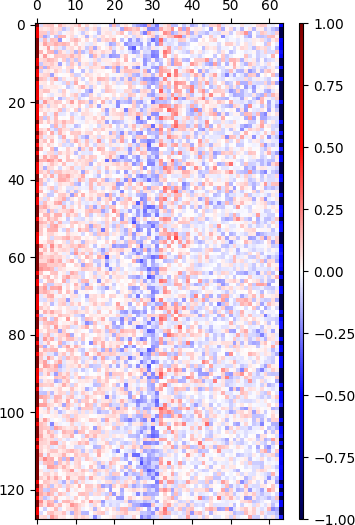}
    }
    \subcaptionbox{Flow post-correction}{
      \includegraphics[width=.4\linewidth]{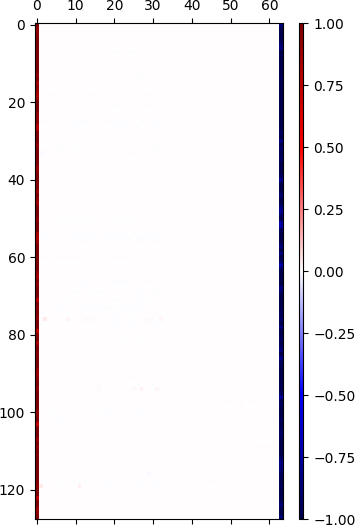}
    }
    \caption{This figure shows the (normalised) flow conservation
      error for all nodes (x axis) of each test graph (y axis). (a)
      shows the predicted $\mF$ conservation error prior
      \autoref{alg:corrective-procedure}. (b) shows the corrected
      $\mF$ conservation error.  Through
      \autoref{alg:corrective-procedure} we are able to correct all
      the inner error (white represents zero error), with $s$ and $t$
      being the only nodes which respectively send and receive flow.
    }
    \label{fig:dar-flow-heatmaps}
\end{figure}

\paragraph{Qualitative analysis.}
To further evaluate the performance of DAR, we conducted two
qualitative studies. The first one aims to assess to what extent DAR
architecture can identify and leverage strong duality
properties. Indeed, as we discussed in \autoref{sec:dar}, we know that
max-flow and min-cut are \emph{strong duals} and, as such, they share
they same optimal value (see \autoref{th:strong-duality-theorem}).
Hence, given a dual model architecture in
\autoref{tab:dar-synthetic-results} that can correctly predict the
minimum cut of a graph, this study aims at answering the following
question: \emph{can such an architecture identify the optimum max-flow
  value more accurately compared to a primal-only model?}

To investigate this property, we have to extract from $\mF$ the flow
quantity that our learnt reasoners ``believe'' to be the optimum.  We
consider such quantity as being either the predicted \textbf{out-flow}
from the source node or \textbf{in-flow} of the target node $t$, by
``marginalising'' over all $s$'s successors and $t$'s predecessors,
respectively. Thus, we consider the maximum (in absolute value) of the
two quantities. We denote this value as $f_s$ and compute it as
follows:
\begin{equation}\label{eq:f-optimum-nn}
  f_s = \max(\sum_{(s, j) \in \gE} \mF_{sj},|\sum_{(j, t) \in \gE} \mF_{jt}|).
\end{equation}
Note that, in doing so, we are ignoring potential violations of
constraint \eqref{eq:flow-conservations}. However, our objective here
is to measure how well the network can guess the optimal value.
Hence, we compare $f_s$ to the ground truth optimal max-flow value
$f^*_s$, which is computed exactly as \eqref{eq:f-optimum-nn} but on
the ground-truth $\mF^*$. Precisely, we measure the discrepancy as MAE
and report it in \autoref{tab:dar-qualitative-results-1}. From the
presented results, one can immediately draw a very interesting
conclusion: DAR models can effectively leverage the fact that the two
learning targets share the same optimum value. This is testified by
noting that DAR models, especially the ones trained jointly (i.e.,
\emph{(D)}), achieve an error of $\approx 0.34$ from the optimum,
which is one order of magnitude lower than primal models (i.e.,
$\approx 7.86$). This analysis strengthens our claim that a DAR model
can effectively transfer knowledge from the dual to the primal
problem, leading to solutions that are both more accurate (see
\autoref{tab:dar-synthetic-results}) and of higher quality. This also
finds additional support in the observation that both primal
architectures and dual architectures, where the minimum cut results
are inferior, deviate significantly from the optimal solution.  This
can be seen by comparing the PGN-max \emph{(PL)} minimum-cut results
in \autoref{tab:dar-synthetic-results} and the corresponding MAE value
in \autoref{tab:dar-qualitative-results-1}.
\begin{table}
  \centering
  \footnotesize
  \setlength{\tabcolsep}{2pt}
  \begin{tabular}{l c c c c c c}
    \toprule
    &\multicolumn{2}{c}{\emph{(P)}}
    &\multicolumn{2}{c}{\emph{(D)}}
    &\multicolumn{2}{c}{\emph{(PL)}}\\
    \textbf{Metric}
    & \textbf{PGN-max} & \textbf{PGN-mean}
    & \textbf{PGN-max} & \textbf{PGN-mean}
                       & \textbf{PGN-max} & \textbf{PGN-mean}
    \\
    \midrule
    $|f_s - f^*_s| (\downarrow)$ & $7.86_{\pm 0.47}$ & $8.68_{\pm 0.21}$
    & $\mathbf{0.34_{\pm 0.04}}$ & $0.41_{\pm 0.01}$
                       & $7.58_{\pm 0.10}$ & $0.38_{\pm 0.08}$ \\
    \bottomrule
  \end{tabular}
  \caption{Qualitative analysis. Here, $f^*_s$ is the ground truth
    maximum flow value. $f_s$, instead, is the quantity that our
    learnt reasoners predict to be the amount of flow \textbf{exiting}
    the source node $s$ (i.e.,
    $f_s = \sum_{(s, j) \in \gE} \mF_{sj}$). For simplicity, we only
    report results of the best-performing models (PGNs).}
  \label{tab:dar-qualitative-results-1}
\end{table}
The second qualitative study, instead, is embeddings-centric. Here,
our objective is to assess the degree to which $f^*_s$ can be linearly
decodable from the learnt node representations. Ideally, this would
provide clear indication of which model representations carry the most
information. We setup this study as follows. First, we neurally
execute Ford-Fulkerson through all PGN-based models in
\autoref{tab:dar-synthetic-results}, and collect learnt node
representations $\vh_v$. Thus, we compute a graph representation
$\vh_g$ as follows:
\begin{equation*}
  \vh_g = \max(\set{\vh_v \mid v \in \gV}).
\end{equation*}
Finally, we learn linear mappings $\vh_g \rightarrow f^*_s$ and
compute the $R^2$ score to evaluate the quality of the
predictions. \autoref{tab:dar-qualitative-results-2} provide clear
indications that \emph{dual} representations are more expressive,
since $f^*_s$ can be decoded linearly almost perfectly (i.e.,
$\approx 0.99$ on average on the two graph distributions).
\begin{table}
  \centering
  \scriptsize
  \setlength{\tabcolsep}{2pt}
  \begin{tabular}{l c c c c c c}
    \toprule
    & \multicolumn{2}{c}{\emph{(P)}}
    & \multicolumn{2}{c}{\emph{(D)}}
    & \multicolumn{2}{c}{\emph{(PL)}}\\
    \textbf{Graphs}
    & \textbf{PGN-max} & \textbf{PGN-mean}
    & \textbf{PGN-max} & \textbf{PGN-mean}
    & \textbf{PGN-max} & \textbf{PGN-mean}\\
    \midrule
    \textit{community} $(\downarrow)$ & $0.69_{\pm 0.12}$ & $-0.59_{\pm 0.33}$
    & $\mathbf{1.00_{\pm 0.00}}$ & $0.93_{\pm 0.01}$
                       & $0.80_{\pm 0.05}$
    & $-0.13_{\pm 0.29}$ \\
    \textit{bipartite} $(\downarrow)$ & $-0.79_{\pm 0.81}$ & $-115.6_{\pm 64}$
    & $\mathbf{0.98_{\pm 0.01}}$ & $-115.2_{\pm 64}$
                       & $-0.69_{\pm 0.33}$
    & $\mathbf{0.98_{\pm 0.01}}$ \\
    \bottomrule
  \end{tabular}
  \caption{$R^2$ score of predicting the maximum flow value from
    learnt graph representations $\vh_g$. $R^2$ metric emits score in
    ($-\infty, 1$], with $1$ being the best possible score.}
  \label{tab:dar-qualitative-results-2}
\end{table}

\subsubsection{Brain Vessel Graph benchmark} \label{sec:dar-bvg-exp} In this
section, we prove for the first time the real-world utility of neural
algorithmic reasoners, in particular of DAR architectures.  Here, we
target tranferring algorithmic knowledge to a task where prior
knowledge of max-flow might be helpful, providing a practical
instatiation of the discussions details in
\autoref{sec:nar-motivation}. In the following, we will be providing a
thorough description of the real-world benchmark we used as well as
experimental details.

\paragraph{Benchmark description.}
We consider the Brain Vessel Graph (BVG) benchmark
\citep{paetzold2021whole}. Here, graphs represent mice's brains, where
nodes represent bifucartion points and edges represent vessels. The
objective is to perform \textbf{edge classification} task. In
particular, edges need to be classified in each of the following three
categories: \emph{capillaries}, \emph{veins} and
\emph{arteries}. Equivalently, we can frame this task as conducting
blood-flow simulation\footnote{This can be seen by considering that
  the more blood can traverse an edge, the more likely the edge is to
  be an artery rather than a capillary}. Hence, a network that is able
to \emph{simulate} blood-flow simulation is likely to be at advantage
compared to a simpler model. This represents the ideal test case for
our DAR architectures.

Furthermore, for each edge we have access to the following input
feature list:
\begin{enumerate}
\item[$l_{uv}$]: which is the length of the vessel.
\item[$d_{uv}$]: which is the shortest distance between the two
  bifurcation points $u$ and $v$.
\item[$\rho_{uv}$]: which is the curvature of the vessel.
\end{enumerate}
Notably, we do not have access to the capacity of the vessels. This
makes direct application of the Ford-Fulkerson algorithm inapplicable
to conduct ``predictions'' on these types of data. This is a scenario
where algorithms do exhibit the limitations discussed in
\autoref{sec:nar-motivation}.

Additionally, the classification task is highly imbalanced (i.e.,
$95\%$ of samples are capillaries, $4\%$ veins and only $1\%$
arteries), posing an additional learning challenge.

We test our models on three large-scale BVG graphs: CD1-E-1 (the
largest, with 5,791,309 edges); CD1-E-2 (2,150,326 edges); and CD1-E-3
(3,130,650 edges). We also exploit a validation synthetic brain vessel
graph provided by the BVG benchmark, comprising 3159 nodes and 3234
edges.

\paragraph{Transfer algorihtmic knowledge to BVG.}
We employ the PF transfer learning strategy discussed in
\autoref{sec:transfer}, with slight modifications, to re-use
algorithmic knowledge lying in our processors. Specifically, since we
have access to a validation synthetic graph from BVG, we can attempt
at re-learning to execute Ford-Fulkerson on such data, even if the
capacity is no longer an input feature. To do that, we perform
the following steps:
\begin{itemize}
\item Extract and freeze the PGN-max processors $p$ from the models
  trained on synthetic data of \autoref{sec:dar-synthetic-exp}.
\item Drop the encoders $f$, but \textbf{keep} the decoders $g$.
\item Introduce a new encoder $\tilde{f}$.
\item Learn to approximate Ford-Fulkerson on the validation BVG graph
  by optimising:
  \begin{equation*}
    \argmin_\phi \gL\left(\mathsc{ff}(\vx),
      (g \circ p \circ \tilde{f})(\vx \mid \phi)\right),
  \end{equation*}
  where \textsc{ff} represent the Ford-Fulkerson algorithm and $\phi$
  are the parameters of the new encoder.
\end{itemize}
Through the above steps, we effectively \textit{reconstruct} the
execution of the Ford-Fulkerson algorithm on the new data. Precisely,
the model is tasked to learn to combine the new inputs in a sensible
way to estimate the edge flows and, consequently, the capacity of the
edges (i.e., our main target in the BVG task). For reconstruction
purposes, source and sink nodes $s, t$ are chosen as two random nodes
whose shortest distance is equal to the diameter of the graph. We
train to reconstruct the algorithm for 15000 epochs, with Adam
optimiser \citep{kingma2015adam} and learning rate 1e-5.
\autoref{fig:loss-reconstruction} compares the loss curves for the
primal and DAR models.

Finally, we perform one Ford-Fulkerson step on the three target
graphs, CD1-E-1, CD1-E-2 and CD1-E-3 and extract the learnt
node representations. We then obtain edge representations of edges
$(u,v)$ as:
\begin{equation*}
  \vh_{uv} = \vh_u + \vh_v.
\end{equation*}
These representations will constitute an additional input feature for
the neural networks trained to solve the BVG task. Note how this
approach enables us to easily dump the embeddings, as no further
training will occur on such models.

\paragraph{Neural architectures.}
We have selected graph networks from the BVG benchmark paper as our
baseline models.

These include Graph Convolutional Networks (GCNs)
\citep{kipf2017semi}, GraphSAGE \citep{hamilton2017inductive}, and
ClusterGCN \citep{chiang2019cluster} with GraphSAGE convolution
(C--SAGE). We augment such networks with extra input features
$\vh_{uv}$ obtained as explained in the previous paragraph.
Additionally, we also utilise a simple linear classifier (LC) to
assess the quality of the $\vh_{uv}$ representations. We also train a
linear classifier on the set of original input features
$l_{ij}, d_{ij}, \rho_{ij}$, to assess whether $\vh_{uv}$ add
information compared to the basic features.

As an additional sanity check, we also train Node2Vec
\citep{grover2016node2vec} embeddings on each of the three datasets
and use them to augment the GN architectures.

\paragraph{Model selection.}
Here, we consider the optimal BVG hyperparameters reported in
\citep{paetzold2021whole}. For the CD1-E-1 graph and the C--SAGE
model, we performed minimal model selection as the optimal
hyperparameters for C--SAGE did not yield good results.
Specifically, we optimised \textit{number of layers} and
\textit{number of hiddens} by grid-searching over \{2, 3, 4\} and
\{64, 128\}, respectively. For consistency, all models are trained
with early stopping applied after 300 epochs, using the
hyperparameters reported in \autoref{tab:dar-bvg-hp}.
\begin{table}
  \centering
  \footnotesize
  \setlength{\tabcolsep}{3.25pt}
  \begin{tabular}{l | l l l l}
    \toprule
    Dataset & GCN & SAGE & C--SAGE & Node2Vec\\
    \midrule
    \multirow{5}{4em}{CD1-E-3} & lr=$3 \cdot 10^{-3}$
                  & lr=$3 \cdot 10^{-3}$ & lr=$3 \cdot 10^{-3}$ & lr=$1 \cdot 10^{-2}$\\
            & no. layers=3 & no. layers=4 & no. layers=4 & walk length=40 \\
            & no. hiddens=256 & no. hiddens=128 & no. hiddens=128 & walks per node=10 \\
            & dropout=0.4 & dropout=0.4 & dropout=0.2 & no. hiddens=128 \\
            & epochs=1500 & epochs=1500 & epochs=5000 & epochs=5 \\
    \midrule
    \multirow{5}{4em}{CD1-E-2} &
                  & & lr=$3 \cdot 10^{-3}$ &\\
            & & & no. layers=4 &  \\
            & {as above} & {as above} & no. hiddens=128 & {as above}\\
            & & & dropout=0.2 & \\
            & & & epochs=1500 & \\
    \midrule
    \multirow{5}{4em}{CD1-E-1} &
                  & & lr=$3 \cdot 10^{-3}$ & \\
            & & & no. layers=3 & \\
            & {as above} & {as above} & no. hiddens=64 & {as above}\\
            & & & dropout=0.2 &\\
            & & & epochs=1500 & \\
    \bottomrule
  \end{tabular}
  \caption{Optimal hyperparameters used for the BVG benchmark. Note
    that C--SAGE needs more epochs for CD1-E-3.}
  \label{tab:dar-bvg-hp}
\end{table}

\paragraph{Quantitative analysis.}
We perform rigorous and thorough testing of all combinations of the
the involved models (i.e., GNs only, GNs with \emph{primal}
embeddings, GNs with \emph{dual} embeddings, and GNs with \emph{N2V}
embeddings).  Given the highly imbalanced nature of the learning
problem, we employ the balanced accuracy score, which represents the
average recall for each class, and the area under the ROC curve as our
metrics to assess performance. Results are showcased in
\autoref{tab:dar-bvg-results-1}.

First, consider performance related to linear classifiers (LC).  It is
evident that the representations $\vh_{uv}$ generated by the
algorithmic reasoners, both the primal and dual versions, carry
valuable information. This is demonstrated by an average increase of
$16.6\%$ in balanced accuracy and $10.5\%$ in ROC across the three
datasets when compared to using simple features
$[l_{uv}, d_{uv}, \rho_{uv}]$ alone. Notably, dual representations
still outperform their primal counterparts, as a possible consequence
of a better algorithm reconstruction (depicted in
\autoref{fig:loss-reconstruction}). Lastly, LC's performance also
serves as a clear indicator of the successfull knowledge transferring
from the synthetic world to \textit{natural} real-world data.

Moving forward, augmented GN architectures significantly improves on
their vanilla (i.e., non-algorithmically informed) versions.  Among
these, C--SAGE with dual embeddings (i.e., C--SAGE \emph{(D)})
consistently achieves the highest performance across all three
datasets, displaying a consistent performance advantage, especially
for CD1-E-3 and CD1-E-2. More interestingly, the computed dual
$\vh_{uv}$ demonstrated to be more informative than N2V embeddings in
this task. Remarkably, N2V representations are directly trained on
CD1-E-\emph{X} data, whereas DAR is trained to reconstruct the
algorithm on validation (and much smaller) BVG data. Note that N2V
computes statistics unsupervisedly through random walks, in order to
capture global information of the graph. Hence, although N2V
embeddings are still valuable (as testified by the improvements w.r.t
vanilla GNs), DAR embeddings possess more principled flow information,
which better suits the BVG task. We also run an additional experiment
concatenating N2V and DAR embeddings, as an attempt to verify whether
the two representations encode information of ``different natures'' --
\emph{structural} (N2V) and \emph{flow-based} information (DAR).
We record higher results for these combined embeddings compared to
using N2V and DAR separately (especially for CD1-E-2), thus empirically confirming our
conjecture. These results are reported in
\autoref{tab:dar-bvg-results-2}.

\begin{table}
  \centering
  \footnotesize
  \setlength{\tabcolsep}{3pt}
  \begin{tabular}{l c c c c c c}
    \toprule
    & \multicolumn{2}{c}{CD1-E-3 $(\uparrow)$} & \multicolumn{2}{c}{CD1-E-2 $(\uparrow)$} & \multicolumn{2}{c}{CD1-E-1 $(\uparrow)$} \\
    \textbf{Model} & \textbf{Bal. Acc.} & \textbf{ROC} & \textbf{Bal. Acc.} & \textbf{ROC} & \textbf{Bal. Acc.} & \textbf{ROC} \\
    \midrule
    LC & $39.3\%_{\pm 0.2}$ & $52.3\%_{\pm 0.6}$ & $36.9\%_{\pm 0.5}$ & $55.9\%_{\pm 0.1}$ & $45.5\%_{\pm 0.1}$ & $61.7\%_{\pm 0.0}$\\
    LC \textit{(N2V)} & $43.9\%_{\pm 0.2}$ & $55.5\%_{\pm 0.1}$  & $71.9\%_{\pm 0.1}$ & $62.6\%_{\pm 0.0}$ & $46.1\%_{\pm 0.1}$ & $60.0\%_{\pm 0.0}$\\
    LC \textit{(P)} & $48.6\%_{\pm 0.4}$ & $59.4\%_{\pm 0.4}$ & $58.7\%_{\pm 0.1}$ & $63.8\%_{\pm 0.2}$ & $45.3\%_{\pm 0.1}$ & $59.9\%_{\pm 0.1}$\\
    LC \textit{(D)} & $53.8\%_{\pm 0.3}$ & $66.2\%_{\pm 0.2}$ & $67.3\%_{\pm 0.1}$ & $71.8\%_{\pm 0.0}$ & $48.1\%_{\pm 0.5}$ & $62.1\%_{\pm 0.3}$\\
    \midrule
    GCN & $58.1\%_{\pm 0.5}$ & $67.9\%_{\pm 0.2}$ & $74.6\%_{\pm 1.7}$ & $78.7\%_{\pm 0.1}$  & $59.0\%_{\pm 0.2}$ & $67.9\%_{\pm 0.2}$ \\
    \midrule
    SAGE & $63.5\%_{\pm 0.2}$ & $70.9\%_{\pm 0.3}$ & $73.9\%_{\pm 0.6}$ & $82.5\%_{\pm 0.2}$  & $64.7\%_{\pm 0.7}$ & $74.2\%_{\pm 0.2}$\\
    SAGE \textit{(N2V)} & $65.0\%_{\pm 0.1}$ & $71.9\%_{\pm 0.1}$ & $84.1\%_{\pm 1.9}$ & $82.5\%_{\pm 0.4}$  & $65.9\%_{\pm 0.1}$ & $74.8\%_{\pm 0.1}$  \\
    SAGE \textit{(P)} & $64.5\%_{\pm 0.2}$ & $72.0\%_{\pm 0.2}$  & $83.8\%_{\pm 0.4}$ & $83.7\%_{\pm 0.4}$ & $66.2\%_{\pm 0.5}$ & $74.8\%_{\pm 0.3}$ \\
    SAGE \textit{(D)} & $66.7\%_{\pm 0.4}$ & $75.0\%_{\pm 0.2}$ & $85.2\%_{\pm 0.1}$ & $85.5\%_{\pm 0.2}$ & $66.4\%_{\pm 0.3}$ & $74.8\%_{\pm 0.1}$ \\
    \midrule
    C--SAGE & $68.6\%_{\pm 0.8}$ & $74.2\%_{\pm 0.5}$ & $81.8\%_{\pm 0.5}$ & $85.6\%_{\pm 0.2}$ & $59.3\%_{\pm 0.9}$ & $68.3\%_{\pm 0.5}$\\
    C--SAGE \textit{(N2V)} & $68.6\%_{\pm 0.2}$ & $74.1\%_{\pm 0.1}$ & $84.8\%_{\pm 0.2}$ & $84.8\%_{\pm 0.5}$ & $67.4\%_{\pm 0.6}$ & $\mathbf{75.9\%_{\pm 0.2}}$\\
    C--SAGE \textit{(P)} & $67.3\%_{\pm 0.2}$ & $73.6\%_{\pm 1.9}$ & $82.5\%_{\pm 1.9}$ & $84.0\%_{\pm 1.5}$ & $67.7\%_{\pm 0.1}$ & $\mathbf{75.8\%_{\pm 0.2}}$\\
    C--SAGE \textit{(D)} & $\mathbf{70.2\%_{\pm 0.2}}$ & $\mathbf{76.3\%_{\pm 0.1}}$ & $\mathbf{85.6\%_{\pm 0.2}}$ & $\mathbf{86.7\%_{\pm 0.3}}$ & $\mathbf{68.1\%_{\pm 0.2}}$ & $\mathbf{75.8\%_{\pm 0.1}}$\\
    \bottomrule
  \end{tabular}
  \caption{BVG results. LC refers to a linear classifier, whereas
    $(\cdot)$ brackets indicate that the models takes in additional
    features from either Node2Vec \emph{(N2V)}, primal PGN-max
    \emph{(P)} and dual PGN-max \emph{(D)}.}
  \label{tab:dar-bvg-results-1}
\end{table}

\begin{table}
  \centering
  \footnotesize
  \setlength{\tabcolsep}{3pt}
  \begin{tabular}{l c c c c c c}
    \toprule
    & \multicolumn{2}{c}{CD1-E-3 $(\uparrow)$} & \multicolumn{2}{c}{CD1-E-2 $(\uparrow)$}  & \multicolumn{2}{c}{CD1-E-1 $(\uparrow)$}  \\
    \textbf{Model} & \textbf{Bal. Acc.} & \textbf{ROC} & \textbf{Bal. Acc.} & \textbf{ROC} & \textbf{Bal. Acc.} & \textbf{ROC} \\
    \midrule
    LC \emph{(D, N2V)} & $53.9\%_{\pm 0.1}$ & $66.3\%_{\pm 0.1}$ & $77.8\%_{\pm 0.1}$ & $74.3\%_{\pm 0.1}$ & $48.2\%_{\pm 0.2}$ & $62.1\%_{\pm 0.1}$\\
    SAGE \emph{(D, N2V)} & $70.1\%_{\pm 0.3}$ & $76.5\%_{\pm 0.1}$ & $\mathbf{89.2\%_{\pm 0.2}}$ & $\mathbf{86.3\%_{\pm 0.2}}$ & $\mathbf{67.5\%_{\pm 0.2}}$ & $\mathbf{76.0\%_{\pm 0.1}}$ \\
    C--SAGE \emph{(D, N2V)} & $\mathbf{70.4\%_{\pm 0.1}}$ & $\mathbf{76.7\%_{\pm 0.1}}$ & $88.3\%_{\pm 0.3}$ & $85.0\%_{\pm 0.3}$ & $67.0\%_{\pm 0.2}$ & $75.8\%_{\pm 0.3}$\\
    \bottomrule
  \end{tabular}
  \caption{BVG results following concatenation of N2V and embeddings.}
  \label{tab:dar-bvg-results-2}
\end{table}

Finally, a last note on DAR/NAR generalisation capabilities. Recall
that these networks were only initially trained on graphs with 16
nodes and are able to produce meaningful representations for graphs
with millions of nodes, being able to provide a clear performance
advantage over previously SOTA baselines.

\section{Learning CO-approximation via algorithmic primitives}
\label{sec:conar}
As explained in \autoref{sec:co}, solving combinatorial optimisation
problems is an extremely important long-standing challenge, studied by
theoretical computer science either by \textit{exact} solving methods
\citep{balas1983branch} or approximated solution
\citep{karlin2022slightly}. As explained in
\autoref{sec:nar-motivation}, the ML community has also interest in
targeting combinatorial problems \citep{vinyals2015pointer,
  bello2017neural, kool2019attention, joshi2022learning}, mainly under
the lens of learning approximation algorithms (i.e., heuristics) for
getting ``good'' solutions in an acceptable amount of time --
the Neural Combinatorial Optimisation (NCO) field.

The ultimate NCO goal is to learn heuristics that can surpass the
performance of manually-designed CO algorithms, typically involved in
theoretical computer science.  Notably, previous research addresses
NCO by primarily focusing on either supervised learning
\citep{vinyals2015pointer, bello2017neural, joshi2022learning} or
reinforcement learning \citep{kool2019attention}. However, despite the
existence of inherent ``algorithmic'' solutions for numerous
combinatorial problems (see also \autoref{sec:graph-algos}), no
studies have yet explored the integration of algorithmic knowledge in
this context.  Precisely, in \autoref{sec:graph-algos} we discussed
how the Christofides' algorithm is heavily based on algorithmic
primitives, such as matching and MST procedures. Such observation
supports NAR's view that algorithms are useful knowledge in the
context of NCO.

In this section, therefore, we target another important ML and NAR
application, which is attempting at leveraging algorithmic primitives
for learning combinatorial optimisation problems (see
\autoref{sec:nar-motivation}). Specifically, we target transfer
learning from algorithms that solve problems in P to NP-hard problems
(refer to \autoref{sec:complexity-classes}), focusing especially on
the TSP (\autoref{prob:tsp}) and the VKC problem
(\autoref{prob:vkc}).

\subsection{Selecting relevant primitives}
By now, it is clear that we intend to pre-train neural solvers on a set of
algorithms first, in order to leverage such algorithmic knowledge to better
perform when targeting CO. However, we have not yet discussed \emph{which}
algorithms we aim to use in the pre-training phase.

Here, we \textbf{prioritise} algorithms that show degrees of
\emph{relevance} for the target combinatorial problem of interest
(i.e., in this section, either TSP or VKC), where relevance is
defined, informally, as follows.
\begin{definition}[Algorithm relevance]
  \label{def:algo-relevance}
  Consider a combinatorial problem $\gP$, an approximation algorithm
  $\tilde{A}$ for $\gP$ and decomposition of $\tilde{A}$ in
  its sub-routines (i.e.,
  $\tilde{A} = \tilde{A}_1 \circ \dots \circ \tilde{A}_n$). Now,
  consider any algorithm $A$. Thus, $A$ is said to be ``relevant'' for
  $\gP$ if $A$ is one of $\tilde{A}$'s sub-routines
  (i.e., $A = \tilde{A}_i$ for any $i$).
\end{definition}
Hence, we have a natural way of discerning which algorithms are
relevant for TSP and VKC, by investigating two well-known heuristics:
the Christofides algorithm for TSP (see \autoref{alg:christofides}),
and the Gon algorithm (see \autoref{alg:gon-algorithm}) for VKC.

However, despite its importancy, relevancy is not the only property
that should be taken into account. Indeed, as we will see in the
experimental section, selecting algorithms for which our ``base''
reasoner struggles to learn is highly detrimental for the subsequent
CO learning. Furthermore, \cite{ibarz2022generalist} clearly show the
benefits of learning also un-related algorithms together.  For such
reasons, while prioritising relevant algorithms we investigate usage
of other algorithms as well. We detail the final choice of algorithms
in the two following paragraphs, dedicated to TSP and VKC,
respectively.

\paragraph{TSP.}
Here, we take into account that the TSP is often framed in a euclidean
geometrical space setting (i.e., often $\sR^2$), from which we derive
a fully-connected undirected graph. Hence, the set of algorithms we
choose for pre-training have to perform well on fully-connected
graphs. As the TSP is in fact a geometric graph, computational
geometry algorithms in the CLRS-30 benchmark
\citep{velickovic2022clrs} are also considered. We also plan to learn
a heuristic for TSP that runs in linear time (i.e., $O(|\gV|)$), hence
we monitor performance of the candidate algorithms w.r.t the number of
iterations they perform (on average) and discard those with
\emph{long} rollouts (e.g., Dijkstra).

In light of the above considerations, we identify a set of 3
algorithms -- Bellman-Ford (\autoref{alg:bf}), Prim's algorithm
(\autoref{alg:prim}) and Graham's scan \citep{cormen2009introduction}
for convex hull. Regrettably, upon initial testing, the inclusion of
Graham's scan resulted in poor performance, leading us to exclude it
from our finalised choice. This might be linked to the lack of
geometrical invariances in NAR architectures
\citep{bronstein2021geometric}.

\paragraph{Vertex k-center problem.} Similar discussion regarding
algorithm performance still holds for selectin VKC-related algorithms.
Notably, upon investigation of the Gon algorithm's sub-routines we
derive that our final model needs knowledge of shortest paths between
nodes (i.e., to effectively select the ``farthest'' node from the
centers), and greedy algorithms. The final choice for VKC is then:
Bellman-Ford, Minimum finding, greedy activity selection
\citep{gavril1972algorithms} and greedy task scheduling algorithms
\citep{lawler2001combinatorial}.

\subsection{Model architecture.}

Similarly to previously presented work, the used architecture here is
still an encode-process-decode (\autoref{sec:epd}), with linear
encoders and decoders and a MPNN-max architecture. However, here we
add a gating mechanism as in \citep{ibarz2022generalist} to let the
processor update only the node representations that it deems important.
Hence, we introduce a gating network $f_{\mathsc{gat}}$  which modifies
representations $\vh_v$ as follows:
\begin{align*}
  \vg^{(t)}_v &= f_{\mathsc{gat}}(\vz^{(t)}_v, \vh^{(t)}_v),\\
  \vh^{(t)}_v &= \sigma(\vg^{(t)}_v) \odot \vh^{(t)}_v + (1 - \sigma(\vg^{(t)}_v))  \odot \vh^{(t)}_v,
\end{align*}
where $\vz^{(t)}_v$ are encoded inputs as in \autoref{sec:epd} and
$\sigma$ is the logistic function. We also tested a variant that
computes and keeps track of the evolution of the edge representations
$\vh^{(t)}_{uv}$.

\paragraph{Decoding CO solutions.}
Finally, since we are targeting combinatorial problems it is important
to ensure the correctness of the solutions.  For instance, a solution
to a TSP problem has to be a permutation of nodes (see
\autoref{def:hamiltonian-path} and \autoref{prob:tsp}), whereas for
VKC we should choose exactly $k$ nodes. However, neural networks might
generate outputs that do not comply with such constraint (similarly
to the max-flow scenario discussed in \autoref{sec:dar}).

Hence, for the TSP we let our reasoner generate a \emph{matrix of
  pointers} $\mP \in \sR^{|\gV| \times |\gV|}$, where each row $\vp_v$
corresponds to the vector of predecessors used to represent outputs of
shortest path algorihms (see \autoref{sec:graph-algos}). In
particular, $\vp_v$ defines a probability distribution over
neighbours, where $p_{vu}$ defines the likelihood of $u$ being the
predecessor of $v$. Therefore, we use a beam search decoding scheme
\citep{freitag2017beam, joshi2022learning} to identify the most likely
set of pointers that constitute a valid TSP solution.

For VKC, instead, we simply sort the neural network's predictions
in order to select the top $k$ nodes that constitute the set $C$.

\subsection{Experimental evaluation}
In this section, we provide all information for replicating our
experiments.

\paragraph{Model selection.}
Here, we select the best hyperparameters for our reasoners, provided
by the CLRS benchmark in \citep{velickovic2022clrs}. Hence, we use
hidden dimension of 128, learning rate of 0.0003 with no weight decay.
We optimise the networks using Adam \citep{kingma2015adam} with a
batch size of 64.

In the pre-training phase, we let algorithmic reasoners be optimised
for 100 epochs. In the transfer learning settings, we experimented
with every modality discussed in \autoref{sec:transfer} (i.e., PF,
PFT, MST, 2PROC). Hence, we fine-tune our models (or
encoders/decoders) for 20 epochs on TSP and 40 epochs for the Vertex K
Center. We report all transfer learning results averaged across 5
different seeds.

\paragraph{Data generation.}
The data generation pipeline for algorithms is similar to the
previously presented contributions. However, as we are dealing with
full graphs and Erdős--Rényi graphs at a later stage (i.e., TSP and
VKC, respectively), we choose to learn algorithmic reasoners on a
training distribution that matches that of the downstream task.  We
sample TSP graphs by randomly selecting points in a 2D space,
restricted to $[0,1]$, and assigning edge weights based on the
euclidean distance between the nodes. On the other hand, VKC-related
graphs are generated following the Erdős--Rényi distribution, setting
$p=0.5$. Following prior works \citep{ibarz2022generalist,
  mahdavi2023towards}, we enhance the traninig data by sampling
varying grah sizes, choosing uniformly in $[8, 16]$.  Validation data
are 16-node graphs (fixed size), whereas test graphs comprehend
64-node graphs (always of fixed size).

For the target TSP task, we consider larger datasets of graphs of
varying size. Notably, learning to generalise in NCO tasks typically
requires large amount of data.  Specifically, we sample $100,000$
training graphs of each of the sizes $[10, 13, 16, 19, 20]$ ($500 000$
total). It is important to point out that our training set size, with
respect to both number of samples and graph sizes, is considerably
smaller of previous works. Indeed, prior work
\citep{khalil2017learning, kool2019attention, joshi2022learning} used
training graphs that may have even more than $100$
nodes. Additionally, \cite{kool2019attention} and
\cite{joshi2022learning} are less data-efficient, using training sets
from several order of magnitude bigger than ours (the former), to
twice as much as our data (the latter).

To perform validation, we create a set of $100$ graphs, each of size
$20$. Additionally, we generate several test sets with the intentions
of assessing our models OOD. Thus, test sets exhibit the following
sizes: $[40, 60, 80, 100, 200, 1000]$, comprising
$[1000, 32, 32, 32, 32, 4]$ graphs for each size, respectively. To
compute the optimal ground-truth tours, we utilize the Concorde
solver, as detailed in \cite{applegate2006concorde}.

For the target VKC task, instead, training, validation and test sets
have the same configurations as those generated for TSP. To generate
ground-truth solutions, however, here we rely on the Gurobi
\citep{gurobi} solver. Specifically, we generate solutions involving
$k=5$ centers.

\paragraph{Ablation studies.}
To thoroughly evaluate the benefits gained by the algorithmic
knowledge, we conducted several ablation studies as well as
comparisons against various baselines (both classical algorithms and
other neural models). In detail, we test the performance of the learnt
reasoner with and without the pre-training phase, using the same set
of hyperparameters.

For both TSP and VKC, we compare against various algorithmic
approaches. TSP baselines comprehend: the Christofides heuristic
(\autoref{alg:christofides}), a beam search approach with beam width
of 1280 and a greedy algorithm that, given a node $v$, selects the
node $u$ which is the closest to $v$ to expand the current solution.
VKC baselines, instead, include: the Gon algorithm and the critical
dominating set (CDS) heuristic \citep{garciadiaz2017when}.

For TSP, we also test the performance against a
reinforcement learning \citep{sutton2018reinforcement} based neural
model \citep{joshi2022learning}.

Finally, we monitor and report the optimality $\textsc{gap}$ metric
(also used in \autoref{sec:neural-planning}) as our performance metric.

\paragraph{Results analysis.}
Our main results are presented in
\autoref{tab:conar-synthetic-results-1}, for TSP, and in
\autoref{tab:conar-synthetic-results-2}, for VKC. In the TSP
\autoref{tab:conar-synthetic-results-1}, subscripts refer to transfer
learning methods presented in \autoref{sec:transfer}, with the
addition of the subscript ``ueh'' which indicates that the model
learns and \textbf{u}pdates \textbf{e}dge \textbf{h}idden
representations. Models with no subscripts are directly trained to
solve the TSP/VKC, without algorithmic pre-training. Finally, we
report results related to the PF strategy only visually in
\autoref{fig:freeze-vs-finetune}, and do not include them in the
tables as employing such strategy failed to produce solutions that
generalise.

Commencing our analysis of TSP results, we observe that, in nearly all
instances, pretraining our model to execute algorithms yields improved
performance, with the transfer PFT (i.e., pre-train and fine-tune)
setting that exhibits the best performance. This is a surprising
result and goes in contrast to common choices of using PF
\citep{velickovic2021neural, deac2021neural, numeroso2023dual,
  cappart2023combinatorial}, 2PROC or MST strategies
\citep{xhonneux2021transfer, ibarz2022generalist}. However, we recall
that our task requires transferring knowledge from algorithms and
problems that are in P to NP-hard. Hence, in our settings the
target task $\gT$ is inherently more difficult than the base task
$\gB$ of learning P algorithms. Hence, our results suggest that in
order to implement such transfer successfully, a reasoner might need
to use $\gB$-representations only as a starting point. Indeed,
pre-trained reasoners learn representations from which we can
``easily'' decode steps of, for instance, Prim and
Bellman-Ford. However, turning this information into a ``good
heuristic'' for the TSP requires extra non-linear learning steps
(i.e., PFT) as linearly transforming such information is clearly not
enough (i.e., PF) -- see also \autoref{fig:freeze-vs-finetune}.

Our learnt reasoners also demonstrate superior performance compared to
the architecture proposed in \citep{joshi2022learning} showing a
substantial OOD performance advantage, with the exception of the
smallest test instances. This is due to well-known reinforcement
learning tendencies of fitting well the training environment
\citep{whiteson2011protecting, zhang2018study, song2019observational}.

An interesting observation is that the performance gap of learnt
reasoners seems to diminish compared to the algorithm-agnostic
baseline, especially for sizes of $\geq 200$ nodes.
\autoref{fig:degradation} seeks to explain this behaviour. In
particular, this trend seems to be influenced by the progressive
deterioration of algorithmic reasoning performance as test graph size
increases. In fact, note how algorithmic reasoners are not able to
generalise Prim for sizes above $200$, as testified by the drop of
performance in the figure (i.e., $\sim 30\%$).  As computing the MST
is an important property for the TSP solution, MPNN\textsubscript{PFT}
can simply not rely on this inductive bias for sizes $\geq
200$. Consequently, the model ``converges'' to an algorithm-agnostic
MPNN. This underscores the significance of developing neural
algorithmic reasoning models capable of robust generalisation even for
significantly larger graphs, to effectively address CO problems using
our approach.

Additionally, \autoref{fig:algo-vs-noalgo} shows that integrate
algorithms leads to faster convergence and delayed overfitting,
compared to standard MPNNs.

In conclusion, it is worth highlighting that we consistently achieve
superior performance compared to the majority of non-parametric
baselines across various test sizes, including both greedy and beam
search. Notably, for test instances of size 40 and 60, we even
outperform or perform on par with Christofides, although our
performance lags behind for larger graphs.

\begin{figure}
  \centering
  \subcaptionbox{Validation \textsc{gap}}{
    \centering
    \begin{tikzpicture}[scale=1]
  \begin{axis}[
    scaled x ticks=false,
    xlabel={Epoch},
    ylabel={\textsc{gap} $(\%)$},
    grid=major,
    legend entries={$\text{MPNN}_{\text{PF}}$,
      $\text{MPNN}_{\text{PFT}}$},
    legend pos=north west,
    grid style={black!30, densely dotted},
    legend style={draw=black!20, fill=white, font=\footnotesize},
    axis line style={black!20},
    xtick={0, 5, 10, 15, 20},
    ytick={0,5,10,15,20,25,30},
]
\addplot[dashed, blue, ultra thick] table[col sep=comma, x index=0, y index=1]{gfx/contrib/conar/data/mpnn_pf.csv};
\addplot[solid, orange, ultra thick] table[col sep=comma, x index=0, y index=1]{gfx/contrib/conar/data/mpnn_pft.csv};

\addplot[blue!30, name path=plus_sd, forget plot] table[col sep=comma, x index=0, y expr=\thisrowno{1}+\thisrowno{2}]{gfx/contrib/conar/data/mpnn_pf.csv};
\addplot[blue!30, name path=minus_sd, forget plot] table[col sep=comma, x index=0, y expr=\thisrowno{1}-\thisrowno{2}]{gfx/contrib/conar/data/mpnn_pf.csv};
\addplot[blue!30] fill between[of=plus_sd and minus_sd];

\addplot[orange!30, name path=plus_sd, forget plot] table[col sep=comma, x index=0, y expr=\thisrowno{1}+\thisrowno{2}]{gfx/contrib/conar/data/mpnn_pft.csv};
\addplot[orange!30, name path=minus_sd, forget plot] table[col sep=comma, x index=0, y expr=\thisrowno{1}-\thisrowno{2}]{gfx/contrib/conar/data/mpnn_pft.csv};
\addplot[orange!30] fill between[of=plus_sd and minus_sd];

\end{axis}
\end{tikzpicture}
  }
  \subcaptionbox{Test \textsc{gap}}{
    \centering
    \begin{tikzpicture}[scale=1]
  \begin{axis}[
    ylabel={\textsc{gap} $(\%)$},
    ybar=0pt,
    bar width=40pt,
    xmin=0,
    xmax=9,
    xtick={3,6},
    xticklabels={MPNN\textsubscript{PF}, MPNN\textsubscript{PFT}},
    ymin=0.0,
    ymax=55.0,
    ]
    \addplot+[error bars/.cd, y dir=both, y explicit]
    coordinates {
      (3,30.40) +- (21,21)
      (6,9.10) +- (2,2)
    };
\end{axis}
\end{tikzpicture}
  }
  \caption{The standard PF transfer learning approach is not
    applicable for our purposes: the resulting models are unable to
    generalise both in- and out-of-distribution, as shown in
    \textbf{(a)} and \textbf{(b)} respectively.}
  \label{fig:freeze-vs-finetune}
\end{figure}

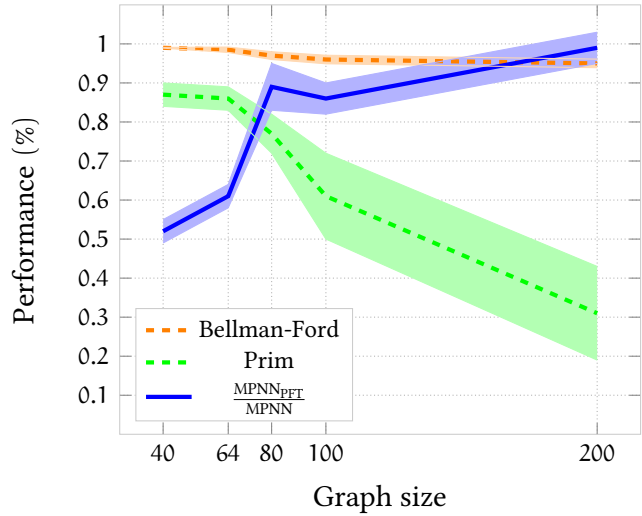
\begin{figure}
  \centering
  \begin{tikzpicture}
  \begin{axis}[
    scaled x ticks=false,
    xlabel={Graph size},
    ylabel={Performance $(\%)$},
    grid=major,
    legend entries={Bellman-Ford,
      Prim,
      $\frac{\text{MPNN}_{\text{PFT}}}{\text{MPNN}}$},
    legend pos=south west,
    grid style={black!30, densely dotted},
    legend style={draw=black!20, fill=white, font=\footnotesize},
    axis line style={black!20},
    xtick={40, 64, 80, 100, 200},
    ymin=0.0,
    ymax=1.1,
    xticklabel style={font=\footnotesize},
    yticklabel style={font=\footnotesize},
    ytick={0.1,0.2,0.3,0.4,0.5,0.6,0.7,0.8,0.9,1.0},
]
\addplot[dashed, orange, ultra thick] table[col sep=comma, x index=0, y index=1]{gfx/contrib/conar/data/bf.csv};
\addplot[dashed, green, ultra thick] table[col sep=comma, x index=0, y index=1]{gfx/contrib/conar/data/mst.csv};
\addplot[blue, ultra thick] table[col sep=comma, x index=0, y index=1]{gfx/contrib/conar/data/ratio.csv};

\addplot[orange!30, name path=plus_sd, forget plot] table[col sep=comma, x index=0, y expr=\thisrowno{1}+\thisrowno{2}]{gfx/contrib/conar/data/bf.csv};
\addplot[orange!30, name path=minus_sd, forget plot] table[col sep=comma, x index=0, y expr=\thisrowno{1}-\thisrowno{2}]{gfx/contrib/conar/data/bf.csv};
\addplot[orange!30] fill between[of=plus_sd and minus_sd];

\addplot[green!30, name path=plus_sd, forget plot] table[col sep=comma, x index=0, y expr=\thisrowno{1}+\thisrowno{2}]{gfx/contrib/conar/data/mst.csv};
\addplot[green!30, name path=minus_sd, forget plot] table[col sep=comma, x index=0, y expr=\thisrowno{1}-\thisrowno{2}]{gfx/contrib/conar/data/mst.csv};
\addplot[green!30] fill between[of=plus_sd and minus_sd];

\addplot[blue!30, name path=plus_sd, forget plot] table[col sep=comma, x index=0, y expr=\thisrowno{1}+\thisrowno{2}]{gfx/contrib/conar/data/ratio.csv};
\addplot[blue!30, name path=minus_sd, forget plot] table[col sep=comma, x index=0, y expr=\thisrowno{1}-\thisrowno{2}]{gfx/contrib/conar/data/ratio.csv};
\addplot[blue!30] fill between[of=plus_sd and minus_sd];

\end{axis}
\end{tikzpicture}
  \caption{Relative performance between MPNN and
    MPNN\textsubscript{PFT}. This figure illustrates that the initial
    superior performance of MPNN\textsubscript{PFT} is most likely due
    to the fact that the network can predict the MST with high
    accuracy (see Prim's curve). As MPNN\textsubscript{PFT} struggles
    to generalise on MST for larger graphs, the performance gain w.r.t
    MPNN also decreases
    ($\frac{\text{MPNN\textsubscript{PFT}}}{\text{MPNN}}$ approaches
    1).  }
  \label{fig:degradation}
\end{figure}

\begin{table}
  \footnotesize
  \centering
  \begin{tabular}{l r r r r r r}
    \toprule
    & \multicolumn{6}{c}{Optimality \textsc{gap} $(\downarrow)$}\\
    \cmidrule(r){2-7}
    \textbf{Model} & \textit{40 nodes}  & \textit{60 nodes} & \textit{80 nodes} & \textit{100nodes} & \textit{200 nodes} & \textit{1000 nodes}\\
    \midrule
    \multicolumn{7}{c}{\textbf{Beam search with width w=128}}\\
    \midrule
    MPNN & $17.7_{\pm 5}\%$ & $23.9_{\pm 3}\%$ & $25.7_{\pm 8}\%$ & $31.9_{\pm 6}\%$ & $38.9_{\pm 7}\%$ & $39.7_{\pm 7}\%$\\
    MPNN\textsubscript{PFT} & $9.1_{\pm 0.1}\%$ & $15.5_{\pm 4}\%$ & $23.1_{\pm 3}\%$ & $28.9_{\pm 2}\%$ & $35.4_{\pm 2}\%$ & $44.5_{\pm 9}\%$\\
    MPNN\textsubscript{PFT+ueh} & $15.4_{\pm 5}\%$ & $23.5_{\pm 8}\%$ & $29.4_{\pm 7}\%$ & $35.7_{\pm 6}\%$ & $37.9_{\pm 3}\%$ & $48.8_{\pm 14}\%$\\
    MPNN\textsubscript{2PROC} & $12.2_{\pm 5}\%$ & $22.1_{\pm 4}\%$ & $28.4_{\pm 6}\%$ & $34.0_{\pm 5}\%$ & $36.6_{\pm 6}\%$ & $38.8_{\pm 3}\%$\\
    MPNN\textsubscript{MTL} & $18.1_{\pm 6}\%$ & $26.2_{\pm 4}\%$ & $31.2_{\pm 5}\%$ & $34.8_{\pm 4}\%$ & $37.1_{\pm 3}\%$ & $38.5_{\pm 3}\%$\\
    \begin{tabular}{@{}l@{}}\citeauthor{joshi2022learning}\\(AR)\end{tabular} & $6.2_{\pm 4}\%$ & $37.1_{\pm 12}\%$ & $74.8_{\pm 13}\%$ & $102_{\pm 10}\%$ & $195_{\pm 20}\%$ & $419_{\pm 20}\%$\\
    \begin{tabular}{@{}l@{}}\citeauthor{joshi2022learning}\\(nAR)\footnotemark\end{tabular} & $20.6_{\pm 10}\%$ & $62.5_{\pm 11}\%$ & $110_{\pm 17}\%$ & $156_{\pm 24}\%$ & $273_{\pm24}\%$ & $497_{\pm 7}\%$\\
    \midrule
    \multicolumn{7}{c}{\textbf{Beam search with width w=1280}}\\
    \midrule
    MPNN & $14.4_{\pm 4}\%$ & $21.8_{\pm 5}\%$ & $22.4_{\pm 4}\%$ & $31.1_{\pm 6}\%$ & $33.6_{\pm 3}\%$ & $41.2_{\pm 8}\%$\\
    MPNN\textsubscript{PFT} & $7.5_{\pm 1}\%$ & $\mathbf{13.3_{\pm 3}}\%$ & $\mathbf{20.1_{\pm 5}}\%$ & $\mathbf{26.9_{\pm 3}}\%$ & $\mathbf{33.8_{\pm 2}}\%$ & $\mathbf{40.5_{\pm 7}}\%$\\
    MPNN\textsubscript{PFT+ueh} & $12.9_{\pm 5}\%$ & $20.6_{\pm 6}\%$ & $26.0_{\pm 3}\%$ & $32.0_{\pm 5}\%$ & $37.2_{\pm 3}\%$ &
                                                                                                                                 $44.3_{\pm 9}\%$\\
    MPNN\textsubscript{2PROC} & $9.5_{\pm 6}\%$ & $18.0_{\pm 5}\%$ & $24.1_{\pm 6}\%$ & $28.8_{\pm 5}\%$ & $36.6_{\pm 6}\%$ & $38.5_{\pm 3}\%$\\
    MPNN\textsubscript{MTL} & $15.5_{\pm 5}\%$ & $26.5_{\pm 6}\%$ & $30.2_{\pm 6}\%$ & $33.4_{\pm 4}\%$ & $36.8_{\pm 5}\%$ & $42.3_{\pm 5}\%$\\
    \begin{tabular}{@{}l@{}}\citeauthor{joshi2022learning}\\(AR)\end{tabular} & $\mathbf{3.5_{\pm 3}}\%$ &  $33.0_{\pm 11}\%$ & $63.8_{\pm 12}\%$ & $97.6_{\pm 12}\%$ & $193.3_{\pm 17}\%$ & $417_{\pm 4}\%$\\
    \begin{tabular}{@{}l@{}}\citeauthor{joshi2022learning}\\(nAR)\footnotemark[2]\end{tabular}& $14.2_{\pm 8}\%$ & $54.7_{\pm 15}\%$ & $100.0_{\pm 15}\%$ & $141.9_{\pm 25}\%$ & $265.5_{\pm 27}\%$ & $500.7_{\pm 23}\%$\\
    \midrule
    \multicolumn{7}{c}{\textbf{Models not using beam search}}\\
    \midrule
    \begin{tabular}{@{}l@{}}\citeauthor{joshi2022learning}\\({\scriptsize AR+greedy})\end{tabular} & $22.1_{\pm 10}\%$ & $57.7_{\pm 13}\%$ & $94.0_{\pm 14}\%$ & $124.0_{\pm 12}\%$ & $219.6_{\pm 22}\%$ & $469_{\pm 17}\%$\\
    \begin{tabular}{@{}l@{}}\citeauthor{joshi2022learning}\\({\scriptsize AR+sampling})\end{tabular} & $9.5_{\pm 5}\%$ & $46.7_{\pm 9}\%$ & $93.1_{\pm 10}\%$ & $137.0_{\pm 14}\%$ & $313.2_{\pm 15}\%$ & $1102_{\pm 1}\%$\\
    \midrule
    \multicolumn{7}{c}{\textbf{Deterministic baselines}}\\
    \midrule
    \textit{Greedy} & $31.9_{\pm 12}\%$ & $32.8_{\pm 10}\%$ & $33.3_{\pm 9}\%$ & $30.0_{\pm 6}\%$ & $32.1_{\pm 6}\%$ & $28.8_{\pm 3}\%$\\
    \textit{\begin{tabular}{@{}l@{}}Beam search\\{\scriptsize (w=1280)}\end{tabular}} & $19.7_{\pm 8}\%$ & $23.1_{\pm 7}\%$ & $29.4_{\pm 7}\%$ & $29.7_{\pm 5}\%$ & $33.2_{\pm 4}\%$ & $38.9_{\pm 2}\%$\\
    \textit{Christofides} & $10.1_{\pm 3}\%$ & $11.0_{\pm 2}\%$ & $11.3_{\pm 2}\%$ & $12.1_{\pm 2}\%$ & $12.2_{\pm 1}\%$ & $12.2_{\pm 0.1}\%$\\
    \bottomrule
  \end{tabular}
  \caption{Optimality \textsc{gap} across different TSP sizes. Subscripts
    refer to the according transfer learning settings or if the model
    \textbf{u}pdates \textbf{e}dge \textbf{h}iddens.}
  \label{tab:conar-synthetic-results-1}
\end{table}

Moving forward, similar arguments to what we made for TSP results
still hold when analysing VKC
(\autoref{tab:conar-synthetic-results-2}). Here, we test only the PF
transfer strategy, as it gave best results on the TSP task.

In these experiments, we also tried integrating two more relevant
algorithms such as Floyd-Warshall and Insertion Sort
\citep{cormen2009introduction}. However, our learnt reasoners did not
achieve satisfactory approximations in the pre-training phase
(Floyd-Warshall, mean accuracy of $\approx23\%$; Insertion Sort,
$\approx43\%$) and this negatively cascade on the VKC learning task.
This testifies that is important to appropriately choosing the test of
algorithms to pre-train on, and exclude those that learnt reasoners
struggle with. Replacing these algorithms by an additional greedy
algorithm (i.e., Activity Selection) does yield increased performance,
outperforming the baseline neural model on sizes $40$ and $60$.

When compared to the deterministic baseline methods, all neural models
exhibit superior performance to the basic Gon heuristic. Regrettably,
even the top-performing models do not reach the level of performance
achieved by the CDS heuristic. We posit that this disparity can be
attributed to the complexity of the CDS heuristic, which comprises
multiple intertwined subroutines such as binary search and graph
pruning. It is possible that a more modular NAR approach in the future
could match such performance.

\begin{table}
  \footnotesize
  \centering
  \begin{tabular}{lrrrrr}
    \toprule
    & \multicolumn{5}{c}{Optimality \textsc{gap} $(\downarrow)$}\\
    \cmidrule(r){2-6}
    \textbf{Model} & \textit{40 nodes} & \textit{60 nodes} & \textit{80 nodes} & \textit{100 nodes} & \textit{200 nodes}\\
    \midrule
    MPNN & $15.88_{\pm 1.11}\%$ & $20.85_{\pm 2.64}\%$ & $24.78_{\pm 7.17}\%$ & $26.48_{\pm 6.65}\%$ & $\mathbf{24.63_{\pm 6.50}}\%$ \\
    MPNN\textsubscript{FMITB} & $14.73_{\pm 1.21}\%$ & $21.97_{\pm 4.87}\%$ & $23.91_{\pm 4.52}\%$ & $28.50_{\pm 5.06}\%$ & $27.04_{\pm 4.22}\%$ \\
    MPNN\textsubscript{MTAB} & $\mathbf{13.58_{\pm 0.60}}\%$ & $\mathbf{16.20_{\pm 3.02}}\%$ & $\mathbf{23.10_{\pm 3.99}}\%$ & $\mathbf{26.40_{\pm 6.62}}\%$ & $26.43_{\pm 3.84}\%$ \\
    \midrule
    \multicolumn{6}{c}{\textbf{Deterministic baselines}}\\
    \midrule
    Farthest First & $41.04_{\pm 14.21}\%$ & $43.89_{\pm 10.70}\%$ & $38.31_{\pm 9.22}\%$ & $36.32_{\pm 7.28}\%$ & $37.91_{\pm 7.91}\%$\\
    CDS & $7.15_{\pm 5.42}\%$ & $7.82_{\pm 4.78}\%$ & $6.49_{\pm 3.65}\%$ & $6.55_{\pm 3.29}\%$ & $5.98_{\pm 2.00}\%$\\
    \midrule
    \bottomrule
  \end{tabular}
  \caption{VKC optimality gap. Each letter in the MPNNs subscripts
    denotes an algorithm pretrained on: \textbf{F}loyd-Warshall,
    \textbf{M}inimum, \textbf{I}nsertion sort, \textbf{T}ask
    scheduling, \textbf{B}ellman-Ford, \textbf{A}ctivity selection }
  \label{tab:conar-synthetic-results-2}
\end{table}

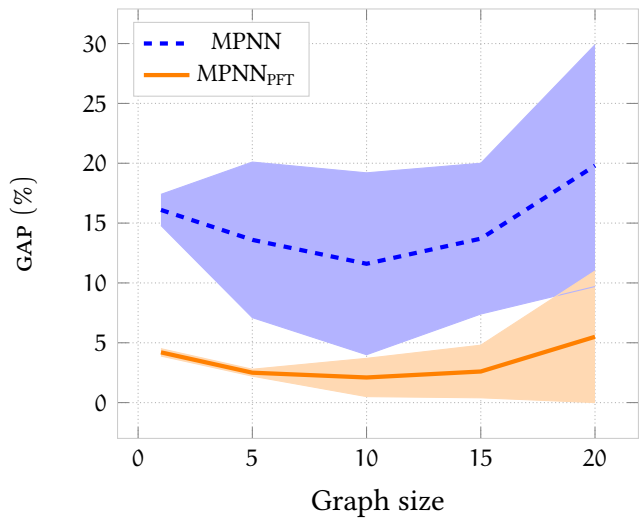
\begin{figure}
  \centering
  \begin{tikzpicture}[scale=1]
  \begin{axis}[
    scaled x ticks=false,
    xlabel={Graph size},
    ylabel={\textsc{gap} $(\%)$},
    grid=major,
    legend entries={MPNN,
      $\text{MPNN}_{\text{PFT}}$},
    legend pos=north west,
    grid style={black!30, densely dotted},
    legend style={draw=black!20, fill=white, font=\footnotesize},
    axis line style={black!20},
    xtick={0, 5, 10, 15, 20},
    xticklabel style={font=\footnotesize},
    yticklabel style={font=\footnotesize},
    ytick={0,5,10,15,20,25,30},
]
\addplot[dashed, blue, ultra thick] table[col sep=comma, x index=0, y index=1]{gfx/contrib/conar/data/mpnn.csv};
\addplot[solid, orange, ultra thick] table[col sep=comma, x index=0, y index=1]{gfx/contrib/conar/data/mpnn_pft.csv};

\addplot[blue!30, name path=plus_sd, forget plot] table[col sep=comma, x index=0, y expr=\thisrowno{1}+\thisrowno{2}]{gfx/contrib/conar/data/mpnn.csv};
\addplot[blue!30, name path=minus_sd, forget plot] table[col sep=comma, x index=0, y expr=\thisrowno{1}-\thisrowno{2}]{gfx/contrib/conar/data/mpnn.csv};
\addplot[blue!30] fill between[of=plus_sd and minus_sd];

\addplot[orange!30, name path=plus_sd, forget plot] table[col sep=comma, x index=0, y expr=\thisrowno{1}+\thisrowno{2}]{gfx/contrib/conar/data/mpnn_pft.csv};
\addplot[orange!30, name path=minus_sd, forget plot] table[col sep=comma, x index=0, y expr=\thisrowno{1}-\thisrowno{2}]{gfx/contrib/conar/data/mpnn_pft.csv};
\addplot[orange!30] fill between[of=plus_sd and minus_sd];

\end{axis}
\end{tikzpicture}
  \caption{NAR modules converge faster on TSP learning. Both
    models use the same architecture and hyperparameters.}
  \label{fig:algo-vs-noalgo}
\end{figure}

\paragraph{Qualitative study.}
Finally, we perform a qualitative study to evaluate the impact of the
chosen algorithms to pre-train our neural CO solvers on. We pick a
neural network for solving TSP, utilising the PFT strategy, and
pre-train it on three random algorithms that were excluded by our
initial selections: \emph{Breadth-First Search}, \emph{Topological
  Sorting}, and \emph{Longest Common Subsequence length} (a string
algorithm). \autoref{tab:conar-rel-vs-unrel} shows that
choosing sub-optimal algorithms, while still comparable or 
slightly better for sizes 80 and 100 nodes, makes the model diverge 
for larger sizes ($\geq 200$ nodes).

\begin{table}
  \footnotesize
  \begin{tabular}{l r r r r r}
    \toprule
    & \multicolumn{5}{c}{Optimality \textsc{gap} $(\downarrow)$ }\\
    \cmidrule(r){2-6}
    \textbf{Model} & \textit{40 nodes}
                   & \textit{60 nodes}
                   & \textit{80 nodes}
                   & \textit{100 nodes}
                   & \textit{200 nodes}\\
    \midrule
    \multicolumn{6}{c}{\textbf{Beam search with width w=128}}\\
    \midrule
    MPNN & $17.7_{\pm 5}\%$ & $23.9_{\pm 3}\%$ & $25.7_{\pm 8}\%$ & $31.9_{\pm 6}\%$ & $38.9_{\pm 7}\%$ \\
    Unrelated & $12.3_{\pm 2.2}\%$ & $17.5_{\pm 3.1}\%$ & $\mathbf{22.3_{\pm 3.1}}\%$ & $\mathbf{26.4_{\pm 2.6}}\%$ & $199.0_{\pm 373.3}\%$\\
    Related & $\mathbf{9.1_{\pm 0.1}}\%$ & $\mathbf{15.5_{\pm 4}}\%$ & $23.1_{\pm 3}\%$ & $28.9_{\pm 2}\%$ & $\mathbf{35.4_{\pm 2}}\%$\\
    \bottomrule
  \end{tabular}
  \caption{TSP optimality \textsc{gap} comparison between no-transfer,
    transferring unrelated algorithms and transferring related
    algorithms. When transferring, the experimental setup matches
    MPNN\textsubscript{PFT}, only algorithms differ.}
  \label{tab:conar-rel-vs-unrel}
  \centering
\end{table}

    \part{Conclusion}

\chapter{Closing Remarks}\label{ch:conclusions}
This dissertation has inspected the impact of incorporating
algorithmic inductive bias in machine learning models, primarily
looking at the intersection of deep learning for graphs, graph
algorithms and neural combinatorial optimisation.

Throughout its chapters, two interlaced questions were pursued:
\begin{enumerate}
\item[\textsc{q}.1] \label{item:q1}\textit{Can neural networks learn to execute classical algorithms?}
\item[\textsc{q}.2] \label{item:q2}\textit{Is this of any use?}
\end{enumerate}
While it is undoubtedly difficult -- and likely quite a few years away
-- that neural networks will reach the level of \textit{algorithmic
  guarantees} inherent of classical algorithms in \textit{any}
scenarios, this thesis provides theoretical and empirical evidence
that it is indeed possible to learn \emph{reasoners} that approximate
algorithmic properties (e.g., size-invariance). This thesis also
sought to answer concerns regarding the practical value of such
approaches, by discussing limitations of classical algorithms, such as
the \emph{algorithmic bottleneck dilemma}, and showcasing tasks of
practical relevance where neural algorithmic reasoning proves useful.

In details, \autoref{ch:contribution-dp} attempted at giving an answer
to \hyperref[item:q1]{\textsc{q}.1} through the lens of tropical
algebra, succeeding in providing practical examples for reachability
and shortest path problems. There, an investigation of algorithmic
alignment properties is conducted through the use of bijections
between semirings, demonstrating how NAR architectures can be derived
from the algebraic properties of said semirings. Interestingly, our
setting enabled us to \emph{re-discover} some of the most common
empirical findings in NAR, such as the use of $\max$ aggregators in
GNs, and put them under a theoretical framework where their impact on
the reasoning performance can be \emph{quantified}. To elucidate this
point, recall the study performed in \autoref{sec:tropical-effects},
where a sum-aggregated GN implicitly constrained its representations
to learn exponential mappings and consequently requiring a higher
number of neurons. Overall, this contribution seeks to foster more
research in the theoretical aspect of neural algorithmic reasoning,
especially considering the wide range of possible tropical semirings
from which we can draw connections and/or derive aligned
architectures. Indeed, a direct extension to this contribution would
be performing similar explorations on different tropical semirings as
in \citep{gunawardena1998correspondence} in order to build NAR modules
that align with algorithms therein and proving approximation
capabilities similarly to what we have shown in our contribution.

To further expand possible future extentions, establishing connections
between tropical semirings and graph networks unlocked possibilities
to potentially demonstrate that \emph{any} min-aggregated DP algorithm
can be arbitrarily well approximated by an EPD architecture. To pursue
this objective, a further investigation is necessary. Namely, in our
tropical setting, we should consider the use of an external memory to
be used by the processor \citep{jayalath2023recursive}. The rationale
behind this is that GNs alone, with their fixed-size node and edge
representations, can not model recursion typical of certain DP
algorithms, such as Depth First Search
\citep{cormen2009introduction}, up to arbitrary depth. Consequently,
there is an inherent limitation due to GN architectures to prove
approximations for the full spectrum of DP algorithms. However,
our tropical framework did not consider possible external memory
usage, and therefore needs to be revised and extended to account for
storing and recalling of previous processor states.

Moving forward, Dual Algorithmic Reasoning (\autoref{sec:dar})
testified that not only DP algorithms can be targeted by NAR modules,
effectively seeking to answer \hyperref[item:q1]{\textsc{q}.1} for a
broader class of algorithms.  Indeed, we successfully learnt a good
approximation for the Ford-Fulkerson algorithm, which does not fit in
the DP paradigm. Furthermore, such approximation was obtained by
leveraging strong duality as an inductive bias, hence borrowing a
fundamental concept from the combinatorial optimisation
world. Interestingly, strong duality was proven to be effectively
modelled and leveraged in DAR predictions -- a behaviour which was
deeply discussed and analysed in
\autoref{sec:dar-synthetic-exp}). Futhermore, this behaviour emerged
from simply training the architecture on both primal and dual
solutions, without imposing any constraint nor adding any
regularisation term to the objective. As such, DAR formulation is
general and can seamlessly be integrated with prior NAR modules with
slight, non-disruptive architectural adjustments. While we argue that
max-flow and min-cut are \emph{representative} for a broad class of
problem pairs and algorithms (i.e., those for which strong duality
holds), it remains uncertain whether such advantages would extend to
cases involving weak duality. In such scenarios, duality still
provides valuable information in the form of upper and lower bounds on
the primal solutions. However, whether DAR can harness this property
is an unexplored area and a potentially exciting direction for future
research.

On a more general note, DAR represents also one of the most compelling
examples supporting a positive answer to
\hyperref[item:q2]{\textsc{q}.2}. Indeed, DAR demonstrated that
algorithms are useful prior knowledge for solving related learning
problems. For the first time, we recorded a performance improvement in
standard graph learning tasks with real-world utility through
incorporation of algorithms in a learnable classifier. In this
scenario, dual representations also exhibited outstanding
generalisation, being able to being able to output informative
representations for graphs of size 100,000 times larger than its
training data. This achievement also positions DAR as the most
scaled-up NAR architecture, yet.

Further attempts at demonstrating that integrating algorithms in
neural networks is useful are carried out in
\autoref{sec:neural-planning} and \autoref{sec:conar}, where we target
learning of heuristic functions for two different purposes. In the
former, we leveraged DP approximation capabilities of GNs to learn
heuristic functions for planning problems, particularly in the context
of the A* path-finding algorithm. There, NAR was used to improve OOD
generalisation of the computed heuristics, and was proven of effective
use to enhance the performance of A*. In the latter, instead, we dived
into the exploration of transferring algorithmic knowledge from P
$\to$ NP, by training algorithmic reasoners on P algorithms and trying
to leverage this knowledge to learn heuristics (or approximate
algorithms) for NP-hard problems. This investigation exploits the
observation that many approximate algorithms, such as Christofides and
the Gon algorithm, build on simpler algorithmic
``primitives''. Similarly to this manual crafting of algorithms, our
learnt approach seeks to leverage algorithms as starting points (i.e.,
by pre-training NAR modules to execute them) to perform optimisation
steps on when learning CO tasks (i.e., by fine-tuning the
representations). Through quantitative and qualitative analysis we
provided sufficient indications that algorithmically-informed neural
networks perform best on combinatorial tasks. There are still, however,
plenty of research opportunities to deepen the analysis
of whether algorithms, neural networks and their application to NP-hard
problems is effective. One possible, and
highly fascinating, direction is that such approach has potential to
unlock \emph{automatic discovery} of new highly-performing algorithms
for NP-hard problems, possibly looking at interactions with the
Reinforcement Learning (RL) paradigm
\citep{sutton2018reinforcement}. In particular, deep reinforcement
learning has been successfully applied to discover novel algorithms
for matrix multiplication \citep{fawzi2022discovering} and sorting of
short sequences \citep{mankowitz2023faster}. One major difference to
our approach is that deep reinforcement learning is inherently more
\emph{interpretable}. Specifically, it is possible to \emph{write
  down} the learnt algorithm by trivial investigation of the sequence
of actions employed by the learnt agent. In our approach, instead,
hidden representations are not interpretable and it is unclear how we
can devise algorithms from them. However, aforementioned RL
methodologies do not have knowledge of algorithms and are then
required to undergo a heavy trial-and-error phase (typical in RL
settings). Hence, there is evident room for application of algorithmic
reasoning in this scenario. Here, NAR has the potential to navigate
the combinatorial space of algorithmic solutions much more
efficiently, given its knowledge of algorithms.

There remains, however, much to explore within Neural Algorithmic
Reasoning, both from a theoretical standpoint as well as concerning
practical applications.  Beside all the future prospects discussed
above (e.g., theoretical analysis through different tropical
semirings, algorithm discovery, \dots), other two valuable directions
are worth to be mentioned: \emph{fixed point} iterations and
modularity. The former represents one of the most prominent
algorithmic invariances that spans many algorithms and applications
\citep{karamardian2014fixed}. Informally, a fixed point of an
algorithm is an input $x$ for which the application of an algorithm
$A$ results in no change (i.e., $A(x)=x$). To incorporate this
invariance into graph networks, it is necessary to ensure that the
iterative message-passing scheme also ``converges'' to a fixed-point.
It is worth noting that when training algorithmic reasoners on
algorithms that naturally fit the fixed-point iteration framework,
such as Bellman-Ford, this behaviour already emerges empirically, as
investigated in \citep{mirjanic2023latent}. A more rigorous
research question to explore is whether we can derive theoretical
guarantees from this integration, such as termination guarantees.

On the other hand, integrating modularity into algorithmic reasoning
would greatly benefit the development of general architectures capable
of handling algorithms composed of multiple sub-routines. For
instance, in the DAR architecture, modularity in the Ford-Fulkerson
algorithm was achieved by manually combining two neural submodules,
each approximating its own sub-routine.  However, an intriguing
research question arises: \emph{can this property be automatically
  attained by one general architecture?} This is precisely the
question that modular deep learning seeks to answer
\citep{pfeiffer2023modular} and its integration into algorithmic
reasoning is one of certain positive impact.

With these final thoughts, I conclude the dissertation. I hope that
all the contributions presented in this thesis will inspire further
studies within the world of Neural Algorithmic Reasoning, perhaps
starting by investigating one of the future research avenues discussed
in this final chapter.






\bibliographystyle{ref}
\bibliography{ref}
    \part{Appendix}
    \begin{appendix}
    \pdfbookmark[1]{Publications}{publications}
\chapter{List of publications}

\begin{itemize}
\item Numeroso Danilo, and Davide Bacciu. Explaining Deep Graph
  Networks with Molecular Counterfactuals. In \emph{Workshop on
    Machine Learning for Molecules, Neural Information Processing
    Systems (NeurIPS), 2020.}\\
  URL: \href{https://github.com/danilonumeroso/meg}{github.com/danilonumeroso/meg}

\item Numeroso Danilo, and Davide Bacciu. MEG: Generating Molecular
  Counterfactual Explanations for Deep Graph Networks. In
  \emph{IEEE International Joint Conference on Neural Networks
    (IJCNN), 2021.}\\
  URL: \href{https://github.com/danilonumeroso/meg}{github.com/danilonumeroso/meg}

\item Bacciu Davide, and Danilo Numeroso. Explaining Deep Graph
  Networks via Input Perturbation. In \emph{IEEE Transactions on
    Neural Networks and Learning Systems (TNNLS) Journal, 2022.}\\
  URL:\href{https://github.com/danilonumeroso/legit}{github.com/danilonumeroso/legit}

\item Numeroso, Danilo, Davide Bacciu, and Petar
  Veličković. Learning heuristics for A*. In \emph{Workshop on
    Anchoring Machine Learning in Classical Algorithmic Theory,
    International Conference on Learning Representations (ICLR),
    2022.}
  \href{https://github.com/danilonumeroso/dar}{github.com/danilonumeroso/dar}

\item Kool Wouter, Laurens Bliek, Danilo Numeroso, Yingqian Zhang,
  Tom Catshoek, Kevin Tierney, Thibaut Vidal and Joaquim
  Gromicho. The EURO meets NeurIPS 2022 Vehicle Routing
  Competition. In \emph{Proceedings of the Neural Information
    Processing Systems (NeurIPS) Competitions Track, PMLR,
    2022.}\\
  URL: \href{https://github.com/danilonumeroso/dar}{github.com/danilonumeroso/dar}

\item Numeroso, Danilo, Davide Bacciu, and Petar Veličković. Dual
  Algorithmic Reasoning. In \emph{International Conference on
    Learning Representations (ICLR), 2023.}\\
  URL: \href{https://github.com/ortec/euro-neurips-vrp-2022-quickstart}{github.com/ortec/euro-neurips-vrp-2022-quickstart}

\item Bacciu Davide, Landolfi Francesco, and Danilo Numeroso. A
  Tropical View of Graph Neural Networks. In \emph{European Symposium
    on Artificial Neural Networks (ESANN), 2023.}
\end{itemize}

    \pdfbookmark[1]{Talks}{talks}
\chapter{List of talks}

\begin{itemize}
\item Oral presentation of \emph{Explaining Deep Graph Networks with
    Molecular Counterfactuals}, Workshop on Machine Learning for
  Molecules, NeurIPS 2020.

\item Oral presentation of \emph{MEG: Generating Molecular
    Counterfactual Explanations for Deep Graph Networks}, IJCNN 2021.

\item Poster presentation of \emph{Learning heuristics for A*},
  Workshop on Anchoring Machine Learning in Classical Algorithmic
  Theory, ICLR 2022.

\item Invited talk at TU/e, \emph{Neural Algorithmic Reasoning}, Eindhoven, 2023.

\item Spotlight presentation of \emph{Dual Algorithmic Reasoning},
  ICLR 2023.

\item Invited talk at QualcommAI, \emph{Algorithmically-Informed Graph
    Neural Networks}, 2023.

\end{itemize}

    \end{appendix}
\end{document}